\documentclass[11pt]{article}
\usepackage[margin=2.5cm]{geometry}
\usepackage[table,dvipsnames]{xcolor}

\usepackage[utf8]{inputenc}
\usepackage[T1]{fontenc}
\usepackage[numbers,sort&compress]{natbib}
\usepackage{hyperref}
\usepackage{url}
\usepackage{booktabs}
\usepackage{amsfonts}
\usepackage{nicefrac}
\usepackage{microtype}
\usepackage{tabularx}

\usepackage{amsmath}
\usepackage{amssymb}
\usepackage{amsthm}

\usepackage{float}

\usepackage{cleveref}

\usepackage{graphicx}
\usepackage{subcaption}
\usepackage{wrapfig}
\usepackage{needspace}

\usepackage{tabularx}
\usepackage{multirow}

\usepackage{tikz}
\usetikzlibrary{calc,arrows.meta}
\usepackage{tikz-3dplot}
\usepackage{tikz-cd}

\usepackage{bm}

\newtheorem{postulate}{Postulate}
\crefname{postulate}{postulate}{postulates}
\Crefname{postulate}{Postulate}{Postulates}

\newtheorem{theorem}{Theorem}
\newtheorem{proposition}{Proposition}
\newtheorem{corollary}[proposition]{Corollary}
\newtheorem{lemma}[proposition]{Lemma}

\newtheorem{definition}{Definition}

\usepackage{tcolorbox}
\usetikzlibrary{calc,positioning}

\usepackage{longtable}

\newtcolorbox{simplebox}[1]{
  colback=white,
  colframe=black,
  coltitle=black,
  colbacktitle=white,
  title=#1,
  boxrule=0.5pt,
  arc=2pt,
  titlerule=0.5pt,
  toptitle=4pt,
  bottomtitle=4pt,
  lefttitle=6pt,
  righttitle=6pt
}

\DeclareMathOperator{\spann}{span}
\DeclareMathOperator{\enc}{enc}
\DeclareMathOperator{\dec}{dec}
\DeclareMathOperator{\topk}{TopK}

\DeclareMathOperator{\relu}{ReLU}
\DeclareMathOperator{\auxk}{AuxK}

\DeclareMathOperator{\spark}{spark}
\DeclareMathOperator{\supp}{supp}
\DeclareMathOperator{\rank}{rank}

\newcommand{\RR}{\mathbb{R}}
\newcommand{\DD}{\mathcal{D}}
\newcommand{\EE}{\mathbb{E}}

\DeclareMathOperator{\gap}{gap}
\DeclareMathOperator{\dist}{dist}

\newcommand{\full}{\texttt{full}}
\newcommand{\wiki}{\texttt{wiki}}
\newcommand{\nowiki}{\texttt{no\_wiki}}
\newcommand{\randomone}{\texttt{rand1}}
\newcommand{\randomtwo}{\texttt{rand2}}

\definecolor{petcolor}{RGB}{184,221,217}
\definecolor{catcolor}{RGB}{247,207,155}
\definecolor{dogcolor}{RGB}{246,184,177}

\newcommand{\threeatommol}[3]{%
\begin{tikzpicture}[baseline=-0.5ex, every node/.style={outer sep=0pt}]
    \node[
        circle,
        draw,
        fill=petcolor,
        minimum size=0.82cm,
        inner sep=0pt
    ] (pet) at (0,0) {$#1$};

    \node[
        circle,
        draw,
        fill=catcolor,
        minimum size=0.58cm,
        inner sep=0pt,
        anchor=210] (cat) at (pet.30) {$#2$};

    \node[
        circle,
        draw,
        fill=dogcolor,
        minimum size=0.58cm,
        inner sep=0pt,
        anchor=150] (dog) at (pet.330) {$#3$};
\end{tikzpicture}%
}

\newcommand{\petcatdogmol}{\threeatommol{\mathrm{pet}}{\mathrm{cat}}{\mathrm{dog}}}

\newcommand{\twoatommol}[2]{%
\begin{tikzpicture}[baseline=-0.5ex, every node/.style={outer sep=0pt}]
    \node[
        circle,
        draw,
        fill=petcolor,
        minimum size=0.62cm,
        inner sep=0pt
    ] (first) at (0,0) {$#1$};

    \node[
        circle,
        draw,
        fill=catcolor,
        minimum size=0.62cm,
        inner sep=0pt,
        anchor=west] at (first.east) {$#2$};
\end{tikzpicture}%
}

\newcommand{\petcatmol}{\twoatommol{\mathrm{pet}}{\mathrm{cat}}}

\newcommand{\petdogmol}{%
\begin{tikzpicture}[baseline=-0.5ex, every node/.style={outer sep=0pt}]
    \node[
        circle,
        draw,
        fill=petcolor,
        minimum size=0.62cm,
        inner sep=0pt
    ] (first) at (0,0) {$\mathrm{pet}$};

    \node[
        circle,
        draw,
        fill=dogcolor,
        minimum size=0.62cm,
        inner sep=0pt,
        anchor=west] at (first.east) {$\mathrm{dog}$};
\end{tikzpicture}%
}

\newcommand{\oneatommol}[1]{%
\begin{tikzpicture}[baseline=-0.5ex, every node/.style={outer sep=0pt}]
    \node[
        circle,
        draw,
        fill=petcolor,
        minimum size=0.72cm,
        inner sep=0pt
    ] {$#1$};
\end{tikzpicture}%
}

\newcommand{\petmol}{\oneatommol{\mathrm{pet}}}

\newcommand{\catmol}{%
\begin{tikzpicture}[baseline=-0.5ex]
    \node[
        circle,
        draw,
        fill=catcolor,
        minimum size=0.72cm,
        inner sep=0pt
    ] {$\mathrm{cat}$};
\end{tikzpicture}%
}

\newcommand{\dogmol}{%
\begin{tikzpicture}[baseline=-0.5ex]
    \node[
        circle,
        draw,
        fill=dogcolor,
        minimum size=0.72cm,
        inner sep=0pt
    ] {$\mathrm{dog}$};
\end{tikzpicture}%
}

\newcommand{\prefix}{\texttt{prefix}}
\newcommand{\cities}{\texttt{cities}}
\newcommand{\gbif}{\texttt{gbif}}

\makeatletter
\newcommand{\titlethanks}[1]{%
  \g@addto@macro\@thanks{%
    \begingroup
      \renewcommand{\thefootnote}{}%
      \footnotetext{#1}%
    \endgroup
  }%
}
\makeatother

\hypersetup{
    colorlinks,
    linktoc=all,
    linkcolor={red!50!black},
    citecolor={blue!50!black},
    urlcolor={blue!80!black}
}

\usepackage{caption}

\usepackage{array,adjustbox}

\let\AppendixAddContentsLine\addcontentsline
\title{\vspace{-0.5cm}A Testable Theory of Atomic Features}
\author{Kenny Peng\qquad Jon Kleinberg \qquad Nikhil Garg}
\date{Cornell University}

\begin{document}

\begingroup
\renewcommand{\thefootnote}{}
\footnotetext{Correspondence to Kenny Peng (kennypeng@cs.cornell.edu).}
\addtocounter{footnote}{-1}
\endgroup

\addtocontents{toc}{\protect\setcounter{tocdepth}{-1}}
\maketitle

\begin{abstract}
    We develop and test a theory of language model representations in which there exist \textit{atomic features}. Our main theoretical insight is that in such a model, sparse dictionaries (e.g., SAEs) of increasing size recover an increasing prefix of the most prevalent atoms in the training data. This ``recovery principle'' yields three testable predictions: many features in small SAEs are shared by all larger SAEs, SAEs trained on different data share features prevalent in both, and sufficiently large SAEs recover both parent and child features. In contrast to conventional wisdom that SAE features are unstable and ``split'' as size increases, we find that these predictions hold on SAEs of sizes ranging from 512 to 131,072 trained on two large embedding models. From a theoretical perspective, our results suggest the promise of a scientific theory of representations based on atomic features. Practically, our results suggest the promise of scaling SAEs.
\end{abstract}

\section{Introduction}

Language models represent text inputs $x$ as vectors $f(x)\in \RR^d$. A wide-ranging set of work has sought to better understand these representations. Here, our approach is to treat the linear representation hypothesis \citep{mikolov2013distributed, mikolov2013linguistic, arora2018linear,nanda2023emergent, li2025language,panickssery2023steering, zou2023representation, turner2023steering, wang2025persona, chen2025persona,elhage2022toy,park2023linear, costa2025flat,garg2026many} as a foundation for a scientific theory. We formalize the hypothesis, mathematically derive implications, and then test these predictions on real language model representations.

The linear representation hypothesis (LRH) roughly says that $f(x)$ can be written as $z_1(x)\vec{a}_1 + z_2(x)\vec{a}_2 + \cdots + z_M(x)\vec{a}_M,$ where we think of $z_1(x), z_2(x), \cdots, z_M(x)$ as describing individual features of $x$---does $x$ mention cats, does $x$ involve algebraic geometry, and so on. Stated succinctly,
\begin{equation}
    f(x) = Az(x).
\end{equation}
We now state a striking first result. Under a natural formalization of the hypothesis \citep{garg2026many}, where the features given by $z(x)$ are accessible from $f(x)$, this decomposition is unique (\Cref{thm:2k}). That is, there exists a set of \textit{atomic} features from which language model representations are built. A scientific theory based on the linear representation hypothesis might therefore be called the \textit{atomic theory}.

\paragraph{}
Of course, \Cref{thm:2k} does not prove the existence of atomic features. On the contrary, one might take the strength of \Cref{thm:2k}'s conclusion as evidence \textit{against} the linear representation hypothesis (LRH). 
Indeed, there is much reason to be skeptical of the existence of atomic features. Sparse autoencoders (SAEs)---which are designed to recover the features posited by the LRH \citep{cunningham2023sparse, bricken2023monosemanticity, gao2024scaling, movva2025sparse, peng2025use}---learn different features depending on the setting. Even SAEs in settings that vary only by random seed learn different features \citep{paulo2025sparse,fel2025archetypal}. Coarse features in small SAEs can ``split'' into granular features in larger SAEs \citep{chanin2026absorption,bussmann2025learning,templeton2024scaling}. How can we square variation in learned SAE features with the atomic theory?

\Cref{thm:recovery-principle}, which we call the recovery principle, presents a solution. It says that under the atomic theory, SAEs of increasing size trained with appropriate sparsity on a distribution $\DD$ recover a growing prefix of the most prevalent atoms in $\DD$. \Cref{thm:hierarchical-recovery-principle} further shows that the recovery principle holds even when features are hierarchically structured. The recovery principle is consistent with observed variation in SAE features, while also establishing a substantive connection between these SAEs and the hypothesized atoms. \Cref{fig:recovery-principle-summary} illustrates this connection. 

Beyond reconciling existing empirical observations with the atomic theory, the recovery principle plays another crucial role in the development of the theory: it makes the theory empirically testable. It yields the following three testable predictions.
\begin{enumerate}
    \item[\textbf{(P1)}] Many features in smaller SAEs have matches in all larger SAEs trained on the same data (corresponding to the most prevalent atomic features in the distribution).
    \item[\textbf{(P2)}] SAEs trained on different data have many matching features (corresponding to the atomic features that are prevalent in \textit{both} distributions).
    \item[\textbf{(P3)}] There exist hierarchies where sufficiently large SAEs recover both parent \textit{and} child features. 
\end{enumerate}
These predictions---while, strictly speaking, consistent with existing results---run against the conventional wisdom that SAE features are unstable and become strictly more granular as SAE size increases. P3 suggests a particularly surprising phenomenon: parent features lost as SAE size increases (as in feature splitting \citep{templeton2024scaling,chanin2026absorption,bussmann2025learning}) should eventually return in even larger SAEs. We call this \textit{mesoscale} feature splitting. (In \Cref{sec:molecular-model}, we give a ``molecular'' explanation that also clarifies the importance of correctly choosing sparsity \citep{chanin2025sparse}.)

\definecolor{atomicblue}{RGB}{151,207,251}
\definecolor{datasetpink}{RGB}{240,161,151}
\definecolor{datasetgold}{RGB}{239,188,62}

\begin{figure}
\centering
\resizebox{0.9\linewidth}{!}{%
\begin{tikzpicture}[
    x=1cm, y=1cm,
    outline/.style={
        draw=black,
        line width=1.35pt
    },
    pink ellipse/.style={
        outline,
        fill=datasetpink,
        fill opacity=0.33,
        draw opacity=1
    },
    gold ellipse/.style={
        outline,
        fill=datasetgold,
        fill opacity=0.40,
        draw opacity=1
    },
    description/.style={
        align=center,
        font=\Large,
        inner sep=0pt,
        outer sep=0pt
    }
]

\path[outline, fill=atomicblue]
    (0,0) ellipse[x radius=6.499cm, y radius=2.679cm];

\foreach \cx/\cy/\rx/\ry/\ang in {
    -2.466/2.500/3.442/1.775/-48.89,
    -2.814/2.921/3.115/1.288/-48.91,
    -3.224/3.360/2.762/0.904/-48.89
}{
    \begin{scope}[shift={(\cx,\cy)}, rotate=\ang]
        \path[pink ellipse]
            (0,0) ellipse[
                x radius=\rx cm,
                y radius=\ry cm
            ];
    \end{scope}
}

\foreach \cx/\cy/\rx/\ry/\ang in {
    1.770/1.445/4.024/2.082/27.02,
    2.348/1.715/3.643/1.511/26.79,
    2.967/2.049/3.228/1.061/26.69
}{
    \begin{scope}[shift={(\cx,\cy)}, rotate=\ang]
        \path[gold ellipse]
            (0,0) ellipse[
                x radius=\rx cm,
                y radius=\ry cm
            ];
    \end{scope}
}

\node[description] at (-8.515,4.240)
    {SAE dictionaries of different\\
     sizes trained on dataset 1};

\node[description] at (8.980,3.820)
    {SAE dictionaries of different\\
     sizes trained on dataset 2};

\node[font=\Large, inner sep=0pt] at (0,-1.860)
    {Atomic Features};

\end{tikzpicture}%
}
\caption{The recovery principle says that SAEs of increasing size recover a growing prefix of atomic features. This is consistent with observed variation in SAEs, while giving testable predictions that counter conventional wisdom: SAEs of different size and trained on different data share features.}
\label{fig:recovery-principle-summary}
\end{figure}

\paragraph{}
We test P1, P2, and P3 by training SAEs of sizes ranging in powers of two from 512 to 131,072 on two large embedding models: gemini-embedding-2 and llama-embed-nemotron-8b.
\begin{itemize}
\item Testing P1, we find that about 70\% of features recovered by size-4K SAEs simultaneously have matches in SAEs of size 8K, 16K, 32K, 65K, and 131K. 
\item Testing P2, we find that 26-59\% of features are shared across pairs of SAEs trained on five different distributions, compared to mostly 0\% and at most 15\% for a PCA baseline. 
\item Testing P3, we find that on three hierarchies (prefixes, geography, and species), recovery at every level is maximized at size 65K or 131K. In fact, we find instances of the predicted mesoscale splitting, where parent recovery initially decreases, but increases again at larger widths.
\end{itemize}
These results suggest that the atomic theory is a promising scientific theory of language model representations---one that is worth further studying and testing. A practical implication is that by scaling SAEs with appropriate sparsity, we should expect to recover more and more stable atomic features, including both coarse features and granular features. Stepping back, the theory seems to point to a ``rigidity principle'': efficient sparse representations of information seem to be pressured towards shared concepts.
\section{Overview}\label{sec:overview}
We begin by presenting the conceptual ideas of the atomic theory. We start by motivating the formalization of the linear representation hypothesis that serves as the foundation for our work. We then discuss how this formalization leads to atomic features, the recovery principle, and testable predictions. We also give intuition by analogizing to the construction of vocabularies in natural language---noting the apparent pressure towards certain shared concepts.

\paragraph{Formalizing the linear representation hypothesis.} A language model representation is a function $f: X\rightarrow \RR^d$, where $X$ is the input space (e.g., texts) and $f(x)$ is a representation of the input (e.g., neuron activations or an embedding). The main idea of the linear representation hypothesis, we recall, is to write
\begin{equation}\label{eq:lrh}
    f(x) = \sum_{i=1}^M z_i(x)a_i,
\end{equation}
where we call $z_1, z_2, \cdots, z_M: X\rightarrow \RR$ features (which we think of as measuring the presence of individual concepts in the input) and $a_1, a_2, \cdots, a_M\in \RR^d$ directions (directions in the representation space that correspond to each concept). Concretely, we can think of $z_1(x)$ as measuring the extent to which $x$ discusses cats, and $a_1$ to be the direction in which this information is represented within $f(x).$ We call $z_i(x)$ the activation of feature $i$ on input $x$. Taking $a_1, a_2, \cdots, a_M$ to be the columns of $A\in \RR^{d\times M}$ and defining the ``feature representation'' $z: x\mapsto (z_1(x), z_2(x), \cdots, z_M(x)),$ we can simply write $f(x) = Az(x).$

We should also expect features to be \textit{accessible} from a representation \citep{garg2026many}. If $f(x)$ is a sufficiently expressive representation of the input $x$, then $z_1(x), z_2(x), \cdots, z_M(x)$ should be accessible from $f(x)=Az(x)$ alone. For example, given $x = \text{``Djokovic Overwhelms Nadal for the Wimbledon Title''}$, we should be able to learn a number of properties from $f(x)=Az(x)$: that $x$ discusses tennis, that $x$ mentions Novak Djokovic, that $x$ mentions Rafael Nadal, that $x$ discusses a championship match, that $x$ describes an event in England, that $x$ describes a decisive victory, and so on. Formally, we expect there to exist a function $g: \RR^d\rightarrow \RR^M$ such that
\begin{equation}
    g(Az(x)) = z(x),
\end{equation}
i.e., mapping representations to features. However, $M$ greatly exceeds $d$ (embeddings typically have $d$ in the thousands, and there are many more concepts), so $A\in \RR^{d\times M}$ is not invertible in general. However, results in compressed sensing \citep{candes2006robust,candes2006near,donoho2006compressed,garg2026many} say that it is possible when $z(x)$ is sparse and $A$ is well-behaved (which we make precise in Section 3). 

This motivates our formalization of the linear representation hypothesis: that there exists a \textit{well-behaved} matrix $A\in \RR^{d\times M}$ and a $K$\textit{-sparse} function $z: X\rightarrow \RR^M_+$ such that for all $x\in X$, $f(x) = Az(x).$
We formalize this as \Cref{pos:atomic}.

\paragraph{Analogizing to dictionaries.} Sparsity of $z(x)$ means that while the total number of concepts may be enormous, the number of concepts needed to describe a particular input is small (at most $K$). This motivates an analogy to natural language, where we call $A,z$ a dictionary in which the directions $a_i$ are the words.\footnote{Of course, we are drawing from a long line of work on dictionary learning here and throughout \citep{olshausen1996emergence}.} Given a ``situation'' represented by $f(x)$, the idea is to write it as the combination of a small number of words (say, at most $K$). Then $z_i$ specifies in what situations to use the direction $a_i$ (it can be viewed, therefore, as providing a ``definition''). In the setting here, $z_i$ also specifies ``how much'' the word should be used. 

We call $M$ the \textit{dictionary size} (the total number of features) and $K$ the \textit{description length} (the maximum number of features needed to describe an input). Existing work has focused on understanding when dictionaries can be recovered from data for a \textit{fixed} dictionary size and description length \citep{aharon2006uniqueness,hillar2015can,arora2014new,arora2015simple,garfinkle2019uniqueness}. In contrast, the results we introduce here are each investigations of the broader space of potential dictionaries. Are dictionaries unique when size and description length can vary? What does an optimal dictionary look like when you restrict its size? What about when tailored to a particular distribution of situations?

\paragraph{Deriving atomicity and a rigidity property.} 
From the postulate, we can derive the atomic property. \Cref{thm:2k} considers a ``competing'' dictionary $(B,u)$: a well-behaved matrix $B\in \RR^{d\times M'}$ and $K'$-sparse function $u:X\rightarrow \RR^{M'}_+$ satisfying
\begin{equation}
    f(x) = Bu(x)
\end{equation}
for all $x\in X$. Then the result shows that if $z$ is sufficiently rich (exploring each combination of features), then if $B,u$ is Pareto-efficient in dictionary size and description length, then either it is equal to $A,z$ or $K'\ge 2K$ (i.e., it sometimes uses \textit{twice} as many features to describe an input). \Cref{thm:2k} shows that if a sparse system of representation exists, and the space being described is sufficiently rich, then this sparse system is ``rigid.''

To get an intuitive sense of this statement, we draw on the analogy to natural language dictionaries, from which we can state the following informal analog of \Cref{thm:2k}. Consider a dictionary of $M$ words and a set of situations that can each be exactly described by $K$ words. Suppose further that there exist situations spanning each combination of words. Then any dictionary using one fewer word---and that can still perfectly describe each situation---must for some situation use \textit{twice} as many words. (It can be informative to try to play this ``vocabulary construction game'' yourself. We return to this game in a moment.)

\paragraph{From theory to testable predictions.} Up to this point, we have established a pretty but (thus far) hypothetical picture of the underlying structure of language model representations. How can we know if this picture is a useful description of reality? A natural approach is to try and find the hypothesized atoms. To do this, we can try to apply sparse dictionary learning \citep{olshausen1996emergence,arora2014new,makhzani2013k,cunningham2023sparse} to find a matrix $\hat{A}$ and sparse function $\hat{z}$ such that
\begin{equation}
    f(x) \approx \hat{A}\hat{z}(x).
\end{equation}
But major challenges arise in practice. First, the true number of atoms $M$ is unknown (and likely very large). Second, the learned dictionary is shaped by the specific training data. Finally, given a learned dictionary, how are we to know that we have recovered the true atoms?

\paragraph{The recovery principle.} There is a powerful mathematical consequence of \Cref{pos:atomic} that helps resolve these issues. \Cref{thm:recovery-principle} shows that even sparse dictionaries of size $m < M$ that are trained on idiosyncratic (but ``well-behaved'') distributions $\DD$ approximately recover a subset of atoms. In particular, as we increase the size of a sparse dictionary, we recover a growing prefix of the most prevalent atoms in the distribution. Note that the sparsity $K$ must be chosen appropriately to get this guarantee.\footnote{To mix metaphors, the recovery principle suggests that training larger SAEs is something like building larger particle accelerators: by scaling properly, we expect to be able to recover more and more fundamental particles (here, atoms).}

Notice how the recovery principle implies testable implications. First, by training larger SAEs on a fixed distribution, we should see a growing set of ``persistent features''---i.e., features in smaller SAEs that appear in every larger SAE. Second, even if we train SAEs on different data distributions, we should expect to see a set of shared features---the atomic features that are most prevalent in both distributions. This can be contrasted with PCA or SVD, in which learned directions are highly dependent on the data distribution. (SVD had been suggested in the past as a ``solution to Plato's problem'' \citep{landauer1997solution}, but a primary objection was the sensitivity to distributions; SAE features, under the atomic theory, overcome this objection.) The recovery principle and its implications are further illustrations of the rigidity of sparse systems of representation.

\paragraph{Intuition.} Returning to the vocabulary construction game, consider the goal of constructing a dictionary of limited size to best describe situations arising in your daily life using at most $K$ words per situation. (Perhaps, you were confined to tweeting.) You are allowed to invent new words (e.g., referring to combinations of concepts that appear especially frequently for you). The recovery principle says that you would not pick entirely new, invented words; rather you would adopt many existing words---and in particular, the existing words that are most relevant for your own daily life. These words are retained even if you are allowed a larger dictionary.

To give a concrete example, suppose that each day you looked outside your window and saw a tree by a sidewalk. Perhaps, in extremely limited vocabularies, you would define a word ``sidewalktree.'' This would save you budget in dictionary size and in description length. But you would be unable to describe a ``tree'' or ``sidewalk'' on their own. If you wanted to describe an isolated tree, you would need a word that means ``nosidewalk'' and call it a ``nosidewalk'' ``sidewalktree,'' in which case you have now used just as many words and more description length than if you had words for ``tree'' and ``sidewalk.'' In this way, there is pressure towards including the atoms ``tree'' and ``sidewalk'' in a dictionary. (To actually prove the recovery principle requires some more sophisticated reasoning.)

In a sense, of course, humanity has been playing the vocabulary game for millenia. Different societies have come up with languages of different lengths to describe different distributions of situations. Quite strikingly, these languages have an enormous amount of overlap in the concepts given words. In linguistics, a ``Swadesh list'' gives concepts that appear in essentially \textit{all} languages, including concepts like I, you, five, short, knee, to drink, to fly, to count, river, sky, red, green, year, and round (among many others). The recovery principle offers some reasoning for this---and hints at a formal notion of ``universal concepts.''

\paragraph{Recovery of hierarchical features.} The recovery principle shown in \Cref{thm:recovery-principle} makes an assumption that the training distribution is well-behaved: that for every pair of features $z_i,z_j$, there exist inputs for which $z_i$ is active but $z_j$ is not (and vice-versa). In \Cref{thm:hierarchical-recovery-principle}, we show that the recovery principle continues to hold (with an adjustment) for distributions where activations are hierarchically correlated (where a parent feature is active whenever a child feature is active). The result implies that in sufficiently large SAEs, we expect to simultaneously recover parent features and child features. This prediction is especially interesting when compared to existing empirical observations in which a broad parent feature that is recovered in a smaller SAE ``splits'' as SAE size increases. Under our theory, the expected behavior is that as the size of the SAE increases even further (with sparsity chosen appropriately), we should expect the parent feature to once again be recovered.

\section{Theoretical Results}\label{sec:atomic-theory}
We now mathematically present the atomic theory starting from a single postulate and deriving the testable predictions P1-P3 as consequences. It will be useful to first establish a formal definition of a dictionary, which is the primary object we study throughout.

\begin{definition}
An $(m,k,\gamma)$-\textit{dictionary} is a pair $B\in \RR^{d\times m},u: X\rightarrow \RR^m_+$ where $u$ is $k$-sparse and $B\in \RR^{d\times m}$ has unit-norm columns and satisfies the $(\gamma,2k)$-lower restricted isometry property: that is, for all $2k$-sparse $v\in \RR^m$, $\|Bv\|_2 \ge \gamma \|v\|_2$.\footnote{For $d= \Omega_{k,\gamma}(\log m)$, lower RIP holds with high probability for random Gaussians \citep{candes2006near}}
\end{definition}
Recall that $m$ is the dictionary's \textit{size} and $k$ the dictionary's \textit{description length}. The lower restricted isometry property establishes the key property that $u(x)$ can be recovered from $Bu(x).$ Indeed, if $Bu(x) = Bu'$ for some $k$-sparse vector $u'$, then $\|u(x) - u'\|\le \frac{1}{\gamma}\|Bu(x) - Bu'\|_2 = 0$, so $u(x) = u'$. The dictionaries we study are \textit{overcomplete} ($m > d$) and \textit{sparse} ($k < d$). In what follows, we take $\gamma \in (0,1)$ to be a fixed constant and refer to an $(m,k,\gamma)$-dictionary as simply an $(m,k)$-dictionary. 

We can then succinctly state our foundational postulate about $f$.
\begin{postulate}[The Linear Representation Hypothesis]\label{pos:atomic}
    There exists an $(M,K)$-dictionary $A,z$ such that
    \begin{equation}
        f(x) = Az(x)
    \end{equation}
    for all $x\in X$.
\end{postulate}
We refer to $A,z$ as the atomic dictionary, with atomic directions $a_1, a_2, \cdots, a_M$ and atomic features $z_1, z_2, \cdots, z_M.$ Our first task is to justify calling these features atomic.

Past work has shown that if $z$ is sufficiently ``rich,''  then $A,z$ is the unique dictionary with dictionary size and description length exactly equal to $M$ and $K$ \citep{aharon2006uniqueness,hillar2015can,garfinkle2019uniqueness}. Here, we will establish a notion of uniqueness that is quite a bit broader. We say that an $(m,k)$-dictionary is \textit{efficient} if, compared to the atomic dictionary, any \textit{one} of the following three conditions holds:
\begin{enumerate}
    \item[(a)] it has the same dictionary size and description length (i.e., $m=M, k=K$);
    \item[(b)] it has smaller description length (i.e., $k < K$);
    \item[(c)] it has smaller dictionary size and less than twice the description length (i.e., $m < M$ and $k < 2K$).
\end{enumerate}
Now define richness as follows. For $S\subseteq [M]$ with $|S|=K$, let $E_S = \spann\{e_i: i\in S\}$ and $Z_S = z(X) \cap E_S$ (the set of feature activations with support $S$ realized by $X$). We call $z$ $K$-rich if $Z_S$ is not contained in any union of a finite number of proper subspaces of $E_S$.

\begin{theorem}[Unique Efficient Dictionary]\label{thm:2k}
    Suppose $z$ is $K$-rich. If an efficient dictionary $B,u$ satisfies
    \begin{equation}
        f(x) = Bu(x)
    \end{equation}
    for all $x\in X$, then $B,u=A,z$ up to a permutation of indices.
\end{theorem}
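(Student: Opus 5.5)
The plan is to turn the pointwise identity $Az(x)=Bu(x)$ into a statement about subspaces, and then use the two lower restricted isometry properties to force $B$'s columns to line up with $A$'s.

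\textbf{Step 1 (linearization via richness).} Fix $S\subseteq[M]$ with $|S|=K$. For every $x$ with $z(x)\in Z_S$, the vector $u(x)$ is supported on some $T\subseteq[m]$ with $|T|=k$, and there are only finitely many such $T$. For each $T$ let
\[
V_{S,T}=\{v\in E_S:\ Av\in \spann\{b_j:j\in T\}\},
\]
a subspace of $E_S$. Then $Z_S\subseteq\bigcup_T V_{S,T}$, so by $K$-richness some $T=T(S)$ has $V_{S,T}=E_S$, i.e.\ $\spann A_S\subseteq\spann B_T$.

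Since $B$ satisfies lower RIP at level $2k\ge k$, the submatrix $B_T$ is injective. Hence on $E_S$ there is a linear map $L_S=B_T^{+}A_S$ with $u(x)=L_S z(x)$ for those $x$ with $u(x)$ supported in $T$. Lower RIP for $A$ gives $\dim\spann A_S=K$, while $\dim\spann B_T\le k$. So $k\ge K$, which rules out case (b) immediately. It remains to treat (a) and (c), where $K\le k<2K$.

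\textbf{Step 2 (lattice of supports).} The RIP of $B$ at level $2k$ gives
\[
\spann B_T\cap\spann B_{T'}=\spann B_{T\cap T'}\quad\text{whenever } |T\cup T'|\le 2k.
\]
Likewise, RIP of $A$ at level $2K$ gives $\spann A_R\cap\spann A_{R'}=\spann A_{R\cap R'}$ whenever $|R\cup R'|\le 2K$. Using these, I would define for each $R\subseteq[M]$ with $|R|\le K$ the minimal set $T_R\subseteq[m]$ with $\spann A_R\subseteq\spann B_{T_R}$; it is well defined by the intersection property. I would then check three things:
\begin{itemize}
\item $T_R$ is monotone in $R$;
\item $T_{R\cap R'}\subseteq T_R\cap T_{R'}$;
\item $|T_R|\ge|R|$.
\end{itemize}

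The goal is to show $|T_{\{i\}}|=1$ for every atom $i$, i.e.\ $a_i=\pm b_{\sigma(i)}$. Nonnegativity of both $z$ and $u$ should upgrade this to $a_i=b_{\sigma(i)}$: apply $L_S$ to points of $Z_S$, which by richness cannot all lie on the face where a coordinate is nonpositive. Unit norms then give equality.

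\textbf{Step 3 (main obstacle: forcing singletons).} For distinct $i,i'\in S$, suppose the sets $T_{\{i\}}$ were pairwise disjoint. Then
\[
\sum_{i\in S}|T_{\{i\}}|\le|T_S|\le k<2K,
\]
so some atom would have $|T_{\{i\}}|=1$. But this only gives one such atom, and it needs disjointness. I expect the real work is showing that overlaps are impossible and that the count works for \emph{every} $i$.

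The approach I would try first is to compare two supports $S,S'$ with $S\cap S'=\{i\}$, so that $|S\cup S'|=2K-1\le 2K$. On one hand $\spann A_{S\cup S'}$ has dimension $2K-1$. On the other hand it sits inside $\spann B_{T_S\cup T_{S'}}$, whose dimension is at most $2k-|T_S\cap T_{S'}|$. At the same time $a_i\in\spann B_{T_S\cap T_{S'}}$. Varying $S'$ over many supports through $i$ and intersecting should shrink $T_{\{i\}}$. The shrinking is driven by combining the inequality $k<2K$ with the linearity and sign constraints of the maps $L_S$, which must agree on overlaps $E_{S\cap S'}$. This agreement holds because $u(x)$ is determined by $f(x)$, again by RIP.

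\textbf{Step 4 (conclusion).} Once $a_i=b_{\sigma(i)}$ for all $i$, distinctness and linear independence of the $a_i$ (from RIP) make $\sigma$ injective, so $m\ge M$. Efficiency then forces case (a), and $\sigma$ is a bijection. Finally $u(x)$ and $z(x)\circ\sigma^{-1}$ are both $K$-sparse representations of $f(x)$ in the same RIP dictionary, so the uniqueness argument following the definition of a dictionary gives $u=z$ up to the permutation $\sigma$.
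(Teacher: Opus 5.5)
Your Step 1 is the paper's inclusion lemma together with its consequence $k\ge K$. Your comparison of supports with $S\cap S'=\{i\}$ also correctly settles the case $k=K$, for any $m$. There $\spann A_S=\spann B_{T_S}$ by dimension, and both intersection identities apply because $|T_S\cup T_{S'}|\le 2k=2K$. So $|T_S\cap T_{S'}|=1$ and $a_i$ is a column of $B$; this is essentially the paper's argument for $M'=M$, $K'=K$.

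The gap is Step 3 in the range $K<k<2K$, which is nonempty once $K\ge 2$. There $\spann A_S$ is a proper subspace of $\spann B_{T_S}$. Intersections on the $B$ side can then be strictly larger than on the $A$ side, so the dimension count no longer controls $T_{\{i\}}$. Also, $T_{\{i\}}\subseteq T_S\cap T_{S'}$ for all $S,S'\ni i$ by your own monotonicity, so intersecting more supports can never push below $T_{\{i\}}$.

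More fundamentally, the target $|T_{\{i\}}|=1$ cannot follow from the ingredients you name, because none of them involves $m$. Take $b_1,b_2,a_2,\dots,a_M$ orthonormal, $a_1=(b_1+b_2)/\sqrt2$, and $B=(b_1,b_2,a_2,\dots,a_M)$. Let $u$ replace $z_1$ by two copies of $z_1/\sqrt2$. This gives a nonnegative $(M+1,K+1)$-dictionary with $K+1<2K$. Its maps $L_S$ are linear, sign-preserving and agree on overlaps, yet $|T_{\{1\}}|=2$. So case (c) can only be excluded by an argument that uses $m<M$ globally, and your sketch never uses it.

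The missing idea, which is the core of the paper's proof, is to aim for $m\ge M$ (a matching saturating the atoms) rather than for singletons. Join atom $i$ to the support $F_i=T_{\{i\}}$ of its unique $k$-sparse representation in $B$. Your Step 2 essentially contains the two local facts needed:
\begin{itemize}
\item for $|R|\le 2K$, $|\bigcup_{i\in R}F_i|\ge\rank A_R=|R|$, which is Hall's condition for small sets;
\item for $|R|\le K$, $\bigcup_{i\in R}F_i\subseteq T_R$, so the neighborhood has size at most $k\le 2K-1$.
\end{itemize}
The paper's local-to-global Hall lemma shows these imply Hall's condition for every set. A minimal violator $C$ has $|N(C)|=|C|-1$ and $|C|\ge 2K+1$. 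A perfect matching of $C\setminus\{r\}$ into $N(C)$ lets one grow a spanning tree on $C$ whose edges carry distinct labels from $N(C)$, each lying in both endpoints' neighborhoods. A subtree with $2K$ edges then has a side with at most $K$ vertices whose neighborhood contains all $2K$ labels, a contradiction. Hence $k<2K$ forces $m\ge M$. This empties case (c) without ever showing that atoms are columns of $B$; your singleton argument is then needed only in case (a), where it works.
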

Therefore, no dictionary that exactly describes $f$ can have smaller description length than $A,z$, and any smaller dictionary that exactly describes $f$ must have \textit{double} the description length. One way to view this result is that if a set of inputs can be described by a sparse dictionary, then this sparse dictionary is \textit{rigid}.\footnote{A result by \citet{sitharam2014incidence} establishes a related rigidity result: that one cannot achieve a constant fraction of dictionary size without description length blowing up to be on the order of embedding dimension \citep{sitharam2014incidence}. We state this result as \Cref{thm:generic-rigidity} and give an alternate proof using Sard's theorem.}

The main challenge of the proof is to show that $k < 2K \implies m\ge M$. The key idea is to construct a bipartite graph between directions in the two dictionaries, where $a_i$ and $b_j$ share an edge if $b_j$ is used in the reconstruction of $a_i$ (we will first have to show that $a_i$ has a $k$-sparse representation in $B$). Then, we can show that if $k<2K$, Hall's condition is satisfied for each subset of $A$ directions of size at most $2K$; moreover, no subset of at most $K$ $A$ directions has a neighborhood of size at least $2K$. We then show that these two conditions imply Hall's condition to hold globally. (Interestingly, this appears to be a novel sufficient condition.) Therefore, there exists a one-to-one matching including all $A$ directions, showing that $m\ge M$.

\paragraph{The recovery principle.}
We now establish the recovery principle: that even undersized SAEs trained to approximate idiosyncratic distributions of inputs recover atoms. This gives testable predictions P1 (features in small SAEs persist in larger SAEs) and P2 (SAEs trained on different data learn overlapping features). 

The recovery principle concerns dictionaries that (1) are undersized (i.e., $m<M$) and (2) only approximately describe $f$ on some distribution $\DD$ over $X$. The recovery principle shows conditions under which such dictionaries still recover atoms. Define the dictionary learning objective
\begin{equation}
    \mathcal{L}_\DD (B,u) := \EE_{x\sim \DD}\|f(x) - Bu(x)\|_2^2
\end{equation}
over the reference class of $(m,K)$-dictionaries. Notice that dictionaries in this reference class have description length matching the atomic dictionary; we show in simulations in \Cref{sec:recovery-simulations} that choosing appropriate $k$ is important, but that there is some flexibility. Let $\mathcal{L}_\DD^*(m)$ be the optimal loss over this class. Taking a fixed constant $C_0 > 1,$ let $\hat{A}^{(\DD,m)},\hat{z}^{(\DD,m)}$ be an $(m,K)$-dictionary satisfying
\begin{equation}
    \mathcal{L}_{\DD}(\hat{A}^{(\DD,m)},\hat{z}^{(\DD,m)}) \le C_0\mathcal{L}_{\DD}^*(m).
\end{equation}
Conceptually, this is the kind of dictionary that a sparse autoencoder can learn in practice, obtaining loss within a constant factor of optimal. We are interested in if and when $\hat{A}^{(\DD,m)},\hat{z}^{(\DD,m)}$ recovers atomic features and directions. Formally, we define recovery as follows:
\begin{itemize}
    \item Direction recovery: $\hat{A}$ $\eta$-recovers $a_i$ if there exists $j\in [m]$ such that $\langle \hat{a}_j, a_i \rangle > \eta.$
    \item Activation recovery: $\hat{z}$ $(\eta,\DD)$-recovers $z_i$ if there exists $j\in [m]$ and a threshold $\tau$ such that $\operatorname{F1}_\DD(\mathbf{1}\{z_i > 0\}, \mathbf{1}\{\hat{z}_j > \tau\}) > \eta.$
\end{itemize}
We show that recovery can happen as long as the distribution $\DD$ is well-behaved. As an example of why some kind of assumption on $\DD$ is necessary, notice that if two atomic features $z_i, z_j$ were exactly co-occurring on a distribution, then it would not be possible to disentangle them. 

This motivates the following definition. A distribution $\DD$ over $X$ is $(\rho, c_-, c_+)$-regular if the following conditions hold for $x\sim \DD.$ First, as a simplification, let $\supp(z(x))$ have size $K$ for all inputs. Let $h_S:(0,1)^K\rightarrow \RR_+$ be the density of $z_S(x)$ conditional on the event $\supp(z(x))=S.$ Then the two conditions are
\begin{itemize}
    \item $\Pr[j\notin \supp(z(x))\,|\,i\in \supp(z(x))] \ge \rho$ for all pairs $i,j\in [M]$ with $i\neq j$.
    \item $c_- \le h_S(w) \le c_+$ for all $w\in (0,1)^K.$
\end{itemize}
The idea of the recovery principle, then, is that if $\DD$ is regular, then undersized sparse dictionaries can recover the most prevalent atoms in $\DD$. 

Given $\DD$, set $p_i := \Pr[i\in \supp(z(x))]$, the probability that an atom $i$ is active for a given input. Choose $\sigma: [M]\rightarrow [M]$ such that $p_{\sigma(1)}\ge p_{\sigma(2)} \ge \cdots \ge p_{\sigma(M)},$ determining the order of prevalence. Intuitively, we may expect these probabilities to follow a power-law distribution $p_{\sigma(i)} \propto i^{-\alpha}$. Define $T(m) = \sum_{j = m+1}^M p_{\sigma(j)}$, which we can call the tail mass beyond $m$. Then \Cref{thm:recovery-principle} below says that an atom is recovered in a dictionary of size $m$ whenever its prevalence exceeds the tail mass $T(m)$ by a constant factor.

\begin{theorem}[The Recovery Principle]
\label{thm:recovery-principle}
    Let $\DD$ be $(\rho, c_-, c_+)$-regular. Then for every $\eta < 1$, there exists a constant $C$ not depending on $m$ or $M$ such that for all $i\in [M]$ satisfying
    \begin{equation}
        p_{i} \ge CT(m),
    \end{equation}
    $\hat{A}^{\DD,m}$ $\eta$-recovers $a_i$ and $\hat{z}^{\DD,m}$ $(\eta,\DD)$-recovers $z_i$
\end{theorem}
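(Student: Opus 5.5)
Our plan is a sandwich argument: an upper bound on the optimal loss, then a lower bound on the loss of any dictionary that fails to recover a given atom $i$.

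\emph{Upper bound.} Take the truncated atomic dictionary. Its columns are $a_{\sigma(1)},\dots,a_{\sigma(m)}$, with features $z_{\sigma(j)}$ (still $K$-sparse). Since it is a column subset of $A$, it inherits lower RIP. Its error on $x$ is $\|\sum_{j>m} z_{\sigma(j)}(x)a_{\sigma(j)}\|^2$. The activations lie in $(0,1)$ and there are at most $K$ active terms, so this error is at most $K\cdot|\supp(z(x))\cap\sigma(\{m+1,\dots,M\})|$. Taking expectations,
\begin{equation}
\mathcal{L}^*_\DD(m)\le K\,T(m), \qquad\text{so}\qquad \mathcal{L}_\DD(\hat A,\hat z)\le C_0K\,T(m).
\end{equation}

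\emph{Lower bound.} Fix an atom $i$ with $p_i\ge CT(m)$. We want to show: if $\hat A$ does not $\eta$-recover $a_i$, or $\hat z$ does not $(\eta,\DD)$-recover $z_i$, then the loss is at least $c\,p_i$, with $c$ depending only on $\rho,c_\pm,K,\gamma,\eta$. Choosing $C>C_0K/c$ then gives a contradiction.

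Conditioning on $i\in\supp(z(x))$ splits the loss into pieces indexed by the support $S\ni i$. On each piece, $f(x)=A_Sz_S(x)$, and $z_S$ has density between $c_-$ and $c_+$ on $(0,1)^K$. The key geometric step is as follows. Suppose a $k$-sparse $\hat z(x)$ reconstructs $f(x)$ to within a small error on most of this full-dimensional cube. There are only finitely many supports $\hat S$ of size $K$ for $\hat z$, so by pigeonhole some fixed $\hat S$ handles a set of positive density. On that set, $\spann(A_S)$ is approximated by $\spann(\hat A_{\hat S})$. Combining lower RIP of $\hat A$ (over $2K$ columns) with the RIP of $A$ (which makes the two spans robustly comparable) should then force
\begin{equation}
\dist(a_i,\spann \hat A_{\hat S})\ \text{small}.
\end{equation}

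Spans alone are not enough: we must isolate a \emph{single} column close to $a_i$. Here the first regularity condition enters. For each $j\ne i$, a $\rho$-fraction of inputs with $i$ active have $j$ inactive. Intersecting the approximating subspaces over supports $S$ that contain $i$ but vary in their other elements should leave essentially one direction, namely $a_i$. This is the Hall-type or incidence reasoning from \Cref{thm:2k}, run with quantitative slack. The step we expect to be the main obstacle is making this robust:
\begin{itemize}
\item the ``exact'' argument from \Cref{thm:2k}, using the richness assumption, must be replaced by a stability version, with the density bounds playing the role of richness;
\item we must track how the error budget $O(K\,T(m))$, which may be allocated adversarially across supports, degrades the angle bound.
\end{itemize}
We would first try a compactness/contradiction argument to obtain existence of the constant $C$ without explicit values. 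Compactness is available because everything lives on unit-norm columns and bounded activations, and the number of relevant supports touching $i$ can be controlled by normalizing by $p_i$.

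\emph{Activation recovery.} Once $\hat a_j\approx a_i$, we show that $\hat z_j(x)$ is bounded below on most inputs with $z_i(x)>0$, except where $z_i(x)$ is tiny. The density bound $c_+$ makes such inputs a small fraction. Conversely, $\hat z_j$ must be small when $i$ is inactive, or else the reconstruction incurs error proportional to $\gamma$. Thresholding $\hat z_j$ at a suitable $\tau$ then gives F1 above $\eta$. The loss budget $C_0KT(m)\le (C_0K/C)\,p_i$ controls the misclassified mass relative to $p_i$, and enlarging $C$ pushes F1 toward 1.
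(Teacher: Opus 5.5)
Your upper bound is exactly the paper's, and the overall shape is right: span approximation on well-reconstructed supports, then intersection over supports through $i$, then activation transfer. However, the central geometric step fails as proposed.

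\textbf{The pigeonhole step loses uniformity in $m$.} Pigeonholing over learned supports only gives a set of conditional density at least $1/\binom{m}{K}$ on which one $\hat S$ is used. Such a set can lie in a thin slab of the cube. So the gap you can certify between $\spann(\hat A_{\hat S})$ and $\spann(A_S)$ deteriorates as $m$ grows, and $C$ deteriorates with it. The whole point of the theorem is that $C$ does not depend on $m$ or $M$. (The paper's sketch flags exactly this: a priori $\spann(A_S)$ may be tiled by many learned spans.)

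Compactness does not rescue it. The quantities $m$, $M$ and the number $\binom{M-1}{K-1}$ of supports through $i$ are unbounded, so there is no fixed compact parameter space. Normalizing by $p_i$ does not bound the number of supports either.

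The missing idea has two parts.
\begin{enumerate}
\item On a support with $E_S\le\delta$, work with $g(w)=\dist(A_Sw,U_K(\hat A))$ instead of $\hat z$. It lower-bounds the pointwise error and is $\sqrt K$-Lipschitz. Together with $h_S\ge c_-$, a small $L^2$ error then forces $g\le\alpha$ \emph{uniformly} on $[1/4,3/4]^K$, not merely on most of the cube.
\item Differences of $K$-sparse codes are $2K$-sparse, so lower RIP of $\hat A$ gives $\|v(y)-v(y')\|_2\le(\|y-y'\|_2+2\alpha)/\gamma$ for best codes of nearby points. This lets you build nested balls of radii $r_{j+1}=\gamma r_j/(16K)$. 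At each step you displace orthogonally to the current partial learned span and pick up one new coordinate that stays bounded away from zero on the whole smaller ball. After $K$ steps a single support $T$ is used on a ball of radius $R$ depending only on $K$ and $\gamma$. Comparing $y_K\pm Rw$ then gives $\gap\le\alpha/R$.
\end{enumerate}
This is where lower RIP of $\hat A$ does its real work, and comparing spans after a pigeonhole cannot replace it.

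\textbf{The intersection step has the same uniformity problem.} The Hall lemma is not the relevant piece of \Cref{thm:2k}; the span-intersection argument in its uniqueness part is. Intersecting approximately equal subspaces one at a time accumulates error with the number of supports, which grows with $M$. The paper instead argues on coefficients. Suppose $\gap(\spann(A_{S_\ell}),\spann(\hat A_{T_\ell}))\le r$ for all $\ell$, and set $J=\bigcap_\ell T_\ell$. Approximate a unit $u\in\spann(A_{\bigcap_\ell S_\ell})$ by codes $v_\ell$ supported on $T_\ell$. Each $j\in T_1\setminus J$ is omitted by some $T_\ell$, so $|(v_1)_j|\le\|v_1-v_\ell\|_2\le 2r/\gamma$. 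Deleting these at most $K$ coordinates gives gap at most $r(1+2K/\gamma)$, no matter how many supports are intersected.

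\textbf{How regularity enters.} Regularity is played against the bad mass $\beta\le E/\delta$. Since $\Pr(i\in S(x),\,k\notin S(x))\ge\rho p_i>\beta$, some \emph{good} support contains $i$ and omits each $k\ne i$. This makes the learned intersection a singleton $\{j\}$. The same lemma gives $j\notin T_S$ for good $S\not\ni i$. That is what actually justifies your claim that $\hat z_j$ is small when $i$ is inactive. The claim needs control on supports omitting $i$, not only the ones through $i$ on which you conditioned.

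\textbf{The sign of the match.} Span closeness only gives $|\langle a_i,\hat a_j\rangle|>\eta$, whereas recovery is defined with the signed inner product. You need a separate step using nonnegativity of $\hat z$. If $\hat a_j\approx-a_i$, consider inputs with $z_i\in(1/2,1)$ and the other active coefficients tiny. They incur error at least $\gamma/4$ on a region of conditional probability bounded below, which contradicts $E_S\le\delta$.
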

The result guarantees recovery of atoms that appear more frequently than the tail mass beyond the $m$-th most frequent atom. For example, if atom prevalences follow a power-law distribution with $\alpha > 1$, the result guarantees that at least the $\Omega(m^{1 - 1/\alpha})$ most prevalent atoms are recovered. In \Cref{sec:recovery-simulations}, we show that in simulations, a much larger fraction of atoms can be recovered---suggesting that the bound shown here is not tight (one reason, perhaps, is that the bound here applies for any learned dictionary with low enough loss, and does not take advantage of the structure of \textit{optimal} learned dictionaries). It would be exciting to prove stronger versions of the recovery principle.

To provide some intuition for the result, we sketch the proof. The main lemma shows that if there exists an atomic support $S\subseteq[M]$ such that the learned dictionary approximates representations that lie in $\spann(A_S)$, then there must exist a learned support $T_S\subseteq[m]$ such that $\spann(\hat{A}_{T_S})\approx \spann(A_S).$ This is non-obvious, since in principle, points in $\spann(A_S)$ may have different optimal $K$-sparse supports in $\hat{A}$. An initial observation is that if we can show that a single learned support is being used to re-construct a large enough subset of $\spann(A_S)$, then the span of this support must be aligned with $\spann(A_S)$. This approach does not work immediately, since there are $\binom{m}{K}$ possible supports, so each learned support may only be used for a vanishing fraction of the span as $m$ increases. A key insight is that the lower restricted isometry property implies that optimal supports do not change too much for nearby points, allowing us to iteratively identify a large enough ball in which a single learned support is used. From this lemma, we can then intersect recovered supports to show the recovery of individual atoms. Finally, any sufficiently optimal dictionaries---when the tail mass is small---must recover atomic supports spanned by atoms in the head of the distribution.

\paragraph{}
Predictions P1 and P2 follow as corollaries. Both concern the existence of matching features across dictionaries. For matrices $U$ and $V$, we define $\mu_t(U,V)$ to be the size of the largest one-to-one matching between columns of $U$ and $V$ such that matches have cosine similarity at least $t$. Let $\mu_t(U;V_1,V_2,\cdots,V_s)$ be the maximum number of columns of $U$ that can be simultaneously matched to every $V_1,V_2,\cdots,V_s$. Then for $t < 1$, there exists a constant $C_t$ such that for $R_{\DD,t}(m) := \max\{r\in [M]: p_{\sigma(r)}\ge C_t\sum_{j=m+1}^M p_{\sigma(j)}\}$ the following results hold.

\begin{corollary}[P1: Stability Across Size]
\label{cor:stability-size}
For a regular distribution $\DD$ and dictionary sizes
$m_1,\ldots,m_s \ge m$,
\[
\mu_{t}\!\left(
    \hat A^{(\mathcal D,m)};
    \hat A^{(\mathcal D,m_1)},
    \ldots,
    \hat A^{(\mathcal D,m_s)}
\right)
\ge R_{\DD,t}(m).
\]
\end{corollary}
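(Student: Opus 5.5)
The plan is to derive the corollary directly from \Cref{thm:recovery-principle} with recovery threshold $\eta$ chosen as a function of $t$, followed by a triangle-inequality argument on the unit sphere and a matching argument. Fix $t<1$ and choose $\eta<1$ close enough to $1$ that $\eta$-recovery of an atom by two unit vectors forces them to have cosine similarity at least $t$, and also forces a learned column to be close to at most one atom. Concretely, if $\langle \hat a, a_i\rangle>\eta$ and $\langle \hat a', a_i\rangle>\eta$ with all vectors unit norm, then $\|\hat a-\hat a'\|\le 2\sqrt{2(1-\eta)}$. Hence $\langle \hat a,\hat a'\rangle\ge 1-4(1-\eta)$, and it suffices to take $\eta\ge 1-(1-t)/4$. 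Set $C_t$ to be the constant $C$ provided by \Cref{thm:recovery-principle} for this $\eta$.

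Next, for $r\le R_{\DD,t}(m)$, the atom $\sigma(r)$ satisfies $p_{\sigma(r)}\ge p_{\sigma(R)}\ge C_t T(m)$. Since $T$ is nonincreasing in the dictionary size, we also have $p_{\sigma(r)}\ge C_t T(m_\ell)$ for every $m_\ell\ge m$. Applying \Cref{thm:recovery-principle} to each of $\hat A^{(\DD,m)},\hat A^{(\DD,m_1)},\dots,\hat A^{(\DD,m_s)}$ then gives, for every such $r$, columns $j_0(r)$ of $\hat A^{(\DD,m)}$ and $j_\ell(r)$ of $\hat A^{(\DD,m_\ell)}$ whose inner product with $a_{\sigma(r)}$ exceeds $\eta$. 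By the first step, $\langle \hat a^{(m)}_{j_0(r)},\hat a^{(m_\ell)}_{j_\ell(r)}\rangle\ge t$.

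It remains to show that these assignments are one-to-one, so that they form matchings. For this I would use the RIP on $A$: for $i\ne i'$, taking $v=e_i-e_{i'}$ gives $\|a_i-a_{i'}\|\ge\gamma\sqrt2$. If a single unit vector had inner product greater than $\eta$ with both $a_i$ and $a_{i'}$, then $\|a_i-a_{i'}\|\le 2\sqrt{2(1-\eta)}$. This is a contradiction once $\eta>1-\gamma^2/4$, so I would also impose that condition on $\eta$, which is allowed because $\gamma$ is a fixed constant. Consequently, distinct atoms $\sigma(r)$ are recovered by distinct columns in each dictionary. The maps $r\mapsto j_0(r)$ and $r\mapsto j_\ell(r)$ are therefore injective, and composing $j_\ell\circ j_0^{-1}$ yields a one-to-one matching of the $R_{\DD,t}(m)$ columns $\{j_0(r)\}$ into every $\hat A^{(\DD,m_\ell)}$ simultaneously.

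The main obstacle is bookkeeping the constants rather than any deep step. $C_t$ must be taken from \Cref{thm:recovery-principle} at the $\eta$ dictated by both $t$ and $\gamma$, and it must not depend on $m$ or $M$, which the theorem guarantees. One must also check the monotonicity $T(m_\ell)\le T(m)$, so that the hypothesis verified at size $m$ transfers to all larger sizes.
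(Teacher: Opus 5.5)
Your proposal is correct and is essentially the argument the paper intends. The paper writes out no separate proof and only notes that P1 follows from \Cref{thm:recovery-principle}; your derivation fills in exactly what is omitted: fix $\eta$ in terms of $t$ and $\gamma$, transfer the size-$m$ guarantee to every $m_\ell\ge m$ because $T$ is nonincreasing, compose the atom-to-column assignments, and use the RIP separation $\|a_i-a_{i'}\|_2\ge\gamma\sqrt{2}$ to supply the injectivity a matching requires. One caveat is inherited from the paper rather than from your reasoning: the proof of \Cref{thm:recovery-principle} only covers atoms with $p_i>0$, so in the degenerate case $T(m)=0$ with zero-prevalence atoms present, $R_{\DD,t}(m)$ should be restricted to positive-prevalence atoms.
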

That is, there exists a persistent set of features across all dictionaries of size at least $m$.

\begin{corollary}[P2: Stability Across Data]
\label{cor:stability-data}
For any regular distributions $\DD$ and $\DD'$ on the same atomic representation,
\[
\mu_{t}\!\left(
    \hat A^{(\mathcal D,m)};
    \hat A^{(\mathcal D',m)}
\right)
\ge
\bigl|\sigma([R_{\DD,t}(m)])\cap\sigma'([R_{\DD',t}(m)])\bigr|.
\]
\end{corollary}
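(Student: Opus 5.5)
We derive \Cref{cor:stability-data} from \Cref{thm:recovery-principle} by using the atomic directions as a common reference against which both learned dictionaries are matched. Fix $t<1$ and pick $\eta<1$ close enough to $1$ that any two unit vectors each having inner product greater than $\eta$ with the same unit vector $a_i$ have inner product at least $t$ with each other. Concretely, if $\langle u,a\rangle>\eta$ and $\langle v,a\rangle>\eta$, then $\|u-a\|^2<2(1-\eta)$ and $\|v-a\|^2<2(1-\eta)$. Hence $\|u-v\|<2\sqrt{2(1-\eta)}$, which gives $\langle u,v\rangle>1-4(1-\eta)$. So $\eta=1-(1-t)/4$ suffices. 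We then take $C_t$ to be the larger of the two constants that \Cref{thm:recovery-principle} supplies for this $\eta$ on $\DD$ and on $\DD'$. If the constant in the theorem depends only on $(\rho,c_-,c_+)$ and $\eta$, a single $C_t$ serves both distributions. Otherwise we simply take the maximum, noting that $R_{\DD,t}$ is monotone in $C_t$.

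Let $I=\sigma([R_{\DD,t}(m)])\cap\sigma'([R_{\DD',t}(m)])$. Take any $i\in I$, say $i=\sigma(r)$ with $r\le R_{\DD,t}(m)$. Because the $p_{\sigma(\cdot)}$ are nonincreasing, $p_i=p_{\sigma(r)}\ge p_{\sigma(R_{\DD,t}(m))}\ge C_tT(m)$. \Cref{thm:recovery-principle} then gives an index $j(i)\in[m]$ with $\langle\hat a^{(\DD,m)}_{j(i)},a_i\rangle>\eta$. The same argument applied to $\DD'$ gives $j'(i)$ with $\langle\hat a^{(\DD',m)}_{j'(i)},a_i\rangle>\eta$. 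By the choice of $\eta$, the pair $(j(i),j'(i))$ has cosine similarity at least $t$.

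It remains to show that the map $i\mapsto(j(i),j'(i))$ is a one-to-one matching, meaning $j$ and $j'$ are both injective on $I$. This is the only real obstacle. Suppose $j(i)=j(i')$ for distinct atoms $i,i'$. Then the unit vector $\hat a_{j(i)}$ is $\eta$-close to both $a_i$ and $a_{i'}$, which forces $\langle a_i,a_{i'}\rangle>1-4(1-\eta)$. But lower RIP on $A$ applied to the $2$-sparse vector $e_i-e_{i'}$ gives
\[
2-2\langle a_i,a_{i'}\rangle=\|a_i-a_{i'}\|^2\ge 2\gamma^2,
\]
so $\langle a_i,a_{i'}\rangle\le 1-\gamma^2$. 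Choosing $\eta$ additionally so that $4(1-\eta)<\gamma^2$ yields a contradiction. Since $\gamma$ is a fixed constant, this tightening only changes $C_t$ by a constant. The same argument shows $j'$ is injective.

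Therefore $\{(j(i),j'(i)):i\in I\}$ is a matching of size $|I|$ between columns of $\hat A^{(\DD,m)}$ and $\hat A^{(\DD',m)}$ in which every matched pair has cosine similarity at least $t$. This gives $\mu_t\ge|I|$, which is the claimed bound.
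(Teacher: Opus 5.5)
Your proposal is correct and is essentially the derivation the paper intends: the paper states P2 as an immediate consequence of \Cref{thm:recovery-principle} without writing out a separate proof. Your steps (choosing $\eta$ from $t$ and $\gamma$, applying the theorem separately to $\DD$ and $\DD'$, linking the two learned columns through $a_i$ by the triangle inequality, and using lower RIP on $A$ to make $i\mapsto j(i)$ and $i\mapsto j'(i)$ injective) are exactly what that corollary requires. One caveat comes from the theorem rather than from your reasoning: the appendix proof of \Cref{thm:recovery-principle} only handles atoms with $p_i>0$, so if zero-prevalence atoms are allowed and $T(m)=0$, then $R_{\DD,t}(m)=M$ and the step ``the theorem gives $j(i)$'' is not actually supported for every atom in the prefix (under the main-text regularity condition, which implicitly forces every $p_i>0$, this case does not arise).
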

Here, $\sigma([R_{\DD,t}(m)])\cap\sigma'([R_{\DD',t}(m)])$ is the set of atoms in both distributions' guaranteed prefixes. So dictionaries trained on different distributions have shared features---the atoms prevalent in both distributions.

\paragraph{The hierarchical recovery principle.}
To establish P3 (that sufficiently large SAEs recover both parent features and child features), we show that the recovery principle holds under distributions where atomic feature activations have hierarchical correlation: i.e., when the activation of a child atom implies the activation of a parent atom. This violates the regularity condition on $\DD$ used in \Cref{thm:recovery-principle}.

We consider a setting with $N$ families, each of which has one parent atom and two child atoms. It is possible to generalize to more than two child atoms. We let $M = 3N$ and $K = 2L$, and re-index the atoms as $i, i1, i2$ for $i\in [N]$. Here, the parent feature $z_i$ is always active when the child features $z_{i1}$ or $z_{i2}$ are active. We then establish a definition of \textit{hierarchical regular} distributions. Under this definition, parent features satisfy the standard regularity assumptions, exactly one child feature activates at a time, and the space of parent-child activation pairs is well-explored. The full definition is given in \Cref{sec:recovery-proofs}.

We then define analogs of quantities used in the regular recovery principle. For a given $\DD$ and a family $i\in [N]$, let $p_i = \Pr[i\in \supp(z(x))]$ and $\sigma: [N]\rightarrow [N]$ sort the families in monotonically decreasing order of prevalence according to occurrence of the parent feature (so $\sigma(1)$ is the most prevalent family). Define $T(n) = \sum_{j=n+1}^N p_{\sigma(j)},$ the tail mass beyond the $n$-th family.

Then \Cref{thm:hierarchical-recovery-principle} below shows that an analog of \Cref{thm:recovery-principle} holds in hierarchical settings for \textit{activation recovery}. In other words, we should expect to be able to recover features that activate in the presence of the parent, and also features that activate in the presence of each child, yielding P3.

\begin{theorem}[The Hierarchical Recovery Principle]
\label{thm:hierarchical-recovery-principle}
    Let $\DD$ be hierarchical regular. Then for every $\eta < 1$, there exists a constant $C$ not depending on $m$ or $M$ such that $\hat{z}^{\DD,m}$ $(\eta,\DD)$-recovers the family of features $z_i, z_{i1}, z_{i2}$ for all $i\in [N]$ satisfying
    \begin{equation}
        p_{i} \ge CT\left(\left\lfloor\frac{m}{3}\right\rfloor\right).
    \end{equation}
\end{theorem}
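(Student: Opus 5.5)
The plan is to reduce \Cref{thm:hierarchical-recovery-principle} to the machinery behind \Cref{thm:recovery-principle} through a change of basis within each family. In the hierarchical setting every active family $i$ contributes exactly two active atoms: the parent $i$ and exactly one child $ic$, with $c\in\{1,2\}$. Its contribution is therefore $z_i a_i + z_{ic}a_{ic}$, which lies in the two-dimensional plane $P_{i,c}=\spann(a_i,a_{ic})$. I will treat each family as a block of three directions $\{a_i,a_{i1},a_{i2}\}$ whose active supports are always one of the two pairs $\{i,i1\}$ or $\{i,i2\}$. Atomic supports then have the form $S=\bigcup_{i\in F}\{i,ic_i\}$ with $|F|=L$, and $|S|=2L=K$. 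The regularity assumptions for hierarchical regular distributions (parent regularity across families, bounded densities on the well-explored parent--child activation pairs) are exactly what make the main lemma of the proof of \Cref{thm:recovery-principle} applicable to such $S$. That lemma says that if $\hat A$ approximates $\spann(A_S)$ well on a set of nonnegligible mass, then some learned support $T_S$ with $|T_S|=K$ has $\spann(\hat A_{T_S})\approx\spann(A_S)$. Its proof uses only the lower RIP and the density bounds on $z_S$, so I would re-verify it for these block supports rather than reprove it.

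The first step is the loss comparison. I construct a competitor $(m,K)$-dictionary that keeps all three atoms of the top $\lfloor m/3\rfloor$ families. It incurs loss only on inputs that activate a tail family, so $\mathcal L^*_\DD(m)\lesssim T(\lfloor m/3\rfloor)$, and hence $\mathcal L_\DD(\hat A,\hat z)\le C_0C' T(\lfloor m/3\rfloor)$. This is where the $\lfloor m/3\rfloor$ in the statement comes from.

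The second step uses $p_i\ge CT$ to conclude that, conditional on family $i$ being active, most inputs have small reconstruction error. Combined with the main lemma, this gives recovered learned supports for many atomic supports $S\ni i,ic$. The third step intersects these supports. For two supports $S,S'$ that share $\{i,ic\}$ but are otherwise disjoint, which is possible because parents are regular and rarely forced to co-occur, the spans $\spann(\hat A_{T_S})\cap\spann(\hat A_{T_{S'}})$ approximate the plane $P_{i,c}$. This is the \emph{key divergence} from the flat case: intersecting can only isolate the plane $P_{i,c}$, not the individual directions $a_i$ and $a_{ic}$. That explains why the theorem claims only activation recovery.

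The last step extracts activation recovery from the planes. The learned dictionary must reconstruct $P_{i,1}$ and $P_{i,2}$ on well-spread activation pairs, so by RIP and the density lower bound $c_-$ each plane must be spanned, approximately, by a small set of learned columns used consistently. I then argue that some learned column $j_1$ fires (above a threshold $\tau$) essentially whenever $ic=i1$ and rarely otherwise. This holds because $P_{i,1}\ne P_{i,2}$, and the uniqueness-of-sparse-representation argument used in \Cref{thm:2k} forces the column spanning $P_{i,1}$ beyond $P_{i,1}\cap P_{i,2}\approx\spann(a_i)$ to be active exactly on child-$1$ inputs. The same argument gives a column $j_2$ for child $2$. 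For the parent, I would show that either some column aligns with $a_i$ (the shared line), or the learned features are merged parent-child directions. In the merged case I still need a single column active on all family-$i$ inputs. I expect this parent step to be the \textbf{main obstacle}, because a dictionary could in principle represent the family using only two merged directions $a_i+a_{i1}$ and $a_i+a_{i2}$ with no parent column, and that would suffice under an exact $K=2L$ budget. To rule this out I would first use the well-explored activation pairs: with densities bounded on $(0,1)^2$, the ratio $z_{ic}/z_i$ varies, so two merged directions cannot span the plane $P_{i,c}$ within support budget one per child. Any exact reconstruction of the plane therefore needs a second direction shared across $c=1,2$, and by the intersection step that direction lies near $a_i$. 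Its activation then yields the parent's F1 bound, and the constant $C$ is chosen large enough that the tail-induced errors keep all three F1 scores above $\eta$.
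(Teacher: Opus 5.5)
Your first two steps match the paper: the truncated competitor gives $E\le 2C_0KT(\lfloor m/3\rfloor)$, and you apply \Cref{lem:local-support} to good supports. The intersection and parent steps, however, contain a genuine gap.

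You assert that intersecting learned supports can only isolate the planes $P_{i,c}$, and you then treat the parent as the main obstacle. This is false, and it hides the idea that makes the proof short. Intersect over \emph{all} good supports containing the parent $i$, that is, those with child $i_1$ and those with child $i_2$ together. Both kinds survive the removal of bad supports, because child balance and parent regularity give $\Pr(i_b\in S(x),\,k\notin F(x))\ge\kappa\rho p_i>\beta$. The atomic intersection is then $\{i\}$. Because \Cref{lem:stable-intersection} yields $|I|=|J|$, the corresponding learned supports share exactly one index $j_i$, with $\spann(\hat a_{j_i})$ close to $\spann(a_i)$. Intersecting only the good supports containing $i_b$ gives $\{i,i_b\}$ on the atomic side and $J_{i_b}=\{j_i,j_{i_b}\}$ on the learned side. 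Finally, $J_{i_1}\cap J_{i_2}=J_i$ makes the three indices distinct. So the parent is the easiest target, not the hardest.

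Your workaround also does not go through as stated. Showing that two merged directions cannot span $P_{i,c}$ does not imply that some column is shared across $c=1,2$, since two unshared columns per plane would also span each plane. What excludes that configuration is lower RIP applied to the two sparse representations of $a_i$, which is exactly the content of \Cref{lem:stable-intersection}. Moreover, activation recovery needs a shared \emph{index}, not just a direction near $a_i$. Your final appeal to ``the intersection step'' presupposes the very conclusion you said was unavailable. The real reason the theorem claims only activation recovery is that child directions are non-identifiable under \eqref{eq:hierarchy-substitution} ($a_{i_b}\mapsto a_{i_b}-\lambda a_i$), not any failure to isolate $a_i$.

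Two further steps need repair.

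First, pairwise regularity does not supply two good supports that share $\{i,i_b\}$ and are otherwise disjoint. Take $N=4$, $L=3$ and $F(x)$ uniform over $3$-subsets: then $\rho=1/3$, yet any two family sets containing $1$ share another family. Regularity only gives, for each other family $k$, some good support containing $i_b$ and omitting $k$. You therefore have to intersect the whole collection, with a gap bound independent of its size, which is exactly how \Cref{lem:stable-intersection} is stated.

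Second, the passage from supports to activations cannot reuse the exact uniqueness argument of \Cref{thm:2k}. Here $\hat z(x)$ is an arbitrary nonnegative $K$-sparse code with small \emph{average} error, neither exact nor optimal. \Cref{lem:activation-transfer} handles this in three parts:
\begin{itemize}
\item It compares $\hat z(x)$ with a projected code $v_S(x)$ via lower RIP and Markov's inequality.
\item It rules out false positives using the property that $j\in T_S$ if and only if the atom lies in $S$, on \emph{all} good supports. This includes supports carrying the other child, whose learned intersection with $J_{i_b}$ must be the singleton $\{j_i\}$, and supports omitting family $i$.
\item It controls false negatives using nonnegativity of $\hat z_j$ and the upper density bound $c_+$, which your proposal never invokes. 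This error term is made small by the choice of $\tau$, not by enlarging $C$.
\end{itemize}
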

The following identity provides useful intuition. If $z_{j}(x) > 0 \implies z_i(x) > 0$, then for all $\lambda \ge 0$ we can re-write
\begin{equation}\label{eq:hierarchy-substitution}
    z_i(x)a_i + z_j(x)a_j = \hat{z}_i(x)\hat{a}_i + \hat{z}_j(x)\hat{a}_j,
\end{equation}
where $\hat{z}_i(x) = z_i(x) + \lambda z_j(x), \hat{a}_i = a_i, \hat{z}_j(x) = z_j(x),$ and $\hat{a}_j = a_j - \lambda a_i$.\footnote{This identity is studied by \citet{chanin2026absorption} as a potential source of feature absorption; an important distinction in our assumptions is that we allow child feature activations to be larger than the parent, meaning that $\hat{z}_i = z_i(x) + \lambda z_j(x) \ge 0 \implies \lambda \ge 0$.} This preserves sparsity and non-negativity of $u$. More importantly, both $\hat{z}_i$ and $\hat{z}_j$ activate on exactly the same examples as $z_i$ and $z_j$. Therefore, even though atomic directions and features are not strictly unique under hierarchical correlation, alternative parameterizations are still equivalent in terms of \textit{when} each feature is active.

\paragraph{Mesoscale feature splitting.} The prediction of simultaneous recovery of both parent and child features at large SAE sizes seems in tension with empirical results that have demonstrated feature splitting and absorption \citep{chanin2026absorption,bussmann2025learning,templeton2024scaling}. P3 suggests that as SAE size continues to increase, we should expect lost parent features to return. 

We give a ``molecular'' model of this phenomenon in \Cref{sec:molecular-model}, in which small SAEs learn a family molecule, medium SAEs learn individual parent-child molecules (this is the regime where a learned feature matching the parent's activation is lost), and large SAEs learn an individual parent atom and individual child atoms. This molecular model can explain further phenomena, such as polysemanticity and semantic ``clusters'' of SAE directions.

\begin{figure}[p]
    \centering
    \captionsetup{position=top,skip=6pt}
    \caption{Testing P1, P2, and P3---the predictions of the atomic theory.}
    \label{fig:main-experiments}
    \captionsetup[subfigure]{position=bottom,font=footnotesize,skip=3pt,justification=justified,singlelinecheck=false}
    \begin{subfigure}{\linewidth}
        \centering
        \caption{\textbf{P1: Persistent features across size.} Number and proportion of directions with matches in every larger SAE. Similarity threshold $t=0.7$. About 70\% of 4K SAE directions are retained in size-8K, 16K, 32K, 65K, and 131K SAEs.}
        \label{fig:persistent-stability}
        \includegraphics[width=0.8\linewidth]{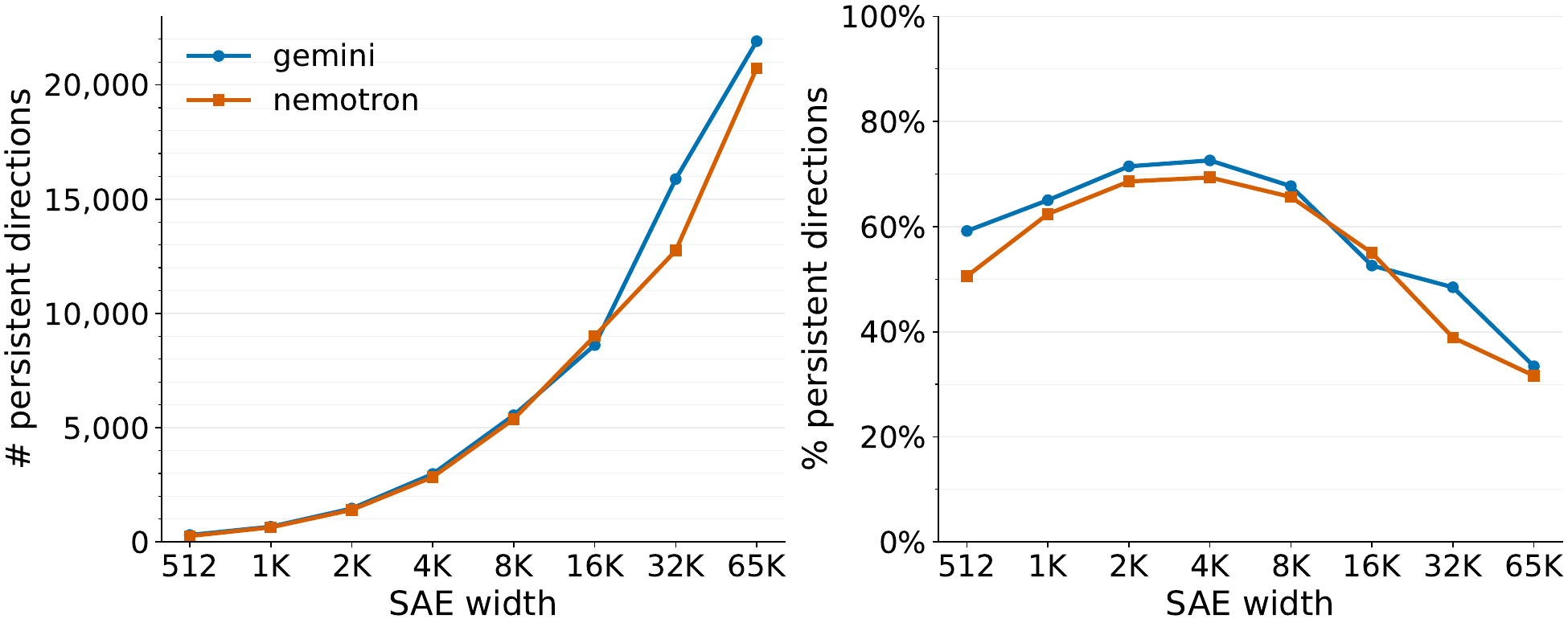}
    \end{subfigure}
    \par\medskip
    \begin{subfigure}{\linewidth}
        \centering
        \caption{\textbf{P2: Stability over data distributions.} Proportion of directions shared between SAE and PCA dictionaries trained on different distributions. Similarity threshold $t=0.7$. We use absolute cosine similarity for PCA directions. 26\%-59\% of SAE feature directions have matches, compared to mostly 0\% and at most 15\% for PCA.}
        \vspace{0.5cm}\includegraphics[width=\linewidth]{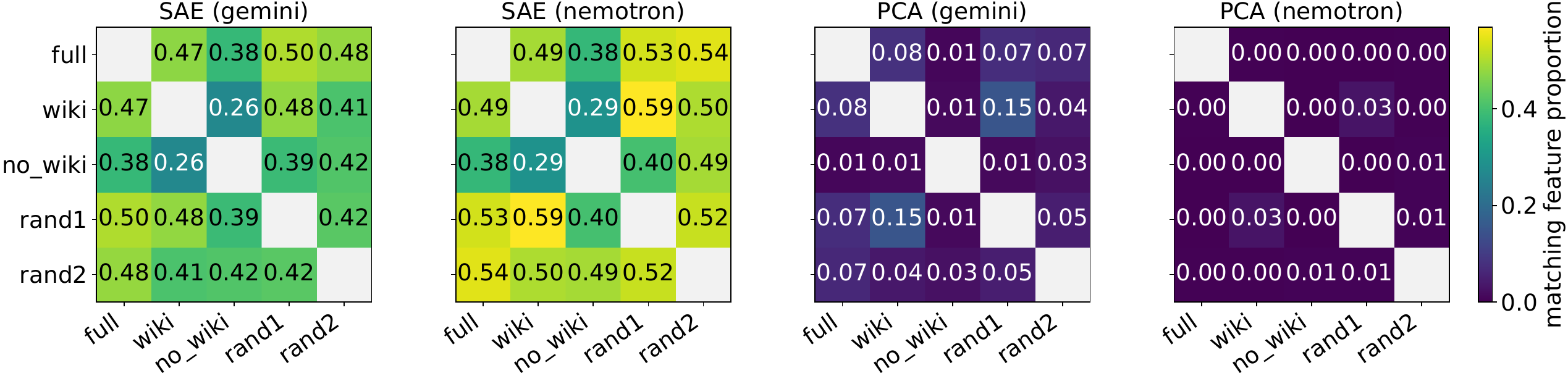}
        \label{fig:dataset-heatmaps}
    \end{subfigure}
    \par\medskip
    \begin{subfigure}{\linewidth}
        \centering
        \caption{\textbf{P3: Hierarchical recovery.} Mean held-out test F1 by hierarchy level and SAE width for Prefix, Cities, and GBIF. Each curve is an unweighted mean over category--model pairs, pooling Gemini and Nemotron. Average recovery is highest across all levels at sizes 65K and 131K.}
        \label{fig:hierarchy-mean-f1}
        \includegraphics[width=\linewidth]{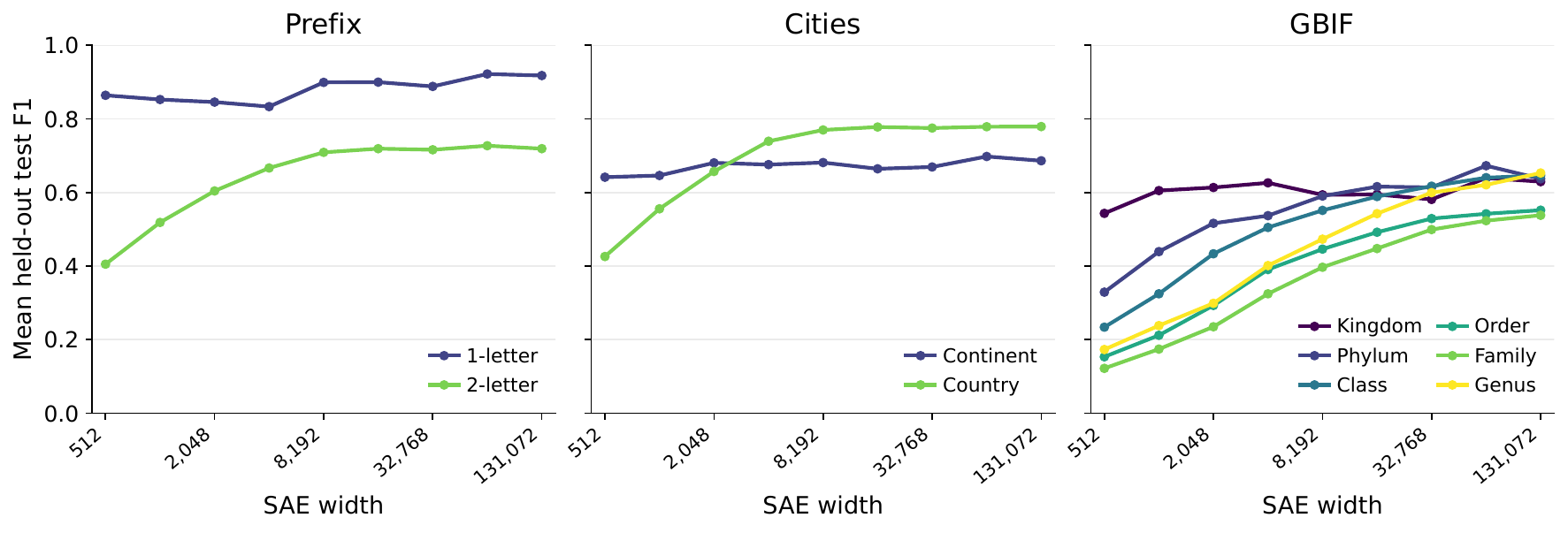}
    \end{subfigure}
\end{figure}

\section{Experiments: Testing P1, P2, and P3}\label{sec:experiments}

We now test the predictions P1, P2, and P3 derived from \Cref{pos:atomic}. We study two large embedding models, gemini-embedding-2 \citep{shanbhogue2026gemini} and llama-embed-nemotron-8b \citep{babakhin2025llama}, which give representations $f(x)$ of dimension $d = 3072$ and $d = 4096$, respectively. We train $k$-sparse autoencoders \citep{makhzani2013k} on embeddings derived from 90M text inputs, comprising 13 datasets used to fine-tune llama-embed-nemotron-8b. An SAE of size $m$ and trained with sparsity $k$ produces a dictionary $\hat{A}\in \RR^{d\times m},\hat{z}:X\rightarrow \RR^m$, where $\hat{A}=W_{\text{dec}}$ is the SAE decoder and $\hat{z}(x)=\topk_k\!\left(\relu\!\left(W_{\enc}(f(x)-b_{\dec})\right)\right)$ are the hidden layer activations. We train SAEs of sizes $m$ in powers of two ranging from 512 to 131,072, with $k=32,32,32,32,64,64,64,128,128$ respectively. (\Cref{sec:p1-extended} reproduces results with $k=128$ fixed.)

\Cref{fig:main-experiments} displays our main results. Details of our experimental setup are given in \Cref{sec:setup}. Extended analyses for P1, P2, and P3 are collected in \Cref{sec:p1-extended,sec:p2-extended,sec:p3-extended}, respectively.

\paragraph{Computing matching features.}
To test P1 and P2, we need to compute matches across dictionaries. We use the following metrics. Given a bipartite graph, a \textit{maximum matching} is a one-to-one matching with the largest number of matches. The \textit{exclusive matching} is the one-to-one matching consisting of all edges whose endpoints have only one neighbor. Given dictionaries $\hat{A},\hat{z}$ and $\hat{A}',\hat{z}',$ we draw edges between directions that have cosine similarity at least $t$ and edges between features that have Pearson correlation at least $t$ over a specified distribution. We then compute the maximum and exclusive matching. Given a matching criterion (maximum or exclusive), we compute a matching independently between the source dictionary and each larger dictionary. A direction or feature is \textit{persistent} if it is matched in every comparison. We use a threshold of $t=0.7$ as our default, following \citet{paulo2025sparse}.

\paragraph{P1: Persistent features across size.}
We start by testing P1, which says that many features in smaller SAEs have matches in all larger SAEs. Many directions and features have matches in \textit{all} larger SAEs. Under independent maximum matching at threshold $t=0.7$, in SAEs of size 4,096, $72.6\%$ of Gemini directions and $69.4\%$ of Nemotron directions are persistent (\Cref{fig:persistent-stability}). By activation similarity, $59.7\%$ of Gemini features and $66.4\%$ of Nemotron features are persistent (\Cref{fig:persistent-stability-pearson}). \Cref{sec:p1-extended} shows results under exclusive matching and under different thresholds.

We also compare feature prevalence across SAE sizes. Persistence suggests that larger SAEs do not learn strictly more granular features. We find that larger SAEs learn a similar number of coarse features as smaller SAEs while learning a longer tail of granular features; in contrast, KMeans clusters become strictly smaller as the number of clusters increases (\Cref{fig:activation-rates}). Consistent with recovering prevalent atomic features, more prevalent features are more likely to be persistent (\Cref{fig:width-prevalence-median}).

\paragraph{P2: Shared features across training data.}
We test P2, which says that SAEs trained on different data learn overlapping features. We use four additional data splits: \wiki, \nowiki, \randomone, and \randomtwo. The pair (\wiki,\nowiki) contain separate datasets (\wiki\,contains datasets that are built using Wikipedia data, and \nowiki\,contains the datasets that are not). The pair (\randomone, \randomtwo) contain separate datasets as well, but the partition is chosen randomly rather than by content type. None of the four data splits contain the multilingual dataset MIRACL. We train SAEs with $m=16384$ and $k=64$ on each distribution. We compare to PCA baseline dictionaries, consisting of the $d$ principal components fitted to each distribution. To compute matchings between PCA directions, we use absolute cosine similarity, since the signs of principal components are arbitrary.

\Cref{fig:dataset-heatmaps} shows the proportion of features shared across pairs of different dictionaries trained on different datasets for $t=0.7$. SAE dictionaries have many cross-dataset matches (between $26\%$ and $59\%$), while PCA dictionaries have substantially fewer (at most $15\%$, median $1\%$). The lower matching rates in the PCA dictionaries suggest that the high amount of overlap in SAE dictionaries is not explained only by similarity in the underlying data distributions. \Cref{fig:dataset-heatmaps-activation} also shows that SAEs have many cross-distribution matches based on activation similarity (22-61\%). Under exclusive matching, SAE dictionaries continue to have many matching directions (25-57\%) whereas the PCA match proportion remains at most 15\% (\Cref{fig:dataset-heatmaps-exclusive}).

\Cref{fig:split-prevalence-median} shows that features with greater prevalence in both distributions are more likely to be selected in direction matching, consistent with the recovery of overlapping prevalent atoms.

\paragraph{P3: Hierarchical feature recovery.}
We test P3 by studying how well SAEs of different sizes recover a ground-truth set of hierarchical features defined by three datasets: \prefix, \cities, \gbif. 

For each ground-truth binary feature $z_\ell^*: X\rightarrow \{0,1\}$ defined over inputs from each dataset, and for each SAE size from 512 to 131K trained on \texttt{full}, we report
the test F1 score $\operatorname{F1}_{\mathcal E_{\mathrm{test}}}\!(z_\ell^*,\mathbf{1}\{\hat z_j\geq\tau\}),$
where $j\in[m]$ and $\tau>0$ are selected to maximize the
same score under $\mathcal E_{\mathrm{train}}$. Here $\mathcal E_{\mathrm{train}}$ and $\mathcal E_{\mathrm{test}}$
are the empirical distributions on the train and held-out
test splits of the labeled dataset. Dataset and experiment details are given in \Cref{sec:p3-hierarchy-protocol}.
\Cref{fig:hierarchy-mean-f1} shows that across levels of the hierarchy, mean test F1 generally improves with size and is highest at width 65K or 131K. \Cref{fig:adjacent-pair-recovery} confirms that this corresponds to increasing \textit{simultaneous} recovery of parent and child features.

We also find examples of mesoscale feature splitting. \Cref{fig:a-prefix} shows that F1 recovery of the parent `a' prefix feature decreases as SAE size increases initially (splitting), but increases again at even larger sizes. (\Cref{fig:gemini-prefix-f1-grid} shows all 25 Gemini letter prefixes with eligible two-letter children, including `a' and excluding `x'.) However, this effect disappears when we increase the sparsity of smaller widths to match the largest widths, suggesting that feature splitting may primarily be the result of over-constrained sparsity. See \Cref{sec:molecular-model} for more discussion.

In addition to testing how well SAEs recover ground-truth hierarchies, we also extract hierarchical structure directly by examining correlation in feature activations over the corpus \texttt{full} (\Cref{sec:empirical-hierarchy-extraction}). We find 114,762 parent-child pairs in the 131K gemini SAE. \Cref{fig:learned-hierarchies-compact} shows examples of extracted three-level hierarchies, with top-level features ``Food'' and ``Music.'' Example lineages include ``Food'' $\rightarrow$ ``Potatoes'' $\rightarrow$ ``Hash browns / rosti'' and ``Music'' $\rightarrow$ ``Jazz'' $\rightarrow$ ``John Coltrane.''

\begin{figure}[t]
    \centering
    \begin{subfigure}[t]{0.345\linewidth}
        \vspace{-15pt}
        \centering
        \includegraphics[width=\linewidth,height=2.15in,keepaspectratio]{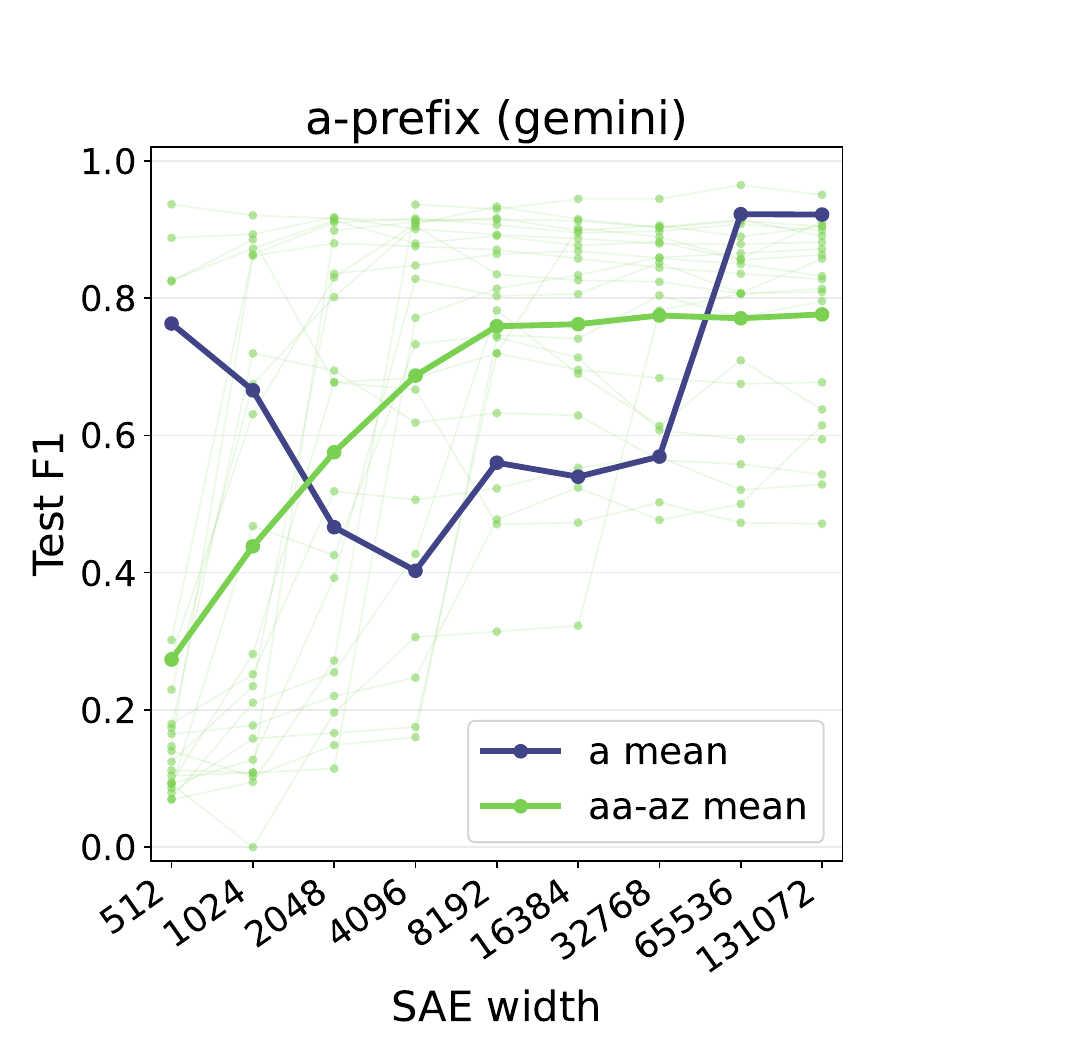}\\
        \vspace{-26pt}
        \includegraphics[width=\linewidth]{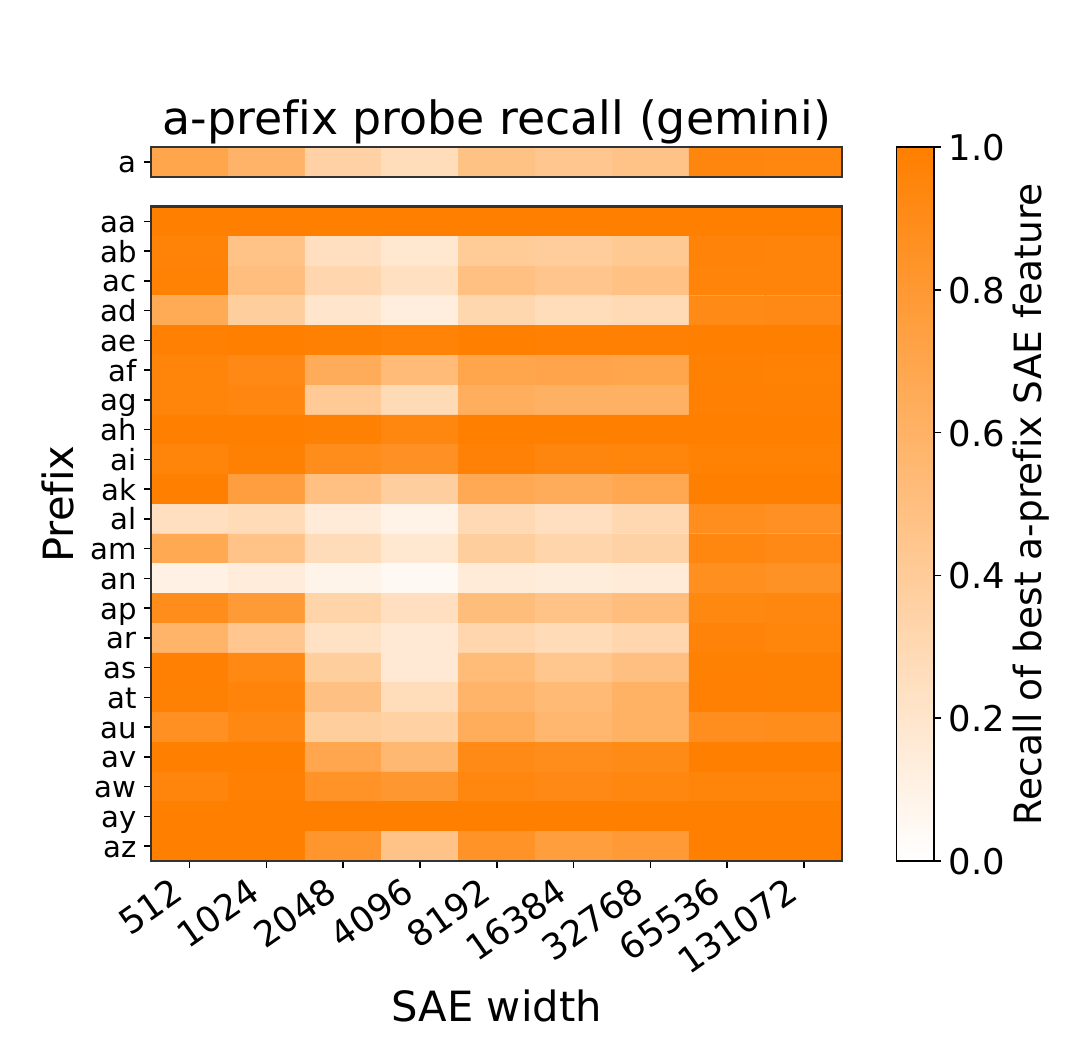}
        \caption{Mesoscale feature splitting.}
        \label{fig:a-prefix}
    \end{subfigure}\hfill
    \begin{subfigure}[t]{0.645\linewidth}
        \vspace{0pt}
        \centering
\begingroup
\scriptsize
\setlength{\tabcolsep}{2pt}
\renewcommand{\arraystretch}{1.06}
\begin{tabularx}{\linewidth}{@{}>{\raggedright\arraybackslash}p{0.12\linewidth}>{\raggedright\arraybackslash}p{0.22\linewidth}>{\raggedright\arraybackslash}X@{}}
\toprule
\textbf{Level 1} & \textbf{Level 2} & \textbf{Level 3} \\
\midrule
\multirow[t]{6}{*}{\textbf{Food}} & Cheese & Cheddar; Mozzarella / related cheeses; Blue cheeses; Gruyère / Emmental; Brie / Camembert \\
\addlinespace[2pt]
 & Potatoes & Potato cultivars; Mashed potatoes; Baked potatoes; Hash browns / rösti; Tater tots \\
\addlinespace[2pt]
 & Fast food & Burger King; Wendy’s; Jack in the Box; Hardee’s; McDonald’s burgers \\
\addlinespace[2pt]
 & Rice dishes & Biryani; Nasi goreng; Fried rice \\
\addlinespace[2pt]
 & Sausages & Chorizo; Salami; German sausages; Hot dogs \\
\addlinespace[2pt]
 & Nutritional information & Fast-food calorie questions; Calorie-breakdown template; Percent Daily Value text; Vitamins / minerals \\
\midrule
\multirow[t]{6}{*}{\textbf{Music}} & Jazz & Charlie Parker / bebop; John Coltrane; Miles Davis; Mahavishnu / John McLaughlin; Marsalis family \\
\addlinespace[2pt]
 & Hip-hop / rap & N.W.A; Eminem; Snoop Dogg; Tupac Shakur; Lil Wayne \\
\addlinespace[2pt]
 & Country music & George Jones; Rascal Flatts; Patty Loveless / Vince Gill; Hank Williams \\
\addlinespace[2pt]
 & Rock / metal & Hard-rock musicians / bands; Nu / alternative metal; Power / symphonic metal; Doom / stoner metal \\
\addlinespace[2pt]
 & Electronic / dance & Detroit techno cluster; Trance mixes / compilations; Basement Jaxx; deadmau5 \\
\addlinespace[2pt]
 & The Beatles & Please Please Me / early Beatles; Hey Jude; Let It Be / Get Back; Abbey Road / late Beatles \\
\bottomrule
\end{tabularx}
\endgroup

        \caption{Learned feature hierarchies.}
        \label{fig:learned-hierarchies-compact}
    \end{subfigure}
    \caption{\textbf{Hierarchical feature recovery.} \textbf{(a)} Top: parent and child F1 for a-prefixes. Parent recovery follows a U-shape. Bottom: recall of the selected a-prefix feature, broken down by prefix. \textbf{(b)} Extracted Food and Music hierarchies in the Gemini 131K SAE. Activating examples for each feature are given in \Cref{tab:food-hierarchy,tab:music-hierarchy}.}
    \label{fig:hierarchy-examples}
\end{figure}

\section{A Molecular Model}\label{sec:molecular-model}
We now sketch a molecular model that helps explain several additional empirical phenomena: polysemanticity, feature splitting, and semantic clusters of directions \citep{templeton2024scaling}. In the model, ``molecules'' are learned SAE directions that are linear combinations of atomic directions. The recovery principle says that molecules in small SAEs eventually split into stable atoms in larger SAEs. A prediction, then, is that since the above empirical phenomena stem from molecules, they disappear at sufficiently large widths. The results and explanations here are intended to be illustrative---we hope that future work formalizes and systematically tests the molecular extension of the atomic theory.

The intuition for molecules can be conveyed with a minimal example. If two atomic features $z_i(x)$ and $z_j(x)$ are heavily correlated over $\DD$, then it can be efficient for $\hat{A}$ (trained to minimize $\mathcal{L}_\DD$) to mix the atomic directions $a_i$ and $a_j$ into a single learned direction, saving budget in terms of dictionary size (and also description length). The direction can be visualized as follows:
\begin{equation}
    \twoatommol{i}{j}
\end{equation}
In the empirical results that follow, we use learned directions and features in the size-131K gemini SAE as proxy atoms. We then decompose learned directions in smaller SAEs as linear combinations of proxy atomic directions, giving proxy molecules. Details are given in \Cref{sec:molecules}.

\begin{table}
\centering
\caption{A polysemantic ``transform''-related feature in the 4K gemini SAE, represented as a molecule of 131K features. It activates on mentions of ``transform'' as well as strings like ``are autobots stronger than decepticons'' (referencing the Transformers franchise). Examining activating examples in the table, the 131K features (treated as proxy atoms) seem comparatively monosemantic (e.g., representing the Transformers franchise). More examples for each atom are given in \Cref{tab:target-3290-five-atom-binned-examples}.}
\label{tab:target-3290-transform-mix-selected-with-molecule}
\includegraphics[width=\linewidth]{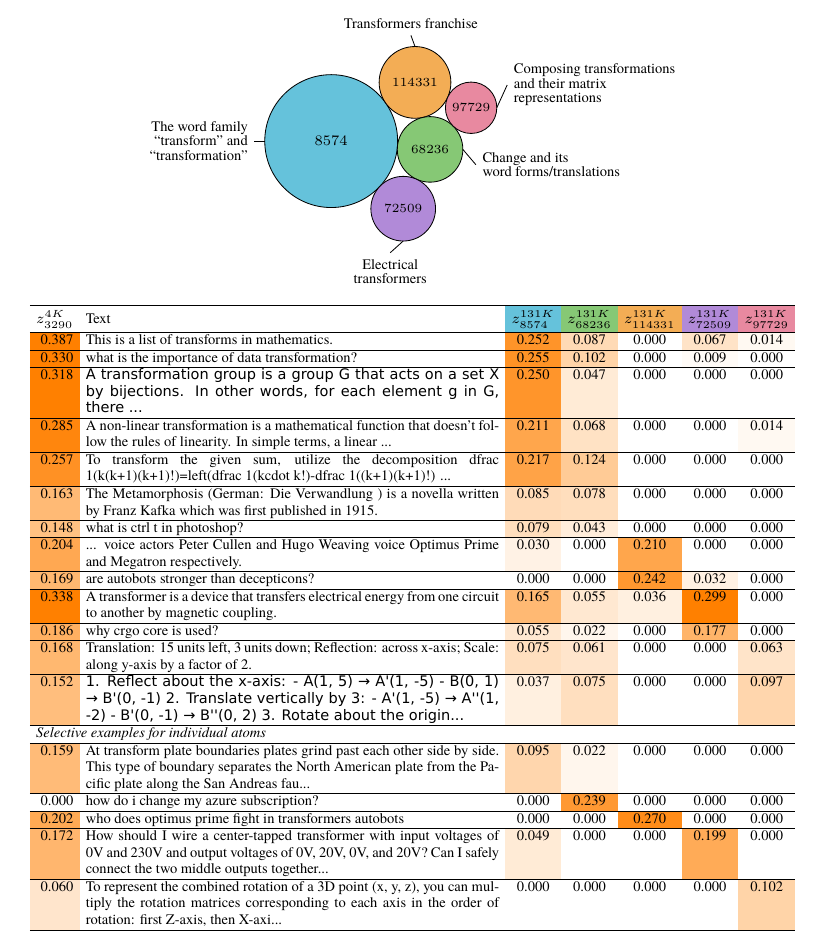}
\end{table}

\definecolor{aprefixacolor}{HTML}{65C2DB}
\definecolor{aprefixahcolor}{HTML}{B18AD8}
\definecolor{aprefixabcolor}{HTML}{86C875}
\definecolor{aprefixapcolor}{HTML}{F3AD55}
\definecolor{aprefixascolor}{HTML}{E889A0}

\newcommand{\aprefixvcenter}[1]{%
  \raisebox{-.5\height}{#1}%
}
\newcommand{\aprefixrecallcell}[2]{%
  \multicolumn{1}{>{\columncolor{orange!#1}\centering\arraybackslash}m{1.25cm}}{%
    \aprefixvcenter{$#2$}%
  }%
}

\begin{table*}[t]
\centering
\setlength{\tabcolsep}{4pt}
\setlength{\aboverulesep}{0pt}
\setlength{\belowrulesep}{0pt}
\renewcommand{\arraystretch}{1.18}
\scriptsize
\begin{tabular}{
  >{\centering\arraybackslash}m{0.9cm}
  >{\centering\arraybackslash}m{1.0cm}
  >{\centering\arraybackslash}m{5.0cm}
  >{\centering\arraybackslash}m{1.25cm}
  >{\centering\arraybackslash}m{1.25cm}
  >{\centering\arraybackslash}m{1.25cm}
  >{\centering\arraybackslash}m{1.25cm}
}
\toprule
dict. size
& $i$
& $\hat a_i$ decoder molecule
& \cellcolor{aprefixahcolor}recall on \texttt{ah}
& \cellcolor{aprefixabcolor}recall on \texttt{ab}
& \cellcolor{aprefixapcolor}recall on \texttt{ap}
& \cellcolor{aprefixascolor}recall on \texttt{as}
\\
\midrule
\aprefixvcenter{$512$}
& \aprefixvcenter{108}
& \aprefixvcenter{\begin{tabular}{@{}>{\raggedleft\arraybackslash}m{3.05cm}@{\hspace{0.30cm}}>{\centering\arraybackslash}m{1.45cm}@{}}
\aprefixvcenter{\shortstack[r]{$0.91\,a^*_{\mathrm{a}} + 0.11\,a^*_{\mathrm{as}}$\\[-1pt]$+ 0.08\,a^*_{\mathrm{ab}} + 0.03\,a^*_{\mathrm{ap}}$\\[-1pt]$+ 0.02\,a^*_{\mathrm{ah}}$}}
& \aprefixvcenter{\begin{tikzpicture}[baseline=(current bounding box.center)]
  \filldraw[draw,fill=aprefixacolor]
    (-0.1006,-0.1410) circle[radius=0.3689cm];
  \node[inner sep=0pt,font=\scriptsize,scale=1.00]
    at (-0.1006,-0.1410) {$\mathrm{a}$};
  \filldraw[draw,fill=aprefixascolor]
    (0.3229,0.2144) circle[radius=0.1840cm];
  \node[inner sep=0pt,font=\tiny,scale=0.82]
    at (0.3229,0.2144) {$\mathrm{as}$};
  \filldraw[draw,fill=aprefixabcolor]
    (0.0180,0.3750) circle[radius=0.1606cm];
  \node[inner sep=0pt,font=\tiny,scale=0.55]
    at (0.0180,0.3750) {$\mathrm{ab}$};
  \filldraw[draw,fill=aprefixapcolor]
    (0.3765,-0.0773) circle[radius=0.1125cm];
  \node[inner sep=0pt,font=\tiny,scale=0.55]
    at (0.3765,-0.0773) {$\mathrm{ap}$};
  \filldraw[draw,fill=aprefixahcolor]
    (-0.2366,0.3100) circle[radius=0.1022cm];
  \node[inner sep=0pt,font=\tiny,scale=0.55]
    at (-0.2366,0.3100) {$\mathrm{ah}$};
\end{tikzpicture}}
\end{tabular}}
& \aprefixrecallcell{100}{1.000}
& \aprefixrecallcell{96}{0.962}
& \aprefixrecallcell{88}{0.877}
& \aprefixrecallcell{99}{0.993}
\\
\midrule
\multirow{4}{*}[-4.5ex]{\aprefixvcenter{$2048$}}
& \aprefixvcenter{1457}
& \aprefixvcenter{\begin{tabular}{@{}>{\raggedleft\arraybackslash}m{3.05cm}@{\hspace{0.30cm}}>{\centering\arraybackslash}m{1.45cm}@{}}
\aprefixvcenter{\shortstack[r]{$0.92\,a^*_{\mathrm{a}} + 0.10\,a^*_{\mathrm{ah}}$}}
& \aprefixvcenter{\begin{tikzpicture}[baseline=(current bounding box.center)]
  \filldraw[draw,fill=aprefixacolor]
    (-0.1015,-0.0071) circle[radius=0.3696cm];
  \node[inner sep=0pt,font=\scriptsize,scale=1.00]
    at (-0.1015,-0.0071) {$\mathrm{a}$};
  \filldraw[draw,fill=aprefixahcolor]
    (0.4436,0.0310) circle[radius=0.1768cm];
  \node[inner sep=0pt,font=\tiny,scale=0.82]
    at (0.4436,0.0310) {$\mathrm{ah}$};
\end{tikzpicture}}
\end{tabular}}
& \aprefixrecallcell{100}{1.000}
& \aprefixrecallcell{24}{0.236}
& \aprefixrecallcell{34}{0.339}
& \aprefixrecallcell{39}{0.388}
\\
\cmidrule(lr){2-7}

& \aprefixvcenter{1794}
& \aprefixvcenter{\begin{tabular}{@{}>{\raggedleft\arraybackslash}m{3.05cm}@{\hspace{0.30cm}}>{\centering\arraybackslash}m{1.45cm}@{}}
\aprefixvcenter{\shortstack[r]{$0.85\,a^*_{\mathrm{ab}} + 0.36\,a^*_{\mathrm{a}}$}}
& \aprefixvcenter{\begin{tikzpicture}[baseline=(current bounding box.center)]
  \filldraw[draw,fill=aprefixabcolor]
    (-0.2260,0.0000) circle[radius=0.3593cm];
  \node[inner sep=0pt,font=\scriptsize,scale=1.00]
    at (-0.2260,0.0000) {$\mathrm{ab}$};
  \filldraw[draw,fill=aprefixacolor]
    (0.4025,0.0000) circle[radius=0.2693cm];
  \node[inner sep=0pt,font=\tiny,scale=0.82]
    at (0.4025,0.0000) {$\mathrm{a}$};
\end{tikzpicture}}
\end{tabular}}
& \aprefixrecallcell{0}{0.000}
& \aprefixrecallcell{87}{0.873}
& \aprefixrecallcell{0}{0.003}
& \aprefixrecallcell{0}{0.000}
\\
\cmidrule(lr){2-7}

& \aprefixvcenter{413}
& \aprefixvcenter{\begin{tabular}{@{}>{\raggedleft\arraybackslash}m{3.05cm}@{\hspace{0.30cm}}>{\centering\arraybackslash}m{1.45cm}@{}}
\aprefixvcenter{\shortstack[r]{$0.84\,a^*_{\mathrm{ap}} + 0.34\,a^*_{\mathrm{a}}$}}
& \aprefixvcenter{\begin{tikzpicture}[baseline=(current bounding box.center)]
  \filldraw[draw,fill=aprefixapcolor]
    (-0.2200,0.0000) circle[radius=0.3589cm];
  \node[inner sep=0pt,font=\scriptsize,scale=1.00]
    at (-0.2200,0.0000) {$\mathrm{ap}$};
  \filldraw[draw,fill=aprefixacolor]
    (0.4038,0.0000) circle[radius=0.2649cm];
  \node[inner sep=0pt,font=\tiny,scale=0.82]
    at (0.4038,0.0000) {$\mathrm{a}$};
\end{tikzpicture}}
\end{tabular}}
& \aprefixrecallcell{0}{0.000}
& \aprefixrecallcell{0}{0.003}
& \aprefixrecallcell{89}{0.887}
& \aprefixrecallcell{0}{0.000}
\\
\cmidrule(lr){2-7}

& \aprefixvcenter{183}
& \aprefixvcenter{\begin{tabular}{@{}>{\raggedleft\arraybackslash}m{3.05cm}@{\hspace{0.30cm}}>{\centering\arraybackslash}m{1.45cm}@{}}
\aprefixvcenter{\shortstack[r]{$0.79\,a^*_{\mathrm{as}} + 0.36\,a^*_{\mathrm{a}}$}}
& \aprefixvcenter{\begin{tikzpicture}[baseline=(current bounding box.center)]
  \filldraw[draw,fill=aprefixascolor]
    (-0.2263,-0.0399) circle[radius=0.3512cm];
  \node[inner sep=0pt,font=\scriptsize,scale=1.00]
    at (-0.2263,-0.0399) {$\mathrm{as}$};
  \filldraw[draw,fill=aprefixacolor]
    (0.3847,0.0678) circle[radius=0.2693cm];
  \node[inner sep=0pt,font=\tiny,scale=0.82]
    at (0.3847,0.0678) {$\mathrm{a}$};
\end{tikzpicture}}
\end{tabular}}
& \aprefixrecallcell{3}{0.029}
& \aprefixrecallcell{0}{0.003}
& \aprefixrecallcell{0}{0.000}
& \aprefixrecallcell{78}{0.783}
\\
\midrule
\multirow{5}{*}[-6ex]{\aprefixvcenter{$131\mathrm{K}$}}
& \aprefixvcenter{38618}
& \aprefixvcenter{\begin{tabular}{@{}>{\raggedleft\arraybackslash}m{3.05cm}@{\hspace{0.30cm}}>{\centering\arraybackslash}m{1.45cm}@{}}
\aprefixvcenter{\shortstack[r]{$1.00\,a^*_{\mathrm{a}}$}}
& \aprefixvcenter{\begin{tikzpicture}[baseline=(current bounding box.center)]
  \filldraw[draw,fill=aprefixacolor]
    (0.0000,0.0000) circle[radius=0.3800cm];
  \node[inner sep=0pt,font=\scriptsize,scale=1.00]
    at (0.0000,0.0000) {$\mathrm{a}$};
\end{tikzpicture}}
\end{tabular}}
& \aprefixrecallcell{100}{1.000}
& \aprefixrecallcell{95}{0.950}
& \aprefixrecallcell{93}{0.932}
& \aprefixrecallcell{98}{0.980}
\\
\cmidrule(lr){2-7}

& \aprefixvcenter{121657}
& \aprefixvcenter{\begin{tabular}{@{}>{\raggedleft\arraybackslash}m{3.05cm}@{\hspace{0.30cm}}>{\centering\arraybackslash}m{1.45cm}@{}}
\aprefixvcenter{\shortstack[r]{$1.00\,a^*_{\mathrm{ah}}$}}
& \aprefixvcenter{\begin{tikzpicture}[baseline=(current bounding box.center)]
  \filldraw[draw,fill=aprefixahcolor]
    (0.0000,0.0000) circle[radius=0.3800cm];
  \node[inner sep=0pt,font=\scriptsize,scale=1.00]
    at (0.0000,0.0000) {$\mathrm{ah}$};
\end{tikzpicture}}
\end{tabular}}
& \aprefixrecallcell{87}{0.871}
& \aprefixrecallcell{0}{0.000}
& \aprefixrecallcell{0}{0.000}
& \aprefixrecallcell{0}{0.000}
\\
\cmidrule(lr){2-7}

& \aprefixvcenter{100965}
& \aprefixvcenter{\begin{tabular}{@{}>{\raggedleft\arraybackslash}m{3.05cm}@{\hspace{0.30cm}}>{\centering\arraybackslash}m{1.45cm}@{}}
\aprefixvcenter{\shortstack[r]{$1.00\,a^*_{\mathrm{ab}}$}}
& \aprefixvcenter{\begin{tikzpicture}[baseline=(current bounding box.center)]
  \filldraw[draw,fill=aprefixabcolor]
    (0.0000,0.0000) circle[radius=0.3800cm];
  \node[inner sep=0pt,font=\scriptsize,scale=1.00]
    at (0.0000,0.0000) {$\mathrm{ab}$};
\end{tikzpicture}}
\end{tabular}}
& \aprefixrecallcell{0}{0.000}
& \aprefixrecallcell{75}{0.749}
& \aprefixrecallcell{0}{0.000}
& \aprefixrecallcell{0}{0.000}
\\
\cmidrule(lr){2-7}

& \aprefixvcenter{120576}
& \aprefixvcenter{\begin{tabular}{@{}>{\raggedleft\arraybackslash}m{3.05cm}@{\hspace{0.30cm}}>{\centering\arraybackslash}m{1.45cm}@{}}
\aprefixvcenter{\shortstack[r]{$1.00\,a^*_{\mathrm{ap}}$}}
& \aprefixvcenter{\begin{tikzpicture}[baseline=(current bounding box.center)]
  \filldraw[draw,fill=aprefixapcolor]
    (0.0000,0.0000) circle[radius=0.3800cm];
  \node[inner sep=0pt,font=\scriptsize,scale=1.00]
    at (0.0000,0.0000) {$\mathrm{ap}$};
\end{tikzpicture}}
\end{tabular}}
& \aprefixrecallcell{0}{0.000}
& \aprefixrecallcell{0}{0.003}
& \aprefixrecallcell{80}{0.798}
& \aprefixrecallcell{0}{0.000}
\\
\cmidrule(lr){2-7}

& \aprefixvcenter{32170}
& \aprefixvcenter{\begin{tabular}{@{}>{\raggedleft\arraybackslash}m{3.05cm}@{\hspace{0.30cm}}>{\centering\arraybackslash}m{1.45cm}@{}}
\aprefixvcenter{\shortstack[r]{$1.00\,a^*_{\mathrm{as}}$}}
& \aprefixvcenter{\begin{tikzpicture}[baseline=(current bounding box.center)]
  \filldraw[draw,fill=aprefixascolor]
    (0.0000,0.0000) circle[radius=0.3800cm];
  \node[inner sep=0pt,font=\scriptsize,scale=1.00]
    at (0.0000,0.0000) {$\mathrm{as}$};
\end{tikzpicture}}
\end{tabular}}
& \aprefixrecallcell{3}{0.029}
& \aprefixrecallcell{0}{0.000}
& \aprefixrecallcell{0}{0.000}
& \aprefixrecallcell{77}{0.765}
\\
\bottomrule
\end{tabular}
\caption{Empirical a-prefix feature splitting viewed in terms of molecules. A family molecule splits into parent-child molecules into atoms. In the middle regime, no molecule corresponds to a feature that activates on the full family (feature splitting). Such a feature exists in the small and large regimes.}
\label{tab:a-prefix-empirical-feature-splitting}
\end{table*}

\paragraph{Polysemanticity.} 

Some SAE features are polysemantic, mixing loosely related concepts. In the molecular view, this is due to a direction that is a mixture of atomic directions.

\Cref{tab:target-3290-transform-mix-selected-with-molecule} gives a direction in the 4K gemini SAE written as  a molecule, with atoms corresponding to the word transform, as well as the Transformers franchise, electrical transformers, mathematical compositions of transforms, and the concept of change. Intuitively, these are mixed together in the small SAE since they are correlated.

The table shows how texts that activate the polysemantic 4K SAE feature also activate the proxy atomic features. For example, ``are autobots stronger than decepticons?'' activates the original feature, as well as the Transformers franchise feature in the 131K SAE, but not the other proxy atomic features. 

We also find that many inputs that activate the broader ``change'' feature in 131K do not activate the polysemantic 4K feature. This is an example of asymmetry between the molecular representation of learned \textit{directions} and the activation pattern of the corresponding feature. This plays a central role in the explanation for feature splitting below.

\paragraph{Feature splitting and absorption.} Consider a family with one parent atom and two child atoms, and suppose that the parent feature activation is highly correlated with the child feature activation. Concretely, we can consider a pet atom, along with a dog atom and cat atom. Then consider what happens when a different number of features are ``budgeted'' to this family.
\begin{itemize}
    \item If only one feature is budgeted, then the learned direction is a mixture of the parent direction with small contributions from each child direction (this is the first un-normalized principal component).
    \begin{equation}
        \petcatdogmol
    \end{equation}
    This direction is used in the reconstruction of every input that activates the family. Therefore, its corresponding feature has the same activation pattern as the parent atom.
    \item If two features are budgeted, then the optimal learned directions include one for each parent-child pair; each is given by a sum of the parent direction and child direction.
    \begin{equation}
        \petcatmol\quad \petdogmol
    \end{equation}
    Each molecule is used only for inputs that activate the specific child. Therefore, there is no learned feature that activates for all members of the family (giving ``feature splitting''). Notice how this solution also minimizes description length ($k$).
    \item If three features are budgeted, then the learned directions can match the actual parent and child atoms (up to the re-parameterization given in \eqref{eq:hierarchy-substitution}). 
    \begin{equation}
        \petmol \quad \catmol \quad \dogmol
    \end{equation}
    In particular, as shown in \Cref{thm:hierarchical-recovery-principle}, there is a learned feature with an activation pattern corresponding to that of the parent and each child.
\end{itemize}
This gives a theoretical explanation of mesoscale feature splitting behavior that is consistent with P3. The example illustrates a second reason for feature splitting: over-restriction of the dictionary's description length (sparsity).

\Cref{tab:a-prefix-empirical-feature-splitting} illustrates this behavior in real SAEs, where we have restricted our attention to a subset of proxy atoms that correspond to one- and two-letter ``a'' prefixes. At small dictionary size, there is a large molecule that is active for ah, ab, ap, and as. At intermediate size, there are instead molecules that mix the broad parent atom with the individual ah, ab, ap, and as child atoms. None are active for the whole family, and so the feature ``a'' appears to disappear (though its direction is still included in the molecules). Finally, at the largest size, individual atoms are recovered, including a feature that activates on the whole family.

\paragraph{Correlated directions forming semantic clusters.} Various papers report seeing clusters of learned SAE directions that are correlated. As a concrete example, in Anthropic's SAEs trained on Claude 3 Sonnet, \citet{templeton2024scaling} find that ``around the Golden Gate Bridge feature 34M/31164353 ... there
are features corresponding to particular locations in San Francisco such as Alcatraz and the Presidio'' \citep{templeton2024scaling}. 
Our model suggests that these directions are molecules rather than atoms: the directions are correlated since they share a common atom (corresponding to the shared geographical region). Indeed, \citet{templeton2024scaling} note a broader correlated San Francisco direction in a smaller SAE, consistent with the feature splitting view presented above.

In underbudgeted regimes (in either dictionary size or description length) learning these molecules can be efficient. Our results suggest, however, SAEs with enough budget should learn individual atoms, rather than molecules. Then, at large budgets, we should see the observed correlation in directions decrease substantially.

We find evidence consistent with this prediction. \Cref{fig:a-prefix-probe-decoder-cosines} compares the decoder directions selected as a-prefix probes in our 32K and 131K gemini SAEs. The median cosine over the 231 unique off-diagonal pairs is 0.144 at width 32K and 0.008 at width 131K. Direction correlation between extremely related features (e.g., the `ap'-prefix and the `ad'-prefix) can be very small or even negative.

Given the large size of the SAEs trained by Anthropic, our model predicts that these SAEs are underbudgeted in description length ($k<K$).\footnote{In the case of Anthropic's SAEs, which do not use a hard sparsity constraint, the sparsity penalty is too high.} The theory and observations here are consistent with work by \citet{chanin2026absorption}, who show that mis-specified $k$ leads to correlation in learned feature directions, and suggest using orthogonality in learned features as a way to tune sparsity.

\paragraph{} A broad implication is that many undesirable aspects of learned SAEs are explained by molecules. So problems can be solved by scaling SAEs, with appropriate sparsity \citep{chanin2026absorption}, so that molecules split into their constituent atoms.

\section{Discussion and Further Results}\label{sec:discussion}
We developed an atomic theory of language model representations, deriving three testable predictions from the LRH and validating these predictions on SAEs trained on two large embedding models. (A natural next step is to test these predictions on other representations---e.g., residual streams.) Here, we discuss further applications of the atomic theory and connections to related work, alongside additional experiments. 

\paragraph{The Recovery Principle.} The recovery principle suggests the promise of scaling standard SAEs. In particular, large SAEs should be able to simultaneously recover stable coarse and granular features, reducing the need for incorporating hierarchical structure directly (cf., \cite{muchane2025incorporating,bussmann2025learning,costa2026flat,zaigrajew2025interpreting,cao2026tree}). Two aspects of the recovery principle are worth further investigation. First, \Cref{thm:recovery-principle} gives fairly weak guarantees in terms of the number of recovered atoms (e.g., a vanishing fraction of SAE size $m$). In \Cref{sec:recovery-simulations}, we find that SAEs trained on simulated data can recover a much larger proportion of atomic features. Second, \Cref{thm:recovery-principle} requires exact sparsity $K$. While our results suggest that correct sparsity specification is important, our simulations again suggest some robustness. Practically, it is important to know when sparsity is being chosen correctly \citep{chanin2025sparse}.

\paragraph{Multi-dimensional, non-linear, and manifold hypotheses.} Some work has noted that representations can exhibit higher-dimensional non-linear structure \citep{engels2025not,bhalla2026sparse,fel2026structuring,fel2025into}. In \Cref{sec:manifold}, we suggest that these observations are consistent with the atomic theory. We show that correlations in feature activations alone (absent any correlation in their associated directions) are enough to produce manifold structure. \Cref{fig:manifold} shows that small sets of features recover manifolds on dates and hex colors. (As a ``bonus,'' \Cref{tab:gemini-color-examples} shows that these color features generalize across hex codes, text, images.)

\paragraph{A rigidity principle, platonic representations, and human interpretability.} A growing literature documents increasing similarity in representations learned between different AI models \citep{huh2024platonic,mahner2026characterizing,jha2026harnessing,dravid2023rosetta}; work studying ``algorithmic monoculture'' has theoretically and empirically analyzed homogeneity in model outputs and its implications \citep{kleinberg2021algorithmic,peng2024monoculture,wu2025generative, padmakumar2024does, raghavan2024competition, ugander2024art, doshi2024generative, goelgreat2025, kimcorrelated2025, toups2023ecosystem,jo2026subjectivity}.

A ``rigidity principle'' illustrated by \Cref{thm:2k,thm:recovery-principle,thm:hierarchical-recovery-principle} suggests an explanation: that sparse systems of representation tend to involve the same concepts (recall, again, the vocabulary construction game). If AI models all use sparse representations, then these should be the same. Formally, we might hypothesize that different representations factor as $f=Az$ and $f=A'z$ for different choices of $A\in\RR^{d\times M}$ and $A'\in \RR^{d'\times M}$, where the atomic feature representation $z:X\rightarrow \RR^M$ is shared. In the language of the platonic representation hypothesis \citep{huh2024platonic}, one could hypothesize a set of \textit{platonic atoms}. Then, as in the recovery principle, larger and larger models could be learning a growing subset of these platonic atoms, explaining increasing representational similarity. (Empirically, homogeneity seems to be increasing with model size beyond what can be explained by increasing accuracy \citep{kimcorrelated2025,goelgreat2025}.) 

In \Cref{sec:platonic}, we identify matches between features in SAEs trained on gemini and nemotron. We find that size-131K gemini and nemotron SAEs have 11K matching features at Pearson similarity threshold $0.7$ and 27K matching features at threshold $0.5$. Example shared features (shown in \Cref{tab:platonic-pearson-semantic-combined-2,tab:platonic-pearson-semantic-combined-1,tab:platonic-pearson-diverse-2,tab:platonic-pearson-diverse-3,tab:platonic-pearson-diverse-4,tab:platonic-pearson-diverse-5,tab:platonic-pearson-diverse-6,tab:platonic-pearson-diverse-7,tab:platonic-pearson-diverse-8,tab:platonic-pearson-diverse-9,tab:platonic-pearson-diverse-10,tab:platonic-pearson-diverse-1}) include ``mechanical bearings,'' ``comments in source code,'' ``trade tariffs and protectionism,'' ``trigonometric half-angle identities,'' and ``astragalus and milkvetch species.''

The fact that SAE features trained on AI models are \textit{human}-interpretable \citep{cunningham2023sparse,templeton2024scaling,movva2025sparse,peng2025use} is quite amazing, and not obvious a priori: we could have imagined AI models learning different, inscrutable representations. One hypothesis is that since both humans and AI rely on sparse representations, there is pressure towards shared concepts.

\subsection*{Code}
Code to reproduce data preparation, SAE training, and experiments is available in this repository: \url{https://github.com/kennylpeng/atomic_theory}.

\subsection*{AI Use}
The conceptual ideas in the paper are those of the authors. We used AI to prove the ultimate theorem statements in the paper (after proving restricted versions ourselves) and help draft appendix sections. We reviewed and edited proofs (and rewrote proofs of central claims for clarity). Lean verifications for theoretical results are given in this repository: \url{https://github.com/nikhgarg/atomic-features-formalization/}, with autoformalization and verification using the AppliedModelingLib project \cite{garg2026econcslib}.

\bibliography{bib}

\begin{thebibliography}{77}
\providecommand{\natexlab}[1]{#1}
\providecommand{\url}[1]{\texttt{#1}}
\expandafter\ifx\csname urlstyle\endcsname\relax
  \providecommand{\doi}[1]{doi: #1}\else
  \providecommand{\doi}{doi: \begingroup \urlstyle{rm}\Url}\fi

\bibitem[Mikolov et~al.(2013{\natexlab{a}})Mikolov, Sutskever, Chen, Corrado, and Dean]{mikolov2013distributed}
Tomas Mikolov, Ilya Sutskever, Kai Chen, Greg~S Corrado, and Jeff Dean.
\newblock Distributed representations of words and phrases and their compositionality.
\newblock \emph{Advances in neural information processing systems}, 26, 2013{\natexlab{a}}.

\bibitem[Mikolov et~al.(2013{\natexlab{b}})Mikolov, Yih, and Zweig]{mikolov2013linguistic}
Tom{\'a}{\v{s}} Mikolov, Wen-tau Yih, and Geoffrey Zweig.
\newblock Linguistic regularities in continuous space word representations.
\newblock In \emph{Proceedings of the 2013 conference of the north american chapter of the association for computational linguistics: Human language technologies}, pages 746--751, 2013{\natexlab{b}}.

\bibitem[Arora et~al.(2018)Arora, Li, Liang, Ma, and Risteski]{arora2018linear}
Sanjeev Arora, Yuanzhi Li, Yingyu Liang, Tengyu Ma, and Andrej Risteski.
\newblock Linear algebraic structure of word senses, with applications to polysemy.
\newblock \emph{Transactions of the Association for Computational Linguistics}, 6:\penalty0 483--495, 2018.

\bibitem[Nanda et~al.(2023)Nanda, Lee, and Wattenberg]{nanda2023emergent}
Neel Nanda, Andrew Lee, and Martin Wattenberg.
\newblock Emergent linear representations in world models of self-supervised sequence models.
\newblock \emph{arXiv preprint arXiv:2309.00941}, 2023.

\bibitem[Li et~al.(2025)Li, Guo, and Andreas]{li2025language}
Belinda~Z Li, Zifan~Carl Guo, and Jacob Andreas.
\newblock (how) do language models track state?
\newblock \emph{arXiv preprint arXiv:2503.02854}, 2025.

\bibitem[Panickssery et~al.(2023)Panickssery, Gabrieli, Schulz, Tong, Hubinger, and Turner]{panickssery2023steering}
Nina Panickssery, Nick Gabrieli, Julian Schulz, Meg Tong, Evan Hubinger, and Alexander~Matt Turner.
\newblock Steering llama 2 via contrastive activation addition.
\newblock \emph{arXiv preprint arXiv:2312.06681}, 2023.

\bibitem[Zou et~al.(2023)Zou, Phan, Chen, Campbell, Guo, Ren, Pan, Yin, Mazeika, Dombrowski, et~al.]{zou2023representation}
Andy Zou, Long Phan, Sarah Chen, James Campbell, Phillip Guo, Richard Ren, Alexander Pan, Xuwang Yin, Mantas Mazeika, Ann-Kathrin Dombrowski, et~al.
\newblock Representation engineering: A top-down approach to ai transparency.
\newblock \emph{arXiv preprint arXiv:2310.01405}, 2023.

\bibitem[Turner et~al.(2023)Turner, Thiergart, Leech, Udell, Vazquez, Mini, and MacDiarmid]{turner2023steering}
Alexander~Matt Turner, Lisa Thiergart, Gavin Leech, David Udell, Juan~J Vazquez, Ulisse Mini, and Monte MacDiarmid.
\newblock Steering language models with activation engineering.
\newblock \emph{arXiv preprint arXiv:2308.10248}, 2023.

\bibitem[Wang et~al.(2025)Wang, la~Tour, Watkins, Makelov, Chi, Miserendino, Heidecke, Patwardhan, and Mossing]{wang2025persona}
Miles Wang, Tom~Dupr{\'e} la~Tour, Olivia Watkins, Alex Makelov, Ryan~A Chi, Samuel Miserendino, Johannes Heidecke, Tejal Patwardhan, and Dan Mossing.
\newblock Persona features control emergent misalignment.
\newblock \emph{arXiv preprint arXiv:2506.19823}, 2025.

\bibitem[Chen et~al.(2025)Chen, Arditi, Sleight, Evans, and Lindsey]{chen2025persona}
Runjin Chen, Andy Arditi, Henry Sleight, Owain Evans, and Jack Lindsey.
\newblock Persona vectors: Monitoring and controlling character traits in language models.
\newblock \emph{arXiv preprint arXiv:2507.21509}, 2025.

\bibitem[Elhage et~al.(2022)Elhage, Hume, Olsson, Schiefer, Henighan, Kravec, Hatfield-Dodds, Lasenby, Drain, Chen, et~al.]{elhage2022toy}
Nelson Elhage, Tristan Hume, Catherine Olsson, Nicholas Schiefer, Tom Henighan, Shauna Kravec, Zac Hatfield-Dodds, Robert Lasenby, Dawn Drain, Carol Chen, et~al.
\newblock Toy models of superposition.
\newblock \emph{arXiv preprint arXiv:2209.10652}, 2022.

\bibitem[Park et~al.(2023)Park, Choe, and Veitch]{park2023linear}
Kiho Park, Yo~Joong Choe, and Victor Veitch.
\newblock The linear representation hypothesis and the geometry of large language models.
\newblock \emph{arXiv preprint arXiv:2311.03658}, 2023.

\bibitem[Costa et~al.(2025)Costa, Fel, Lubana, Tolooshams, and Ba]{costa2025flat}
Val{\'e}rie Costa, Thomas Fel, Ekdeep~Singh Lubana, Bahareh Tolooshams, and Demba Ba.
\newblock From flat to hierarchical: Extracting sparse representations with matching pursuit.
\newblock \emph{arXiv preprint arXiv:2506.03093}, 2025.

\bibitem[Garg et~al.(2026)Garg, Kleinberg, and Peng]{garg2026many}
Nikhil Garg, Jon Kleinberg, and Kenny Peng.
\newblock How many features can a language model store under the linear representation hypothesis?
\newblock In Steve Hanneke and Tor Lattimore, editors, \emph{Proceedings of Thirty Ninth Conference on Learning Theory}, volume 336 of \emph{Proceedings of Machine Learning Research}, pages 5358--5376. PMLR, 29 Jun--03 Jul 2026.
\newblock URL \url{https://proceedings.mlr.press/v336/garg26a.html}.

\bibitem[Cunningham et~al.(2023)Cunningham, Ewart, Riggs, Huben, and Sharkey]{cunningham2023sparse}
Hoagy Cunningham, Aidan Ewart, Logan Riggs, Robert Huben, and Lee Sharkey.
\newblock Sparse autoencoders find highly interpretable features in language models.
\newblock \emph{arXiv preprint arXiv:2309.08600}, 2023.

\bibitem[Bricken et~al.(2023)Bricken, Templeton, Batson, Chen, Jermyn, Conerly, Turner, Anil, Denison, Askell, Lasenby, Wu, Kravec, Schiefer, Maxwell, Joseph, Hatfield-Dodds, Tamkin, Nguyen, McLean, Burke, Hume, Carter, Henighan, and Olah]{bricken2023monosemanticity}
Trenton Bricken, Adly Templeton, Joshua Batson, Brian Chen, Adam Jermyn, Tom Conerly, Nick Turner, Cem Anil, Carson Denison, Amanda Askell, Robert Lasenby, Yifan Wu, Shauna Kravec, Nicholas Schiefer, Tim Maxwell, Nicholas Joseph, Zac Hatfield-Dodds, Alex Tamkin, Karina Nguyen, Brayden McLean, Josiah~E Burke, Tristan Hume, Shan Carter, Tom Henighan, and Christopher Olah.
\newblock Towards monosemanticity: Decomposing language models with dictionary learning.
\newblock \emph{Transformer Circuits Thread}, 2023.
\newblock https://transformer-circuits.pub/2023/monosemantic-features/index.html.

\bibitem[Gao et~al.(2024)Gao, la~Tour, Tillman, Goh, Troll, Radford, Sutskever, Leike, and Wu]{gao2024scaling}
Leo Gao, Tom~Dupr{\'e} la~Tour, Henk Tillman, Gabriel Goh, Rajan Troll, Alec Radford, Ilya Sutskever, Jan Leike, and Jeffrey Wu.
\newblock Scaling and evaluating sparse autoencoders.
\newblock \emph{arXiv preprint arXiv:2406.04093}, 2024.

\bibitem[Movva et~al.(2025)Movva, Peng, Garg, Kleinberg, and Pierson]{movva2025sparse}
Rajiv Movva, Kenny Peng, Nikhil Garg, Jon Kleinberg, and Emma Pierson.
\newblock Sparse autoencoders for hypothesis generation.
\newblock \emph{arXiv preprint arXiv:2502.04382}, 2025.

\bibitem[Peng et~al.(2025)Peng, Movva, Kleinberg, Pierson, and Garg]{peng2025use}
Kenny Peng, Rajiv Movva, Jon Kleinberg, Emma Pierson, and Nikhil Garg.
\newblock Use sparse autoencoders to discover unknown concepts, not to act on known concepts.
\newblock \emph{arXiv preprint arXiv:2506.23845}, 2025.

\bibitem[Paulo and Belrose(2025)]{paulo2025sparse}
Gon{\c{c}}alo Paulo and Nora Belrose.
\newblock Sparse autoencoders trained on the same data learn different features.
\newblock \emph{arXiv preprint arXiv:2501.16615}, 2025.

\bibitem[Fel et~al.(2025{\natexlab{a}})Fel, Lubana, Prince, Kowal, Boutin, Papadimitriou, Wang, Wattenberg, Ba, and Konkle]{fel2025archetypal}
Thomas Fel, Ekdeep~Singh Lubana, Jacob~S Prince, Matthew Kowal, Victor Boutin, Isabel Papadimitriou, Binxu Wang, Martin Wattenberg, Demba Ba, and Talia Konkle.
\newblock Archetypal sae: Adaptive and stable dictionary learning for concept extraction in large vision models.
\newblock \emph{arXiv preprint arXiv:2502.12892}, 2025{\natexlab{a}}.

\bibitem[Chanin et~al.(2026)Chanin, Wilken-Smith, Dulka, Bhatnagar, Golechha, and Bloom]{chanin2026absorption}
David Chanin, James Wilken-Smith, Tom{\'a}{\v{s}} Dulka, Hardik Bhatnagar, Satvik Golechha, and Joseph Bloom.
\newblock A is for absorption: Studying feature splitting and absorption in sparse autoencoders.
\newblock \emph{Advances in Neural Information Processing Systems}, 38:\penalty0 82318--82355, 2026.

\bibitem[Bussmann et~al.(2025)Bussmann, Nabeshima, Karvonen, and Nanda]{bussmann2025learning}
Bart Bussmann, Noa Nabeshima, Adam Karvonen, and Neel Nanda.
\newblock Learning multi-level features with matryoshka sparse autoencoders.
\newblock \emph{arXiv preprint arXiv:2503.17547}, 2025.

\bibitem[Templeton et~al.(2024)Templeton, Conerly, Marcus, Lindsey, Bricken, Chen, Pearce, Citro, Ameisen, Jones, et~al.]{templeton2024scaling}
Adly Templeton, Tom Conerly, Jonathan Marcus, Jack Lindsey, Trenton Bricken, Brian Chen, Adam Pearce, Craig Citro, Emmanuel Ameisen, Andy Jones, et~al.
\newblock Scaling monosemanticity: Extracting interpretable features from claude 3 sonnet. transformer circuits thread, 2024.

\bibitem[Chanin and Garriga-Alonso(2025)]{chanin2025sparse}
David Chanin and Adri{\`a} Garriga-Alonso.
\newblock Sparse but wrong: Incorrect l0 leads to incorrect features in sparse autoencoders.
\newblock \emph{arXiv preprint arXiv:2508.16560}, 2025.

\bibitem[Cand{\`e}s et~al.(2006)Cand{\`e}s, Romberg, and Tao]{candes2006robust}
Emmanuel~J Cand{\`e}s, Justin Romberg, and Terence Tao.
\newblock Robust uncertainty principles: Exact signal reconstruction from highly incomplete frequency information.
\newblock \emph{IEEE Transactions on information theory}, 52\penalty0 (2):\penalty0 489--509, 2006.

\bibitem[Candes and Tao(2006)]{candes2006near}
Emmanuel~J Candes and Terence Tao.
\newblock Near-optimal signal recovery from random projections: Universal encoding strategies?
\newblock \emph{IEEE transactions on information theory}, 52\penalty0 (12):\penalty0 5406--5425, 2006.

\bibitem[Donoho(2006)]{donoho2006compressed}
David~L Donoho.
\newblock Compressed sensing.
\newblock \emph{IEEE Transactions on information theory}, 52\penalty0 (4):\penalty0 1289--1306, 2006.

\bibitem[Olshausen and Field(1996)]{olshausen1996emergence}
Bruno~A Olshausen and David~J Field.
\newblock Emergence of simple-cell receptive field properties by learning a sparse code for natural images.
\newblock \emph{Nature}, 381\penalty0 (6583):\penalty0 607--609, 1996.

\bibitem[Aharon et~al.(2006)Aharon, Elad, and Bruckstein]{aharon2006uniqueness}
Michal Aharon, Michael Elad, and Alfred~M Bruckstein.
\newblock On the uniqueness of overcomplete dictionaries, and a practical way to retrieve them.
\newblock \emph{Linear algebra and its applications}, 416\penalty0 (1):\penalty0 48--67, 2006.

\bibitem[Hillar and Sommer(2015)]{hillar2015can}
Christopher~J Hillar and Friedrich~T Sommer.
\newblock When can dictionary learning uniquely recover sparse data from subsamples?
\newblock \emph{IEEE Transactions on Information Theory}, 61\penalty0 (11):\penalty0 6290--6297, 2015.

\bibitem[Arora et~al.(2014)Arora, Ge, and Moitra]{arora2014new}
Sanjeev Arora, Rong Ge, and Ankur Moitra.
\newblock New algorithms for learning incoherent and overcomplete dictionaries.
\newblock In \emph{Conference on Learning Theory}, pages 779--806. PMLR, 2014.

\bibitem[Arora et~al.(2015)Arora, Ge, Ma, and Moitra]{arora2015simple}
Sanjeev Arora, Rong Ge, Tengyu Ma, and Ankur Moitra.
\newblock Simple, efficient, and neural algorithms for sparse coding.
\newblock In \emph{Conference on learning theory}, pages 113--149. PMLR, 2015.

\bibitem[Garfinkle and Hillar(2019)]{garfinkle2019uniqueness}
Charles~J Garfinkle and Christopher~J Hillar.
\newblock On the uniqueness and stability of dictionaries for sparse representation of noisy signals.
\newblock \emph{IEEE Transactions on Signal Processing}, 67\penalty0 (23):\penalty0 5884--5892, 2019.

\bibitem[Makhzani and Frey(2013)]{makhzani2013k}
Alireza Makhzani and Brendan Frey.
\newblock K-sparse autoencoders.
\newblock \emph{arXiv preprint arXiv:1312.5663}, 2013.

\bibitem[Landauer and Dumais(1997)]{landauer1997solution}
Thomas~K Landauer and Susan~T Dumais.
\newblock A solution to plato's problem: The latent semantic analysis theory of acquisition, induction, and representation of knowledge.
\newblock \emph{Psychological review}, 104\penalty0 (2):\penalty0 211, 1997.

\bibitem[Sitharam et~al.(2014)Sitharam, Tarifi, and Wang]{sitharam2014incidence}
Meera Sitharam, Mohamad Tarifi, and Menghan Wang.
\newblock An incidence geometry approach to dictionary learning.
\newblock \emph{arXiv preprint arXiv:1402.7344}, 2014.

\bibitem[Shanbhogue et~al.(2026)Shanbhogue, Li, Zhang, {\'A}brego, Huang, Jain, Salz, Goenka, Hegde, Ma, et~al.]{shanbhogue2026gemini}
Madhuri Shanbhogue, Zhe Li, Shanfeng Zhang, Gustavo~Hern{\'a}ndez {\'A}brego, Shih-Cheng Huang, Aashi Jain, Daniel Salz, Sonam Goenka, Chaitra Hegde, Ji~Ma, et~al.
\newblock Gemini embedding 2: A native multimodal embedding model from gemini.
\newblock \emph{arXiv preprint arXiv:2605.27295}, 2026.

\bibitem[Babakhin et~al.(2025)Babakhin, Osmulski, Ak, Moreira, Xu, Schifferer, Liu, and Oldridge]{babakhin2025llama}
Yauhen Babakhin, Radek Osmulski, Ronay Ak, Gabriel Moreira, Mengyao Xu, Benedikt Schifferer, Bo~Liu, and Even Oldridge.
\newblock Llama-embed-nemotron-8b: A universal text embedding model for multilingual and cross-lingual tasks.
\newblock \emph{arXiv preprint arXiv:2511.07025}, 2025.

\bibitem[Muchane et~al.(2025)Muchane, Richardson, Park, and Veitch]{muchane2025incorporating}
Mark Muchane, Sean Richardson, Kiho Park, and Victor Veitch.
\newblock Incorporating hierarchical semantics in sparse autoencoder architectures.
\newblock \emph{arXiv preprint arXiv:2506.01197}, 2025.

\bibitem[Costa et~al.(2026)Costa, Fel, Lubana, Tolooshams, and Ba]{costa2026flat}
Val{\'e}rie Costa, Thomas Fel, Ekdeep~S Lubana, Bahareh Tolooshams, and Demba Ba.
\newblock From flat to hierarchical: Extracting sparse representations with matching pursuit.
\newblock \emph{Advances in Neural Information Processing Systems}, 38:\penalty0 34381--34424, 2026.

\bibitem[Zaigrajew et~al.(2025)Zaigrajew, Baniecki, and Biecek]{zaigrajew2025interpreting}
Vladimir Zaigrajew, Hubert Baniecki, and Przemyslaw Biecek.
\newblock Interpreting clip with hierarchical sparse autoencoders.
\newblock \emph{arXiv preprint arXiv:2502.20578}, 2025.

\bibitem[Cao et~al.(2026)Cao, Nhat, Alharbi, Nguyen, and Thai]{cao2026tree}
Tue~M Cao, Hoang~X Nhat, Raed Alharbi, Phi~Le Nguyen, and My~T Thai.
\newblock Tree sae: Learning hierarchical feature structures in sparse autoencoders.
\newblock \emph{arXiv preprint arXiv:2605.07922}, 2026.

\bibitem[Engels et~al.(2025)Engels, Michaud, Liao, Gurnee, and Tegmark]{engels2025not}
Josh Engels, Eric Michaud, Isaac Liao, Wes Gurnee, and Max Tegmark.
\newblock Not all language model features are one-dimensionally linear.
\newblock In \emph{International Conference on Learning Representations}, volume 2025, pages 84591--84622, 2025.

\bibitem[Bhalla et~al.(2026)Bhalla, Fel, Rager, Feucht, Haklay, Wurgaft, Boppana, Kowal, Shyam, Merullo, et~al.]{bhalla2026sparse}
Usha Bhalla, Thomas Fel, Can Rager, Sheridan Feucht, Tal Haklay, Daniel Wurgaft, Siddharth Boppana, Matthew Kowal, Vasudev Shyam, Jack Merullo, et~al.
\newblock Do sparse autoencoders capture concept manifolds?
\newblock \emph{arXiv preprint arXiv:2604.28119}, 2026.

\bibitem[Fel et~al.(2026)Fel, Kowal, Jacobs, Hazra, Bhalla, Sharkey, Bushnaq, Grant, Haklay, Icard, et~al.]{fel2026structuring}
Thomas Fel, Matthew Kowal, Mozes Jacobs, Dron Hazra, Usha Bhalla, Lee Sharkey, Lucius Bushnaq, Satchel Grant, Tal Haklay, Thomas Icard, et~al.
\newblock Structuring sparsity: Block-sparse featurizers capture visual concept manifolds.
\newblock \emph{arXiv preprint arXiv:2606.25234}, 2026.

\bibitem[Fel et~al.(2025{\natexlab{b}})Fel, Wang, Lepori, Kowal, Lee, Balestriero, Joseph, Lubana, Konkle, Ba, et~al.]{fel2025into}
Thomas Fel, Binxu Wang, Michael~A Lepori, Matthew Kowal, Andrew Lee, Randall Balestriero, Sonia Joseph, Ekdeep~S Lubana, Talia Konkle, Demba Ba, et~al.
\newblock Into the rabbit hull: From task-relevant concepts in dino to minkowski geometry.
\newblock \emph{arXiv preprint arXiv:2510.08638}, 2025{\natexlab{b}}.

\bibitem[Huh et~al.(2024)Huh, Cheung, Wang, and Isola]{huh2024platonic}
Minyoung Huh, Brian Cheung, Tongzhou Wang, and Phillip Isola.
\newblock The platonic representation hypothesis.
\newblock \emph{arXiv preprint arXiv:2405.07987}, 2024.

\bibitem[Mahner et~al.(2026)Mahner, Roth, Lam, Bonner, Pereira, and Hebart]{mahner2026characterizing}
Florian~P Mahner, Johannes Roth, Ka~Chun Lam, Michael~F Bonner, Francisco Pereira, and Martin~N Hebart.
\newblock Characterizing universal object representations across vision models.
\newblock \emph{arXiv preprint arXiv:2605.13675}, 2026.

\bibitem[Jha et~al.(2026)Jha, Zhang, Shmatikov, and Morris]{jha2026harnessing}
Rishi Jha, Collin Zhang, Vitaly Shmatikov, and John Morris.
\newblock Harnessing the universal geometry of embeddings.
\newblock \emph{Advances in Neural Information Processing Systems}, 38:\penalty0 45963--45987, 2026.

\bibitem[Dravid et~al.(2023)Dravid, Gandelsman, Efros, and Shocher]{dravid2023rosetta}
Amil Dravid, Yossi Gandelsman, Alexei~A Efros, and Assaf Shocher.
\newblock Rosetta neurons: Mining the common units in a model zoo.
\newblock In \emph{2023 IEEE/CVF International Conference on Computer Vision (ICCV)}, pages 1934--1943. IEEE, 2023.

\bibitem[Kleinberg and Raghavan(2021)]{kleinberg2021algorithmic}
Jon Kleinberg and Manish Raghavan.
\newblock Algorithmic monoculture and social welfare.
\newblock \emph{Proceedings of the National Academy of Sciences}, 118\penalty0 (22):\penalty0 e2018340118, 2021.

\bibitem[Peng and Garg(2024)]{peng2024monoculture}
Kenny Peng and Nikhil Garg.
\newblock Monoculture in matching markets.
\newblock In \emph{The Thirty-eighth Annual Conference on Neural Information Processing Systems}, 2024.
\newblock URL \url{https://openreview.net/forum?id=P5yezHuMSS}.

\bibitem[Wu et~al.(2025)Wu, Black, and Chandrasekaran]{wu2025generative}
Fan Wu, Emily Black, and Varun Chandrasekaran.
\newblock Generative monoculture in large language models.
\newblock In \emph{The Thirteenth International Conference on Learning Representations}, 2025.

\bibitem[Padmakumar and He(2024)]{padmakumar2024does}
Vishakh Padmakumar and He~He.
\newblock Does writing with language models reduce content diversity?
\newblock In \emph{The Twelfth International Conference on Learning Representations}, 2024.
\newblock URL \url{https://openreview.net/forum?id=Feiz5HtCD0}.

\bibitem[Raghavan(2024)]{raghavan2024competition}
Manish Raghavan.
\newblock Competition and diversity in generative {AI}.
\newblock \emph{arXiv preprint arXiv:2412.08610}, 2024.

\bibitem[Ugander and Epstein(2024)]{ugander2024art}
Johan Ugander and Ziv Epstein.
\newblock The art of randomness: Sampling and chance in the age of algorithmic reproduction.
\newblock \emph{Harvard Data Science Review}, 6\penalty0 (4), 2024.

\bibitem[Doshi and Hauser(2024)]{doshi2024generative}
Anil~R Doshi and Oliver~P Hauser.
\newblock Generative {AI} enhances individual creativity but reduces the collective diversity of novel content.
\newblock \emph{Science advances}, 10\penalty0 (28):\penalty0 eadn5290, 2024.

\bibitem[Goel et~al.(2025)Goel, Str{\"u}ber, Auzina, Chandra, Kumaraguru, Kiela, Prabhu, Bethge, and Geiping]{goelgreat2025}
Shashwat Goel, Joschka Str{\"u}ber, Ilze~Amanda Auzina, Karuna~K Chandra, Ponnurangam Kumaraguru, Douwe Kiela, Ameya Prabhu, Matthias Bethge, and Jonas Geiping.
\newblock Great models think alike and this undermines {AI} oversight.
\newblock In \emph{Forty-second International Conference on Machine Learning}, 2025.

\bibitem[Kim et~al.(2025)Kim, Garg, Peng, and Garg]{kimcorrelated2025}
Elliot~Myunghoon Kim, Avi Garg, Kenny Peng, and Nikhil Garg.
\newblock Correlated errors in large language models.
\newblock In \emph{Forty-second International Conference on Machine Learning}, 2025.

\bibitem[Toups et~al.(2023)Toups, Bommasani, Creel, Bana, Jurafsky, and Liang]{toups2023ecosystem}
Connor Toups, Rishi Bommasani, Kathleen Creel, Sarah Bana, Dan Jurafsky, and Percy~S Liang.
\newblock Ecosystem-level analysis of deployed machine learning reveals homogeneous outcomes.
\newblock \emph{Advances in Neural Information Processing Systems}, 36:\penalty0 51178--51201, 2023.

\bibitem[Jo et~al.(2026)Jo, Garg, and Raghavan]{jo2026subjectivity}
Nathanael Jo, Nikhil Garg, and Manish Raghavan.
\newblock The subjectivity of monoculture.
\newblock \emph{arXiv preprint arXiv:2602.24086}, 2026.

\bibitem[Garg(2026)]{garg2026econcslib}
Nikhil Garg.
\newblock Econcslib: Ai-assisted lean formalization for economics \& computation research.
\newblock \emph{arXiv preprint arXiv:2606.13306}, 2026.
\newblock URL \url{https://gargnikhil.com/AppliedModelingLib/}.

\bibitem[Saravia et~al.(2018)Saravia, Liu, Huang, Wu, and Chen]{saravia-etal-2018-carer}
Elvis Saravia, Hsien-Chi~Toby Liu, Yen-Hao Huang, Junlin Wu, and Yi-Shin Chen.
\newblock {CARER}: Contextualized affect representations for emotion recognition.
\newblock In Ellen Riloff, David Chiang, Julia Hockenmaier, and Jun{'}ichi Tsujii, editors, \emph{Proceedings of the 2018 Conference on Empirical Methods in Natural Language Processing}, pages 3687--3697, Brussels, Belgium, October-November 2018. Association for Computational Linguistics.
\newblock \doi{10.18653/v1/D18-1404}.
\newblock URL \url{https://aclanthology.org/D18-1404}.

\bibitem[Thorne et~al.(2018)Thorne, Vlachos, Christodoulopoulos, and Mittal]{Thorne18Fever}
James Thorne, Andreas Vlachos, Christos Christodoulopoulos, and Arpit Mittal.
\newblock {FEVER}: a large-scale dataset for fact extraction and {VERification}.
\newblock In \emph{NAACL-HLT}, 2018.

\bibitem[Thakur et~al.(2021)Thakur, Reimers, R{\"u}ckl{\'e}, Srivastava, and Gurevych]{thakur2021beir}
Nandan Thakur, Nils Reimers, Andreas R{\"u}ckl{\'e}, Abhishek Srivastava, and Iryna Gurevych.
\newblock {BEIR}: A heterogeneous benchmark for zero-shot evaluation of information retrieval models.
\newblock In \emph{Thirty-fifth Conference on Neural Information Processing Systems Datasets and Benchmarks Track (Round 2)}, 2021.
\newblock URL \url{https://openreview.net/forum?id=wCu6T5xFjeJ}.

\bibitem[Khashabi et~al.(2021)Khashabi, Ng, Khot, Sabharwal, Hajishirzi, and Callison-Burch]{gooaq2021}
Daniel Khashabi, Amos Ng, Tushar Khot, Ashish Sabharwal, Hannaneh Hajishirzi, and Chris Callison-Burch.
\newblock Gooaq: Open question answering with diverse answer types.
\newblock \emph{arXiv preprint}, 2021.

\bibitem[Yang et~al.(2018)Yang, Qi, Zhang, Bengio, Cohen, Salakhutdinov, and Manning]{yang-etal-2018-hotpotqa}
Zhilin Yang, Peng Qi, Saizheng Zhang, Yoshua Bengio, William Cohen, Ruslan Salakhutdinov, and Christopher~D. Manning.
\newblock {H}otpot{QA}: A dataset for diverse, explainable multi-hop question answering.
\newblock In Ellen Riloff, David Chiang, Julia Hockenmaier, and Jun{'}ichi Tsujii, editors, \emph{Proceedings of the 2018 Conference on Empirical Methods in Natural Language Processing}, pages 2369--2380, Brussels, Belgium, October-November 2018. Association for Computational Linguistics.
\newblock \doi{10.18653/v1/D18-1259}.
\newblock URL \url{https://aclanthology.org/D18-1259}.

\bibitem[Bajaj et~al.(2016)Bajaj, Campos, Craswell, Deng, Gao, Liu, Majumder, McNamara, Mitra, Nguyen, et~al.]{DBLP:journals/corr/NguyenRSGTMD16}
Payal Bajaj, Daniel Campos, Nick Craswell, Li~Deng, Jianfeng Gao, Xiaodong Liu, Rangan Majumder, Andrew McNamara, Bhaskar Mitra, Tri Nguyen, et~al.
\newblock {MS} {MARCO:} {A} human generated machine reading comprehension dataset.
\newblock \emph{CoRR}, abs/1611.09268, 2016.
\newblock URL \url{http://arxiv.org/abs/1611.09268}.

\bibitem[Kwiatkowski et~al.(2019)Kwiatkowski, Palomaki, Redfield, Collins, Parikh, Alberti, Epstein, Polosukhin, Devlin, Lee, Toutanova, Jones, Kelcey, Chang, Dai, Uszkoreit, Le, and Petrov]{kwiatkowski-etal-2019-natural}
Tom Kwiatkowski, Jennimaria Palomaki, Olivia Redfield, Michael Collins, Ankur Parikh, Chris Alberti, Danielle Epstein, Illia Polosukhin, Jacob Devlin, Kenton Lee, Kristina Toutanova, Llion Jones, Matthew Kelcey, Ming-Wei Chang, Andrew~M. Dai, Jakob Uszkoreit, Quoc Le, and Slav Petrov.
\newblock Natural questions: A benchmark for question answering research.
\newblock \emph{Transactions of the Association for Computational Linguistics}, 7:\penalty0 452--466, 2019.
\newblock \doi{10.1162/tacl_a_00276}.
\newblock URL \url{https://aclanthology.org/Q19-1026/}.

\bibitem[Boteva et~al.(2016)Boteva, Gholipour, Sokolov, and Riezler]{boteva2016}
Vera Boteva, Demian Gholipour, Artem Sokolov, and Stefan Riezler.
\newblock A full-text learning to rank dataset for medical information retrieval.
\newblock 2016.
\newblock URL \url{http://www.cl.uni-heidelberg.de/~riezler/publications/papers/ECIR2016.pdf}.

\bibitem[Lewis et~al.(2021)Lewis, Wu, Liu, Minervini, Küttler, Piktus, Stenetorp, and Riedel]{lewis2021paq}
Patrick Lewis, Yuxiang Wu, Linqing Liu, Pasquale Minervini, Heinrich Küttler, Aleksandra Piktus, Pontus Stenetorp, and Sebastian Riedel.
\newblock Paq: 65 million probably-asked questions and what you can do with them.
\newblock 2021.

\bibitem[Wadden et~al.(2020)Wadden, Lin, Lo, Wang, van Zuylen, Cohan, and Hajishirzi]{wadden-etal-2020-fact}
David Wadden, Shanchuan Lin, Kyle Lo, Lucy~Lu Wang, Madeleine van Zuylen, Arman Cohan, and Hannaneh Hajishirzi.
\newblock Fact or fiction: Verifying scientific claims.
\newblock In \emph{Proceedings of the 2020 Conference on Empirical Methods in Natural Language Processing (EMNLP)}, pages 7534--7550. Association for Computational Linguistics, 2020.
\newblock \doi{10.18653/v1/2020.emnlp-main.609}.
\newblock URL \url{https://aclanthology.org/2020.emnlp-main.609/}.

\bibitem[Rajpurkar et~al.(2016)Rajpurkar, Zhang, Lopyrev, and Liang]{rajpurkar-etal-2016-squad}
Pranav Rajpurkar, Jian Zhang, Konstantin Lopyrev, and Percy Liang.
\newblock {SQ}u{AD}: 100,000+ questions for machine comprehension of text.
\newblock In Jian Su, Kevin Duh, and Xavier Carreras, editors, \emph{Proceedings of the 2016 Conference on Empirical Methods in Natural Language Processing}, pages 2383--2392, Austin, Texas, November 2016. Association for Computational Linguistics.
\newblock \doi{10.18653/v1/D16-1264}.
\newblock URL \url{https://aclanthology.org/D16-1264}.

\bibitem[{Joshi} et~al.(2017){Joshi}, {Choi}, {Weld}, and {Zettlemoyer}]{2017arXivtriviaqa}
Mandar {Joshi}, Eunsol {Choi}, Daniel {Weld}, and Luke {Zettlemoyer}.
\newblock {triviaqa: A Large Scale Distantly Supervised Challenge Dataset for Reading Comprehension}.
\newblock \emph{arXiv e-prints}, art. arXiv:1705.03551, 2017.

\bibitem[Yue et~al.(2024)Yue, Zheng, Zhang, and Chen]{yue2024mammoth2}
Xiang Yue, Tuney Zheng, Ge~Zhang, and Wenhu Chen.
\newblock Mammoth2: Scaling instructions from the web.
\newblock \emph{Advances in Neural Information Processing Systems}, 2024.

\bibitem[Zhang et~al.(2023)Zhang, Thakur, Ogundepo, Kamalloo, Alfonso-Hermelo, Li, Liu, Rezagholizadeh, and Lin]{zhang-etal-2023-miracl}
Xinyu Zhang, Nandan Thakur, Odunayo Ogundepo, Ehsan Kamalloo, David Alfonso-Hermelo, Xiaoguang Li, Qun Liu, Mehdi Rezagholizadeh, and Jimmy Lin.
\newblock {MIRACL}: A multilingual retrieval dataset covering 18 diverse languages.
\newblock \emph{Transactions of the Association for Computational Linguistics}, 11:\penalty0 1114--1131, 2023.
\newblock URL \url{https://aclanthology.org/2023.tacl-1.63/}.

\end{thebibliography}

\clearpage

\appendix
\let\addcontentsline\AppendixAddContentsLine
\addtocontents{toc}{\protect\setcounter{tocdepth}{2}}
\renewcommand{\contentsname}{Appendix contents}
\tableofcontents
\clearpage
\section{Proof of Theorem 1 (Existence of Atomic Features)}\label{sec:proofs}
For a matrix $A\in \RR^{d\times M}$, define
\begin{equation}
    U_K(A) := \bigcup_{\substack{S\subseteq [M]\\|S|=K}} \spann(A_S),
\end{equation}
where $A_S:=\{a_i\}_{i\in S}.$

\begin{lemma}\label{lem:inclusion}
We have that
    \begin{equation}\label{eq:inclusion}
        U_K(A)\subseteq U_{K'}(B)
    \end{equation}
\end{lemma}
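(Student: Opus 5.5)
The plan is to fix a support $S\subseteq[M]$ with $|S|=K$ and show that $\spann(A_S)$ lies inside a single subspace $\spann(B_T)$ with $|T|=K'$. Taking the union over $S$ then gives \eqref{eq:inclusion}. The idea is that $K$-richness of $z$ forbids the realized activations $Z_S$ from being scattered over finitely many proper subspaces. So some one support of $B$ must capture all of $E_S$.

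\textbf{Step 1 (covering by finitely many subspaces).} Fix $x\in X$ with $z(x)\in Z_S$. By hypothesis $Az(x)=f(x)=Bu(x)$, and $u(x)$ is $K'$-sparse. Choose $T\subseteq[m]$ with $|T|=K'$ and $\supp(u(x))\subseteq T$; padding a smaller support is harmless. Then $Az(x)\in\spann(B_T)$. For each such $T$ define the subspace
\[
W_T:=\{v\in E_S : Av\in\spann(B_T)\},
\]
which is a linear subspace of $E_S$ because it is the preimage of a subspace under the linear map $A|_{E_S}$. The argument above shows $Z_S\subseteq\bigcup_{|T|=K'}W_T$. This is a union of at most $\binom{m}{K'}$ subspaces, hence a finite union.

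\textbf{Step 2 (apply richness).} Since $z$ is $K$-rich, $Z_S$ is not contained in any finite union of \emph{proper} subspaces of $E_S$. Therefore at least one $W_T$ is not proper, so $W_T=E_S$. Unwinding the definition of $W_T$ gives
\[
\spann(A_S)=A(E_S)\subseteq\spann(B_T)\subseteq U_{K'}(B).
\]
Taking the union over all $S$ with $|S|=K$ yields $U_K(A)\subseteq U_{K'}(B)$.

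I do not expect a serious obstacle. The only points needing care are bookkeeping ones. First, the set of candidate supports $T$ must be finite, which holds because $m$ is finite. Second, supports of $u(x)$ smaller than $K'$ are handled by padding, or equivalently by noting that $\spann(B_{T'})\subseteq\spann(B_T)$ whenever $T'\subseteq T$. Third, the preimage sets $W_T$ are genuinely linear subspaces of $E_S$, which is exactly the form the richness condition is stated in. Injectivity of $A$ on $E_S$, which follows from lower RIP, is not needed here. It would be used later, when we represent each atom $a_i$ sparsely in $B$.
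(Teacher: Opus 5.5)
Your proposal is correct and follows the paper's proof essentially line for line. The paper likewise defines the preimage subspaces $L_T=\{w\in E_S: Aw\in\spann(B_T)\}$ and covers $z(X)\cap E_S$ by these finitely many subspaces using $K'$-sparsity of the competing code. It then invokes $K$-richness to get some $L_T=E_S$, hence $\spann(A_S)=A(E_S)\subseteq\spann(B_T)$, and takes the union over $S$.
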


\begin{proof}
    Fix $S\subseteq[M]$ with $|S|=K$, and write
    $E_S:=\operatorname{span}\{e_i:i\in S\}$.
    For each $T\subseteq[M']$ with $|T|=K'$, define the linear
    subspace
    \begin{equation}
        L_T:=\{w\in E_S:Aw\in\operatorname{span}(B_T)\}.
    \end{equation}
    Since $Az(x)=Bz'(x)$ and $z'(x)$ is at most $K'$-sparse,
    \begin{equation}
        z(X)\cap E_S
        \subseteq
        \bigcup_{\substack{T\subseteq[M']\\|T|=K'}} L_T.
    \end{equation}
    By $K$-richness, these finitely many linear subspaces cannot
    all be proper subspaces of $E_S$. Thus $L_T=E_S$ for some $T$,
    which implies
    \begin{equation}
        \operatorname{span}(A_S)=A(E_S)
        \subseteq\operatorname{span}(B_T).
    \end{equation}
    Taking the union over all $S\subseteq[M]$ with $|S|=K$ gives
    $U_K(A)\subseteq U_{K'}(B)$, as desired.
\end{proof}
The inclusion \eqref{eq:inclusion} is the primary consequence of richness that we invoke.

We first prove that $K'\ge K.$ Invoking \Cref{lem:inclusion}, we have that $U_K(A)\subseteq U_{K'}(B).$ Consider $S\subseteq [M]$ with $|S|=K.$ Then $\spann(A_S)\subseteq U_K(A)\subseteq U_{K'}(B).$ Since a finite-dimensional subspace cannot be covered by finitely many proper linear subspaces, there exists some $T\subseteq [M']$ with $|T|=K'$ such that $\spann(A_S)\subseteq \spann(B_T).$ Since $|S|=K<\spark(A),$ as implied by the lower RIP, $\rank(A_S)=K$. Therefore, $K'=\rank(B_T)\ge K.$

Next we prove that if $M',K'=M,K$, then $(A,z)\equiv (B,u).$ Suppose that $M'=M$ and $K'=K$. Invoking \Cref{lem:inclusion}, we have that $U_K(A)\subseteq U_{K'}(B).$ As in the proof above, for each $S\subseteq [M]$ with $|S|=K,$ there must exist some $T\subseteq [M]$ with $|T|=K$ such that $\spann(A_S)\subseteq\spann(B_T).$ Since both are $K$-dimensional (since $\spark(A),\spark(B)>2K$), we must have $\spann(A_S)=\spann(B_T).$ Define $g:\binom{[M]}{K}\rightarrow \binom{[M]}{K}$, $S\mapsto T.$ Since $\spann(A_S)\neq \spann(A_{S'})$ when $S\neq S'$ (again from the spark condition), $g$ is a bijection.

Moreover,
\begin{equation}
    |S\cap S'| = \dim(\spann(A_S)\cap \spann(A_{S'})) = \dim(\spann(B_{g(S)})\cap \spann(B_{g(S')})) = |g(S)\cap g(S')|.
\end{equation}
Now for each $i\in [M]$, we can choose $S,S'\in \binom{[M]}{K}$ such that $S\cap S'=\{i\},$ so $g(S)\cap g(S') = \{j\}$ for some $j\in [M].$ This implies that
\begin{equation}
    \spann(a_i) = \spann(A_S) \cap \spann(A_{S'}) = \spann(B_{g(S)}) \cap \spann(B_{g(S')}) = \spann(b_j).
\end{equation}
So $b_j = a_i$ since columns are unit norm and coefficients must be non-negative. Since columns of $A$ are pairwise linearly independent, each $i$ maps to a distinct $j$ according to a permutation matrix $P$. Then for each $x\in X$, $Bz'(x)=Az(x).$ Since $\spark(B)>2K,$ there is a unique $K$-sparse choice of $z'(x)$. In particular, $z'(x) = P^Tz(x).$

Finally, we show that if $M' < M,$ then $K' \ge 2K.$

The proof rests on the following new characterization of when Hall's condition holds: if Hall's condition is satisfied for subsets of size at most $2K$ and subsets of size at most $K$ have neighborhoods of size at most $2K-1$, then Hall's condition is satisfied globally.

\begin{lemma}[Local-to-global Hall lemma]
\label{lem:local-global-hall}
Let $G=(X,Y,E)$ be a bipartite graph, and let $K\ge 1$. For $S\subseteq X$, let $N(S):=\{y\in Y: (x,y)\in E\text{ for some $x\in S$}\}$. Suppose that for every $S\subseteq X$,
\begin{enumerate}
    \item[(i)] if $|S|\le 2K$, then $|N(S)|\ge |S|$;
    \item[(ii)] if $|S|\le K$, then $|N(S)|\le 2K-1$.
\end{enumerate}
Then $G$ satisfies Hall's condition: $|N(S)|\ge |S|$ for every $S\subseteq X.$
Consequently, there exists a matching that saturates $X$, and in particular
$|Y|\ge |X|.$
\end{lemma}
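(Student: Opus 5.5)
The plan is to argue by contradiction with a \emph{minimal} Hall violator and use the two hypotheses at two scales. Suppose Hall's condition fails. Choose $S\subseteq X$ with $|N(S)|<|S|$ and $|S|$ minimal. By (i) we have $|S|\ge 2K+1$. Minimality also forces $|N(S)|=|S|-1$. Indeed, deleting any $x\in S$ leaves a set satisfying Hall. So $S\setminus\{x\}$ has a matching of size $|S|-1$ inside $N(S)$, and this gives $|N(S)|\ge |S|-1$.

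\textbf{Structure of a Hall-critical set.} Next I will record the standard properties of such an $S$.
\begin{enumerate}
\item[(a)] Every nonempty proper $S'\subsetneq S$ has surplus: $|N(S')|\ge |S'|+1$. Otherwise, if $|N(S')|=|S'|$, then $S\setminus S'$ is a smaller violator, since its neighbourhood outside $N(S')$ has size at most $|S|-1-|S'|<|S\setminus S'|$.
\item[(b)] For every $x\in S$ we have $N(S\setminus\{x\})=N(S)$. This is because the matching of $S\setminus\{x\}$ already saturates all of $N(S)$.
\end{enumerate}
Iterating (b) together with the matchings should show that every $y\in N(S)$ has at least two neighbours in $S$. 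More generally, removing a small set $T\subseteq S$ shrinks the neighbourhood by less than $|T|$.

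\textbf{Bringing in (ii).} Partition (part of) $S$ into blocks of size $K$. Hypothesis (ii) says each block sees at most $2K-1$ vertices of $N(S)$. Property (a) says each union of two blocks, a set of size $2K$, sees at least $2K+1$ vertices. So two blocks $P,Q$ must each have neighbourhoods of size in $[K+1,2K-1]$, and each must have private neighbours relative to the other. I would then count incidences across a chain of blocks $P_1,\dots,P_r$ covering $S$. Hypothesis (ii) caps each $|N(P_j)|$, whereas (a)/(b) applied to $S\setminus P_j$ forces $N(S\setminus P_j)$ to nearly exhaust $N(S)$. Comparing these should force $|N(S)|\ge |S|$, the desired contradiction. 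An alternative I would also try is averaging over a uniformly random $K$-subset $P$. Compute $\EE|N(P)|$ by summing, over $y\in N(S)$, the probability that $P$ hits a neighbour of $y$, using that every $y$ has at least $2$ neighbours in $S$. Then compare the result with the bound $2K-1$.

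\textbf{Main obstacle.} The crux is this last step: turning the local facts (surplus on every proper subset, and the cap $2K-1$ on $K$-sets) into a contradiction with the global count $|N(S)|=|S|-1$. Neither hypothesis alone suffices, so the argument must play them against each other. If direct counting stalls, I would strengthen the induction to a statement on the deficiency of sets of size between $K$ and $2K$ that propagates upward by doubling. Once Hall's condition is established, the existence of an $X$-saturating matching and the bound $|Y|\ge|X|$ follow from Hall's theorem.
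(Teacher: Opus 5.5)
\textbf{Verdict: there is a genuine gap.} Your setup is correct: a minimal violator $S$ has $|S|\ge 2K+1$ and $|N(S)|=|S|-1$, fact (b) holds, and hence every $y\in N(S)$ has at least two neighbours in $S$. Claim (a), however, is false. Your argument only bounds the neighbours of $S\setminus S'$ \emph{outside} $N(S')$. That makes $S\setminus S'$ a violator in the graph with $S'\cup N(S')$ deleted, not in $G$. For a concrete counterexample, take the path $x_1y_1x_2y_2\cdots y_{n-1}x_n$, where $y_i$ is adjacent to $x_i$ and $x_{i+1}$. Here $S=\{x_1,\dots,x_n\}$ is a minimal violator, since each $S\setminus\{x_k\}$ matches perfectly onto $N(S)$. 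Yet $S'=\{x_1\}$ has $|N(S')|=|S'|$. So your bound $|N(P\cup Q)|\ge 2K+1$ for two blocks is unjustified; (i) only gives $2K$.

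\textbf{The crux step is missing, and neither route you sketch can supply it.} You must produce a \emph{specific} set of size at most $K$ with at least $2K$ neighbours. Consider the same path with $n\ge 2K+1$ and $K\ge 2$. Blocks of $K$ consecutive $x_i$'s see at most $K+1\le 2K-1$ vertices, so (ii) holds on every block of that partition. All the other structural facts you list also hold, yet $|N(S)|=|S|-1$. So counting over an unstructured partition cannot yield a contradiction.

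The averaging route fails for the same reason. Using only degree $\ge 2$, the first-moment bound is $\mathbb{E}|N(P)|\ge 2K-K(K+1)/|S|<2K$. This is exactly the true average in the path, where only ``alternating'' sets such as $\{x_2,x_4,\dots,x_{2K}\}$ reach $2K$.

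\textbf{The missing idea is a structural way to locate such an alternating set.} The paper proceeds as follows.
\begin{enumerate}
\item Fix $r\in S$ and a perfect matching $m$ of $S\setminus\{r\}$ onto $N(S)$.
\item Grow a spanning tree on $S$ from $r$, attaching each new vertex $j$ to some already-included $i$ with $m(j)\in N(\{i\})$. This is always possible: otherwise the current vertex set $R\neq S$ satisfies $N(R)\subseteq m(R\setminus\{r\})$, making $R$ a smaller violator.
\item Label each tree edge $ij$ by $m(j)$. This labels the tree edges injectively by vertices of $N(S)$, and each label is adjacent to both endpoints of its edge.
\item Take any subtree on $2K+1$ vertices. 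It has a colour class $S_1$ with $|S_1|\le K$, and $S_1$ meets all $2K$ edges of the subtree. Hence $|N(S_1)|\ge 2K$, contradicting (ii).
\end{enumerate}
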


\paragraph{Proof of \Cref{lem:local-global-hall}.} Assume for sake of contradiction that (i) and (ii) are satisfied, but that there exists $C\subseteq X$ such that $|N(C)| < |C|.$ Take $C$ to be inclusion minimal, in the sense that $|N(C')|\ge |C'|$ for all $C'\subsetneq C.$ Then $|N(C)| = |C|-1,$ since otherwise we can freely remove any vertex from $C$ to obtain $C'$ violating Hall's condition. By (i), we have that $|C| \ge 2K+1.$ We now show that (ii) is contradicted.

    \begin{figure}
\centering
\resizebox{0.35\linewidth}{!}{
\begin{tikzpicture}[
    >=Stealth,
    leftv/.style={circle, fill=black, inner sep=1.6pt},
    rightv/.style={circle, fill=black, inner sep=1.6pt},
    match/.style={thick},
    aux/.style={dashed, thick},
    tree/.style={->, thick}
]

\node at (0,6.7) {$C$};
\node at (5,6.7) {$N(C)$};

\node[leftv] (c0) at (0,0) {};
\draw (c0.center) circle[radius=1.5mm];
\node[left] at ([yshift=-1.5mm]c0.west) {$r$};

\foreach \i in {1,...,6} {
    \node[leftv] (c\i) at (0,\i) {};
}

\foreach \i in {1,...,6} {
    \node[rightv] (u\i) at (5,\i) {};
}

\foreach \i in {1,...,6} {
    \draw[match] (c\i) -- (u\i);
}

\draw[aux] (c0) -- (u1);
\draw[aux] (c1) -- (u2);
\draw[aux] (c0) -- (u3);
\draw[aux] (c2) -- (u4);
\draw[aux] (c4) -- (u5);
\draw[aux] (c4) -- (u6);

\draw[tree] (c0.center) to[bend left=55] (c1.west);
\draw[tree] (c1.center) to[bend left=55] (c2.west);

\draw[tree] (c0.center) to[bend left=72] (c3.west);

\draw[tree] (c2.center) to[bend left=65] (c4.west);

\draw[tree] (c4.center) to[bend left=55] (c5.west);
\draw[tree] (c4.center) to[bend left=65] (c6.west);

\end{tikzpicture}
}
\caption{A sketch of how the tree construction completes the proof. Removing one vertex $r$ from $C$, there exists a matching denoted by the solid lines from $C\setminus \{r\}$ to $N(C)$ (by the ``Hall minimality'' of $C$). We can then construct a tree on $C$ where the existence of an edge $ij$ implies that $m(j)\in F_i$ (so $F_i\cap F_j$ is non-empty). Splitting the tree into its two parts, this gives us a subset of $C$ that has a large neighborhood, providing the desired contradiction.}
\end{figure}

    Choose any $r\in C$. By the minimality of $C$, every strict subset of $C$ satisfies Hall's condition. Therefore, we can construct a perfect matching $m:C\setminus \{r\}\rightarrow N(C)$. 
    
    We now iteratively construct a tree $T=(C,E_T)$ alongside a bijection $g: E_T\rightarrow N(C)$. Start with the set of vertices $R=\{r\}$ and the empty set of edges $E_T=\emptyset$. At each step, choose a vertex $j\in C\setminus R$ such that $m(j)\in N(\{i\})$ for some $i\in R.$ Then add $j$ to $R$, the edge $ij$ to $E_T$, and set $g(ij):=m(j).$ Such a vertex $j$ exists as long as $R\neq C$. Indeed, suppose that there were not such a vertex. This would mean that for every $u\in N(R),$ $m^{-1}(u)\in R\setminus \{r\}$. Since $m$ is a bijection, this implies that $|N(R)| \le |R|-1.$ Therefore, if $R\neq C$, this would contradict the minimality of $C$.
    
    A key property of $T$ is that if $g(ij)=m(j)$, then $m(j)\in N(\{i\})$ and $m(j)\in N(\{j\})$.
    
    Now take any subtree of $T$ with $2K+1$ vertices (equivalently, $2K$ edges). Let the two sides of the tree (which is bipartite) have vertices $S_1$ and $S_2$. Without loss of generality, suppose $|S_1|\le K.$ Every edge $ij$ in the subtree has one endpoint in $S_1$. Therefore, for each such edge $ij$, $g(ij)$ is in $N(S_1).$ Since $g$ is a bijection, this implies that $|N(S_1)|\ge 2K$. This contradicts (ii), as desired.

\paragraph{Finishing the proof.} We now complete the proof. Invoking \Cref{lem:inclusion}, we have that $U_K(A)\subseteq U_{K'}(B).$ Therefore, for all $i\in [M]$, since $a_i\in U_K(A)\subseteq U_{K'}(B),$ $a_i$ has $K'$-sparse representation in $B$. Since $\spark(B)>2K',$ this representation is unique. Let $F_i\subseteq [M']$ denote the support of this representation.

Now consider a bipartite graph $G=([M],[M'],E)$ where $E=\{(i,j): i\in [M], j\in F_i\}.$ Then we have $N(S) = \cup_{i\in S} F_i\subseteq [M'].$ 

Our approach is to show that if $K'\le 2K-1$, then $G$ satisfies the two conditions in \Cref{lem:local-global-hall}, implying that $M'\ge M.$

To show (i) holds, observe that since $|S| \le 2K < \spark(A)$, then $\rank(A_S)=|S|$. Also,
$\spann(A_S)\subseteq \spann(B_{\cup_{i\in S} F_i}).$ Now observe that $|N(S)| = |\cup_{i\in S} F_i| \ge \rank(B_{\cup_{i\in S} F_i}) \ge \rank(A_S) = |S|$. (Notice that this property holds regardless of $K'$.)

To show (ii) holds, observe that if $|S|\le K$, $\spann(A_S)\subseteq U_{K}(A)\subseteq U_{K'}(B).$ Since a subspace can't be covered by the union of a finite number of strictly lower-dimensional subspaces, there exists $T_S\subseteq [M']$ with $|T_S|=K'$ such that $\spann(A_S)\subseteq \spann(B_{T_S}).$ For each $i\in S$, since $a_i\in \spann(A_S)$ and $F_i$ is the support of the unique $K'$-sparse representation of $a_i$ in $B$, then $F_i\subseteq T_S.$ Therefore, $|N(S)| = |\cup_{i\in S} F_i| \le |T_S| \le K' \le 2K-1,$ as desired.

Define a featurization of $f$ to be a dictionary that exactly represents $f$.

\begin{theorem}[Generic rigidity]
\label{thm:generic-rigidity}
Fix $d,M,K$ with $M>d\ge 2$. For almost every
$A=(a_1,\ldots,a_M)\in (\mathbb S^{d-1})^M$, the following holds. Suppose that $(A,z)$ is a $K$-rich $(M,K)$-featurization of $f:X\to\RR^d$. Then every $(M',K')$-featurization $(A',z')$ of $f$, for $M' > d$, satisfies
\begin{equation}\label{eq:generic-bound}
    K'\ge 1 + (d-1)\frac{M-M'}{M}.
\end{equation}
Since $d+1\ge \spark(A') > 2K'$, \eqref{eq:generic-bound} implies that $M' > \frac{M}{2}.$
\end{theorem}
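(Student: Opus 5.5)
The plan is a dimension count combined with the easy form of Sard's theorem: the image of a smooth map from a manifold of strictly smaller dimension has measure zero. The only consequence of richness I need is \Cref{lem:inclusion}, $U_K(A)\subseteq U_{K'}(A')$. Since each $a_i$ lies in $\spann(A_S)$ for any $|S|=K$ containing $i$, every atomic direction has a $K'$-sparse representation in $A'$. So there is a support $F_i\subseteq[M']$ with $|F_i|=K'$ (padding if needed) and a coefficient vector $c_i\in\RR^{K'}$ such that $a_i=A'_{F_i}c_i$. From here on $f$ and $z$ play no role. The problem reduces to a statement about $A$ alone: generic configurations of $M$ points on the sphere cannot all lie in $K'$-sparse spans of $M'$ other unit vectors unless the stated inequality holds.

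Fix $M'$, $K'$ and a support pattern $\mathcal F=(F_1,\dots,F_M)$. Consider the smooth map
\[
\Phi_{M',K',\mathcal F}:(\mathbb S^{d-1})^{M'}\times(\RR^{K'})^M\to(\RR^d)^M,\qquad (A',c_1,\dots,c_M)\mapsto (A'_{F_1}c_1,\dots,A'_{F_M}c_M).
\]
Its domain has dimension $M'(d-1)+MK'$ and its target has dimension $Md$. I work with the unnormalized image to avoid dividing by possibly vanishing combinations. Let $\mathcal B$ be the set of bad $A$, namely those admitting some such representation. Then the cone $\{(\lambda_1a_1,\dots,\lambda_Ma_M):\lambda_i>0,\ A\in\mathcal B\}$ lies in the union of the images of all the $\Phi$. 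Suppose
\[
M'(d-1)+MK'<Md,\quad\text{i.e.}\quad K'<1+(d-1)\tfrac{M-M'}{M}.
\]
Then each image has Lebesgue measure zero in $(\RR^d)^M$ by Sard. For each fixed $(M',K')$ there are finitely many patterns $\mathcal F$, and $M'$ and $K'\le d$ range over countable sets, so the union is still null.

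Measure zero of the cone transfers to measure zero of $\mathcal B$ in $(\mathbb S^{d-1})^M$. This follows from the polar-coordinates diffeomorphism $(\RR_{>0}\times\mathbb S^{d-1})^M\cong(\RR^d\setminus\{0\})^M$ together with Fubini. Hence for $A$ outside a null set, every featurization $(A',z')$ must satisfy \eqref{eq:generic-bound}. The exceptional set is chosen before $f$, $z$, $A'$ are given, as the statement requires.

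For the final remark, $M'>d$ forces $\spark(A')\le d+1$, and lower RIP gives $\spark(A')>2K'$, so $K'\le d/2$. Combining with \eqref{eq:generic-bound} gives $(d-2)/2\ge(d-1)(1-M'/M)$, hence $M'/M\ge 1-\frac{d-2}{2(d-1)}>\tfrac12$.

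The main obstacle I anticipate is technical bookkeeping rather than any deep step. First, the parameterization must genuinely be a smooth map of manifolds; this is why I use unnormalized coefficients and the cone rather than normalizing $A'_{F_i}c_i$. Second, the exceptional set must be independent of $(A',z')$; this is handled by the countable union over $(M',K',\mathcal F)$. If the Sard-based argument for the cone turns out to be awkward, my fallback is a direct local-Lipschitz covering argument: a smooth image of a lower-dimensional manifold has Hausdorff dimension below $Md$.
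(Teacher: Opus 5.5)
Your proposal is correct and follows the paper's proof. \Cref{lem:inclusion} reduces the claim to $K'$-sparse representability of each $a_i$ in $A'$, and a Sard dimension count over finitely many support patterns gives your condition $M'(d-1)+MK'<Md$, which is equivalent to the paper's $M'(d-1)+M(K'-1)<M(d-1)$. The only difference is bookkeeping: you keep unnormalized coefficients and pass from $(\RR^d)^M$ back to $(\mathbb S^{d-1})^M$ through the cone and polar coordinates, whereas the paper normalizes coefficients onto $\mathbb S^{|T_i|-1}$ and composes with the radial projection $\Theta$ onto $(\mathbb S^{d-1})^M$.
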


\paragraph{Proof of \Cref{thm:generic-rigidity}.}

Suppose that $(A,z)$ is a $K$-rich $(M,K)$-featurization of $f: X\rightarrow \RR^d$. Then by \Cref{lem:inclusion}, we have that $a_i\in U_{K'}(B).$ We show that if
\begin{equation}
    K' < 1 + (d-1)\frac{M-M'}{M},
\end{equation}
then for almost every $A=(a_1,a_2,\cdots,a_M)\in (\mathbb{S}^{d-1})^M$, there does not exist $B\in \mathbb{R}^{d\times M'}$ such that $a_1, a_2, \cdots, a_M$ all have $K'$-sparse representations in $B$. (Notice that this argument does not require that $U_{K'}(B)$ include all of $U_K(A),$ but rather just $a_1, a_2, \cdots, a_M$, so it seems natural that the bound given here could be further improved.)

The idea is a dimension counting argument on parameter spaces. In particular, we first define a smooth manifold that parameterizes $A$. We then define a smooth manifold that parameterizes $(M',K')$-featurizations $B,z'$ that can represent $a_1, a_2, \cdots, a_M.$ Finally, we define a smooth map between the $B$-manifold and the $A$-manifold, with the property that every representable $A$-featurization must be in the image of this map. Then invoking Sard's theorem, if the dimension of the $B$-manifold is smaller than the $A$-manifold, then the image of the map has measure zero. So for a measure one subset of $A$, no $(M,K)$-featurization $A,z$ has an $(M',K')$-featurization.

Let $\mathcal{A}_M = (\mathbb{S}^{d-1})^M$. Let $\mathcal{B}_{M',K',T} = (\mathbb{S}^{d-1})^{M'}\times \prod_{i=1}^M S^{|T_i|-1}$. Let $T=(T_1,\cdots,T_M)$ where $T_i\subseteq [M']$ with $1\le |T_i|\le K'.$ Define the smooth function $\Phi:\mathcal{B}_{M',K',T}\rightarrow (\RR^d)^M$ such that
\begin{equation}
    (B,\gamma_1,\gamma_2,\cdots,\gamma_M)\mapsto \left(\sum_{j\in T_1}\gamma_{1j}b_j, \sum_{j\in T_2}\gamma_{2j}b_j, \cdots, \sum_{j\in T_M}\gamma_{Mj}b_j\right).
\end{equation}
Now define $\Theta: (\RR^d\setminus\{0\})^M \rightarrow \mathcal{A}_M: (a_1',a_2',\cdots,a_M')\mapsto (\frac{a_1'}{\|a_1'\|}, \frac{a_2'}{\|a_2'\|}, \cdots, \frac{a_M'}{\|a_M'\|}),$ which is also smooth. Let $\mathcal{B}^*_{M',K',T} = \Phi^{-1}((\RR^d\setminus\{0\})^M)$. Since $(\RR^d\setminus\{0\})^M$ is open, $\mathcal{B}^*_{M',K',T}$ is itself a smooth manifold. $\mathcal{B}^*_{M',K',T}$ has dimension at most $M'(d-1) + M(K'-1).$ $\mathcal{A}_M$ has dimension $M(d-1).$ So if
\begin{equation}\label{eq:dim-count}
    M'(d-1) + M(K'-1) < M(d-1),
\end{equation}
by Sard's theorem, $(\Theta\circ\Phi)(\mathcal{B}^*_{M',K',T})\subseteq \mathcal{A}_M$ has measure zero. Now observe that every $(a_1,a_2,\cdots,a_M)$ that has a $K'$-sparse decomposition in $(b_1,b_2,\cdots,b_{M'})$ must lie in the image of $(\Theta\circ\Phi)(\mathcal{B}^*_{M',K',T})$ for some $T$. Since there are a finite number of supports $T$, and the union of a finite number of measure zero sets has measure zero, this implies that with probability one, $(a_1,a_2,\cdots,a_M)$ does not have a $K'$-sparse decomposition in any $(b_1,b_2,\cdots,b_{M'})$. Similarly, we can repeat the argument for all $M',K'$ satisfying \eqref{eq:dim-count}.

Finally, observe that since $d+1\ge \spark(A') > 2K'$, then $K'-1< \frac{d-1}{2},$ so
\begin{align}
    \frac{d-1}{2} &> (d-1)\frac{M-M'}{M}\\
    \frac{M}{2} &> M-M'\\
    M' &> \frac{M}{2},
\end{align}
as desired.

\usetikzlibrary{arrows.meta,patterns,patterns.meta}

\begin{figure}
    
\centering
\begin{tikzpicture}[
    >=Stealth,
    every node/.style={font=\small},
    maparrow/.style={->, very thick}
]

\path[
    fill=blue!10
]
plot[smooth cycle, tension=0.9]
coordinates {
    (0.15,-0.15)
    (0.18,0.70)
    (0.55,1.32)
    (1.10,1.48)
    (1.58,1.12)
    (1.88,0.38)
    (1.82,-0.40)
    (1.45,-0.92)
    (0.88,-0.82)
    (0.40,-0.48)
};

\path[
    pattern={
        Lines[
            angle=45,
            distance=5.5pt,
            line width=0.4pt
        ]
    },
    pattern color=blue!55!black,
    draw=black,
    thick
]
plot[smooth cycle, tension=0.9]
coordinates {
    (0.15,-0.15)
    (0.18,0.70)
    (0.55,1.32)
    (1.10,1.48)
    (1.58,1.12)
    (1.88,0.38)
    (1.82,-0.40)
    (1.45,-0.92)
    (0.88,-0.82)
    (0.40,-0.48)
};

\node[font=\normalsize] at (1.00,1.88)
    {$\mathcal B^*_{M',K',T}$};

\draw[maparrow]
    (2.25,0.18)
    --
    node[midway, above=6pt, font=\normalsize]
        {$\Theta\circ\Phi$}
    (5.00,0.18);

\path[
    shade,
    top color=orange!4,
    bottom color=orange!28,
    shading angle=8,
    draw=black,
    thick
]
plot[smooth cycle, tension=0.8]
coordinates {
    (5.50,-0.35)
    (5.50,0.75)
    (5.78,1.65)
    (6.35,2.28)
    (7.05,2.20)
    (7.68,1.52)
    (8.00,0.48)
    (8.00,-0.55)
    (7.62,-1.15)
    (6.80,-1.38)
    (6.02,-1.08)
};

\draw[thick]
    (5.45,-0.05)
    arc[
        start angle=180,
        end angle=360,
        x radius=1.30,
        y radius=0.38
    ];

\draw[thick, dashed]
    (8.05,-0.05)
    arc[
        start angle=0,
        end angle=180,
        x radius=1.30,
        y radius=0.38
    ];

\path[
    fill=blue!10
]
plot[smooth cycle, tension=0.85]
coordinates {
    (6.20,0.88)
    (6.35,1.09)
    (6.72,1.30)
    (7.14,1.32)
    (7.38,1.18)
    (7.24,0.98)
    (6.88,0.82)
    (6.48,0.78)
};

\path[
    pattern=north east lines,
    pattern color=blue!55!black,
    draw=black,
    thick
]
plot[smooth cycle, tension=0.85]
coordinates {
    (6.20,0.88)
    (6.35,1.09)
    (6.72,1.30)
    (7.14,1.32)
    (7.38,1.18)
    (7.24,0.98)
    (6.88,0.82)
    (6.48,0.78)
};

\node[
    text=blue!70!black,
    anchor=west
] at (8.1,1.42)
    {$(\Theta\circ\Phi)\!\left(
    \mathcal B_{M',K',T}
    \right)$};

\draw[
    ->,
    thick,
    blue!70!black
]
    (8.15,1.34) -- (7.4,1.18);

\node[font=\normalsize] at (6.75,2.68)
    {$\mathcal A_{M}$};

\end{tikzpicture}

\caption{When $\dim(\mathcal{B}_{M',K',S}) < \dim(\mathcal{A}_{M}),$ the set of $(M,K)$-factorizations that also have an $(M',K')$-factorization is measure zero under a generic distribution over $(M,K)$-factorizations.}

\end{figure}

\providecommand{\RR}{\mathbb R}
\providecommand{\EE}{\mathbb E}
\providecommand{\DD}{\mathcal D}
\providecommand{\spann}{\operatorname{span}}
\providecommand{\supp}{\operatorname{supp}}
\providecommand{\dist}{\operatorname{dist}}
\providecommand{\gap}{\operatorname{gap}}

\section{Proofs of Theorems 2 and 3 (the Recovery Principle)}\label{sec:recovery-proofs}

Let us recall our assumptions. We assume throughout that $A$ and $\hat A$ have unit-norm columns and satisfy the
$(\gamma,2K)$-lower restricted isometry property. To simplify, we simply say that a matrix is $(\gamma,2K)$-stable. We also assume that $z(x)$ and $\hat z(x)$ are nonnegative, and that
$\hat z(x)$ is at most $K$-sparse. The atomic support
$S(x):=\supp(z(x))$ has size $K$ almost surely. Conditional on any
positive-probability support $S$, the coefficients $z_S(x)$ lie in
$(0,1)^K$ and have density $h_S$ satisfying $c_-\le h_S\le c_+$.
The densities may differ across supports.
All probabilities are with respect to $\DD$.
We abbreviate $\hat A^{(\DD,m)},\hat z^{(\DD,m)}$ by $\hat A,\hat z$ and write
\[
    E:=L_{\DD}(\hat A,\hat z),
    \qquad
    E_S:=\EE_{x\sim\DD}\!\left[
        \|Az(x)-\hat A\hat z(x)\|_2^2
        \,\middle|\, S(x)=S
    \right].
\]
For equal-dimensional subspaces $U,V\subseteq\RR^d$, let
$\gap(U,V):=\sup_{u\in U:\,\|u\|_2=1}\dist(u,V)$; the same value is
obtained after exchanging $U,V$. We set $\gap(\{0\},\{0\})=0$.
The first three lemmas do not require regularity, so
we can use them for both the ordinary and hierarchical settings.

The core idea is the following lemma, which shows that if a full atomic support span is accurately reconstructed by the learned dictionary, then there must be a single learned support whose span matches the atomic span (in a way that is independent of $m$ and $M$). Then, by intersecting these spans, we can show the recovery of individual atoms. A key challenge in showing the result is that the atomic span can, in theory, be ``tiled'' by many different learned spans (the number of which depends on $m$).

\begin{lemma}\label{lem:local-support}
    For every $\epsilon>0$, there exists
    $\delta=\delta(\epsilon,K,\gamma,c_-)>0$ such that any
    positive-probability support $S$ with $E_S\le\delta$ has a
    learned support $T\subseteq[m]$ of size $K$ satisfying
    $\gap(\spann(A_S),\spann(\hat A_T))\le\epsilon$.
\end{lemma}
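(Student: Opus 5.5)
The plan is to turn the small conditional loss $E_S$ into a single learned support $T$ that is used on a set of coefficients $w\in(0,1)^K$ of non-negligible volume. I would then show that such a $T$ must have span close to $\spann(A_S)$.

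\emph{Step 1: localization by Markov.} Write $v(w)=A_S w$ for $w\in(0,1)^K$. For each $x$ with $S(x)=S$, the learned reconstruction $\hat A\hat z(x)$ lies in $\spann(\hat A_{T(x)})$ with $|T(x)|\le K$; pad $T(x)$ to size exactly $K$. Let $r(w)=\dist(v(w),U_K(\hat A))$, the distance to the union of learned $K$-spans. Then $r(w)^2\le\|v(w)-\hat A\hat z(x)\|^2$, so $\EE[r^2\mid S]\le E_S\le\delta$. Since $h_S\ge c_-$, Markov gives $r(w)\le\beta$ on all of $(0,1)^K$ except a set of Lebesgue measure at most $\delta/(c_-\beta^2)$. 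Here $\beta$ is a threshold to be chosen as a function of $\epsilon$.

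\emph{Step 2: optimal supports are locally constant (main obstacle).} For each good $w$, let $T(w)$ attain $r(w)$. Naive pigeonholing over the $\binom{m}{K}$ supports fails because the bound would depend on $m$. Instead I would use $(\gamma,2K)$-stability of $\hat A$ to get an $m$-free rigidity statement. Suppose $v,v'$ are both within $\beta$ of $\spann(\hat A_T)$ and $\spann(\hat A_{T'})$ respectively, with $\|v-v'\|$ small. Then $v-v'$ is close to a $2K$-sparse combination $\hat A(c-c')$, and lower RIP bounds the coefficients in terms of norms. The goal is to iterate this on a fine grid: start from a ball $B_0\subset(0,1)^K$ of radius $\rho$ (depending only on $\epsilon,K,\gamma$) on which most points are good. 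Then show that a single $T$ serves a positive fraction of $B_0$, independent of $m$. If that fails, shrink to a sub-ball and repeat; the number of rounds is bounded by a function of $K$ and $\gamma$. The control comes from the fact that each change of support near a point forces a $2K$-sparse near-dependency whose size is bounded below by $\gamma$.

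Concretely, I would argue as follows. If the good points of a ball use two supports $T,T'$ whose spans are far apart, then the set where each is optimal is confined near a lower-dimensional subspace of $\spann(A_S)$ (namely, near $\spann(A_S)\cap$ the relevant learned span). The pieces, up to $\beta$-thickening, are thus near proper subspaces. A volume argument then shows they cannot cover most of the ball unless one of them is near all of $\spann(A_S)$. Controlling the number of such near-subspaces without reference to $m$ is exactly where stability enters: the spans $\spann(\hat A_T)\cap\spann(A_S)$ cluster into boundedly many well-separated positions at scale $\beta$.

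\emph{Step 3: from many points to a span gap.} Once a single $T$ has $\dist(A_Sw,\spann(\hat A_T))\le\beta$ for a set $W$ of $w$ of volume at least $\nu(K,\gamma,c_-)$, the set $W$ contains $K$ points that are quantitatively linearly independent, with smallest singular value bounded below by some $\kappa(\nu,K)$. This holds because a set of fixed volume cannot lie near a hyperplane. Since $A_S$ is $\gamma$-stable, the unit sphere of $\spann(A_S)$ is expressible through these $K$ points with coefficients of size $O(1/(\gamma\kappa))$. Hence $\gap(\spann(A_S),\spann(\hat A_T))\le C\beta/(\gamma\kappa)$. Choosing $\beta$ small gives gap at most $\epsilon$, and then $\delta$ is chosen so that the Markov exceptional set in Step 1 has measure below $\nu/2$.
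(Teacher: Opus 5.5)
\paragraph{Gap in Step 2.} Steps 1 and 3 are fine as far as they go. Step 2, which you rightly flag as the crux, is asserted rather than proved, and the heuristic behind it is off. Lower RIP does not make support changes costly. For best $K$-sparse codes at nearby points it gives only $\|v(y)-v(y')\|_2\le(\|y-y'\|_2+2\beta)/\gamma$, so coordinates with tiny coefficients can enter and leave the support freely, drawn from any of the $m$ columns. Consequently nothing you wrote bounds the number of supports, or of slab ``positions,'' needed to cover the ball independently of $m$ and $\beta$. The claim that the near-intersections $\spann(\hat A_T)\cap\spann(A_S)$ cluster into boundedly many positions at scale $\beta$ \emph{is} the $m$-free content of the lemma. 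Your slab-covering argument works only if that count does not grow like a power of $1/\beta$.

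\paragraph{Related gap in Step 3.} Step 3 needs the volume $\nu$ of the set served by a single $T$ to be independent of $\beta$. If that set were, say, a ball of radius comparable to $\beta$, your $K$ selected points would be nearly parallel, $\kappa$ would be of order $\beta$, and $C\beta/(\gamma\kappa)$ would not be small. Step 2 as sketched gives no such scale separation.

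\paragraph{How the paper closes both gaps.}
\begin{itemize}
\item \emph{Uniform bound.} It first upgrades your $L^2$/Markov bound to a uniform one. The function $g(w)=\dist(A_Sw,U_K(\hat A))$ is $\sqrt K$-Lipschitz. So $g>\alpha$ anywhere on $[1/4,3/4]^K$ would force $g>\alpha/2$ on a ball of radius $\alpha/(2\sqrt K)$, contradicting $E_S\le\delta$.
\item \emph{One coordinate per round.} Working in a ball of $V=\spann(A_S)$, it displaces the center by $r_j/2$ orthogonally to $\spann(\hat A_{T_j})$. This yields a point whose code has a new coordinate $i\notin T_j$ with $|v_i|\ge r_j/(4K)$. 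The code-stability inequality above then keeps $v_i\neq 0$ on a sub-ball of radius $r_{j+1}=\gamma r_j/(16K)$.
\item \emph{Support pinned.} After $K$ rounds, $K$ coordinates are forced nonzero. $K$-sparsity then pins the support to a single $T$ on a whole ball of radius $R=r_K$, which depends only on $K$ and $\gamma$. The small coordinates that defeat any counting argument are simply squeezed out.
\item \emph{Gap bound.} Because $R$ is independent of the error level $\alpha$, taking $\alpha\le\epsilon R$ and comparing $y_K\pm Rw$ gives $\gap(V,\spann(\hat A_T))\le\alpha/R\le\epsilon$.
\end{itemize}
Your remark about shrinking to sub-balls for a bounded number of rounds has the right shape. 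What it lacks is exactly this per-round gain: one locked-in coordinate whose magnitude is bounded below independently of $\alpha$.
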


\begin{proof}
    Fix $\epsilon>0$. Set $r_0:=\gamma/4$ and
    $r_{j+1}:=\gamma r_j/(16K)$ for $j=0,\ldots,K-1$.
    Write $R:=r_K$ and choose
    $\alpha:=R\min\{\epsilon,1/8\}$.
    Note that these quantities are independent of $m,M$.

    Define
    $U_K(\hat A):=\bigcup_{T\subseteq[m]:\,|T|\le K}
    \spann(\hat A_T)$ and
    $g(z):=\dist(A_Sz,U_K(\hat A))$.
    The conditional density lower bound gives
    $E_S\ge c_-\int_{(0,1)^K}g(z)^2\,dz$.
    Moreover, since distance to a fixed set is $1$-Lipschitz and
    $\|A_S\|_{\mathrm{op}}\le\sqrt K$, we have
    $|g(z)-g(z')|\le\sqrt K\|z-z'\|_2$.

    Suppose $g(z_0)>\alpha$ at some
    $z_0\in[1/4,3/4]^K$. Then $g(z)>\alpha/2$ throughout the
    coefficient ball of radius $\alpha/(2\sqrt K)$ centered at
    $z_0$. This ball lies inside $(0,1)^K$ since unit norms and stability
    imply $\gamma\le1$, and our choice of $\alpha$ makes its radius
    smaller than $1/4$. Writing $\omega_K$ for the volume of the unit
    ball in $\RR^K$, choose
    \[
        \delta:=
        \frac{c_-\omega_K}{2^{K+3}K^{K/2}}\,
        \alpha^{K+2}.
    \]
    The contribution of this ball would give
    $E_S\ge c_-(\alpha^2/4)\omega_K
    (\alpha/(2\sqrt K))^K=2\delta$, a contradiction.
    Thus $g(z)\le\alpha$ throughout $[1/4,3/4]^K$.

    Write $V:=\spann(A_S)$ and
    $y_0:=A_S(1/2,\ldots,1/2)$.
    Since $\sigma_{\min}(A_S)\ge\gamma$, the image
    $A_S[1/4,3/4]^K$ contains the ball
    $\mathcal B_0:=B_V(y_0,r_0)$.
    Indeed, any displacement in $V$ of norm at most $\gamma/4$
    equals $A_Sw$ for some $\|w\|_2\le1/4$.
    Here and below, $B_V$ denotes a closed ball taken within $V$.

    For each $y\in\mathcal B_0$, choose a best signed $K$-sparse
    coefficient vector $v(y)$. Such a vector exists because
    $U_K(\hat A)$ is a finite union of subspaces.
    The preceding bound gives $\|y-\hat A v(y)\|_2\le\alpha$.
    Since $v(y)-v(y')$ is at most $2K$-sparse, stability and the
    triangle inequality imply that for all $y,y'\in\mathcal B_0,$
    \begin{equation}\label{eq:nearby-codes}
        \|v(y)-v(y')\|_2
        \le
        \frac{\|\hat A(v(y)-v(y'))\|_2}{\gamma}
        \le
        \frac{\|y-y'\|_2+2\alpha}{\gamma}
    \end{equation}
    We now construct nested balls
    $\mathcal B_j:=B_V(y_j,r_j)$ and sets $T_j\subseteq[m]$
    of size $j$ such that
    $T_j\subseteq\supp(v(y))$ throughout $\mathcal B_j$.
    Initially, take $T_0=\varnothing$.

    Suppose $\mathcal B_j,T_j$ have been constructed, with $j<K$.
    Since $\dim V=K>j$, choose a unit vector in $V$ orthogonal to
    $\spann(\hat A_{T_j})$. At least one of the two points obtained
    by displacing $y_j$ by $\pm r_j/2$ in this direction has
    distance at least $r_j/2$ from $\spann(\hat A_{T_j})$.
    Call this point $y_{j+1}$. Then, since errors are at most $\alpha$ and since columns are unit-norm,
    \begin{equation}
        \frac{r_j}{2}
        \le
        \dist\bigl(y_{j+1},\spann(\hat A_{T_j})\bigr)
        \le
        \alpha+\sum_{\ell\notin T_j}|v_\ell(y_{j+1})|.
    \end{equation}
    Since $\alpha\le r_j/4$ and $v(y_{j+1})$ has at most $K$
    nonzero entries, some $i\notin T_j$ satisfies
    $|v_i(y_{j+1})|\ge r_j/(4K)$.

    Set $T_{j+1}:=T_j\cup\{i\}$ and
    $\mathcal B_{j+1}:=B_V(y_{j+1},r_{j+1})$.
    This ball lies inside $\mathcal B_j$, since its center is
    displaced by $r_j/2$ and $r_{j+1}\le r_j/2$.
    For every $y\in\mathcal B_{j+1}$,
    \eqref{eq:nearby-codes} and $2\alpha\le r_{j+1}$ give
    $|v_i(y)-v_i(y_{j+1})|
    \le 2r_{j+1}/\gamma=r_j/(8K)$.
    Hence $|v_i(y)|\ge r_j/(8K)>0$ throughout the smaller ball.
    The coordinates in $T_j$ remain nonzero by the induction
    hypothesis, completing the construction.

    After $K$ steps, $T:=T_K$ has size $K$, so sparsity forces
    $\supp(v(y))=T$ throughout $\mathcal B_K$.
    Consequently,
    $\dist(y,\spann(\hat A_T))\le\alpha$
    for every $y\in B_V(y_K,R)$.

    Finally, let $P_T$ be the orthogonal projection onto
    $\spann(\hat A_T)$. For any unit vector $w\in V$, both
    $y_K+Rw$ and $y_K-Rw$ belong to the final ball. Therefore,
    \[
        2R\|(I-P_T)w\|_2
        \le
        \|(I-P_T)(y_K+Rw)\|_2
        +
        \|(I-P_T)(y_K-Rw)\|_2
        \le 2\alpha.
    \]
    Noting that $I$ is the projection onto $\spann(A_S)$ and that both $A_S$ and $\hat{A}_T$ have dimension $K$, so taking the supremum
    over unit vectors $w\in V$ bounds their gap by
    $\alpha/R\le\epsilon$.
\end{proof}

The next lemma shows that approximate matching of support spans
preserves their intersections. Importantly, its bound does not depend
on the number of supports being intersected.

\begin{lemma}\label{lem:stable-intersection}
    Suppose a nonempty finite collection of pairs $S_\ell,T_\ell$,
    with $|S_\ell|=|T_\ell|=K$, satisfies
    \[
        \gap(\spann(A_{S_\ell}),\spann(\hat A_{T_\ell}))\le r
        \qquad\text{for every }\ell.
    \]
    Write $I:=\bigcap_\ell S_\ell$ and
    $J:=\bigcap_\ell T_\ell$.
    If $r(1+2K/\gamma)<1$, then
    \[
        |I|=|J|,
        \qquad
        \gap\bigl(\spann(A_I),\spann(\hat A_J)\bigr)
        \le r(1+2K/\gamma).
    \]
\end{lemma}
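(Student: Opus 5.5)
The plan is to bound the distance from an arbitrary unit vector of one intersection span to the other intersection span, in both directions, using only pairwise comparisons between two indices $\ell,\ell'$. The point is that the bound must not grow with the number of pairs. Once both one-sided distances are shown to be strictly less than $1$, the equality $|I|=|J|$ follows from a dimension argument, and the gap bound is then just the one-sided estimate.

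\emph{Direction $A\to\hat A$.} Fix a unit vector $u\in\spann(A_I)$. For every $\ell$ we have $u\in\spann(A_{S_\ell})$, so by hypothesis there is a vector $\hat A v_\ell$ with $\supp(v_\ell)\subseteq T_\ell$ and $\|u-\hat A v_\ell\|_2\le r$.

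The key step compares two such codes. For any $\ell,\ell'$, the difference $v_\ell-v_{\ell'}$ is at most $2K$-sparse and satisfies $\|\hat A(v_\ell-v_{\ell'})\|_2\le 2r$. The $(\gamma,2K)$-stability of $\hat A$ therefore gives $\|v_\ell-v_{\ell'}\|_2\le 2r/\gamma$, and in particular each coordinate differs by at most $2r/\gamma$.

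Now fix $v:=v_1$. For any $j\in T_1\setminus J$, there is some $\ell$ with $j\notin T_\ell$, so $(v_\ell)_j=0$ and hence $|v_j|\le 2r/\gamma$. Let $v'$ be the restriction of $v$ to $J$. There are at most $K$ coordinates in $T_1\setminus J$, and the columns of $\hat A$ have unit norm, so the triangle inequality gives $\|\hat A(v-v')\|_2\le 2Kr/\gamma$. Combining,
\[
\dist\bigl(u,\spann(\hat A_J)\bigr)\le \|u-\hat A v\|_2+\|\hat A(v-v')\|_2\le r(1+2K/\gamma).
\]
This bound does not depend on the number of pairs, because only one coordinate comparison per index of $T_1\setminus J$ is ever used.

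\emph{Direction $\hat A\to A$.} The argument is symmetric. Take a unit vector $\hat u\in\spann(\hat A_J)$. Each $\spann(\hat A_{T_\ell})$ is within gap $r$ of $\spann(A_{S_\ell})$, and this gap is symmetric because both spans are $K$-dimensional, as noted in the definition of $\gap$. Repeating the argument above with the stability of $A$ in place of that of $\hat A$ gives $\dist(\hat u,\spann(A_I))\le r(1+2K/\gamma)$.

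\emph{Dimensions and conclusion.} Both $|I|\le K$ and $|J|\le K$, which is below $2K$, so stability gives $\dim\spann(A_I)=|I|$ and $\dim\spann(\hat A_J)=|J|$. Suppose $|I|>|J|$. Then there is a unit vector in $\spann(A_I)$ orthogonal to $\spann(\hat A_J)$, whose distance to that span is $1$. This contradicts the first one-sided bound together with the assumption $r(1+2K/\gamma)<1$. The case $|J|>|I|$ is ruled out the same way using the second bound. Hence $|I|=|J|$, the two spans have equal dimension, and the gap between them is at most $r(1+2K/\gamma)$.

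I expect the main obstacle to be keeping the constant independent of the number of intersected pairs. A naive approach that intersects the spans one pair at a time would accumulate error at each step. The plan avoids this through the single-reference-code trick: every coordinate of $v_1$ outside $J$ is killed by some other code, and the stability bound on $v_1-v_\ell$ controls it directly. A minor detail to check is the degenerate case $I=J=\varnothing$, which is covered by the convention $\gap(\{0\},\{0\})=0$.
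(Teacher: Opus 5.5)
Your proposal is correct and follows essentially the same route as the paper. The shared steps are a single reference code $v_1$ whose coordinates outside $J$ are each bounded by $2r/\gamma$ via stability against a code that omits them, the symmetric argument using the stability of $A$, and the dimension argument that a larger intersection span would contain a unit vector at distance $1$ from the smaller one, which handles the empty-intersection case as well.
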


\begin{proof}
    Take a unit vector $u\in\spann(A_I)$. For each $\ell$, choose
    $v_\ell\in\RR^m$, supported on $T_\ell$, such that
    $\|u-\hat A v_\ell\|_2\le r$.
    Fix one pair, indexed by $1$. For every $j\in T_1\setminus J$,
    some $T_\ell$ omits $j$. Stability then gives
    $|(v_1)_j|\le\|v_1-v_\ell\|_2\le2r/\gamma$.
    Deleting these at most $K$ coordinates from $v_1$, and using
    the unit norms of the learned columns, gives
    \[
        \dist(u,\spann(\hat A_J))
        \le \|u-\hat A v_1\|_2
           +\sum_{j\in T_1\setminus J}|(v_1)_j|
        \le r+2Kr/\gamma.
    \]
    Exchanging $A,S_\ell$ with $\hat A,T_\ell$ gives the same
    bound in the reverse direction. If the two intersection spans
    had different dimensions, the larger would contain a unit
    vector orthogonal to the smaller, contradicting the bound
    being less than $1$. Their dimensions are $|I|$ and $|J|$ by
    stability, proving both claims. This argument also covers the
    case in which an intersection is empty.
\end{proof}

We next connect these geometric matches with the actual learned
activations. This step uses both nonnegativity of $\hat z$ and the
upper density bound. It does not assume that $\hat z(x)$ minimizes
the reconstruction error for each $x$.

\begin{lemma}[Activation recovery]\label{lem:activation-transfer}
    For every $\eta\in(0,1)$, there exist $r,c,\tau>0$,
    depending only on $\eta,K,\gamma,c_+$, with the following
    property.

    Let $\mathcal G$ be a collection of positive-probability
    atomic supports. Suppose each $S\in\mathcal G$ has a learned
    support $T_S$ of size $K$ satisfying
    $\gap(\spann(A_S),\spann(\hat A_{T_S}))\le r$.
    Suppose also that an atomic coordinate $i$ of prevalence
    $p_i:=\Pr(z_i>0)>0$ and a learned coordinate $j$
    satisfy $j\in T_S$ if and only if $i\in S$, for every
    $S\in\mathcal G$.

    If $E+\Pr(S(x)\notin\mathcal G)\le c p_i$, then
    \[
        \operatorname{F1}_{\DD}
        \bigl(\mathbf{1}\{z_i>0\},
              \mathbf{1}\{\hat z_j>\tau\}\bigr)
        >\eta.
    \]
\end{lemma}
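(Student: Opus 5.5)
The plan is to bound the false negatives and false positives of the thresholded learned feature $\mathbf{1}\{\hat z_j>\tau\}$ against $\mathbf{1}\{z_i>0\}$. Each error mass will be shown to be at most a small constant times $p_i$. Then $\operatorname{F1}=2\mathrm{TP}/(2\mathrm{TP}+\mathrm{FP}+\mathrm{FN})$ exceeds $\eta$ once $\mathrm{FP}+\mathrm{FN}\le\beta p_i$ with $\beta=\beta(\eta)$ small, because $\mathrm{TP}=p_i-\mathrm{FN}$. Inputs with $S(x)\notin\mathcal G$ are charged directly to the hypothesis, since their mass is at most $cp_i$. We also set aside a ``bad'' set where the per-input error $\|Az(x)-\hat A\hat z(x)\|_2$ exceeds some $\theta$. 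By Markov its probability is at most $E/\theta^2\le cp_i/\theta^2$. Choosing $c\ll\beta\theta^2$ makes both exceptional sets negligible. It remains to handle good inputs with $S(x)=S\in\mathcal G$ and error at most $\theta$.

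For such an $x$, write $y=Az(x)$. First project: since $\gap(\spann(A_S),\spann(\hat A_{T_S}))\le r$, there is $w$ supported on $T_S$ with $\|y-\hat A w\|_2\le r\|y\|_2\le r\sqrt K$. We also have $\|y-\hat A\hat z(x)\|_2\le\theta$. The vector $w-\hat z(x)$ is $2K$-sparse, so stability gives $\|w-\hat z(x)\|_2\le(r\sqrt K+\theta)/\gamma=:\kappa$. Hence $\hat z(x)$ is within $\kappa$ of a vector supported on $T_S$. Two consequences follow.

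\emph{False positives.} If $i\notin S$, then $j\notin T_S$ by the hypothesis, so $w_j=0$ and $\hat z_j(x)\le\kappa$. Choosing $\tau>\kappa$ rules these out on good inputs.

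\emph{False negatives.} If $i\in S$, then $j\in T_S$, and $w_j$ should be comparable to $z_i(x)$. To see this, the argument of \Cref{lem:stable-intersection} (applied to single-element intersections, or directly) shows that the coefficient of $\hat a_j$ in the $T_S$-expansion tracks the coefficient of $a_i$: $|w_j-z_i(x)\,s|$ is small for a sign/scale $s$ near $\pm1$, since $\hat a_j$ is near $\pm a_i$ modulo the other directions. Nonnegativity of $\hat z$ together with $|\hat z_j-w_j|\le\kappa$ then forces the sign to be $+$ whenever $z_i(x)$ is not tiny. So $\hat z_j(x)\ge z_i(x)/2-O(\kappa)$. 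The false negatives are therefore contained in the event $z_i(x)\le 2\tau+O(\kappa)$. By the upper density bound $c_+$ on $h_S$, this event has conditional probability at most $c_+\cdot O(\tau+\kappa)$ given $S$. Summing over $S\ni i$, its total mass is at most $O(c_+(\tau+\kappa))\,p_i$.

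Constants will be chosen in this order: pick $\tau$ so that $c_+\tau\ll\beta$, then $\theta,r$ so that $\kappa<\tau$ and $c_+\kappa\ll\beta$, then $c$ small relative to $\beta\theta^2$. All of these depend only on $\eta,K,\gamma,c_+$. The main obstacle is the false-negative step. It requires showing that the $T_S$-coordinate $w_j$ really captures $z_i(x)$ with the right sign and scale, using only the gap bound and stability, with no explicit identification of $\hat a_j$ with $a_i$. I would first try to decompose $\spann(A_S)=\spann(a_i)\oplus\spann(A_{S\setminus i})$ and compare it with $\spann(\hat A_{T_S})$. The goal is to bound the angle between the component of $\hat a_j$ orthogonal to $\spann(\hat A_{T_S\setminus j})$ and the component of $a_i$ orthogonal to $\spann(A_{S\setminus i})$. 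This would need an auxiliary gap bound between $\spann(A_{S\setminus i})$ and $\spann(\hat A_{T_S\setminus j})$. If that is not available from the hypotheses as stated, I would fall back on a weaker claim that suffices: $w_j$ vanishes only if $y$ lies near $\spann(\hat A_{T_S\setminus j})$. The set of such $z_S$ is then a thin slab of small $h_S$-mass, by the same density-volume argument as in \Cref{lem:local-support}, and sign issues are absorbed by nonnegativity.
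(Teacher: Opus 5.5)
Your outer structure matches the paper's proof. You charge $\Pr(S(x)\notin\mathcal G)$ and the Markov bad set $\{e(x)>\theta\}$ to the hypothesis, and use stability to compare $\hat z(x)$ with a code $w$ supported on $T_S$. False positives on good inputs vanish because $j\notin T_S$ when $i\notin S$, and you finish with $\operatorname{F1}\ge 1-(\mathrm{FP}+\mathrm{FN})/p_i$.

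\textbf{The gap is in your primary false-negative step.} The claim that $w_j$ tracks $z_i(x)$ up to a sign/scale near $\pm1$ does not follow from the stated hypotheses. The hypotheses only control the full $K$-dimensional spans and the membership pattern $j\in T_S\iff i\in S$; nothing ties $\hat a_j$ to $a_i$. \Cref{lem:stable-intersection} cannot be invoked either, because no family of supports with atomic intersection $\{i\}$ is given. Concretely, take:
\begin{itemize}
\item $\DD$ concentrated on a single support $S\ni i$, and $\mathcal G=\{S\}$;
\item $\hat A_{T_S}$ equal to the columns of $A_S$ relabelled so that $\hat a_j=a_{i'}$ for some $i'\in S\setminus\{i\}$;
\item $\hat z$ the exact code.
\end{itemize}
Every hypothesis holds with zero gap and $E=0$, yet $w_j=z_{i'}(x)$, unrelated to $z_i(x)$. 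The lemma's conclusion still holds here, since atoms in one support co-occur. That is exactly why aiming for $w_j\approx\pm z_i$ is the wrong target: you only need $w_j$ to be rarely near $0$ when $i\in S$.

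\textbf{Your fallback is the right idea and is essentially what the paper does, but it is missing its quantitative step.} The paper writes $v_S=H_Sz_S$ with $\hat A_{T_S}H_S=P_{T_S}A_S$ and shows $\sigma_{\min}(H_S)\ge\gamma/(2\sqrt K)$. So the $j$-th row of $H_S$ has an entry of size at least $\gamma/(2K)$, and $\{|(v_S)_j|\le2\tau\}$ is a slab of conditional mass at most $8Kc_+\tau/\gamma$. This uses the \emph{upper} density bound $c_+$, not the lower-bound ball argument of \Cref{lem:local-support} that you cite.

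Your version can be completed as follows:
\begin{enumerate}
\item On a good input with $i\in S$ and $\hat z_j\le\tau$, nonnegativity gives $|w_j|\le\tau+\kappa$. Hence $\dist(y,\spann(\hat A_{T_S\setminus\{j\}}))\le rK+\tau+\kappa=:\epsilon$.
\item By a dimension count, choose a unit $u\in\spann(A_S)$ orthogonal to the at most $(K-1)$-dimensional space $\spann(\hat A_{T_S\setminus\{j\}})$.
\item Then $|\langle A_S^\top u,z_S\rangle|\le\epsilon$, with $\|A_S^\top u\|_2\ge\gamma$.
\item Fixing all coordinates except the largest coordinate of $A_S^\top u$ leaves an interval of length at most $2\sqrt K\epsilon/\gamma$, so the conditional mass is at most $2\sqrt K c_+\epsilon/\gamma$.
\end{enumerate}
This variant works pointwise for any $T_S$-supported $w$ with $\|y-\hat Aw\|_2\le r\|y\|_2$ and avoids $H_S$ altogether.

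Minor correction: $\|Az(x)\|_2\le K$, not $\sqrt K$, so $\kappa$ should be $(rK+\theta)/\gamma$.
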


\begin{proof}
    We first derive a bound for any $0\le r\le1/2$ and
    $\tau>0$ satisfying $2Kr\le\gamma\tau$.
    Write $\beta:=\Pr(S(x)\notin\mathcal G)$.

    Fix $S\in\mathcal G$, and let $P_{T_S}$ denote orthogonal
    projection onto $\spann(\hat A_{T_S})$.
    Let $H_S$ be the $K\times K$ matrix satisfying
    $\hat A_{T_S}H_S=P_{T_S}A_S$.
    It exists and is unique because $\hat A_{T_S}$ has full
    column rank. For an input with support $S$, let
    $v_S(x)\in\RR^m$ have coordinates $H_Sz_S(x)$ on $T_S$
    and zero elsewhere. These auxiliary coefficients may have
    either sign.

    Since the atomic coefficients lie in $(0,1)^K$ and the
    columns have unit norm, $\|Az(x)\|_2\le K$.
    The gap bound therefore gives
    $\|Az(x)-\hat A v_S(x)\|_2\le Kr$.
    Writing $e(x):=\|Az(x)-\hat A\hat z(x)\|_2$,
    stability and the triangle inequality imply
    \begin{equation}\label{eq:actual-code-comparison}
        \|\hat z(x)-v_S(x)\|_2
        \le
        \frac{\|\hat A(\hat z(x)-v_S(x))\|_2}{\gamma}
        \le
        \frac{e(x)+Kr}{\gamma}.
    \end{equation}
    Here stability applies because $\hat z(x)-v_S(x)$ is
    at most $2K$-sparse.

    If $j\in T_S$, the coordinate $(v_S(x))_j$ is a linear
    function of $z_S(x)$ with at least one coefficient of
    magnitude $\gamma/(2K)$ or larger. To see this, for every
    $w\in\RR^K$,
    \[
        \sqrt K\|H_Sw\|_2
        \ge \|P_{T_S}A_Sw\|_2
        \ge \sqrt{1-r^2}\|A_Sw\|_2
        \ge \frac{\gamma}{2}\|w\|_2.
    \]
    Thus $\sigma_{\min}(H_S)\ge\gamma/(2\sqrt K)$.
    Since $H_S$ is square, each row has at least this norm,
    and hence has an entry of magnitude at least
    $\gamma/(2K)$.

    Holding the other $K-1$ atomic coefficients fixed, the
    values of the corresponding coefficient for which
    $|(v_S(x))_j|\le2\tau$ form an interval of length at most
    $8K\tau/\gamma$.
    Integrating over the remaining coordinates and using
    $h_S\le c_+$ gives
    \begin{equation}\label{eq:small-coordinate}
        \Pr\bigl(
            |(v_S(x))_j|\le2\tau
            \,\big|\,S(x)=S
        \bigr)
        \le \frac{8Kc_+\tau}{\gamma}.
    \end{equation}
    No independence of the coefficients is needed.

    By Markov's inequality, outside an event of probability
    at most $\beta+4E/(\gamma^2\tau^2)$, we have
    $S(x)\in\mathcal G$ and $e(x)\le\gamma\tau/2$.
    On this complement, \eqref{eq:actual-code-comparison}
    and $Kr\le\gamma\tau/2$ give
    $\|\hat z(x)-v_S(x)\|_2\le\tau$, where $S=S(x)$.

    If $i\notin S$, then $j\notin T_S$, so
    $(v_S(x))_j=0$ and $\hat z_j(x)\le\tau$.
    Thus there is no false positive on this complement.
    If $i\in S$ but $\hat z_j(x)\le\tau$, nonnegativity of
    $\hat z_j(x)$ implies
    $-\tau\le(v_S(x))_j\le2\tau$.
    Summing \eqref{eq:small-coordinate} over supports in $\mathcal G$
    containing $i$, the probability of these false negatives
    is at most $(8Kc_+\tau/\gamma)p_i$.

    Let $Y:=\mathbf{1}\{z_i>0\}$ and
    $\widehat Y:=\mathbf{1}\{\hat z_j>\tau\}$.
    The total disagreement probability is therefore at most
    $\beta+4E/(\gamma^2\tau^2)+(8Kc_+\tau/\gamma)p_i$.
    Since $\Pr(Y=1)=p_i>0$, the definition of the F1
    score gives
    \begin{equation}\label{eq:activation-bound}
        \begin{aligned}
            \operatorname{F1}_{\DD}(Y,\widehat Y)
            &=
            1-
            \frac{\Pr_{\DD}(Y\ne\widehat Y)}
                 {\Pr_{\DD}(Y=1)+\Pr_{\DD}(\widehat Y=1)}
            \\
            &\ge
            1-\frac{\beta}{p_i}
             -\frac{4E}{\gamma^2\tau^2p_i}
             -\frac{8Kc_+\tau}{\gamma}.
        \end{aligned}
    \end{equation}

    Finally, choose $\tau>0$ sufficiently small that
    $8Kc_+\tau/\gamma<(1-\eta)/2$, and choose
    $0<r\le\min\{1/2,\gamma\tau/(2K)\}$.
    Choose $c>0$ sufficiently small that
    $c(1+4/(\gamma^2\tau^2))<(1-\eta)/2$.
    These choices depend only on $\eta,K,\gamma,c_+$.
    If $E+\beta\le c p_i$, then the sum of the first two
    error terms in \eqref{eq:activation-bound} is at most
    $c(1+4/(\gamma^2\tau^2))$, so the F1 score exceeds $\eta$.
\end{proof}

\paragraph{The ordinary recovery principle.}
We now impose regularity of the support law: for a fixed
$\rho\in(0,1]$ and distinct atoms $i,k$ with $p_i>0$, assume
$\Pr(k\notin S(x)\mid i\in S(x))\ge\rho$.
Let $\sigma$ order the atom prevalences nonincreasingly and let
$T(m):=\sum_{j=m+1}^M p_{\sigma(j)}$.
Write $L_{\DD}^*(m)$ for the infimum of the dictionary learning loss
over the unit-norm, $(\gamma,2K)$-stable reference class with nonnegative,
at most $K$-sparse codes. We do not need the infimum to be attained.

\begin{proof}[Proof of Theorem~2]
    Retain the $m$ most prevalent columns of $A$, together with
    their true coefficients. This gives an admissible dictionary.
    For every input, the squared norm of the omitted sum is at most
    $K\sum_{k\notin\sigma([m])}z_k(x)^2$, which is at most $K$
    times the number of omitted active atoms. Therefore,
    \begin{equation}\label{eq:ordinary-tail}
        E\le C_0L_{\DD}^*(m)\le C_0K T(m).
    \end{equation}
    We will show that an atom of prevalence $p_i>0$ is recovered
    whenever $E\le cp_i$, for a sufficiently small constant $c>0$.

    It suffices to take $0<\eta<1$. Let $r_*,c_*,\tau$ be the
    constants supplied by Lemma~\ref{lem:activation-transfer}.
    Choose $0<r\le r_*$ small enough that
    $r(1+2K/\gamma)<\min\{\gamma/(4\sqrt2),\sqrt{1-\eta^2}\}$.
    Let $\delta$ be the threshold supplied by
    Lemma~\ref{lem:local-support} with $\epsilon=r$, decreasing it
    if necessary so that
    $\delta<c_-\gamma^2(\gamma/(8K))^{K-1}/32$.
    Finally, choose $c>0$ such that $c<\rho\delta$ and
    $c(1+\delta^{-1})\le c_*$.
    These choices depend only on the parameters in the theorem.

    Call a positive-probability support $S$ good if $E_S\le\delta$,
    and write $\mathcal G$ for the good supports. Their complement
    has probability $\beta\le E/\delta$. For each good support,
    choose $T_S$ as in Lemma~\ref{lem:local-support}, so its span
    has gap at most $r$ from $\spann(A_S)$.

    Fix an atom $i$ with $p_i>0$ and $E\le cp_i$, and let
    $\mathcal G_i:=\{S\in\mathcal G:i\in S\}$.
    Since $\beta<\rho p_i\le p_i$, this collection is nonempty.
    For every $k\ne i$, regularity gives
    $\Pr(i\in S(x),k\notin S(x))\ge\rho p_i>\beta$.
    Hence some good support contains $i$ and omits $k$, implying
    $\bigcap_{S\in\mathcal G_i}S=\{i\}$.
    Lemma~\ref{lem:stable-intersection} now gives a single learned
    coordinate $j$ such that
    \begin{equation}\label{eq:ordinary-match}
        \bigcap_{S\in\mathcal G_i}T_S=\{j\},
        \qquad
        \gap\bigl(\spann(a_i),\spann(\hat a_j)\bigr)
        \le r(1+2K/\gamma).
    \end{equation}
    Thus $j\in T_S$ whenever a good support contains $i$.
    If a good support $S$ omits $i$, adding $S$ to the preceding
    intersection makes the atomic intersection empty.
    Lemma~\ref{lem:stable-intersection} makes the learned
    intersection empty as well, so $j\notin T_S$.
    We have proved $j\in T_S$ if and only if $i\in S$ on all good
    supports, not just on those containing $i$.

    The line bound in \eqref{eq:ordinary-match} gives
    $|\langle a_i,\hat a_j\rangle|>\eta$. We must also fix its
    sign. If the inner product were negative, the same bound and
    our choice of $r$ would give $\|a_i+\hat a_j\|_2<\gamma/4$.
    Choose any $S\in\mathcal G_i$ and consider coefficients with
    $z_i\in(1/2,1)$ and $z_k\in(0,\gamma/(8K))$ for
    $k\in S\setminus\{i\}$. Since $\hat z$ is nonnegative and at
    most $K$-sparse, $\hat z+z_i e_j$ is at most $2K$-sparse and
    has norm at least $z_i$. Stability and the triangle inequality
    therefore give, on this coefficient region,
    \[
        \|Az-\hat A\hat z\|_2
        \ge (\gamma-\|a_i+\hat a_j\|_2)z_i
             -\sum_{k\in S\setminus\{i\}}z_k
        \ge\gamma/4.
    \]
    The density lower bound gives this region conditional
    probability at least $(c_-/2)(\gamma/(8K))^{K-1}$.
    Thus $E_S\ge c_-\gamma^2(\gamma/(8K))^{K-1}/32$, contradicting
    goodness and our choice of $\delta$. Consequently,
    $\langle a_i,\hat a_j\rangle>\eta$, as required for signed
    direction recovery.

    For activation recovery, observe that
    $E+\beta\le E(1+\delta^{-1})\le c_*p_i$.
    Lemma~\ref{lem:activation-transfer} therefore applies to the
    same coordinate $j$ and gives an F1 score greater than $\eta$
    at threshold $\tau$. This threshold is shared by all recovered
    coordinates.
    Finally, take $C:=C_0K/c$. By \eqref{eq:ordinary-tail}, the
    condition $p_i\ge CT(m)$ implies $E\le cp_i$ and completes
    the proof.
\end{proof}

\paragraph{The hierarchical recovery principle.}
We make precise the hierarchical regularity invoked in the main text.
Let $M=3N$ and $K=2L$, with families $i,i_1,i_2$ for $i\in[N]$.
An input chooses a set $F(x)\subseteq[N]$ of exactly $L$ active
families. Each active family contributes its parent and exactly one
child, so
$S(x)=\bigcup_{i\in F(x)}\{i,i_{b_i(x)}\}$ with
$b_i(x)\in\{1,2\}$.
The law of $F(x)$ satisfies
$\Pr(k\notin F(x)\mid i\in F(x))\ge\rho$ for distinct parents
$i,k$ whenever $\Pr(i\in F(x))>0$.
In addition, for a fixed $\kappa\in(0,1/2]$, we require
\begin{equation}\label{eq:child-balance}
    \Pr\bigl(b_i(x)=b\mid F(x)=F\bigr)\ge\kappa
    \qquad
    (i\in F,\ b\in\{1,2\})
\end{equation}
for every positive-probability family set $F$.
Conditional on each resulting atomic support, all $K$ coefficients
have a joint density between $c_-$ and $c_+$ on $(0,1)^K$, as above.
We call such a distribution hierarchical regular.

The child choices need not be independent, and neither do the active
coefficients. Independent uniform child choices give the special case
$\kappa=1/2$. The child-choice bound and the full conditional density
bounds are part of this definition; they do not follow from the
child-implies-parent condition alone.

As in the main text, now let $p_i:=\Pr(i\in F(x))$ be the parent
prevalences, let $\sigma$ sort these prevalences, and let
$T(n):=\sum_{j=n+1}^N p_{\sigma(j)}$.
For a learned dictionary of width $m$, we set $n:=\lfloor m/3\rfloor$,
with $T(0):=\sum_{i=1}^N p_i=L$.

\begin{proof}[Proof of Theorem~3]
    Retain all three atoms in each of the $n=\lfloor m/3\rfloor$
    most prevalent families. If $3n<m$, fill the remaining slots
    with unused columns of $A$ and set their codes to zero.
    This gives an admissible dictionary of exactly $m$ unit-norm
    columns. Each omitted active family contributes two omitted
    atoms, so the same truncation calculation as before gives
    \begin{equation}\label{eq:hierarchical-tail}
        E\le C_0L_{\DD}^*(m)\le 2C_0K T(n).
    \end{equation}
    We show that a family of prevalence $p_i>0$ is recovered
    whenever $E\le cp_i$, for a sufficiently small constant $c>0$.

    Again it suffices to take $0<\eta<1$. Let $r_*,c_*,\tau$
    be the constants from Lemma~\ref{lem:activation-transfer}.
    Choose $0<r\le r_*$ with $r(1+2K/\gamma)<1$, and let
    $\delta$ be the threshold from
    Lemma~\ref{lem:local-support} with $\epsilon=r$.
    Choose $c>0$ so that $c<\kappa\rho\delta$ and
    $c(1+\delta^{-1})\le\kappa c_*$.
    As before, let $\mathcal G$ contain the positive-probability
    supports with $E_S\le\delta$, choose their learned supports
    $T_S$, and write $\beta\le E/\delta$ for the probability of
    the remaining supports.

    Fix a family $i$ with $p_i>0$ and $E\le cp_i$.
    Each child has prevalence at least $\kappa p_i$.
    Moreover, for a different family $k$,
    \eqref{eq:child-balance} and parent regularity give
    \[
        \Pr\bigl(i_b\in S(x),\ k\notin F(x)\bigr)
        \ge\kappa\Pr\bigl(i\in F(x),\ k\notin F(x)\bigr)
        \ge\kappa\rho p_i
        >\beta.
    \]
    Thus, even after discarding the bad supports, either child can
    occur with $i$, and each chosen child can occur without any
    specified other family. It follows that the collections
    below are nonempty and
    \begin{equation}\label{eq:hierarchical-intersections}
        \bigcap_{S\in\mathcal G:\,i\in S}S=\{i\},
        \qquad
        \bigcap_{S\in\mathcal G:\,i_b\in S}S=\{i,i_b\}
        \quad (b\in\{1,2\}).
    \end{equation}
    Indeed, the other child is excluded whenever $i_b$ occurs,
    and the displayed probability bound excludes every atom
    belonging to any other family.

    Applying Lemma~\ref{lem:stable-intersection} to these
    intersections gives learned sets
    \[
        J_i:=\bigcap_{S\in\mathcal G:\,i\in S}T_S=\{j_i\},
        \qquad
        J_{i_b}:=\bigcap_{S\in\mathcal G:\,i_b\in S}T_S
        =\{j_i,j_{i_b}\}.
    \]
    Here $J_i\subseteq J_{i_b}$ because every support containing
    $i_b$ also contains $i$.
    The good supports containing $i$ are exactly the union of
    those containing $i_1$ and those containing $i_2$. Hence
    $J_{i_1}\cap J_{i_2}=J_i$, so
    $j_i,j_{i_1},j_{i_2}$ are distinct.

    We claim that each of these learned coordinates has the same
    membership pattern as its corresponding atom across all good
    supports. For the parent, this is the same argument as in
    Theorem~2: adding a good support
    omitting $i$ to its intersection forces that support's learned
    set to omit $j_i$.
    For a child $i_b$, every good support containing $i_b$ contains
    $J_{i_b}$ on the learned side. If a good support omits the
    family, its atomic intersection with $\{i,i_b\}$ is empty,
    so its learned intersection with $J_{i_b}$ is empty as well.
    If it instead contains $i$ and the other child, that atomic
    intersection is $\{i\}$. Lemma~\ref{lem:stable-intersection}
    makes the learned intersection a singleton, which must be
    $\{j_i\}$. In particular it omits $j_{i_b}$.
    This proves the claimed membership patterns.

    Finally, $E+\beta\le E(1+\delta^{-1})\le\kappa c_*p_i$,
    and each of the three target coordinates has prevalence at
    least $\kappa p_i$. Lemma~\ref{lem:activation-transfer}
    therefore gives an F1 score greater than $\eta$ for each
    target, using the distinct learned coordinates
    $j_i,j_{i_1},j_{i_2}$. The same threshold $\tau$ works for
    all three and for all recovered families.
    Take $C:=2C_0K/c$ and invoke
    \eqref{eq:hierarchical-tail}. This proves the theorem.
\end{proof}

\clearpage

\section{Experimental Setup}\label{sec:setup}

This section describes the shared data, training setup, and matching metrics, followed by the setup for the main P1, P2, and P3 experiments in \Cref{fig:persistent-stability,fig:dataset-heatmaps,fig:hierarchy-mean-f1}, respectively. Extended experiments (including results and setup) are discussed in \Cref{sec:p1-extended,sec:p2-extended,sec:p3-extended}.

\paragraph{Code.}
Code to reproduce data preparation, SAE training, and evaluation are given in the repository at \url{https://github.com/kennylpeng/atomic_theory}.

\subsection{Data and SAE training details}\label{sec:training-details}

\paragraph{Training corpus.}
The full training mix, \full, combines 13 source datasets used to fine-tune the Nemotron embedding model. \Cref{tab:training-corpus} counts the stored \emph{embedding vectors} available to SAE training: a question and its answer, for example, are embedded separately and contribute two vectors.

\begin{table}[htbp]
    \centering
    \small
    \caption{SAE training datasets.}
    \label{tab:training-corpus}
    \begin{tabularx}{\linewidth}{@{}Xr@{}}
        \toprule
        Source collection & \# embeddings \\
        \midrule
        \href{https://huggingface.co/datasets/mteb/emotion}{Emotion} \citep{saravia-etal-2018-carer} & 19,930 \\
        \href{https://huggingface.co/datasets/BeIR/fever}{FEVER} \citep{Thorne18Fever,thakur2021beir} & 11,079,420 \\
        \href{https://huggingface.co/datasets/sentence-transformers/gooaq}{GooAQ} \citep{gooaq2021} & 6,024,992 \\
        \href{https://huggingface.co/datasets/mteb/hotpotqa}{HotpotQA} \citep{yang-etal-2018-hotpotqa} & 10,564,510 \\
        \href{https://huggingface.co/datasets/mteb/msmarco}{MS MARCO} \citep{DBLP:journals/corr/NguyenRSGTMD16} & 9,351,785 \\
        \href{https://huggingface.co/datasets/sentence-transformers/natural-questions}{Natural Questions} \citep{kwiatkowski-etal-2019-natural} & 200,462 \\
        \href{https://huggingface.co/datasets/mteb/nfcorpus}{NFCorpus} \citep{boteva2016} & 10,503 \\
        \href{https://huggingface.co/datasets/sentence-transformers/paq}{PAQ}, sampled 10M pairs \citep{lewis2021paq} & 20,000,000 \\
        \href{https://huggingface.co/datasets/mteb/scifact}{SciFact} \citep{wadden-etal-2020-fact} & 11,475 \\
        \href{https://huggingface.co/datasets/sentence-transformers/squad}{SQuAD} \citep{rajpurkar-etal-2016-squad} & 175,198 \\
        \href{https://huggingface.co/datasets/sentence-transformers/trivia-qa}{TriviaQA} \citep{2017arXivtriviaqa} & 146,692 \\
        \href{https://huggingface.co/datasets/TIGER-Lab/WebInstructSub}{WebInstructSub} \citep{yue2024mammoth2} & 4,670,440 \\
        \href{https://huggingface.co/datasets/miracl/miracl-corpus}{MIRACL} & 27,572,151 \\
        \midrule
        Full mix (\full) & 89,827,558 \\
        \bottomrule
    \end{tabularx}
\end{table}

\paragraph{Corpus.}
Each dataset gives a query and an answer. We embed both separately. We sample 10M question-answer pairs from PAQ. For MIRACL \citep{zhang-etal-2023-miracl}, we sample up to 500,000 passages per language. All available passages are used when a language has fewer than 500,000. (Due to an error, our nemotron embeddings are missing 500,000 Thai passages and 100,000 sampled Telugu passages present for Gemini. Thus the full corpora contain 89.8M Gemini vectors and 89.2M Nemotron vectors.) When comparing activations between the two embedding families, we use the 89.2M shared embeddings.

\paragraph{Embeddings.}
For gemini embeddings, we use \texttt{gemini-embedding-2-preview} with the task type ``RETRIEVAL\_DOCUMENT'' and dimension $d=3072$. The nemotron representations are generated from \texttt{nvidia/llama-embed-nemotron-8b} with $d=4096$. We truncate at 10,000 characters for gemini and 256 tokens for Nemotron.

\paragraph{Additional data splits for P2 experiments.}
We train a separate $m=16,384, k=64$ SAE for each of the following mixtures and each embedding model:
\begin{itemize}
    \item \randomone: emotion, fever, gooaq, natural-questions, sampled PAQ, and scifact; 37,336,279 vectors.
    \item \randomtwo: hotpotqa, msmarco, nfcorpus, squad, trivia-qa, and WebInstructSub; 24,919,128 vectors.
    \item \wiki: fever, hotpotqa, natural-questions, sampled PAQ, squad, and trivia-qa; 42,166,282 vectors.
    \item \nowiki: emotion, gooaq, msmarco, nfcorpus, scifact, WebInstructSub; 20,089,125 vectors.
\end{itemize}
Each mixture contains six complete prepared datasets. The \wiki/\nowiki and \randomone/\randomtwo pairs are two partitions of the same 12 non-MIRACL datasets; both MIRACL collections occur only in \full. \wiki/\nowiki splits based on if the dataset includes Wikipedia data or not.

\paragraph{SAE architecture and objective.}
We use the ordinary (non-batch) TopK SAE implementation from \url{https://github.com/bartbussmann/BatchTopK/tree/main}. For an embedding $x\in\RR^d$ the forward pass is
\begin{align}
    h(x) &= \relu\!\left(W_{\enc}(x-b_{\dec})\right),\\
    z(x) &= \topk_k\!\left(h(x)\right),\\
    \hat{x} &= W_{\dec}z(x)+b_{\dec},
\end{align}
where $W_{\enc}\in\RR^{m\times d}$, $W_{\dec}\in\RR^{d\times m}$, and $b_{\dec}\in\RR^d$. There is no encoder bias. $\topk_k$ retains the $k$ largest nonnegative activations in each example.

Let $D$ be the set of dead neurons: neurons with no positive post-TopK activation in the last 512 consecutive training batches. For each example, $z^{\auxk}(x)$ retains the largest $\min\{32,|D|\}$ entries of the \emph{pre-TopK} activation vector $h(x)$ restricted to $D$, with all other entries set to zero. For a batch of $B$ embeddings, the implemented loss is
\begin{equation}
    \mathcal{L}
    = \frac{1}{Bd}\sum_{i=1}^{B}
      \left[
        \|x_i-\hat{x}_i\|_2^2
        + w_{\mathrm{aux}}\,
          \left\|W_{\dec}z^{\auxk}(x_i)-(x_i-\hat{x}_i)\right\|_2^2
      \right].
\end{equation}
The auxiliary term is zero when $D$ is empty. Its residual $x_i-\hat{x}_i$ remains in the gradient computation. We use no $\ell_1$ penalty. The auxiliary coefficient is $w_{\mathrm{aux}}=0.25$ except for the Gemini width-131,072 SAE, which uses $w_{\mathrm{aux}}=1.0$ (we found that at $0.25$ in this setting, the number of dead neurons would remain large).

\paragraph{Optimization and training budget.}
We train SAEs given in \Cref{tab:sae-training-seeds}. The four additional distribution mixtures give 26 SAEs in the main width/split experiments. We use float32 parameters, batch size $B=2048$, and Adam with learning rate $5\times10^{-4}$, $(\beta_1,\beta_2)=(0.9,0.99)$, $\epsilon=10^{-8}$, and no weight decay or learning-rate schedule. Gradients are clipped to global norm 1. The encoder is initialized with Kaiming-uniform weights, the decoder with the normalized transpose of the encoder, and the bias with zeros. Before each optimizer step, decoder directions are normalized and their radial gradient components are removed.

All of these SAEs train for 1,000,000 optimizer steps except the Gemini width-131,072 SAE, which trains for 2,000,000 steps. These budgets correspond to 2.048 billion and 4.096 billion sampled embeddings, respectively, with repeated sampling from the finite corpora above. A shard is selected with probability proportional to its row count and supplies four batches before the next shard selection. Each batch is a randomly positioned contiguous block with its rows shuffled; shards smaller than one batch are sampled with replacement. Training uses NVIDIA H100, A100 (80GB), or A6000 GPUs. Seeds are fixed separately for each run, rather than paired between embedding models. The synthetic experiments use a separate training recipe described in \Cref{sec:recovery-simulations}.

\begin{table}[htbp]
    \centering
    \small
    \caption{Sparsities for the 26 SAEs in the main width and distribution experiments. Subset runs all use $m=16,384$ and $k=64$. The Gemini width-131,072 run uses 2M steps and auxiliary coefficient 1.0; every other listed run uses 1M steps and coefficient 0.25.}
    \label{tab:sae-training-seeds}
    \begin{tabular}{lr}
        \toprule
        Width or subset & $k$ \\
        \midrule
        512 & 32 \\
        1,024 & 32 \\
        2,048 & 32 \\
        4,096 & 32 \\
        8,192 & 64 \\
        16,384 & 64 \\
        32,768 & 64 \\
        65,536 & 128 \\
        131,072 & 128 \\
        \midrule
        \wiki & 64 \\
        \nowiki & 64 \\
        \randomone & 64 \\
        \randomtwo & 64 \\
        \bottomrule
    \end{tabular}
\end{table}

\paragraph{Further sparsity and seed SAE controls.}
\Cref{tab:sae-control-checkpoints} lists six additional fixed-$k=128$ SAEs and three alternate-seed Gemini SAEs.

\begin{table}[htbp]
    \centering
    \small
    \caption{The nine additional SAEs used for the fixed-sparsity and alternate-seed controls. All use 1,000,000 optimizer steps, batch size 2048, and auxiliary coefficient 0.25.}
    \label{tab:sae-control-checkpoints}
    \begin{tabular}{llrr}
        \toprule
        Control & Model & Width & $k$ \\
        \midrule
        Fixed $k$ & Gemini & 512 & 128 \\
        Fixed $k$ & Gemini & 4,096 & 128 \\
        Fixed $k$ & Gemini & 32,768 & 128 \\
        \midrule
        Fixed $k$ & Nemotron & 512 & 128 \\
        Fixed $k$ & Nemotron & 4,096 & 128 \\
        Fixed $k$ & Nemotron & 32,768 & 128 \\
        \midrule
        Alternate seed & Gemini & 512 & 32 \\
        Alternate seed & Gemini & 4,096 & 32 \\
        Alternate seed & Gemini & 32,768 & 64 \\
        \bottomrule
    \end{tabular}
\end{table}

\subsection{Matching criteria and computation}
\label{sec:matching-metrics}
The P1, P2, and cross-model comparisons combine a pairwise similarity score with a rule for selecting matches. We define the shared metrics here; the experiment-specific sections specify the dictionaries and evaluation corpora. We use a defualt threshold of $t=0.7$ \citep{paulo2025sparse}.

\subsubsection{Direction similarity}
\label{sec:direction-similarity}
For nonzero dictionary directions $u$ and $v$ in the same embedding space, we use signed cosine similarity,
\[
    s_{\mathrm{dir}}(u,v)=\frac{\langle u,v\rangle}{\|u\|_2\|v\|_2}.
\]
For SAEs, $u$ and $v$ are decoder columns. The PCA baseline uses $|s_{\mathrm{dir}}(u,v)|$ for both maximum-cardinality and exclusive matching, because the signs of principal components are arbitrary.

\subsubsection{Activation similarity}
\label{sec:activation-similarity}
For two SAE features evaluated on the same examples $x_1,\ldots,x_N$, let $a_n$ and $b_n$ be their continuous post-TopK activations. With $\bar a=N^{-1}\sum_n a_n$ and $\bar b=N^{-1}\sum_n b_n$, their signed Pearson correlation is
\[
    s_{\mathrm{act}}(a,b)=
    \frac{\sum_{n=1}^N(a_n-\bar a)(b_n-\bar b)}
    {\sqrt{\sum_{n=1}^N(a_n-\bar a)^2}\sqrt{\sum_{n=1}^N(b_n-\bar b)^2}}.
\]
Every evaluation example receives equal weight, including examples with zero activation. The metric depends on the choice of evaluation data.

\subsubsection{One-to-one matching and persistence}
\label{sec:matching-computation}
For either similarity score, eligible pairs have similarity at least $t$ and form the edges of a bipartite graph. A maximum-cardinality matching is the largest set of eligible pairs in which each feature occurs at most once. We denote this maximum cardinality by $\mu_t(U,V)$. For PCA directions, we use the absolute-cosine modification above. Reported proportions divide the matching count by the source dictionary width, including inactive and unmatched features in the denominator.

For all maximum-matching persistence results, we compute a maximum-cardinality matching $M_\ell$ independently on the threshold graph between the full source dictionary $U$ and each larger target dictionary $V_\ell$, using retained candidate edges for empirical activation comparisons (\Cref{sec:matching-candidates}). Let
\[
    S_\ell=\{u\in U:\text{there exists }v\in V_\ell\text{ with }(u,v)\in M_\ell\},
    \qquad S_{\mathrm{persistent}}=\bigcap_{\ell=1}^{s}S_\ell.
\]

Maximum-cardinality matchings can be nonunique, so the persistent set and its size can depend on tie-breaking. We retain ascending original source and target feature IDs, canonicalize each graph as a sorted CSR matrix, and use SciPy 1.15.3's \texttt{maximum\_bipartite\_matching} with \texttt{perm\_type='column'}. This fixes the matching choices for these results.

\subsubsection{Candidate graph construction}
\label{sec:matching-candidates}
Direction comparisons evaluate all pairs. For activation comparisons, we use 512-dimensional CountSketch representations to propose 32 candidates per feature in each search direction, with the same evaluation rows and sketch hashes for both dictionaries. We recompute candidate correlations over the full evaluation corpus and retain the five highest-scoring candidates per feature in each direction. Thresholded matching uses the union of retained pairs with correlation at least $t$.

The sketch hash is the fixed SplitMix64 transformation of the original global row index, implemented by \texttt{row\_hash} in \path{scripts/match_cross_model_sae_activations_131k.py}.

The within-family width sweeps use 89,827,558 Gemini rows and 89,227,558 Nemotron rows, respectively. Same-width cross-model comparisons use the 89,227,558 aligned shared rows, with hashes based on their original Gemini offsets. Split comparisons instead use the row distribution identified in each evaluation manifest. These frozen manifests record ordered shard identities, offsets, model-checkpoint SHA-256 hashes, and producing-code hashes. The cross-model \texttt{alignment\_audit.json} additionally records SHA-256 hashes of each shard's ordered source-row IDs after checking dataset/configuration/split/shard identity, text columns, and row order across the two models. The reproduction workflows preserve these manifests and retained neighbor arrays; the code archive's \texttt{SOURCE\_MANIFEST.json} identifies the supplied source revision.

Pairwise maximum matching is exact on these retained graphs, and its cardinality is a lower bound on exhaustive pairwise Pearson matching.

\subsubsection{Alternative exclusive matching criterion}
\label{sec:exclusive-matching}
The exclusive direction comparisons in \Cref{fig:persistent-stability-strict,fig:dataset-heatmaps-exclusive} count a pair $(u,v)$ only when its similarity exceeds $t$ and neither endpoint has another above-threshold partner. We use $s(u,v)=s_{\mathrm{dir}}(u,v)$ for SAEs and $s(u,v)=|s_{\mathrm{dir}}(u,v)|$ for PCA, applying the same score to pair admission and competing partners. The requirements are
\begin{equation}
    s(u,v)>t,\qquad
    s(u,v')\leq t\ \text{for all }v'\neq v,\qquad
    s(u',v)\leq t\ \text{for all }u'\neq u.
\end{equation}
The exclusive pairs themselves form a feasible one-to-one matching, so their count is bounded above by $\mu_t$. Lowering $t$ can reduce the exclusive count by introducing competitors, whereas maximum-cardinality counts cannot decrease.

\subsection{P1: Persistence across SAE widths}
\label{sec:p1-main-protocol}
We apply the independent pairwise matching and source-set intersection procedure in \Cref{sec:matching-computation} separately within Gemini and Nemotron, using each source SAE below width 131,072 and every larger trained SAE as targets. We report counts and source-width-normalized proportions. The largest SAE is a target only, since it has no larger comparison dictionary. \Cref{fig:persistent-stability} uses the signed decoder-cosine metric in \Cref{sec:direction-similarity}.

\subsection{P2: Shared features across training distributions}
\label{sec:p2-main-protocol}
\Cref{fig:dataset-heatmaps} compares dictionaries trained on \full, \wiki, \nowiki, \randomone, and \randomtwo, separately for each embedding model and dictionary family. SAE dictionaries contain 16,384 directions; PCA uses all $d$ principal components, with $d=3072$ for Gemini and $d=4096$ for Nemotron.

\paragraph{PCA dictionaries.}
We fit PCA separately for each embedding model and training mixture, using every embedding in that mixture. We stream the data to accumulate the count $N$, sum $s=\sum_{i=1}^N x_i$, and second-moment matrix $Q=\sum_{i=1}^N x_i x_i^\top$ in float64, then compute
\[
    \bar{x}=s/N,\qquad
    \Sigma=\frac{Q-N\bar{x}\bar{x}^{\top}}{N-1}.
\]
We diagonalize this centered covariance matrix and order its orthonormal eigenvectors by decreasing eigenvalue. The PCA dictionary consists of these directions in the original embedding space; P2 uses the full $d$-component basis. PCA fitting uses the stored embeddings without additional length normalization or coordinate standardization.

\paragraph{Dictionary matching.}
For each pair of training distributions, \Cref{fig:dataset-heatmaps} applies the metric in \Cref{sec:matching-metrics}. Exclusive direction matching and distribution-specific activation comparisons are reported in \Cref{sec:p2-extended}.

\subsection{P3: Hierarchical datasets and feature-recovery protocol}
\label{sec:p3-hierarchy-protocol}

For \Cref{fig:hierarchy-mean-f1}, we evaluate individual SAE features against binary category labels in three domains: word prefixes, geography, and organism taxonomy. The labels define hierarchical relationships, such as a two-letter prefix implying its first letter or a city's country determining its continent (the organism taxonomy has more levels).

\paragraph{English prefixes (\prefix).}
We use the English frequency list from the \href{https://github.com/rspeer/wordfreq}{\texttt{wordfreq} package}. We retain the top 200,000 words in frequency order. Each normalized word is embedded on its own. The first character defines a parent label, and the first two characters define a child label for words of length at least two. We retain categories with at least 100 positive word rows across the full evaluation collection, giving 26 one-letter and 216 two-letter categories per embedding model.

\paragraph{Geography (\cities).}
We use all 69,182 records in the experiment's \href{https://download.geonames.org/export/dump/readme.txt}{GeoNames \texttt{cities5000} snapshot}, with country labels from each record's ISO country code and continent labels obtained from \texttt{countryInfo.txt}. We retain 87 country and six continent categories with at least 100 city rows each. For the reported results, we embed two question variants of every city name: \texttt{what country is [city] in} for country prediction and \texttt{what continent is [city] in} for continent prediction, following the use of elicitation in \citet{bhalla2026sparse}. (We found that without this elicitation, SAEs at no widths reliably could detect the continent.)

\paragraph{Organism taxonomy (\gbif).}
We use the \href{https://doi.org/10.15468/39omei}{GBIF Backbone Taxonomy}. From its \texttt{Taxon.tsv} and \texttt{VernacularName.tsv} tables, we select accepted species having at least one nonempty English vernacular name. This yields 123,495 species. We embed one scientific-name row per species and 259,286 deduplicated English common-name strings, leading to 382,781 name rows in total. Binary labels follow the species' kingdom, phylum, class, order, family, and genus. We retain categories with at least 100 distinct species. This gives 4 kingdoms, 12 phyla, 33 classes, 131 orders, 276 families, and 43 genera. Once labels are constructed, F1 is evaluated over name rows, so species with more names can contribute more evaluation examples.

\paragraph{Selection and test splits.}
We partition the data into selection and test sets. Word rows are divided equally within each two-letter-prefix stratum, with one-character words forming their own strata. City rows are divided equally within each country, and both question variants inherit this partition. GBIF partitions the underlying species into 61,748 selection species and 61,747 test species, with scientific names inheriting the species assignment. Common-name rows are assigned a single split, even when their species associations span both partitions.

Every probe reads these saved split labels unchanged, and the same assignments are used across both embedding models and all widths. Exact reruns require these row tables and their source snapshots: the provided preparation scripts supply deterministic seed-1729 reconstructions for the word and city splits, but do not reproduce the historical row-by-row assignments. Fresh source downloads can also change the evaluation collections.

\paragraph{Embedding and SAE evaluation.}
We compute gemini and nemotron embeddings for each input and compute activations for the nine full-mix SAEs for each model, with widths 512 through 131,072 and the corresponding TopK sparsities in \Cref{tab:sae-training-seeds}.

\paragraph{Selecting a single-feature probe.}
For a binary target $z_i^*$ and SAE activations $\hat z$, define
\[
    g_{j,\tau}(x)=\mathbf{1}\{\hat z_j(x)\geq\tau\},\qquad \tau>0.
\]
For each coordinate active on selection rows, we sort its positive selection activations and evaluate every distinct observed value as an inclusive threshold. We choose the coordinate and threshold maximizing selection F1, requiring at least one selection true positive.

For $E=\{z_i^*=1\}$ and $F=\{g_{j,\tau}=1\}$, we use
\[
\operatorname{F1}_{\DD}(E,F)
=\frac{2\Pr_{\DD}(E\cap F)}{\Pr_{\DD}(E)+\Pr_{\DD}(F)}
=\frac{2\mathrm{TP}}{2\mathrm{TP}+\mathrm{FP}+\mathrm{FN}}
\]
on the relevant split. This evaluates how well a single thresholded feature identifies a category.

\paragraph{\Cref{fig:hierarchy-mean-f1} details.}
Each point in \Cref{fig:hierarchy-mean-f1} is the unweighted mean of held-out test F1 over all retained categories at that level and both embedding models. The number of category--model pairs is twice the number of categories listed above, and the same pairs contribute at every width. The main-text a-prefix example, empirical Food and Music hierarchies, and fixed-sparsity recovery analysis are described in \Cref{sec:p3-extended}.

\clearpage

\section{Extended Experiments for P1: Persistence across Size}
\label{sec:p1-extended}
We extend the main persistent feature experiment in \Cref{fig:persistent-stability} with controls for sparsity and training seed, activation-based matching, alternative matching criteria and similarity thresholds, and analyses of feature prevalence.

\subsection{Fixed sparsity}
\label{sec:fixed-sparsity-control}
At widths 512, 4,096, and 32,768, we replace each original source SAE by its $k=128$ counterpart, retaining every original larger-width target and using signed cosine at threshold $0.7$ with the independent pairwise matching and intersection criterion in \Cref{sec:matching-computation}.

\begin{table}[htbp]
    \centering
    \small
    \caption{Fixed-sparsity source replacement, using the same full-sweep target sets as the seed control. Only the source changes from its original $k$ to $k=128$; all original larger-width targets are retained. Same-width counts match the original and replacement source dictionaries. All comparisons use signed decoder cosine similarity $\geq0.7$; percentages divide by source width.}
    \label{tab:fixed-sparsity-persistence}
    \begin{tabular}{llccc}
        \toprule
        Model & \shortstack{Original\\width/$k$} & \shortstack{Same-width\\cross-sparsity matches} & \shortstack{Original source\\persistence} & \shortstack{$k=128$ source\\persistence} \\
        \midrule
        Gemini & 512/32 & 389 (76.0\%) & 303 (59.2\%) & 221 (43.2\%) \\
         & 4,096/32 & 2,949 (72.0\%) & 2,974 (72.6\%) & 2,405 (58.7\%) \\
         & 32,768/64 & 16,466 (50.3\%) & 15,881 (48.5\%) & 12,893 (39.3\%) \\
        \midrule
        Nemotron & 512/32 & 410 (80.1\%) & 259 (50.6\%) & 210 (41.0\%) \\
         & 4,096/32 & 3,286 (80.2\%) & 2,841 (69.4\%) & 2,607 (63.6\%) \\
         & 32,768/64 & 18,198 (55.5\%) & 12,750 (38.9\%) & 13,609 (41.5\%) \\
        \bottomrule
    \end{tabular}
\end{table}

\Cref{tab:fixed-sparsity-persistence} shows persistence remains substantial, but is somewhat lower in general, especially for smaller widths. This seems to suggest that the conventional wisdom to increase $k$ alongside $m$ is reasonable (and also is somewhat in alignment with our simulations \Cref{sec:recovery-simulations}).

\subsection{Variation by seed}
\label{sec:seed-replication}
We repeat the direction comparisons for gemini SAEs 512/32, 4,096/32, and 32,768/64 (width/$k$) with different training seeds. \Cref{tab:seed-direction-replication} shows a substantial number of matches across seeds, and that persistent remains similar (suggesting that persistence doesn't vary heavily between SAEs). The same-width column uses pairwise maximum matching (\Cref{sec:matching-computation}); the persistence columns use independent pairwise matching and intersection (\Cref{sec:matching-computation}). Consistent with \cite{paulo2025sparse}, there are a substantial number of different features learned when varying the random seed (especially for larger widths); simultaneously, there are many features that have matches even in all larger dictionaries.

\begin{table}[htbp]
    \centering
    \small
    \caption{Gemini direction stability across training seeds. Persistence uses the independent pairwise matching and intersection criterion.}
    \label{tab:seed-direction-replication}
    \begin{tabular}{lccc}
        \toprule
        Width/$k$ & \shortstack{Same-width\\cross-seed matches} & \shortstack{Original source\\persistence} & \shortstack{Alternate source\\persistence} \\
        \midrule
        512/32 & 470 (91.8\%) & 303 (59.2\%) & 293 (57.2\%) \\
        4,096/32 & 3,564 (87.0\%) & 2,974 (72.6\%) & 2,979 (72.7\%) \\
        32,768/64 & 17,856 (54.5\%) & 15,881 (48.5\%) & 15,998 (48.8\%) \\
        \bottomrule
    \end{tabular}
\end{table}

\clearpage

\subsection{Exclusive matching}
\label{sec:stability-comparisons}
\Cref{fig:persistent-stability-strict} repeats the source--target comparisons in \Cref{sec:p1-main-protocol} using the exclusive persistence criterion in \Cref{sec:exclusive-matching}.

\begin{figure}[htbp]
    \centering
    \includegraphics[width=0.9\linewidth]{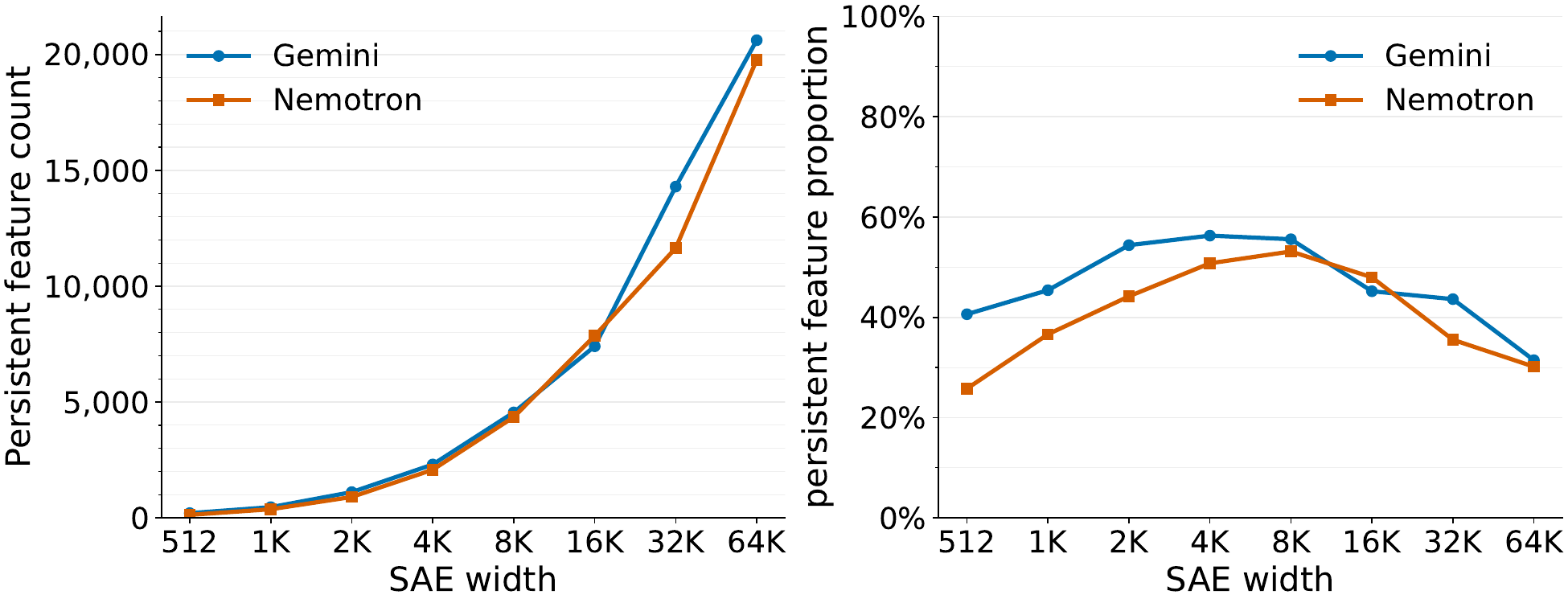}
    \caption{Persistent stability under strict exclusive matching at $0.7$. A source direction counts only if it has a signed-cosine match above $0.7$ in every larger trained dictionary, with no competing partner above $0.7$ at either endpoint in each comparison.}
    \label{fig:persistent-stability-strict}
\end{figure}

\subsection{Persistence by activation similarity}
\label{sec:activation-persistence}
\Cref{fig:persistent-stability-pearson} repeats the source--target comparisons in \Cref{sec:p1-main-protocol} using signed Pearson correlation at threshold $0.7$, with the metric and candidate-graph procedure defined in \Cref{sec:activation-similarity,sec:matching-candidates}.

\begin{figure}[htbp]
    \centering
    \includegraphics[width=0.9\linewidth]{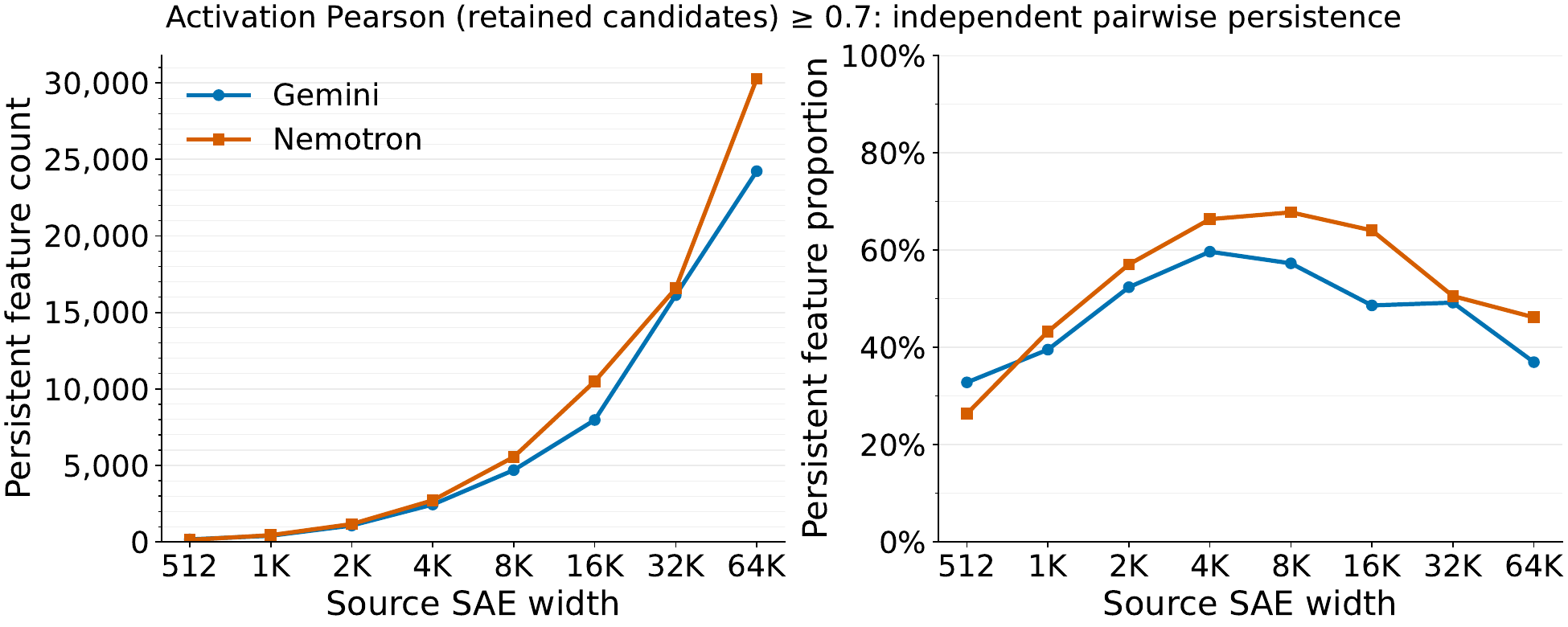}
    \caption{Persistent stability across SAE widths using activation similarity at signed Pearson correlation at least $0.7$. Left: the number of source features matched in every independently computed pairwise maximum matching to a larger SAE on the retained candidate graphs. Right: this count divided by source width. Both panels compare Gemini and Nemotron, using the fixed tie-breaking procedure in \Cref{sec:matching-computation}.}
    \label{fig:persistent-stability-pearson}
\end{figure}

\clearpage

\subsection{Sensitivity to the matching threshold}
\label{sec:persistence-threshold-sensitivity}
We vary the matching threshold over $t\in\{0.5,0.6,0.7,0.8\}$ using the independent pairwise matching and intersection criterion in \Cref{sec:matching-computation}. \Cref{fig:persistent-stability-thresholds} uses signed decoder cosine similarity, and \Cref{fig:persistent-stability-pearson-thresholds} uses signed activation Pearson correlation.

\begin{figure}[htbp]
    \centering
    \includegraphics[width=\linewidth]{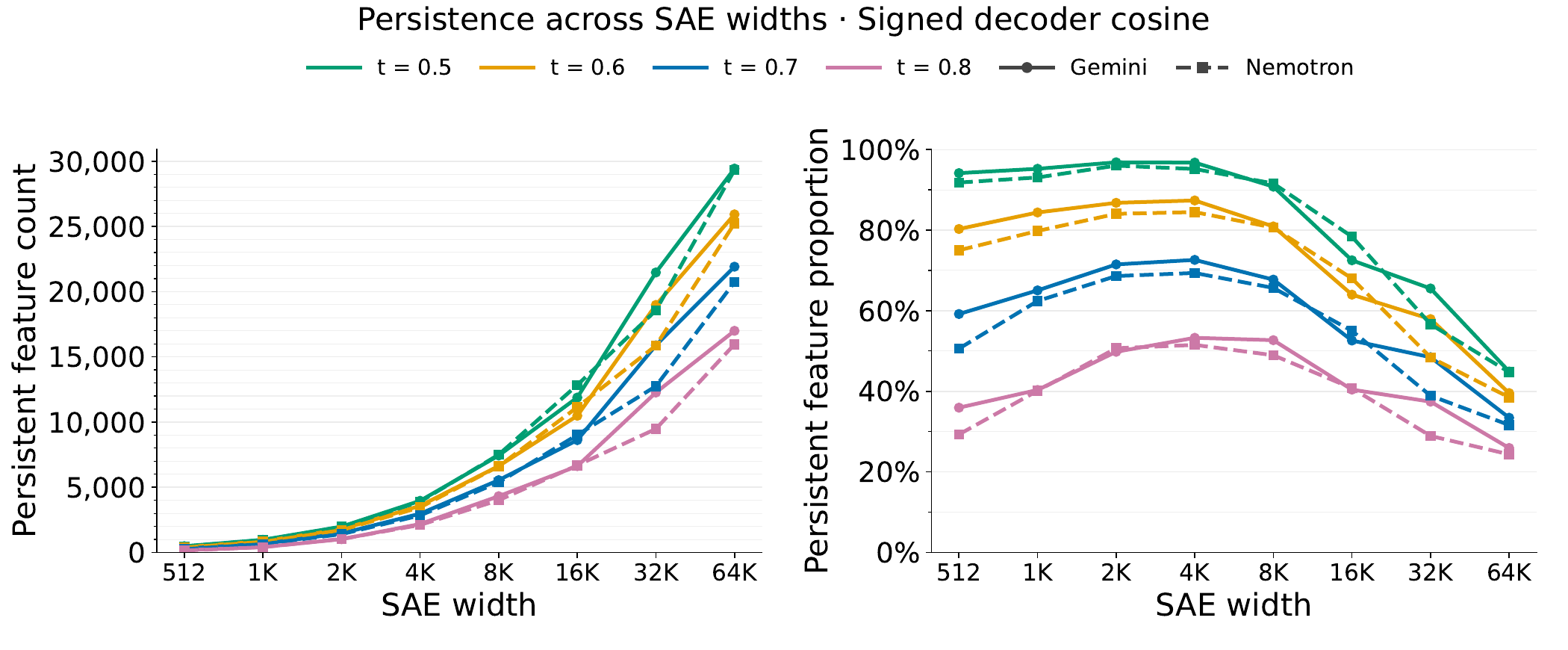}
    \caption{Threshold sensitivity of direction persistence.}
    \label{fig:persistent-stability-thresholds}
\end{figure}

\begin{figure}[htbp]
    \centering
    \includegraphics[width=\linewidth]{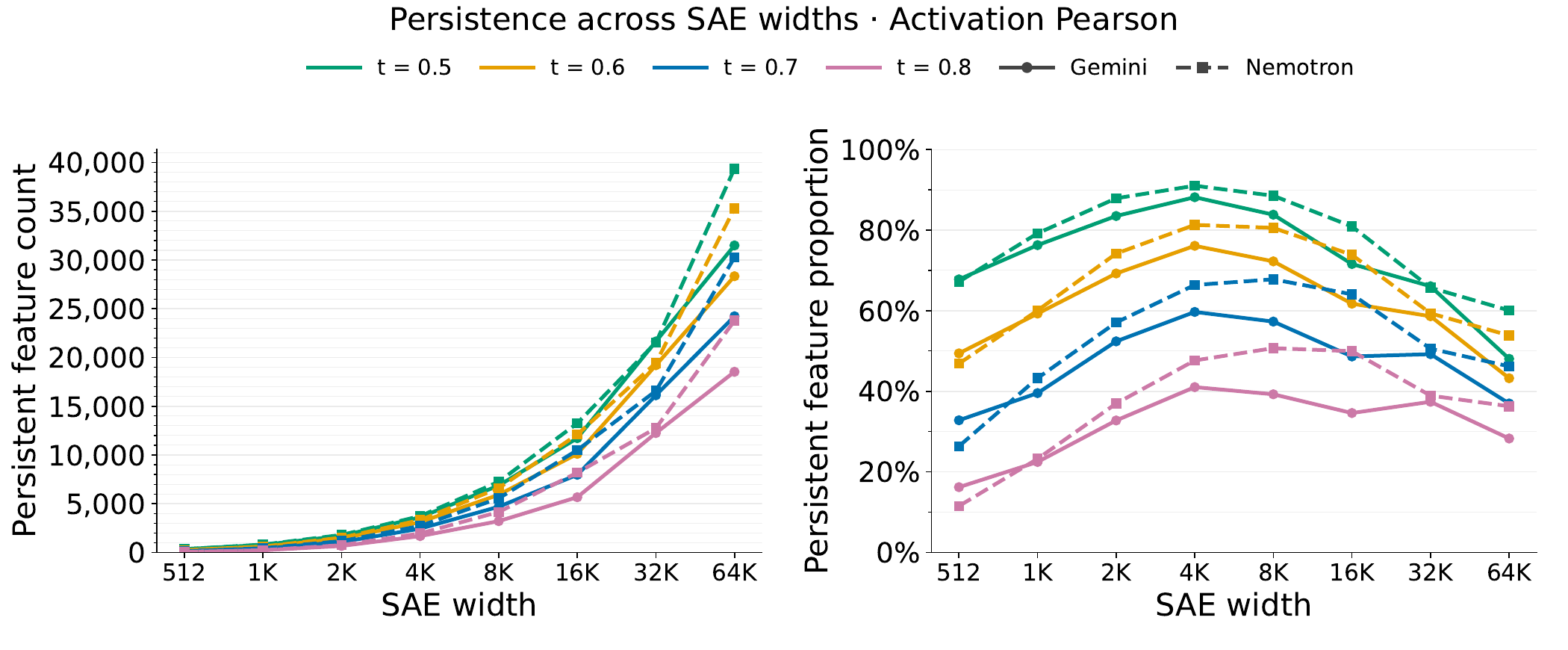}
    \caption{Threshold sensitivity of activation persistence.}
    \label{fig:persistent-stability-pearson-thresholds}
\end{figure}

\clearpage

\subsection{Feature activation rates}
\label{sec:activation-rates}
\Cref{fig:activation-rates} compares prevalence across SAE widths and KMeans cluster counts. For each SAE feature, we compute the fraction of the model's full evaluation corpus on which its continuous post-TopK activation exceeds the SAE's pooled median positive activation. This is one threshold per SAE, computed over all positive activation events across features and examples.

For KMeans, prevalence is the fraction of examples assigned to each nearest centroid. We use Euclidean KMeans in the full embedding space, trained on every row of the corresponding full corpus for 100 total iterations, at 512, 4,096, 16,384, and 131,072 clusters. Both methods are evaluated on the same full corpus within each embedding family: 89,827,558 Gemini examples and 89,227,558 Nemotron examples.

We plot kernel density estimates of $\log_{10}$ activation rates, using a bandwidth of $0.12$ and scaling each curve to integrate to the number of features with positive rates. Zero-rate features are omitted from the logarithmic axis. The vertical axis uses a square-root scale with tick labels in the original density units.

The total mass and average prevalence are partly constrained by dictionary width and sparsity. For $N$ examples and an SAE of width $m$ with TopK parameter $k$, there are at most $Nk$ positive activation events. Strictly exceeding their pooled median selects at most half of these events, with ties potentially reducing this number. Thus, if $p_j$ denotes feature $j$'s above-median prevalence, then $\frac{1}{m}\sum_{j=1}^{m}p_j\leq k/(2m)$. For KMeans, cluster prevalences sum to one and have mean $1/m$. Consequently, a growing low-prevalence tail and changes in the count-scaled density area partly reflect these constraints. The observed curve shapes provide descriptive evidence consistent with the recovery principle, rather than identifying atoms on their own.

We observe a ``nested'' behavior, in which larger SAEs tend to include a similar number of coarse features as smaller SAEs, while also learning a longer tail of more granular features, consistent with the recovery principle. The KMeans prevalence distributions instead shift toward smaller clusters as the number of clusters increases.

\begin{figure}[htbp]
    \centering
    \includegraphics[width=\linewidth]{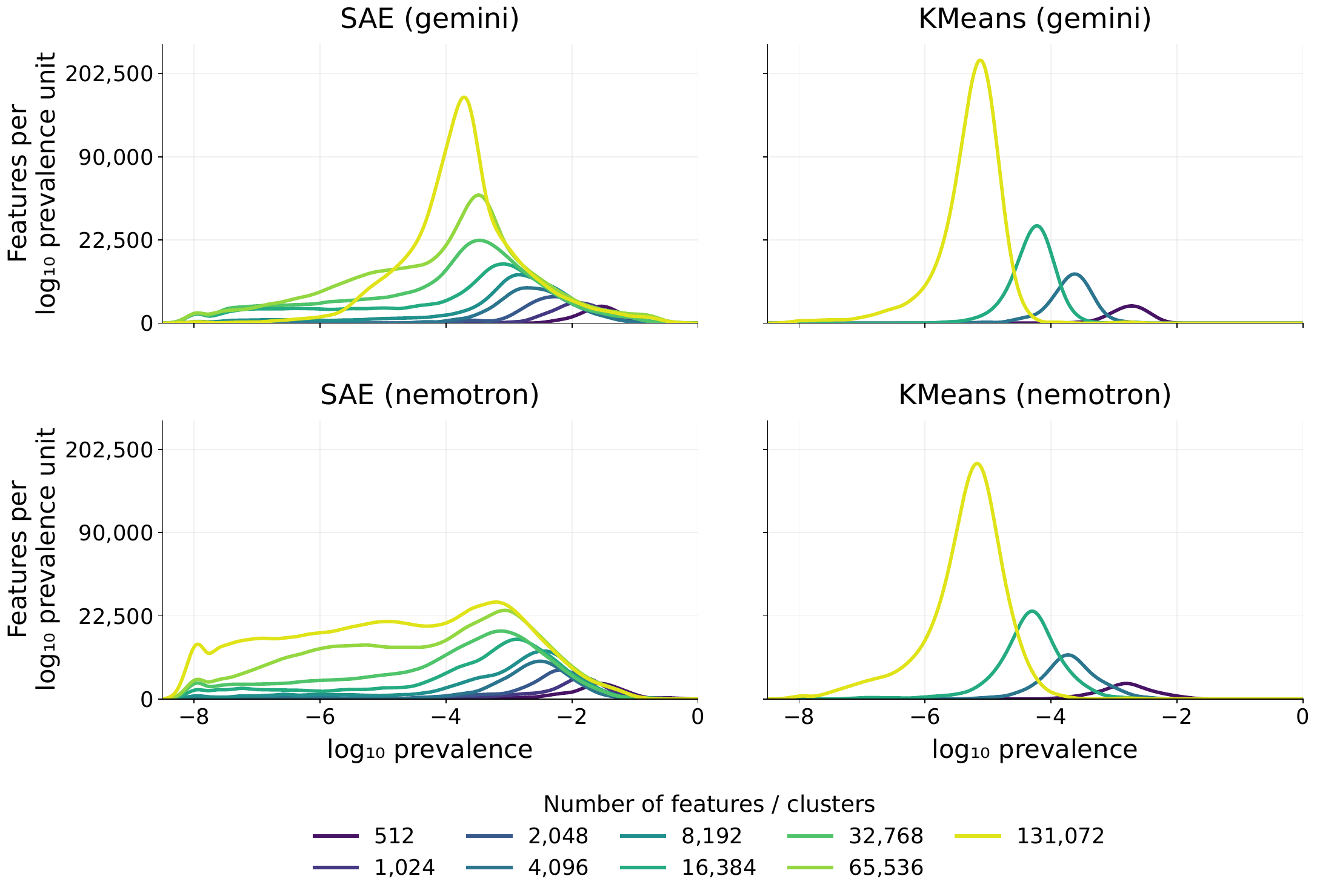}
    \caption{Prevalence across SAE widths (left) and KMeans cluster counts (right), for Gemini (top) and Nemotron (bottom). SAE prevalence uses activation strictly greater than each SAE's pooled median positive activation; KMeans prevalence uses nearest-centroid assignment. KMeans models are trained on the full corpus for 100 iterations. Colors indicate dictionary size. Curves show feature-count-scaled densities of $\log_{10}$ prevalence, with shared axes and a square-root vertical scale. Zero-rate features are omitted from the logarithmic axis.}
    \label{fig:activation-rates}
\end{figure}

\clearpage

To control for varying sparsities, \Cref{fig:activation-rates-fixed-k128} repeats the comparison with TopK sparsity fixed at $k=128$ for all SAE widths. We use widths 512, 4,096, 32,768, 65,536, and 131,072, with the same prevalence definitions, full evaluation corpora, and KMeans baselines as above.

\begin{figure}[htbp]
    \centering
    \includegraphics[width=\linewidth]{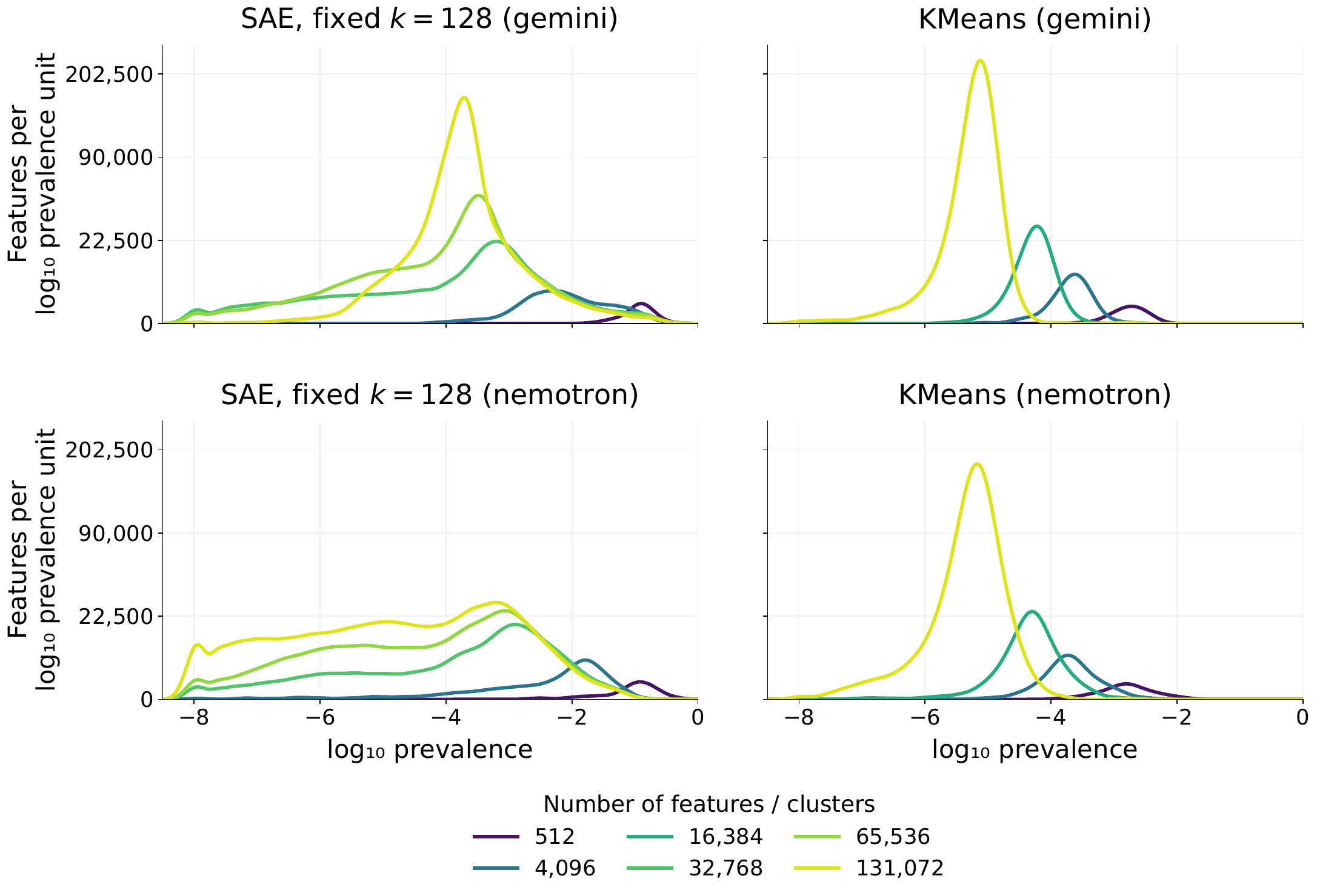}
    \caption{Prevalence for SAEs with fixed TopK sparsity $k=128$ (left) and KMeans (right), for Gemini (top) and Nemotron (bottom). SAE widths are 512, 4,096, 32,768, 65,536, and 131,072; KMeans cluster counts are 512, 4,096, 16,384, and 131,072. SAE prevalence is the fraction of examples with activation strictly above each SAE's pooled median positive activation; KMeans prevalence is nearest-centroid assignment frequency after 100 full-corpus training iterations. Colors indicate the number of features or clusters. Curves show feature-count-scaled densities of $\log_{10}$ prevalence with bandwidth $0.12$, shared axes, and a square-root vertical scale. Zero-rate features are omitted from the logarithmic axis.}
    \label{fig:activation-rates-fixed-k128}
\end{figure}

\clearpage

\subsection{Relationship between persistence and prevalence}
\label{sec:prevalence-persistence}
We directly compare feature prevalence with direction persistence using the median activation threshold from \Cref{fig:activation-rates}. For an SAE with continuous post-TopK activations $\hat z_i(x)$, let $\tau_{\mathrm{SAE}}$ be the pooled median of all strictly positive activation events, across features and examples in its embedding model's full evaluation corpus. We define
\begin{equation}
    p_i(D)=\frac{1}{|D|}\sum_{x\in D}\mathbf{1}\{\hat z_i(x)>\tau_{\mathrm{SAE}}\}.
    \label{eq:median-feature-prevalence}
\end{equation}
There is one threshold per SAE, rather than a separate median for each feature; the denominator includes every example. We use the full corpora for these measurements.

For each source width in the original sweep, we compare $p_i$ for persistent features and non-persistent features.

\begin{figure}[htbp]
    \centering
    \includegraphics[width=\linewidth]{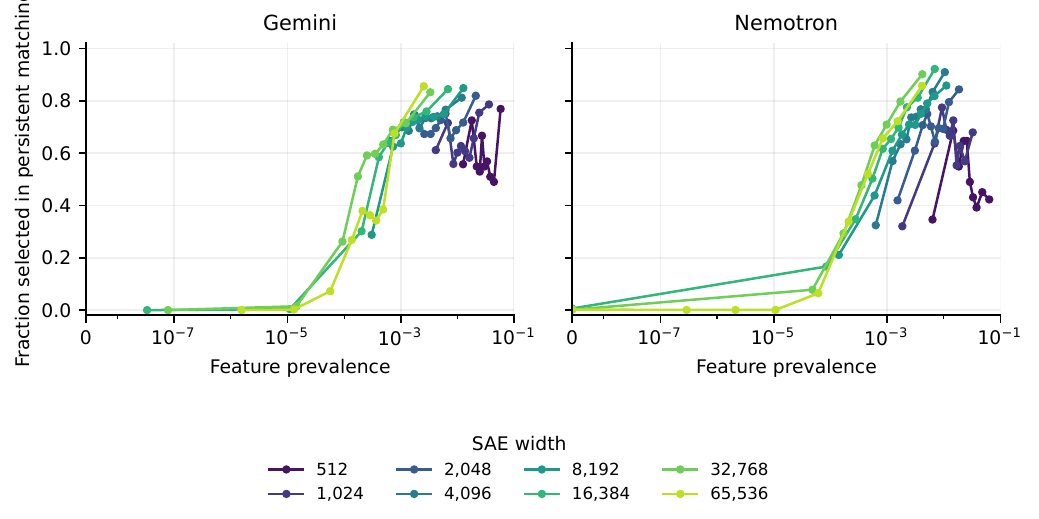}
    \caption{Median-threshold feature prevalence and direction persistence under independent pairwise matching and intersection in Gemini (left) and Nemotron (right). Each curve represents a source width in the original sparsity schedule. Points group source features into up to ten prevalence-quantile bins, keeping ties together, and show the median prevalence and fraction selected in the persistent matching within each bin. All source features are included. Bin sizes can differ; lines connect descriptive bin summaries. The horizontal axis is logarithmic above $10^{-8}$ with a linear segment retaining zero-prevalence bins.}
    \label{fig:width-prevalence-median}
\end{figure}

We measure the AUC of feature prevalence as a predictor of persistence. At widths 16,384, 32,768, and 65,536, AUCs are $0.814$, $0.801$, and $0.822$ for Gemini, and $0.818$, $0.905$, and $0.917$ for Nemotron. Restricting to features with nonzero prevalence gives respective ranges of $0.787$--$0.821$ and $0.792$--$0.905$. The association is weaker at the smallest widths.

Overall, these results are consistent with the recovery principle's prediction that the most prevalent atomic features are recovered.

\clearpage

\section{Extended Experiments for P2: Stability across Distributions}
\label{sec:p2-extended}
We extend the direction-based comparison in \Cref{fig:dataset-heatmaps} using the exclusive matching criterion, activation-based matching, and the relationship between cross-distribution stability and cross-distribution prevalence.

\subsection{Cross-distribution stability under exclusive matching}
\label{sec:split-exclusive-matching}
\Cref{fig:dataset-heatmaps-exclusive} applies the exclusive criterion from \Cref{sec:exclusive-matching} to SAE and PCA dictionaries trained on different distributions, at $t=0.7$. This tests whether cross-distribution agreement persists when each match must have an unambiguous partner, allowing either sign for PCA.

\begin{figure}[htbp]
    \centering
    \includegraphics[width=0.61\linewidth]{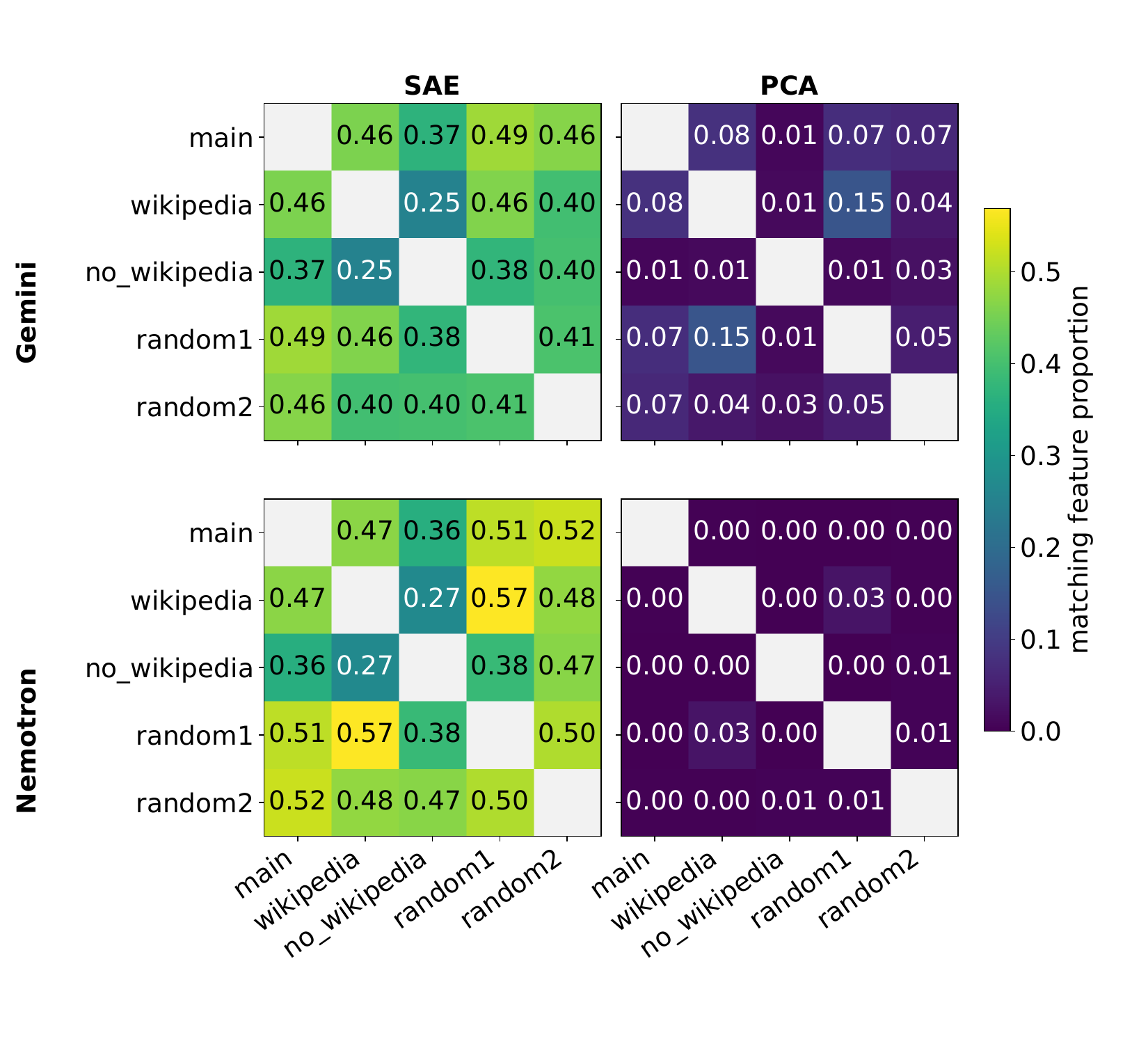}
    \caption{Cross-distribution SAE and PCA matches under the alternative exclusive criterion at $t=0.7$, using signed cosine for SAEs and absolute cosine for PCA. Each counted pair exceeds the threshold and has no other above-threshold partner at either endpoint. Entries divide the pair count by the dictionary width. Compared with the maximum-cardinality matching in \Cref{fig:dataset-heatmaps}, this criterion discards all pairs involving competing above-threshold partners. It therefore gives a stricter measure of unambiguous agreement between dictionaries.}
    \label{fig:dataset-heatmaps-exclusive}
\end{figure}

\clearpage

\subsection{Cross-distribution stability by activation similarity}
\label{sec:split-activation-matching}
\Cref{fig:dataset-heatmaps-activation} applies the activation metric, pairwise matching rule, and normalization in \Cref{sec:matching-metrics} to the width-16,384 SAEs trained on \full, \wiki, \nowiki, \randomone, and \randomtwo. Within each embedding model, the cell at row $D$ and column $D'$ evaluates both SAEs on all examples in $D$. Reversing the row and column changes the evaluation distribution, so the heatmap need not be symmetric.

We construct sketches and candidate graphs for each row distribution using \Cref{sec:matching-candidates}, with moments computed over that distribution.

Matched proportions range from $22.0\%$ to $48.9\%$ for Gemini and from $29.3\%$ to $60.7\%$ for Nemotron.

\begin{figure}[htbp]
    \centering
    \includegraphics[width=\linewidth]{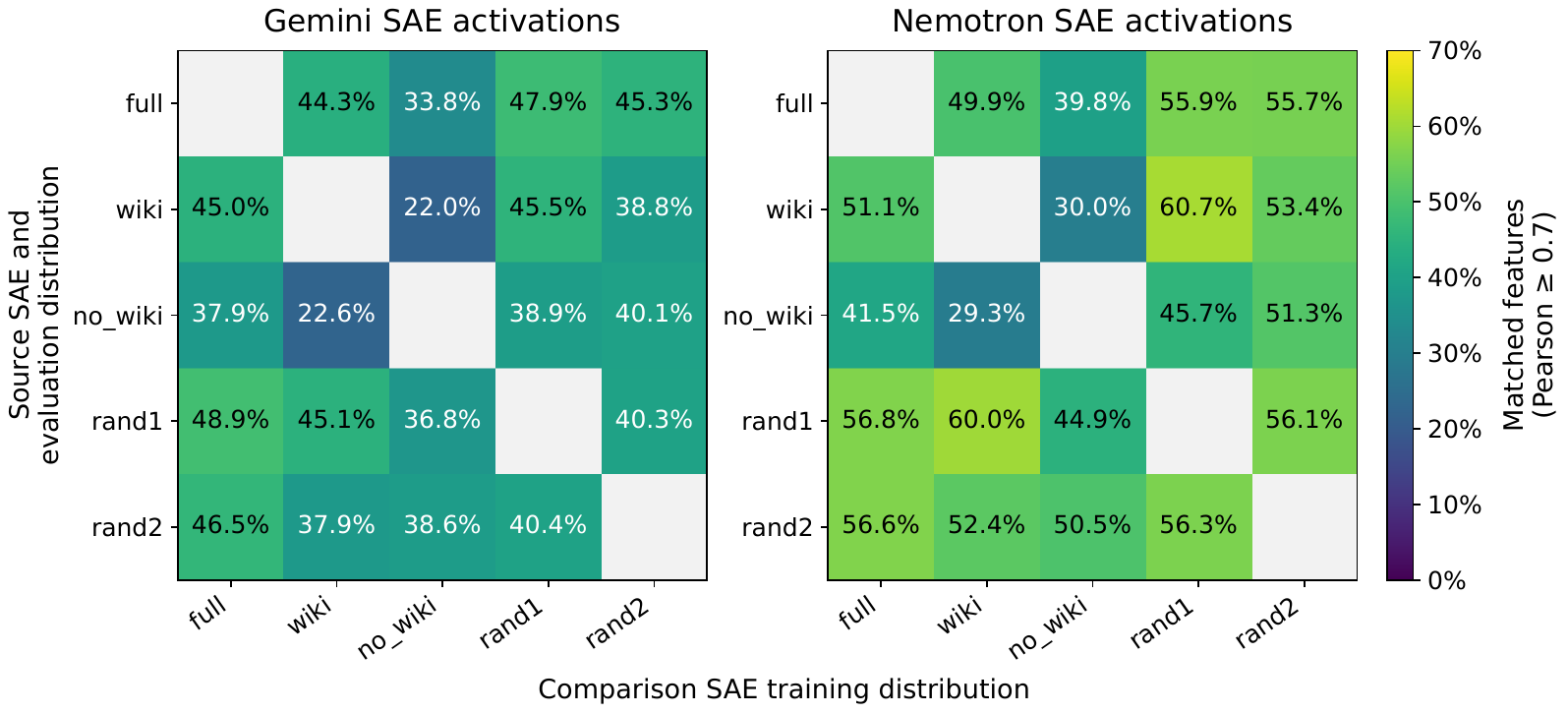}
    \caption{Cross-distribution SAE activation matching for Gemini (left) and Nemotron (right). Each cell reports the maximum number of one-to-one matches at signed Pearson correlation $\geq0.7$, divided by the SAE width of 16,384. Both SAEs are evaluated on the row's distribution; columns identify the comparison SAE's training distribution.}
    \label{fig:dataset-heatmaps-activation}
\end{figure}

\clearpage

\subsection{Feature prevalence and cross-distribution matching}
\label{sec:prevalence-distributions}
We test whether features matched between distribution-specific SAEs are more prevalent in both distributions, using the median-threshold definition in \Cref{eq:median-feature-prevalence}. \Cref{cor:stability-data} suggests that the atomic features that are prevalent in both data distributions are recovered.

For each width-16,384 source SAE trained on $D$, we compute its pooled median positive activation on the embedding model's full evaluation corpus, including MIRACL. We freeze that single threshold when evaluating the source SAE on both $D$ and the target training distribution $D'$, and summarize prevalence by
\begin{equation}
    b_i(D,D')=\min\{p_i(D),p_i(D')\}.
\end{equation}
Both frequencies refer to the same feature in the same source SAE, including when it has no match in the target SAE. This avoids assigning a partner to unmatched features. Reversing source and target evaluates a different SAE, so the prevalence analysis is directional.

We use the disjoint \wiki/\nowiki and \randomone/\randomtwo dataset partitions from \Cref{sec:setup}. These evaluation pairs exclude MIRACL, although threshold calibration uses the full corpus. They contain respectively 42,166,282/20,089,125 and 37,336,279/24,919,128 examples; they partition datasets rather than equal numbers of rows. The outcome is membership in one optimal SAE direction matching under the protocol in \Cref{sec:p2-main-protocol}.

\begin{figure}[htbp]
    \centering
    \includegraphics[width=\linewidth]{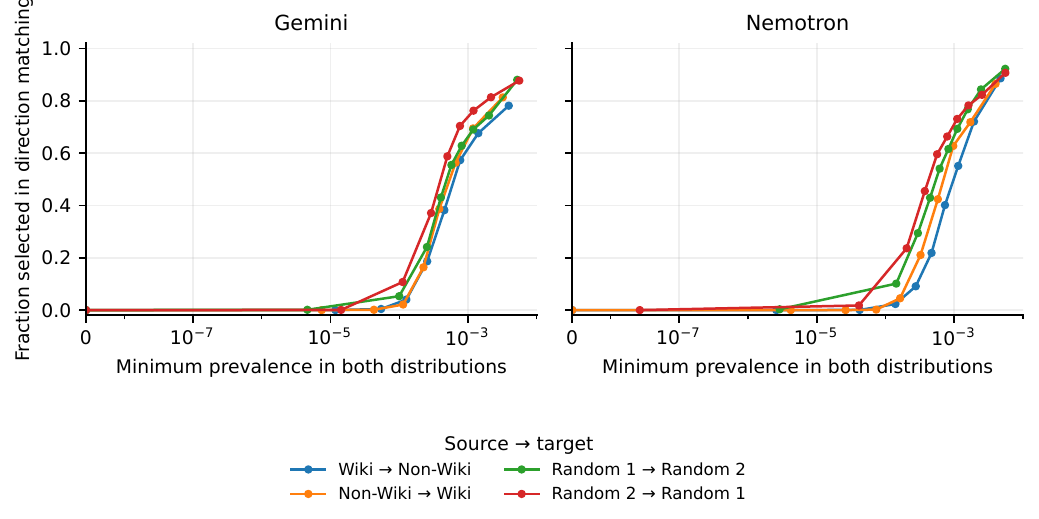}
    \caption{Median-threshold prevalence in both distributions and cross-distribution direction matching for Gemini (left) and Nemotron (right). Curves identify the source and target training distributions. The horizontal coordinate is the bin median of the minimum prevalence of a source feature across the two distributions; the vertical coordinate is the fraction selected in a one-to-one direction matching. Quantile binning preserves ties and includes all source features, with the same axis convention as \Cref{fig:width-prevalence-median}.}
    \label{fig:split-prevalence-median}
\end{figure}

Higher prevalence in both distributions is associated with matching in all eight directional comparisons. AUCs, defined as in \Cref{sec:prevalence-persistence}, range from $0.864$ to $0.923$ for Gemini and from $0.839$ to $0.927$ for Nemotron. They remain large after restricting to features with nonzero prevalence in both distributions (ranges $0.844$--$0.905$ and $0.828$--$0.921$, respectively).

\clearpage

\section{Extended Experiments for P3: Hierarchical Feature Recovery}
\label{sec:p3-extended}
We complement the aggregate hierarchy results in \Cref{fig:hierarchy-mean-f1} by controlling for sparsity, examining recovery of individual features and parent-child pairs, examining the prefix recovery examples, and discussing hierarchy extraction more broadly.

\subsection{Hierarchical recovery at fixed sparsity}
\label{sec:hierarchy-fixed-k}
\Cref{fig:hierarchy-mean-f1-fixed-k} repeats the mean-F1 analysis of \Cref{fig:hierarchy-mean-f1} with $k=128$ at widths 512, 4,096, 32,768, 65,536, and 131,072. We use the additional fixed-$k$ checkpoints at the first three widths and the existing $k=128$ checkpoints at the final two. The datasets, eligible categories, train/test splits, detector selection, and averaging over category--model pairs are the same as in the main analysis.

\begin{figure}[t]
    \centering
    \includegraphics[width=\linewidth]{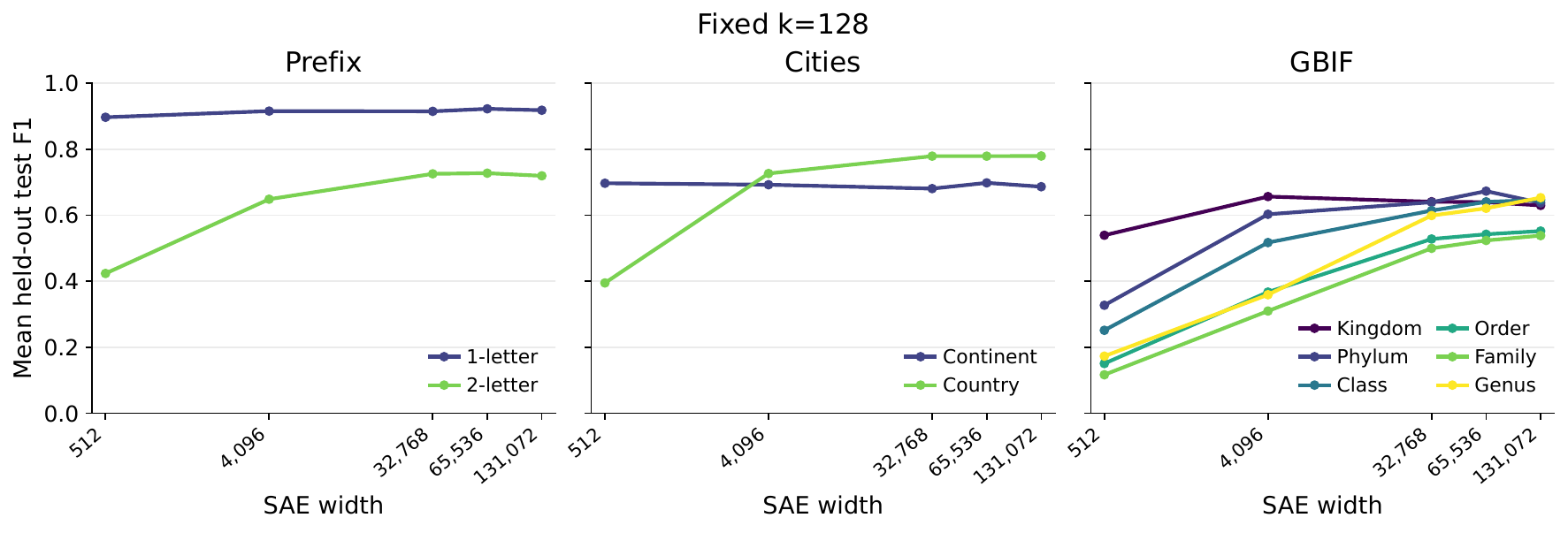}
    \caption{\textbf{Hierarchical feature recovery at fixed $k=128$.} Mean held-out test F1 for Prefix, Cities, and GBIF at five SAE widths, pooling Gemini and Nemotron. Each curve averages the same category--model pairs as the corresponding curve in \Cref{fig:hierarchy-mean-f1}; each category's feature and threshold are selected using training rows only.}
    \label{fig:hierarchy-mean-f1-fixed-k}
\end{figure}

\subsection{Counts and proportions of recovered categories and adjacent pairs}
\label{sec:hierarchy-recovery-counts}
To complement mean F1, \Cref{fig:individual-category-recovery,fig:adjacent-pair-recovery} count categories and parent--child pairs (in gbif, this means adjacent levels of the hierarchy) whose held-out test F1 is strictly greater than $0.6$.

\begin{figure}[p]
    \centering
    \includegraphics[width=\linewidth]{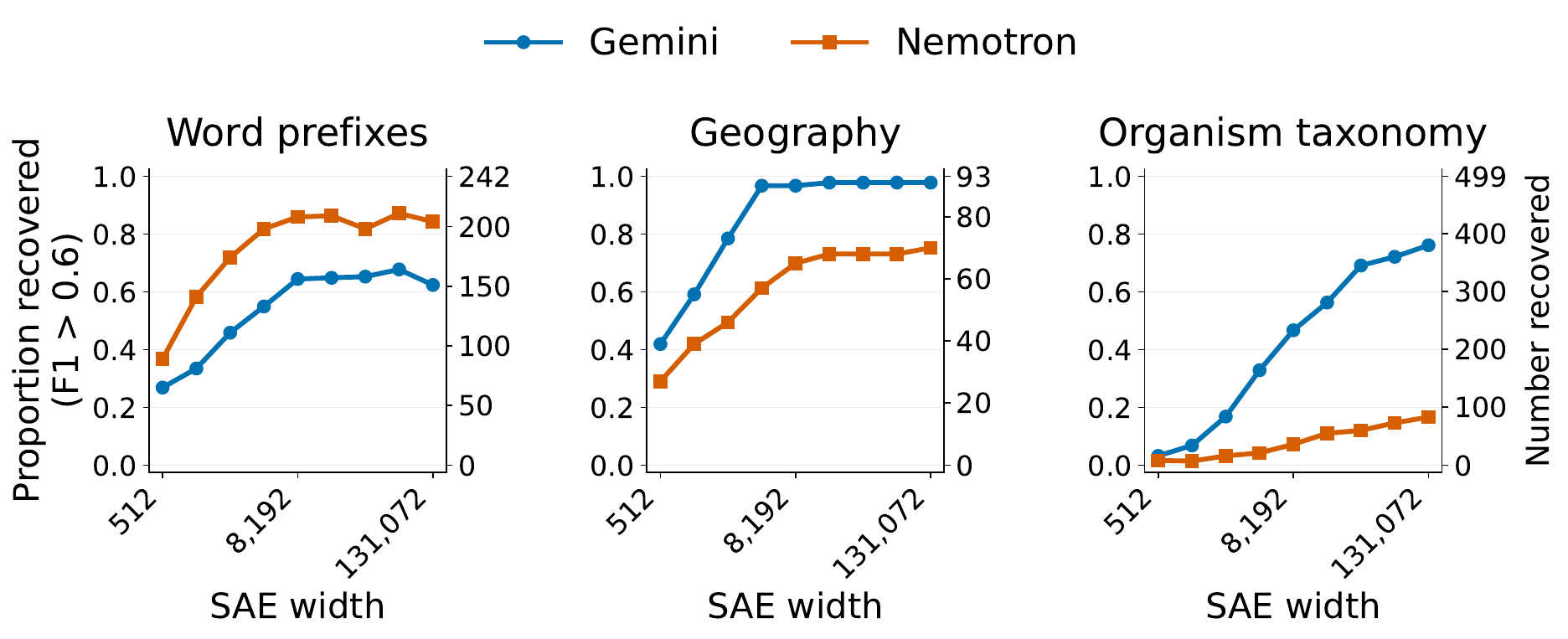}
    \caption{\textbf{Recovery of individual categories across SAE widths.} A category is recovered when its selection-chosen detector achieves held-out test F1 $>0.6$. Left axes show proportions; right axes show raw counts among 242 word prefixes, 93 geographic categories, and 499 organism categories. Geography uses different continent and country prompts.}
    \label{fig:individual-category-recovery}

    \vspace{1.5em}
    \includegraphics[width=\linewidth]{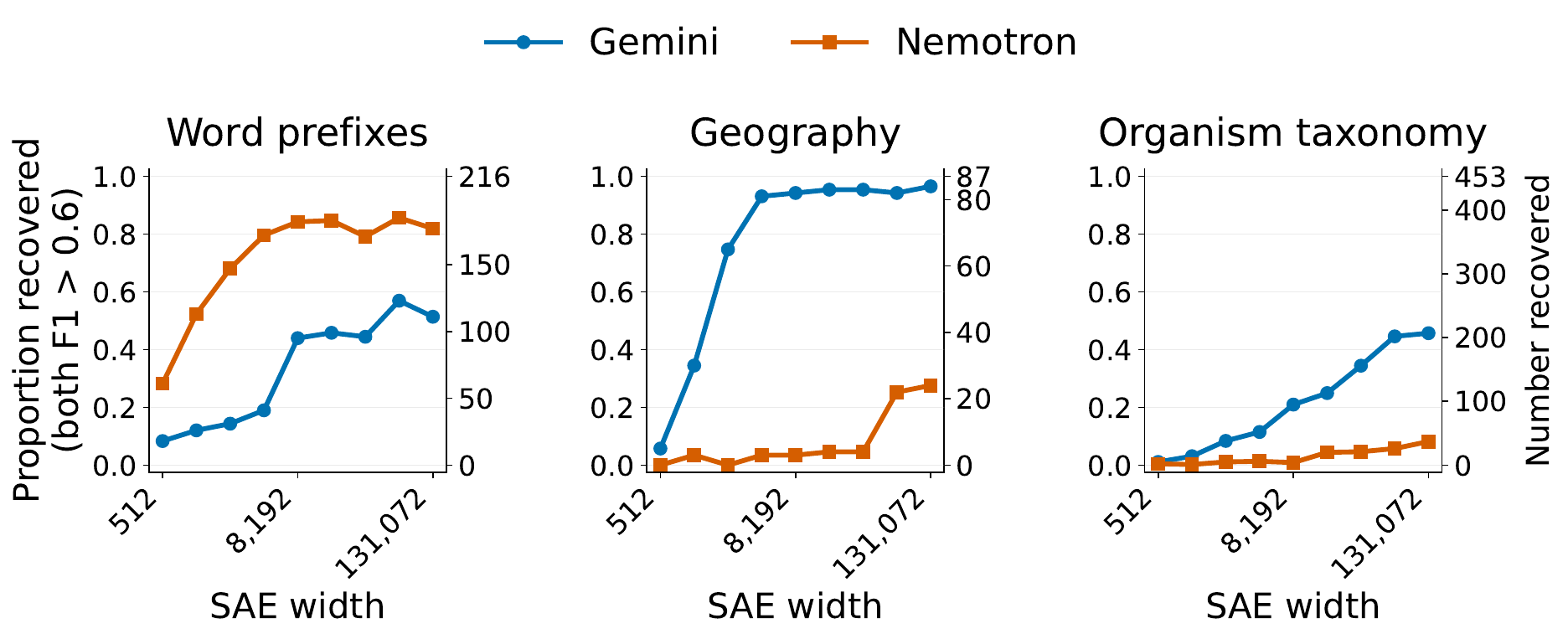}
    \caption{\textbf{Simultaneous recovery of adjacent parent--child pairs.} Both selection-chosen detectors must achieve held-out test F1 $>0.6$ and use distinct SAE coordinates within the pair. Left axes show proportions; right axes show raw counts among 216 prefix pairs, 87 continent--country pairs, and 453 adjacent-rank organism pairs.}
    \label{fig:adjacent-pair-recovery}
\end{figure}

\subsection{Recovery of letter-prefix hierarchies}
\label{sec:a-prefix-example}
The upper panel of \Cref{fig:a-prefix} shows Gemini test F1 for the \texttt{a} parent and its eligible two-letter children, selecting a separate detector for each. The lower recall heatmap instead fixes the selection-chosen \texttt{a} detector at each width and measures its recall within each prefix group. That descriptive heatmap uses the full 200,000-word collection, including selection and test rows, rather than only the held-out split. \Cref{fig:gemini-prefix-f1-grid} extends the upper F1 panel to the 25 one-letter prefixes with eligible two-letter children, excluding \texttt{x} (since it does not have any qualifying two-letter children), using the same saved probes, nine widths, and category eligibility rules. These curves report continuous held-out test F1 without filtering by F1. We see more instances of mesoscale splitting (a U-shaped parent F1 curve). \Cref{fig:gemini-prefix-f1-grid-fixed-k} repeats this grid at fixed $k=128$ for widths 512, 4K, 32K, 65K, and 131K, using the checkpoints described in \Cref{sec:hierarchy-fixed-k}.

\begin{figure}[p]
    \centering
    \includegraphics[width=\linewidth,height=0.82\textheight,keepaspectratio]{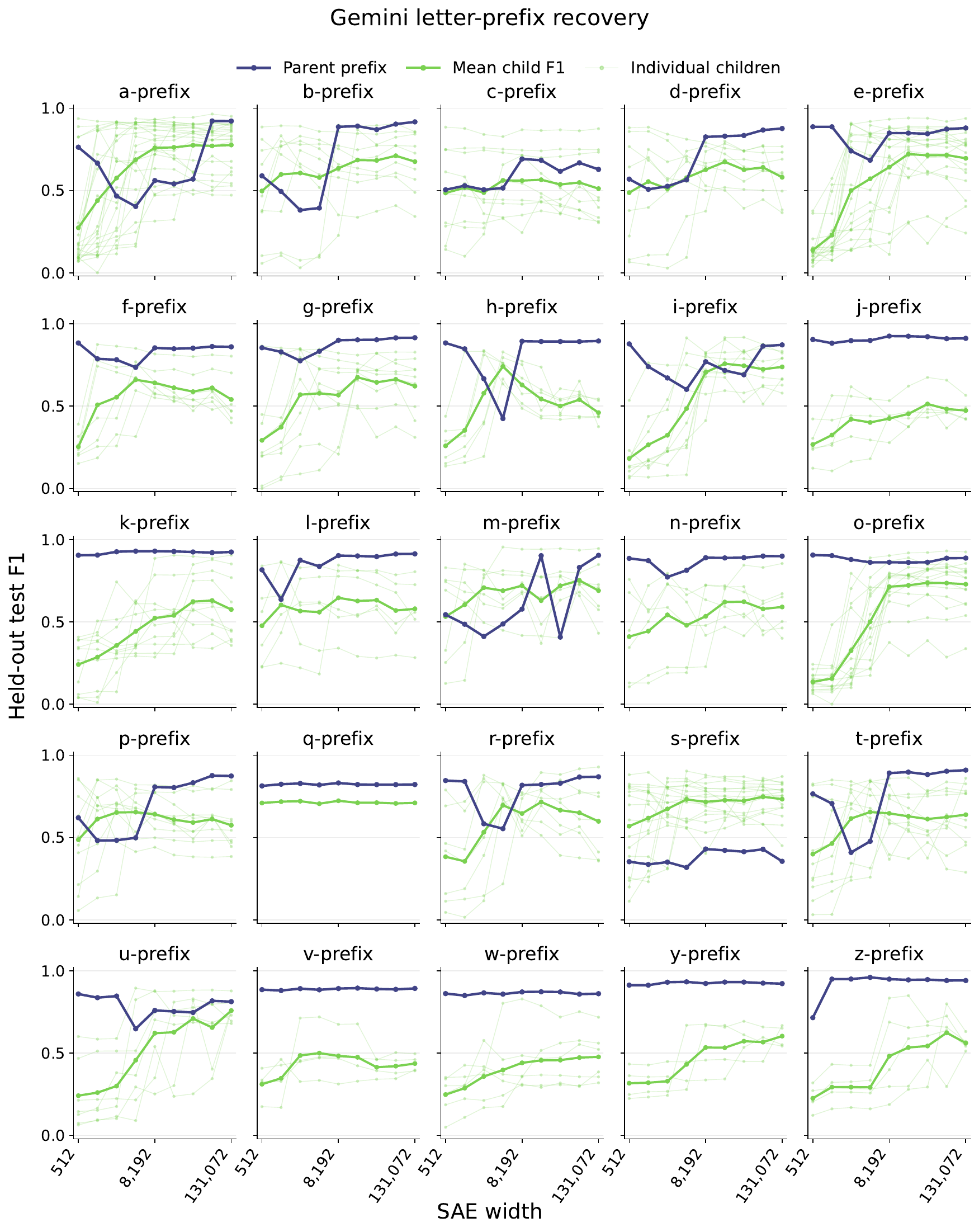}
    \caption{\textbf{Gemini F1 recovery for 25 letter prefixes.} Each panel is the analog of the upper panel of \Cref{fig:a-prefix} for one prefix from \texttt{a} to \texttt{z}, excluding \texttt{x}, across the original nine SAE widths and sparsity schedule. Blue curves show parent F1; bold green curves show the unweighted mean F1 of eligible two-letter children; faint green curves show individual children.}
    \label{fig:gemini-prefix-f1-grid}
\end{figure}

\begin{figure}[p]
    \centering
    \includegraphics[width=\linewidth,height=0.82\textheight,keepaspectratio]{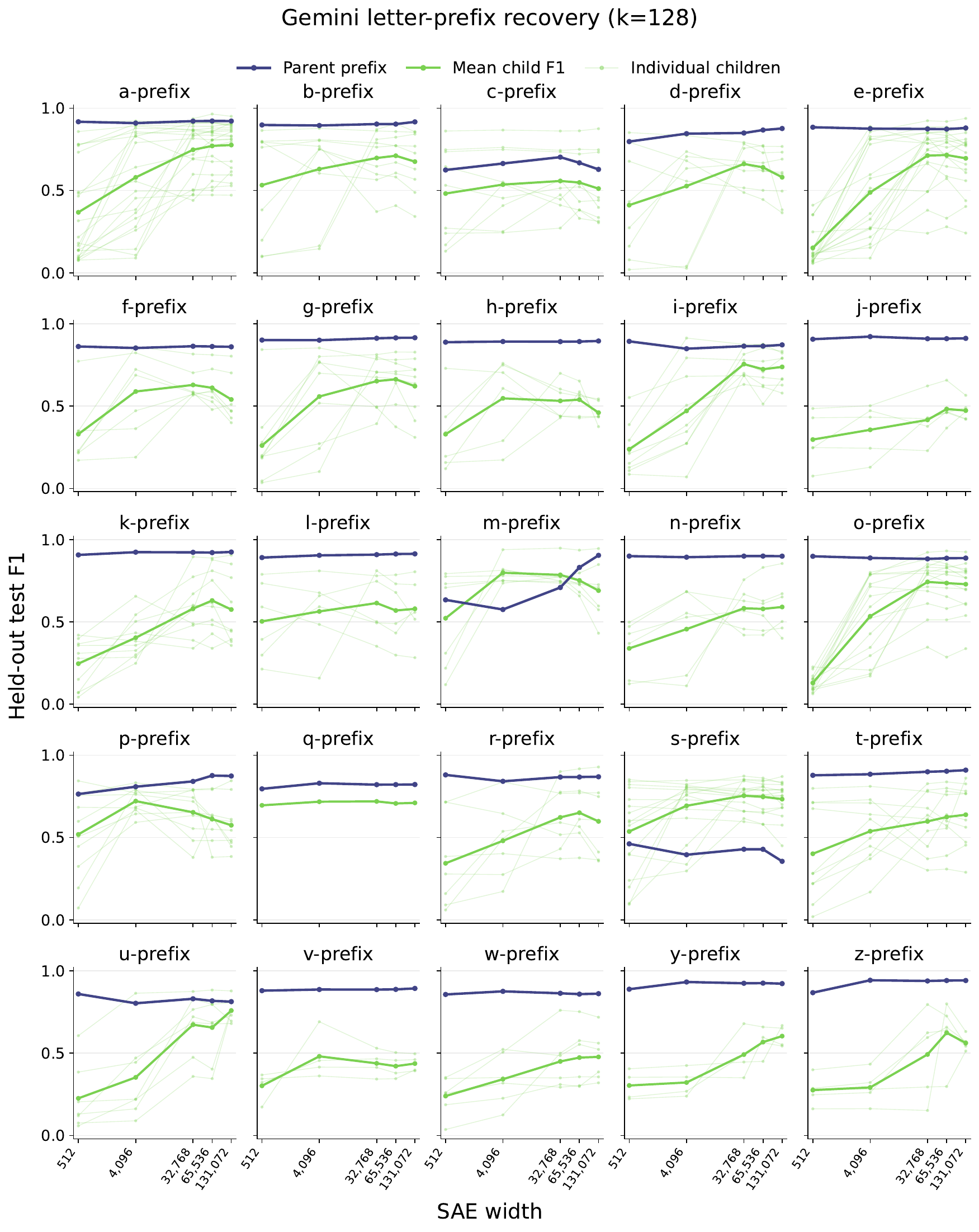}
    \caption{\textbf{Gemini F1 recovery for 25 letter prefixes at fixed $k=128$.} The analog of \Cref{fig:gemini-prefix-f1-grid} at five SAE widths: 512, 4,096, 32,768, 65,536, and 131,072. Panels cover \texttt{a}--\texttt{z}, excluding \texttt{x}. Blue curves show parent F1; bold green curves show the unweighted mean F1 of eligible two-letter children; faint green curves show individual children. Each category's detector is selected using selection rows and evaluated on held-out test rows, with no F1 filtering.}
    \label{fig:gemini-prefix-f1-grid-fixed-k}
\end{figure}
\clearpage

\subsection{Empirical extraction of feature hierarchies}
\label{sec:empirical-hierarchy-extraction}

The food and music hierarchies in \Cref{fig:learned-hierarchies} are curated subsets of a graph extracted from activation co-occurrences in the Gemini SAE with $131{,}072$ features and $k=128$. We use cached post-TopK activations on $N=89{,}827{,}558$ corpus rows. A feature is active when its activation is strictly greater than $0.05$; this threshold is the same for candidate parents and children. For a candidate parent $P$ and child $C$, let $n_P$ and $n_C$ denote their activation counts and $n_{PC}$ their co-activation count. We estimate forward and reverse containment as
\begin{equation}
    \widehat p_{P\mid C}=\frac{n_{PC}}{n_C},
    \qquad
    \widehat p_{C\mid P}=\frac{n_{PC}}{n_P}.
\end{equation}
High forward containment indicates that the parent usually activates when the child does; low reverse containment indicates that the parent also covers examples outside the child.

\paragraph{Candidate edges and directionality.}
We retain a candidate edge $P\to C$ only if $n_C\geq100$, $n_{PC}\geq100$, $n_P>n_C$, and
\begin{equation}
    L_{P\mid C}\geq0.70,
    \qquad \widehat p_{C\mid P}\leq0.50,
    \qquad \frac{\widehat p_{P\mid C}}{n_P/N}\geq5.
\end{equation}
The last condition is a lift threshold: containment must be at least five times the parent's marginal activation rate. The reverse-containment and support conditions orient edges from broader to narrower activation sets; the strict support ordering also rules out directed cycles.

Here $L_{P\mid C}$ is the lower endpoint of the two-sided $95\%$ Wilson interval for a binomial proportion. Writing $\widehat p=\widehat p_{P\mid C}$, $n=n_C$, and $z=1.96$, it is
\begin{equation}
    L_{P\mid C}=
    \frac{\widehat p+z^2/(2n)-z\sqrt{\widehat p(1-\widehat p)/n+z^2/(4n^2)}}{1+z^2/n}.
\end{equation}
This penalizes estimates based on fewer observations: for example, $90/100$ and $900/1000$ both give $90\%$ containment, but their lower bounds are approximately $82.6\%$ and $88.0\%$. We use the bound as a screening heuristic, since corpus rows can be duplicated or dependent and the interval does not adjust for testing many feature pairs.

\paragraph{Removing ancestor shortcuts.}
After collecting all qualifying edges, we remove $P\to C$ whenever a feature $M$ has both qualifying edges $P\to M$ and $M\to C$. Thus, Food $\to$ Cheddar is removed if Food $\to$ Cheese and Cheese $\to$ Cheddar both qualify. Multiple parents can remain, and a broader ancestor edge can remain if an intermediate edge fails the thresholds. The procedure yields $154{,}316$ qualifying ancestor pairs and $114{,}762$ retained parent--child pairs.

\paragraph{Selection and interpretation of the figures.}
Each hierarchy in \Cref{fig:learned-hierarchies} displays 33 selected features and 32 retained edges. The compact main-text table in \Cref{fig:learned-hierarchies-compact} represents all 66 features and 64 parent--child relationships in three hierarchy-level columns, omitting only the feature IDs. We inspected cached activating texts to choose interpretable branches and assign descriptive labels; \Cref{tab:food-hierarchy,tab:music-hierarchy} provide example texts. All edge statistics use the full corpus counts, rather than the sampled texts. The diagrams illustrate selected activation-containment relationships, not an exhaustive taxonomy or disjoint partition, and the labels are interpretations rather than independent ground truth.

\begin{figure}[htbp]
    \centering
    \includegraphics[width=0.95\linewidth]{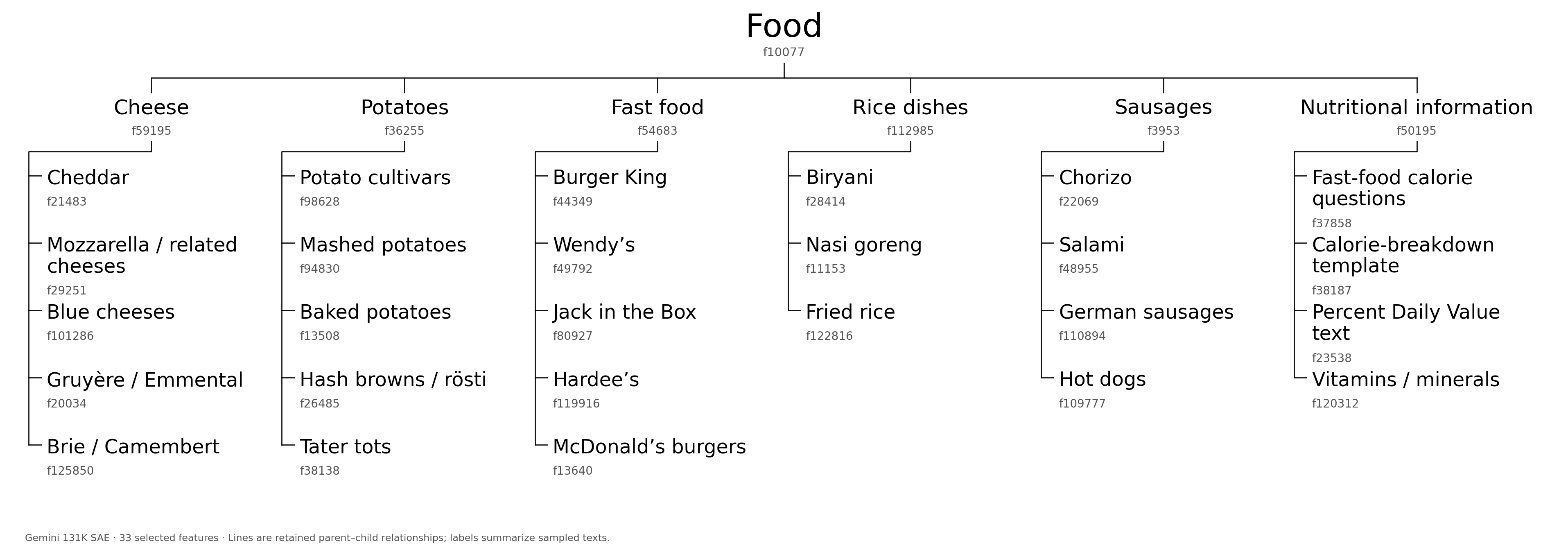}\\
    \includegraphics[width=0.95\linewidth]{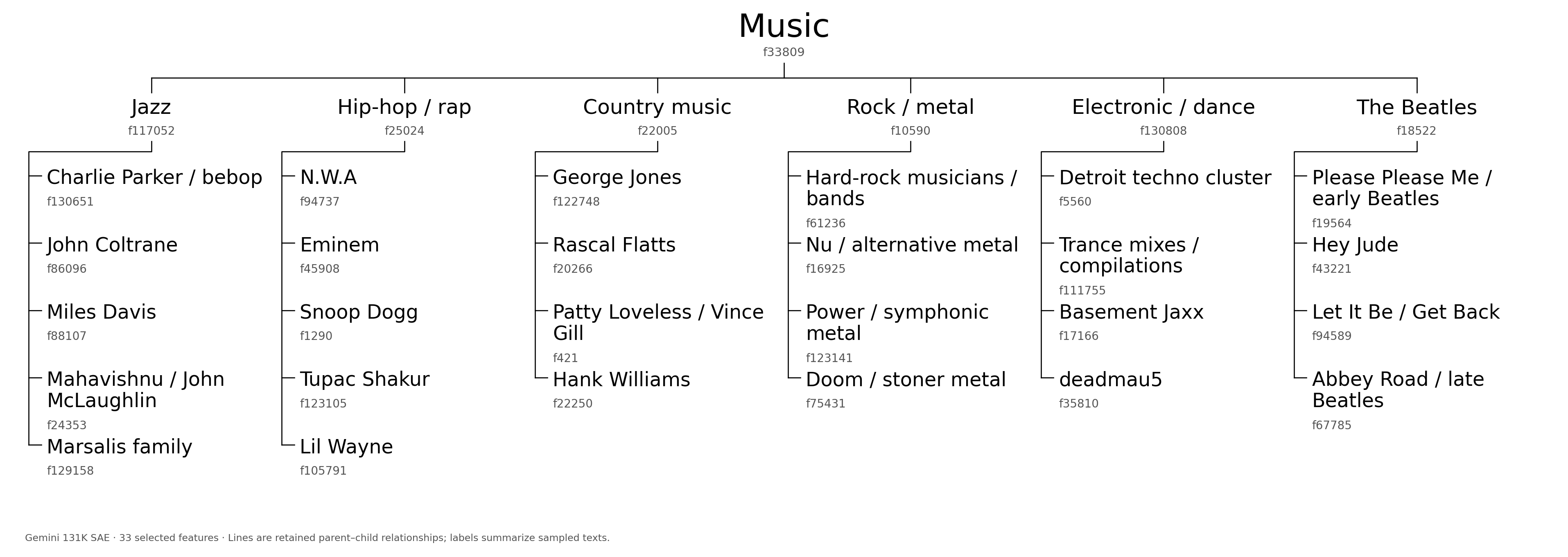}

    \caption{Learned hierarchies for Music and Food. The labels here are automatically generated using Codex; activating examples for each feature in the hierarchy are given in \Cref{tab:food-hierarchy,tab:music-hierarchy}.}
    \label{fig:learned-hierarchies}
\end{figure}

\clearpage
\begin{table}[p]
\centering
\caption{Food hierarchy with five unique activating examples per feature.}
\label{tab:food-hierarchy}
\includegraphics[width=\linewidth]{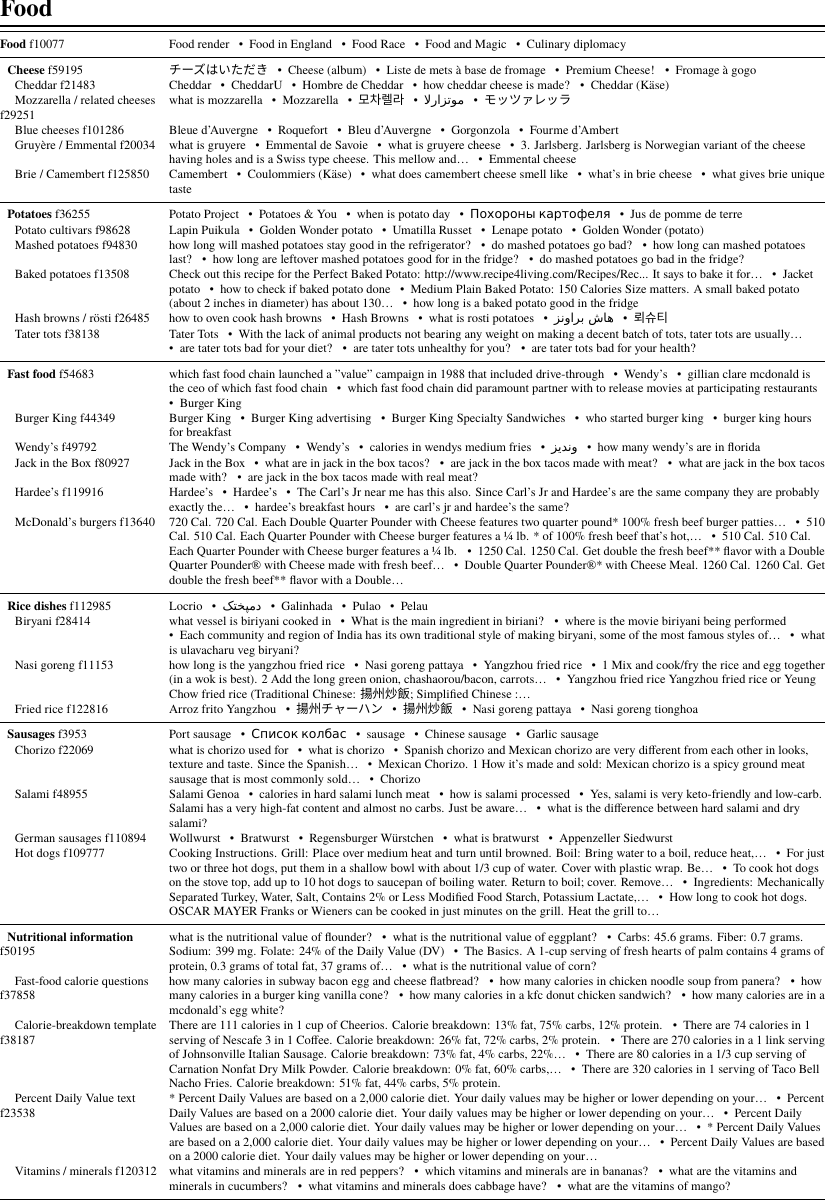}
\end{table}

\clearpage
\begin{table}[p]
\centering
\caption{Music hierarchy with five unique activating examples per feature.}
\label{tab:music-hierarchy}
\includegraphics[width=\linewidth]{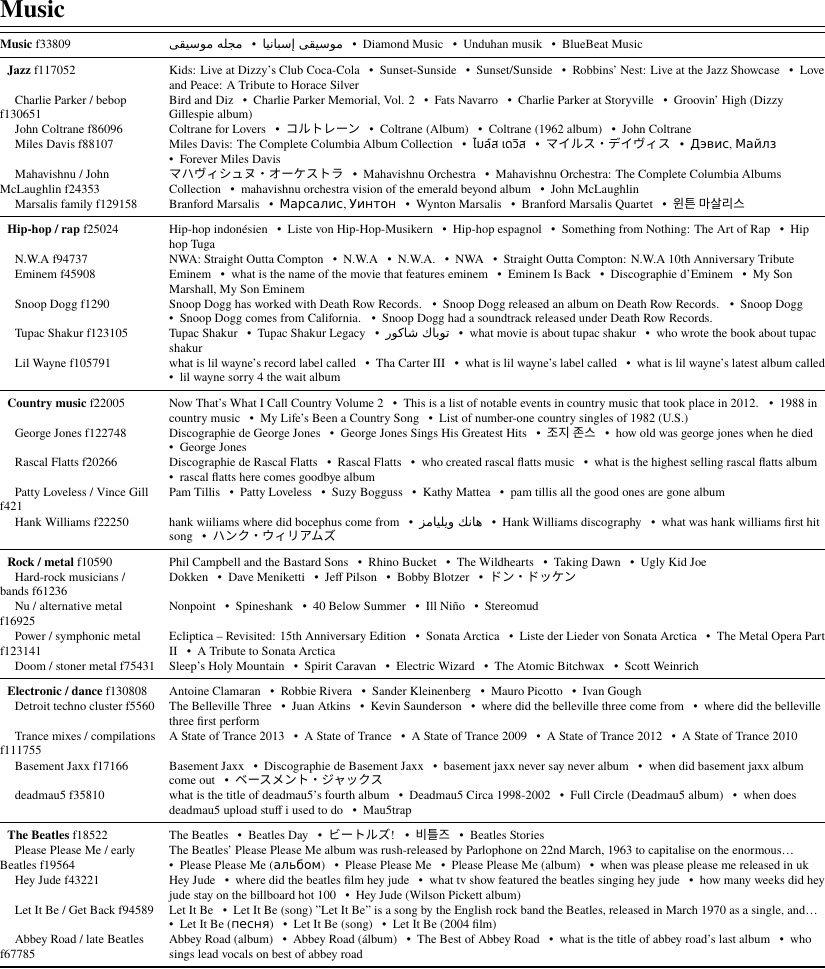}
\end{table}

\clearpage

\section{Simulation experiments}\label{sec:recovery-simulations}
\subsection{Recovery under width and sparsity variation}\label{sec:simulation-width-sparsity}
We study recovery in finite synthetic distributions with known ground-truth atoms. These experiments assess whether trained TopK SAEs exhibit prefix recovery and how this behavior depends on the frequency distribution and the SAE sparsity. Figures~\ref{fig:simulation-alpha-prefix} and~\ref{fig:simulation-k-prefix} report approximate recovered prefix lengths, expressed in atoms and divided by SAE width.

\paragraph{Data-generating distributions.}
Inputs lie in $\RR^{512}$ and are nonnegative linear combinations of exactly $K=8$ atoms. We draw each ground-truth direction independently from a standard Gaussian and normalize it to unit length. In the flat setting, there are $M=4096$ atoms. An eight-element support $S$ is sampled according to
\[
    \Pr(S) = \frac{\prod_{i\in S}i^{-\alpha}}{\sum_{T:\,|T|=8}\prod_{j\in T}j^{-\alpha}},
    \qquad |S|=8,
\]
and its nonzero coefficients are independent $\mathrm{Uniform}(0,1)$ draws. This is a product-weight distribution over fixed-size subsets; the power law specifies the support weights, rather than the marginal activation probabilities directly.
In the hierarchical setting, $M=3072$ atoms form 1024 families, each containing a parent and two children. We sample four families using the same product-weight law over family ranks, then activate each selected parent and one of its two children, chosen uniformly. The eight active coefficients are again independent $\mathrm{Uniform}(0,1)$ draws. Thus a child can only occur with its parent, while either child can accompany the parent.

\paragraph{Training and sweeps.}
We train nonnegative TopK SAEs at widths $m\in\{128,256,512,1024,2048,M\}$. Generating sparsity $K=8$ is fixed throughout; $k$ denotes the SAE's activation budget. The frequency sweep uses $\alpha\in\{0.8,1.2,1.6,2.0,2.4\}$ at $k=8$. The sparsity sweep uses $k\in\{1,2,4,6,8,10,12,16,32\}$ at $\alpha=1.6$. \Cref{thm:recovery-principle} does not give guarantees for $\alpha=0.8$. We use three paired random seeds per setting, giving 468 distinct training trajectories after deduplicating the shared $\alpha=1.6,k=8$ setting.

Each trajectory trains on 1,024M examples using Adam with learning rate $10^{-3}$, $\beta_1=0.9$, $\beta_2=0.99$, and batch size 1024, with no learning-rate schedule. Decoder directions have unit norm and the decoder bias is fixed to zero. An auxiliary reconstruction loss with coefficient $0.25$ targets up to 64 neurons inactive for 256 steps.

\paragraph{Recovery criteria.}
Geometric recovery requires absolute cosine similarity at least $0.9$ with a learned decoder direction. For hierarchical data, child comparisons project both directions onto the orthogonal complement of the true parent; recovery of a complete family requires three distinct decoder directions meeting the parent and two projected-child criteria. Activation recovery uses 200,000 independent validation examples to select coordinate thresholds and a maximum-cardinality injective assignment among atom--coordinate pairs with validation F1 at least $0.8$. A matched atom is recovered if its F1 is also at least $0.8$ on a separate set of 400,000 test examples. An activation-recovered family must recover all three members.

For either criterion, let $q_i$ indicate recovery of atom $i$ in the flat setting or complete family $i$ in the hierarchical setting, ordered by generating weight. We compute
\[
    r=\max\left(\{0\}\cup\left\{s:\sum_{i=1}^{s}q_i\geq 0.9s\right\}\right).
\]

\begin{figure}[p]
    \centering
    \includegraphics[width=\linewidth]{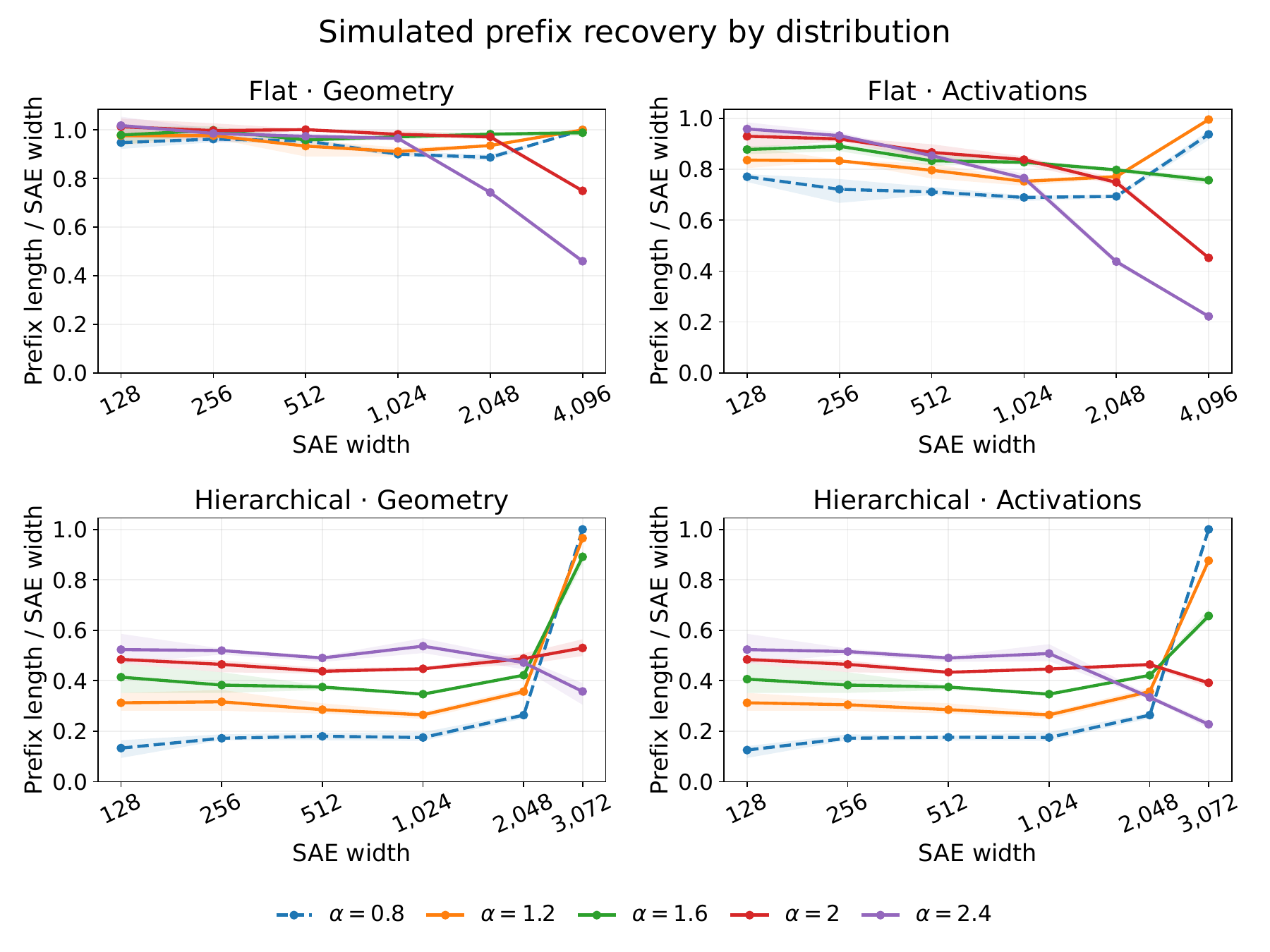}
    \caption{\textbf{Width-normalized prefix recovery across frequency distributions.} Generating sparsity $K=8$ and SAE sparsity $k=8$ are fixed; each curve corresponds to a different $\alpha$. All models train on 1,024M examples. Rows show flat and hierarchical data; columns show geometric and activation recovery. The vertical axis is approximate prefix length in atoms divided by SAE width (three atoms per complete family for hierarchical data), with a 10\% miss allowance. Lines show three-seed means and bands show seed ranges.}
    \label{fig:simulation-alpha-prefix}
\end{figure}

\begin{figure}[p]
    \centering
    \includegraphics[width=\linewidth]{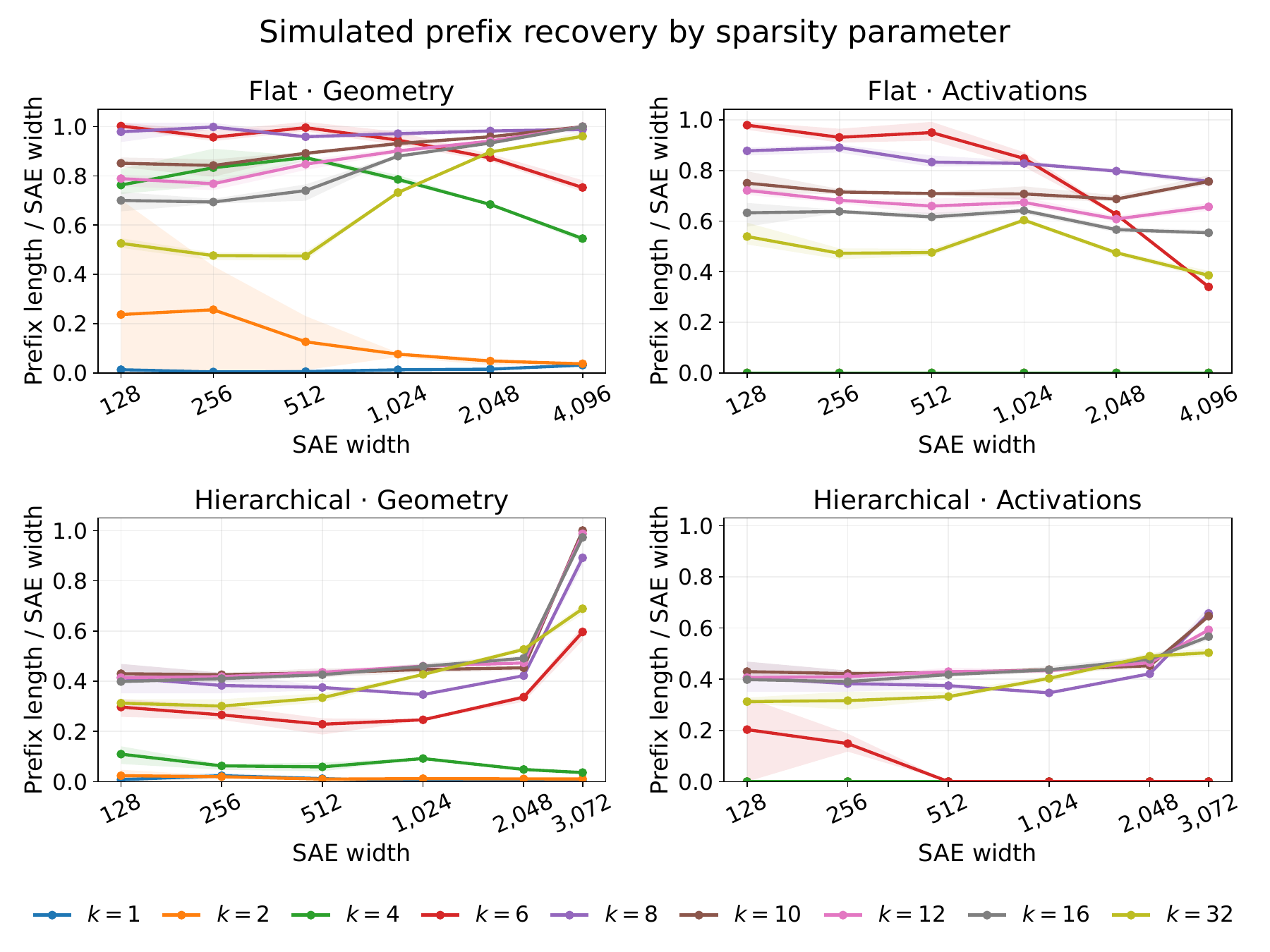}
    \caption{\textbf{Width-normalized prefix recovery across SAE sparsities.} Generating sparsity $K=8$ and frequency exponent $\alpha=1.6$ are fixed; curves vary SAE sparsity $k$. All models train on 1,024M examples. Panels, normalization, and the 10\% prefix miss allowance follow Figure~\ref{fig:simulation-alpha-prefix}. Lines show three-seed means and bands show seed ranges.}
    \label{fig:simulation-k-prefix}
\end{figure}
\clearpage

\subsection{Persistent features across simulated SAE widths}
\label{sec:simulation-persistence}
We apply the persistence metric in \Cref{sec:matching-computation}. For each source width $m$, we independently compute a maximum matching into each larger trained SAE and intersect the matched source-feature sets, holding the data distribution, SAE sparsity, and paired seed fixed. Matches require signed decoder cosine similarity or signed Pearson correlation of continuous post-TopK activations at least $t=0.8$. The largest width ($4096$ for flat data, $3072$ for hierarchical data) is a matching target only. We report the persistent-feature proportion, dividing the size of the intersection by all $m$ source features, including inactive features.

We evaluate each group of six widths on the same 400,000 held-out test examples and compute all pairwise similarities without sketching or candidate truncation. Constant activation columns cannot form Pearson edges. The procedure in \Cref{sec:matching-computation} certifies each pairwise maximum on these empirical threshold graphs and verifies their matched source-set intersection. This measures persistence between learned SAEs.

\Cref{fig:simulation-persistence-alpha-proportion-t08} varies the frequency exponent; \Cref{fig:simulation-persistence-k-proportion-t08} varies SAE sparsity. At $\alpha=1.6,k=8$, mean activation persistence across source widths ranges from $93.6$--$96.1\%$ for flat data and $91.9$--$99.1\%$ for hierarchical data. Hierarchical decoder persistence rises from $61.5\%$ at width 128 to $91.8\%$ at width 2048. Severe sparsity underspecification particularly reduces activation persistence, showing that persistent decoder directions need not imply persistent activation patterns.

\begin{figure}[htbp]
    \centering
    \includegraphics[width=\linewidth]{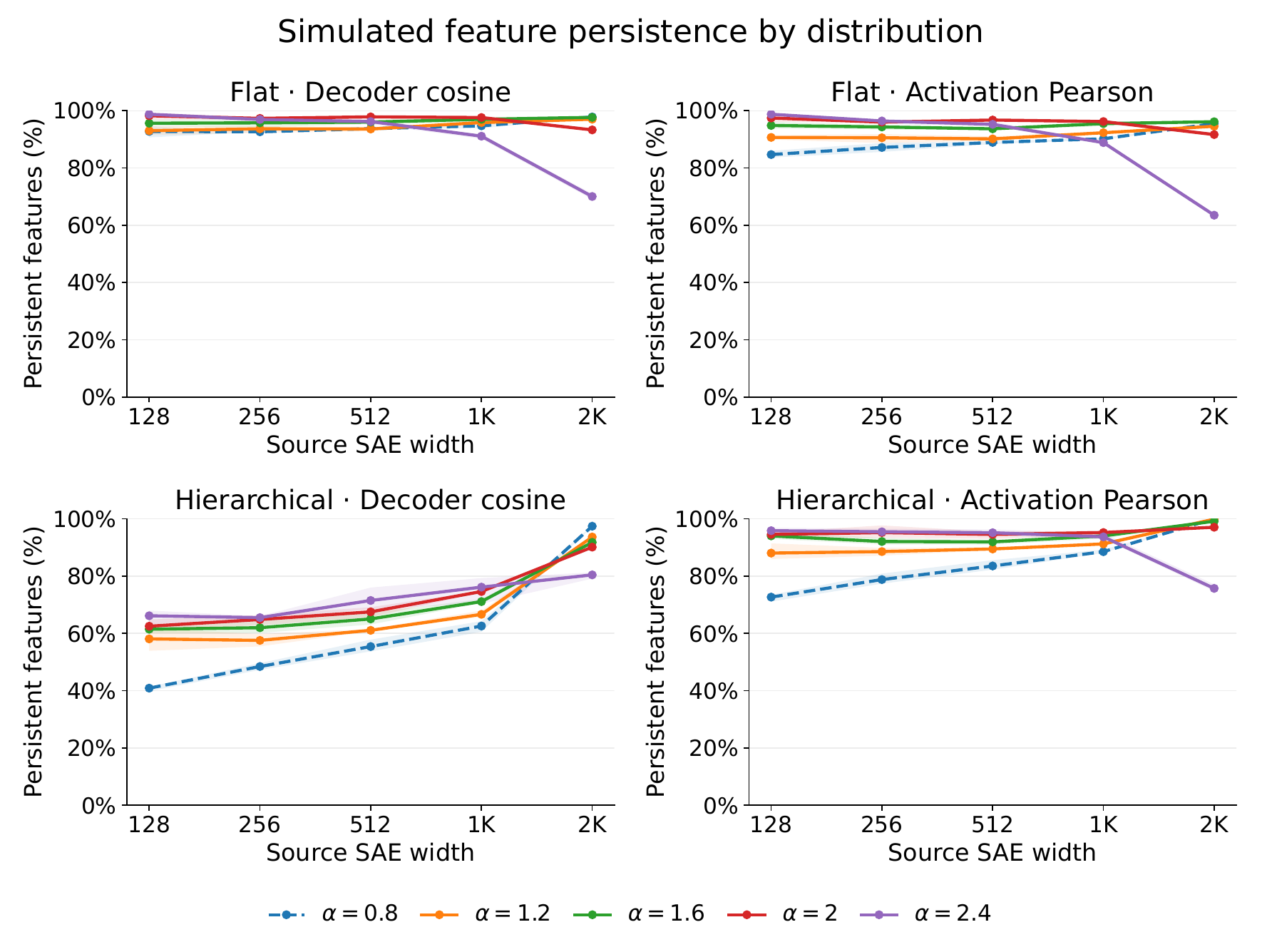}
    \caption{\textbf{Persistent-feature proportions across frequency distributions at $t=0.8$.} The intersection of source sets matched by independent maximum matchings to every larger trained width, divided by source SAE width. Generating sparsity $K=8$ and SAE sparsity $k=8$ are fixed; colors vary $\alpha$. Rows show flat and hierarchical data; columns show signed decoder cosine and activation Pearson similarity. All models train on 1,024M examples. Lines show three-seed means and bands show seed ranges. The dashed $\alpha=0.8$ curve is the finite-distribution control outside the theorem's guarantees.}
    \label{fig:simulation-persistence-alpha-proportion-t08}
\end{figure}

\begin{figure}[p]
    \centering
    \includegraphics[width=\linewidth]{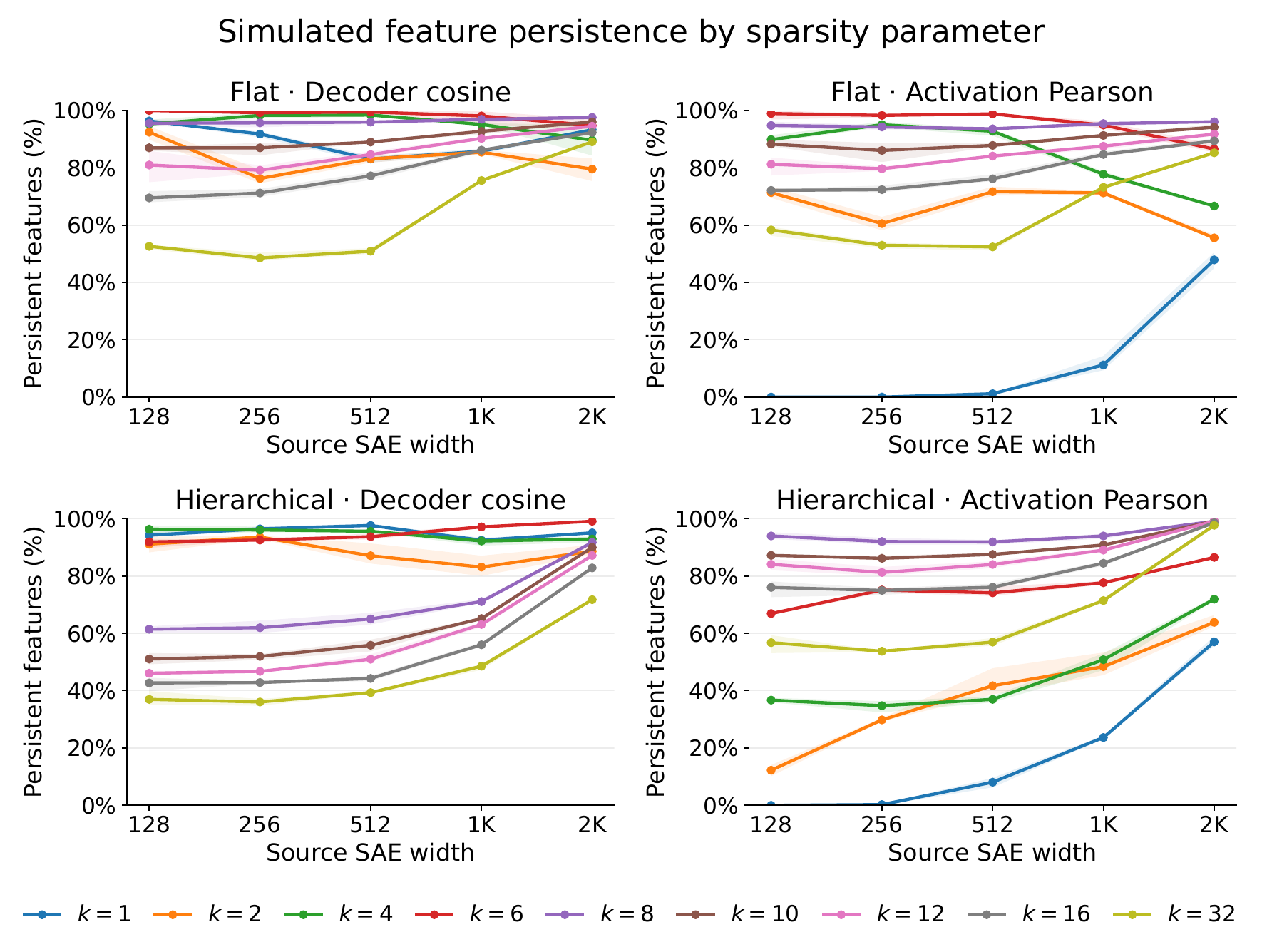}
    \caption{\textbf{Persistent-feature proportions across SAE sparsities at $t=0.8$.} The intersection of source sets matched by independent maximum matchings to every larger trained width, divided by source SAE width. Generating sparsity $K=8$ and frequency exponent $\alpha=1.6$ are fixed; colors vary SAE sparsity $k$. Rows show flat and hierarchical data; columns show signed decoder cosine and activation Pearson similarity. All models train on 1,024M examples. Lines show three-seed means and bands show seed ranges.}
    \label{fig:simulation-persistence-k-proportion-t08}
\end{figure}
\clearpage

\clearpage

\section{A Molecular Model of SAE Scaling}\label{sec:molecules}

\newcommand{\zcell}[2]{%

  \multicolumn{1}{>{\columncolor{orange!#1}}c}{$#2$}%

}

\begin{table}
\centering
\setlength{\tabcolsep}{5pt}

\setlength{\minrowclearance}{0pt}

\setlength{\aboverulesep}{0pt}

\setlength{\belowrulesep}{0pt}

\footnotesize

\begin{tabular}{
    >{\centering\arraybackslash}m{0.7cm}
    >{\centering\arraybackslash}m{0.4cm}
    >{\centering\arraybackslash}m{5.7cm}
    >{\centering\arraybackslash}m{1.0cm}
    >{\centering\arraybackslash}m{1.0cm}
    >{\centering\arraybackslash}m{1.2cm}
    >{\centering\arraybackslash}m{1.2cm}
}
\toprule

dict. size
&
$i$
&
$\hat a_i$ with molecule illustration
&
$\hat z_i(\mathrm{cat})$
&
$\hat z_i(\mathrm{dog})$
&
$\hat{A}\hat{z}(\mathrm{cat})$
&
$\hat{A}\hat{z}(\mathrm{dog})$
\\

\midrule

1
&
1
&
\(
\frac{2}{\sqrt6}a_{\mathrm{pet}}
+
\frac{1}{\sqrt6}a_{\mathrm{cat}}
+
\frac{1}{\sqrt6}a_{\mathrm{dog}}
\)
\quad
\petcatdogmol
&
\zcell{70}{\displaystyle\frac{3}{\sqrt6}}
&
\zcell{70}{\displaystyle\frac{3}{\sqrt6}}
&
\multicolumn{2}{c}{\(
\displaystyle
\frac{2a_{\mathrm{pet}}
+
a_{\mathrm{cat}}
+
a_{\mathrm{dog}}}{2}
\)}
\\

\midrule

\multirow{2}{*}[-1.5ex]{2}
&
1
&
\(
\frac{1}{\sqrt2}a_{\mathrm{pet}}
+
\frac{1}{\sqrt2}a_{\mathrm{cat}}
\)
\quad
\petcatmol
&
\zcell{100}{\sqrt2}
&
\zcell{0}{0}
&
\multirow{2}{*}[-1.5ex]{$a_{\mathrm{pet}} + a_{\mathrm{cat}}$}
&
\multirow{2}{*}[-1.5ex]{$a_{\mathrm{pet}} + a_{\mathrm{dog}}$}
\\

\cmidrule(lr){2-5}

&
2
&
\(
\frac{1}{\sqrt2}a_{\mathrm{pet}}
+
\frac{1}{\sqrt2}a_{\mathrm{dog}}
\)
\quad
\petdogmol
&
\zcell{0}{0}
&
\zcell{100}{\sqrt2}
&
&
\\

\midrule

\multirow{3}{*}[-3ex]{3}
&
1
&
\(a_{\mathrm{pet}}\)
\quad
\petmol
&
\zcell{50}{1}
&
\zcell{50}{1}
&
\multirow{3}{*}[-3ex]{$a_{\mathrm{pet}} + a_{\mathrm{cat}}$}
&
\multirow{3}{*}[-3ex]{$a_{\mathrm{pet}} + a_{\mathrm{dog}}$}
\\

\cmidrule(lr){2-5}

&
2
&
\(a_{\mathrm{cat}}\)
\quad
\catmol
&
\zcell{50}{1}
&
\zcell{0}{0}
&
&
\\

\cmidrule(lr){2-5}

&
3
&
\(a_{\mathrm{dog}}\)
\quad
\dogmol
&
\zcell{0}{0}
&
\zcell{50}{1}
&
&
\\

\bottomrule
\end{tabular}
\caption{Learned directions illustrate molecular splitting. Corresponding learned features illustrate mesoscale feature splitting and absorption: at size 2 only, there is no learned feature that is active on both cat and dog.}
\label{tab:pet-feature-splitting}
\end{table}

\subsection{Computing the empirical molecules}\label{sec:empirical-molecule-methods}

The empirical molecules in \Cref{tab:a-prefix-empirical-feature-splitting,tab:target-3290-transform-mix-selected-with-molecule} approximate smaller-SAE decoder directions using decoder directions from the Gemini SAE of width 131,072 and sparsity $k=128$ as proxy atoms. These are decompositions of learned directions in embedding space; the activation values and recalls are measured separately by evaluating the corresponding SAEs. The 131K features serve as empirical proxies, rather than certified atomic directions. Below, $d$ denotes a target decoder row and $b_j$ a proxy-atom decoder row, both at their stored checkpoint scale.

\paragraph{Prefix molecules: basis and least-squares fit.}
For \Cref{tab:a-prefix-empirical-feature-splitting}, we use the word-prefix data, saved selection/test split, and single-feature probe procedure in \Cref{sec:p3-hierarchy-protocol}. For each of the 23 available one- and two-letter prefixes beginning with \texttt{a}, we take the 131K feature and positive activation threshold maximizing selection F1, without filtering by test F1. Their decoder rows form a matrix $B=[b_1,\ldots,b_{23}]$. There are 22 distinct directions, since \texttt{ac} and \texttt{ak} select the same feature; the regression retains both columns. For each selected smaller-SAE decoder direction, we compute
\[
    c^\star=\operatorname*{arg\,min}_{c\in\mathbb{R}^{23}}\|d-Bc\|_2^2,
\]
using the minimum-norm solution, computed in double precision with \texttt{numpy.linalg.lstsq} and \texttt{rcond=None}. There is no intercept, regularization, or nonnegativity constraint, and no additional normalization of the decoder rows. The fit uses all 23 basis columns, even though the table displays only the five prefixes \texttt{a}, \texttt{ah}, \texttt{ab}, \texttt{ap}, and \texttt{as}. Their proxy-atom feature IDs are 38618, 121657, 100965, 120576, and 32170, respectively.

At each displayed width (512/$k=32$, 2,048/$k=32$, and 131,072/$k=128$), we select the best probe for each of those five categories and deduplicate feature IDs. If a feature is selected for multiple categories, we use the threshold from the first category in the order \texttt{a}, \texttt{ah}, \texttt{ab}, \texttt{ap}, \texttt{as}. For a category $q$, its recall cell is
\[
    \frac{\#\{x\in T_q:\hat z_j(x)\geq\tau_j\}}{|T_q|},
\]
where $T_q$ is the set of held-out words beginning with $q$. The four displayed child categories contain 70, 339, 292, and 456 test words, respectively. We display only rows whose recall is at least $0.5$ on at least one category for which that feature was selected, using the same threshold across all cells in a row. This is a display filter using held-out recall; the probes and thresholds themselves are selected on the selection split.

The displayed sums and circles retain only coefficients at least $0.015$ among the five illustrated prefixes. Negative coefficients, smaller positive coefficients, and all other prefix components are omitted without refitting or renormalizing the retained coefficients. Circle radii are proportional to the cube root of the coefficient. Thus, these pictures summarize selected positive contributions rather than the complete signed least-squares reconstruction. At width 131,072, the illustrated directions are themselves basis columns, so their single-atom decompositions hold by construction; their held-out activation patterns are measured independently.

\paragraph{Polysemantic molecule: sparse decoder reconstruction.}
For feature 3290 in the Gemini width-4,096, $k=32$ SAE (\Cref{tab:target-3290-transform-mix-selected-with-molecule}), the candidate dictionary contains all 131,072 proxy-atom decoder rows. We use orthogonal matching pursuit (OMP). Starting with residual $r=d$, each step selects the unselected atom maximizing $|\langle r,b_j/\|b_j\|_2\rangle|$, refits unrestricted signed least-squares coefficients on the selected support, and updates the residual. Atom normalization is used for selection; reported coefficients are expressed against the stored decoder rows. We retain the first support along this greedy path satisfying
\[
    \left\|d-\sum_{j\in S}c_jb_j\right\|_2^2\leq 0.05.
\]
This is an absolute squared-error threshold on the stored target row, not a relative-error criterion or a guarantee of globally minimal support. The retained fit has 17 terms, squared error $0.049297$, and cosine similarity $0.975040$ between the target and reconstruction.

For readability, the diagram shows the five largest positive coefficients of this 17-term fit (also the first five atoms selected by OMP): $0.6763$ on feature 8574, $0.1955$ on 114331, $0.1665$ on 68236, $0.1615$ on 72509, and $0.0593$ on 97729. These coefficients come from the full fit, not a refit restricted to five atoms, and the reported reconstruction error applies to all 17 terms. Circle areas approximately encode these positive coefficients; the smallest circle (97729) is enlarged for legibility. The other twelve signed terms are omitted from the illustration.

\paragraph{Activation examples.}
The numerical cells in \Cref{tab:target-3290-transform-mix-selected-with-molecule,tab:target-3290-five-atom-binned-examples} are cached post-TopK SAE activations on the same corpus inputs, with absent features assigned zero. They are not inferred from the decoder-decomposition coefficients. The upper block of the main table is manually selected from activating texts to illustrate different senses of transformation, including examples containing and not containing the substring \texttt{transform}. The appendix gives twelve examples per proxy atom, curated across activation bins and semantic subtypes and sorted by decreasing activation. The lower block of the main table takes one example per atom from this appendix selection, with activation at least $0.09$ on that atom and below $0.05$ on each of the other four. These curated examples illustrate semantic selectivity rather than estimating its corpus frequency. Main-table shading uses a shared linear activation scale capped at $0.3$.

\paragraph{Decoder-similarity heatmaps.}
For \Cref{fig:a-prefix-probe-decoder-cosines}, we select the best probes for all 23 available a-prefix categories separately at widths 32,768/$k=64$ and 131,072/$k=128$, again without a test-F1 filter. After deduplicating feature IDs, each panel contains 22 directions. Entries are signed decoder cosines $\langle d_i,d_j\rangle/(\|d_i\|_2\|d_j\|_2)$, computed directly from the checkpoint rows, rather than from the molecule fits. The reported medians use the 231 unordered off-diagonal pairs in each panel.

\Cref{tab:target-3290-five-atom-binned-examples} gives twelve activating examples for each of the five 131K features in \Cref{tab:target-3290-transform-mix-selected-with-molecule}.

\begin{table}
\centering
\caption{Activating examples for the five 131K atoms. Each column contains twelve examples, sorted by decreasing activation. These appear more monosemantic than the feature in \Cref{tab:target-3290-transform-mix-selected-with-molecule}.}
\label{tab:target-3290-five-atom-binned-examples}
\includegraphics[width=\linewidth]{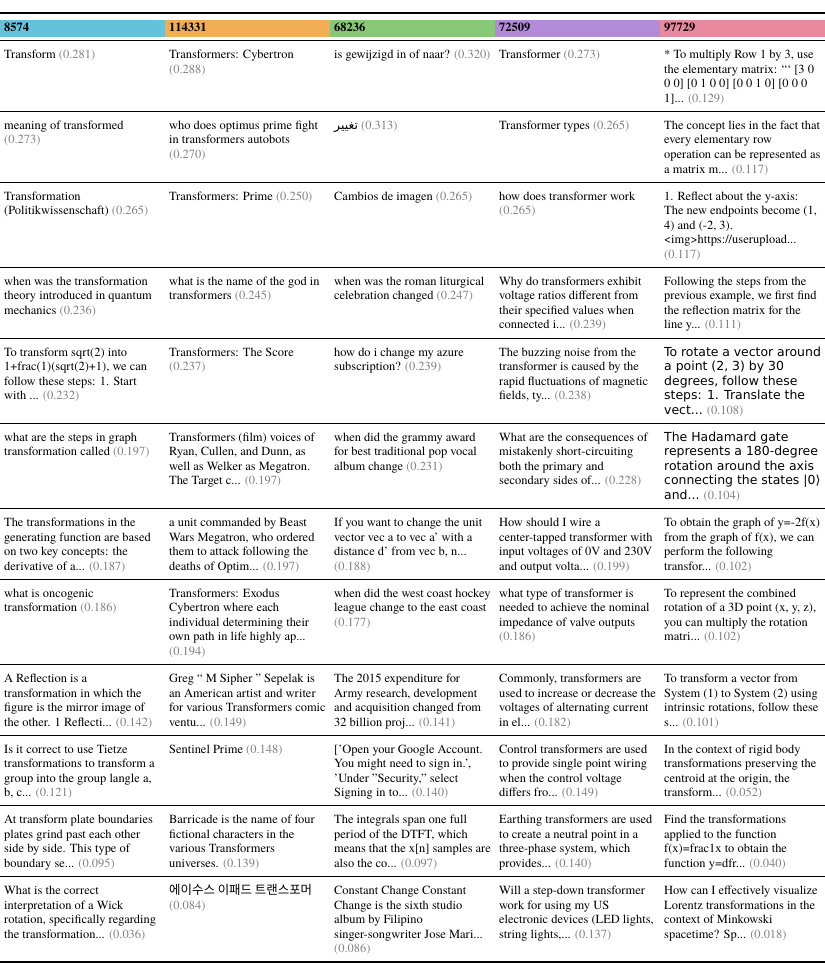}
\end{table}

\begin{figure}[t]
    \centering
    \includegraphics[width=\linewidth]{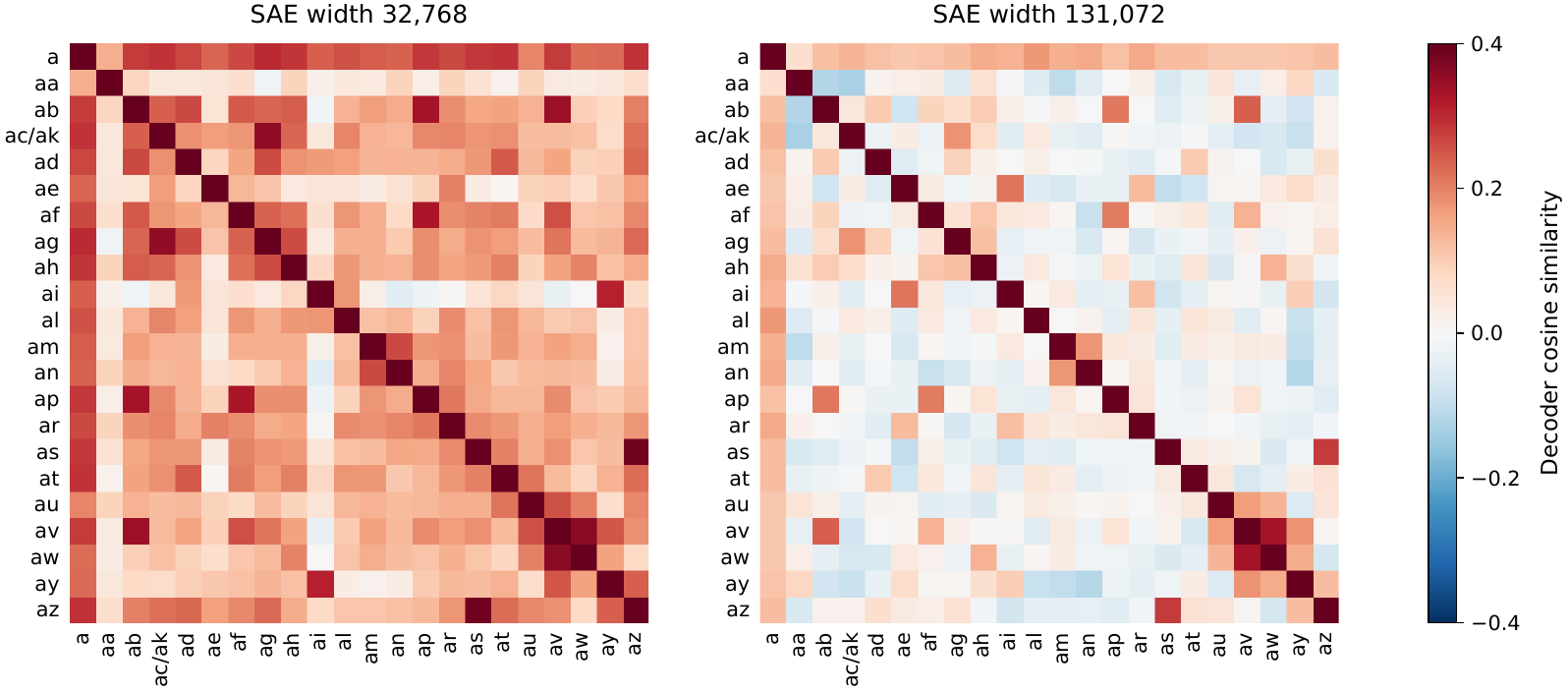}
    \caption{\textbf{Decoder direction similarities among a-prefix probes.} Signed pairwise cosine similarities for Gemini SAEs of widths 32,768 ($k=64$, left) and 131,072 ($k=128$, right). At each width, we take the best selected feature for each of the 23 available one- and two-letter a-prefix categories, without an F1 filter, and deduplicate feature IDs. This yields 22 unique decoder directions: \texttt{ac} and \texttt{ak} share a probe at both widths. Rows and columns use the same prefix order in both panels, with a common color scale from $-0.4$ to $0.4$. Diagonal self-similarities equal 1 and are shown at the maximum red. The median cosine over the 231 unique off-diagonal pairs is 0.1438 at width 32,768 and 0.0084 at width 131,072.}
    \label{fig:a-prefix-probe-decoder-cosines}
\end{figure}

\clearpage

\section{An Atomic View of Feature Manifolds}\label{sec:manifold}

A growing body of work has observed that features do  not necessarily exclusively have one-dimensional representations. For example, considering two-dimensional PCA projections of days of the year, one often recovers circular structure. The observation of such ``manifolds'' is sometimes used to argue against the linear representation hypothesis (which the atomic hypothesis strengthens). 

Here, we point out that the atomic hypothesis is not inconsistent with the observation of higher-dimensional non-linear structure. However, the atomic hypothesis can be viewed as implying that it should be possible to reconstruct this non-linear structure using atomic features. In particular, if we believe that the information represented by $f$ is captured entirely by $z$, we should see the same non-linear structure arise in $z$. Looking exclusively at $z$ disentangles the information stored in the features themselves from information potentially stored by the directions corresponding to each feature.

We consider subsets of SAE features $S\subseteq[131072]$ and consider PCA projections of $z_S(x)\in \RR^M$ for $x\in X^*$. We demonstrate that this procedure reveals geometric structure from atomic features. Notice this procedure only uses the learned features $z$ and not the learned directions $A$. Importantly, this disentangles geometric structure arising from the embedding space (i.e., if different atoms are represented along correlated directions in $\RR^d$) from geometric structure arising purely from correlation in feature activations in data.

We study two datasets: dates and colors. For each dataset, \Cref{fig:manifold} compares PCA of the raw embeddings with PCA of the continuous post-TopK activations restricted to a selected subset $S$ of the 131K Gemini SAE's coordinates. The feature subsets and color filtering were chosen through exploratory inspection.

\paragraph{Dates.}
The inputs are the 366 strings ``[Month] [Day]'' in calendar order, including ``February 29'', with no year in the strings. We identify twelve month-related features in the gemini 131K SAE.
We apply the full 131K encoder and TopK operation before extracting these twelve coordinates, assigning zero when a coordinate is not retained by TopK. PCA mean-centers the $366\times3072$ raw embedding matrix and the $366\times12$ activation matrix separately. The heatmap displays the same uncentered, continuous activation matrix in calendar order.

\paragraph{Colors.}
The original collection contains 865 color-name rows with hex-string embedding inputs. We deduplicate hex codes case-insensitively, retaining the first row for each of the 765 distinct colors. After scaling RGB channels to $[0,1]$, we retain colors with HSV saturation $S\geq0.60$, HSV value $V\geq0.50$, and HLS lightness $L\leq0.70$. This leaves 374 colors. We sort these by increasing hue, decreasing saturation, and then increasing value.

The eight selected 131K features were chosen from ranks 6, 7, 8, 10, 11, 12, 15, and 16, respectively, in the color collection's feature-prevalence ranking at activation threshold $0.05$. These corresponded to features that seemed color-related (rather than activating on hex patterns). That threshold is used only for selection: the plotted values are raw post-TopK activations. We refit three-component PCA separately to the $374\times3072$ raw embeddings and the $374\times8$ activation matrix using the full SVD solver, mean-centering without standardization, whitening, or additional normalization. These are fits on the retained colors, rather than subsets of PCA coordinates fitted to all 865 rows. The color heatmap uses the same raw activations, with rows ordered by activation-weighted circular mean position along the displayed hue sequence.

\paragraph{Text and artwork examples.}
\Cref{tab:gemini-color-examples} shows five texts and five WikiArt images for each of the eight color features: 40 texts and 40 images in total. Texts are ranked by raw post-TopK activation over all 89,827,558 Gemini training-corpus rows, removing exact duplicate full texts separately for each feature. Images are ranked independently by the same activation definition over all 116,261 rows of the cached WikiArt corpus. We take the five highest positive image activations per feature, with ties broken by decreasing image-vector index; no hex-color, style, or enrichment filter is applied to these images.

WikiArt images are outside the SAE training corpus. These selected examples illustrate transfer beyond hex codes to multilingual text and images; they are not a held-out population performance estimate.

\begin{figure}[htbp]
    \centering
    \includegraphics[width=\linewidth]{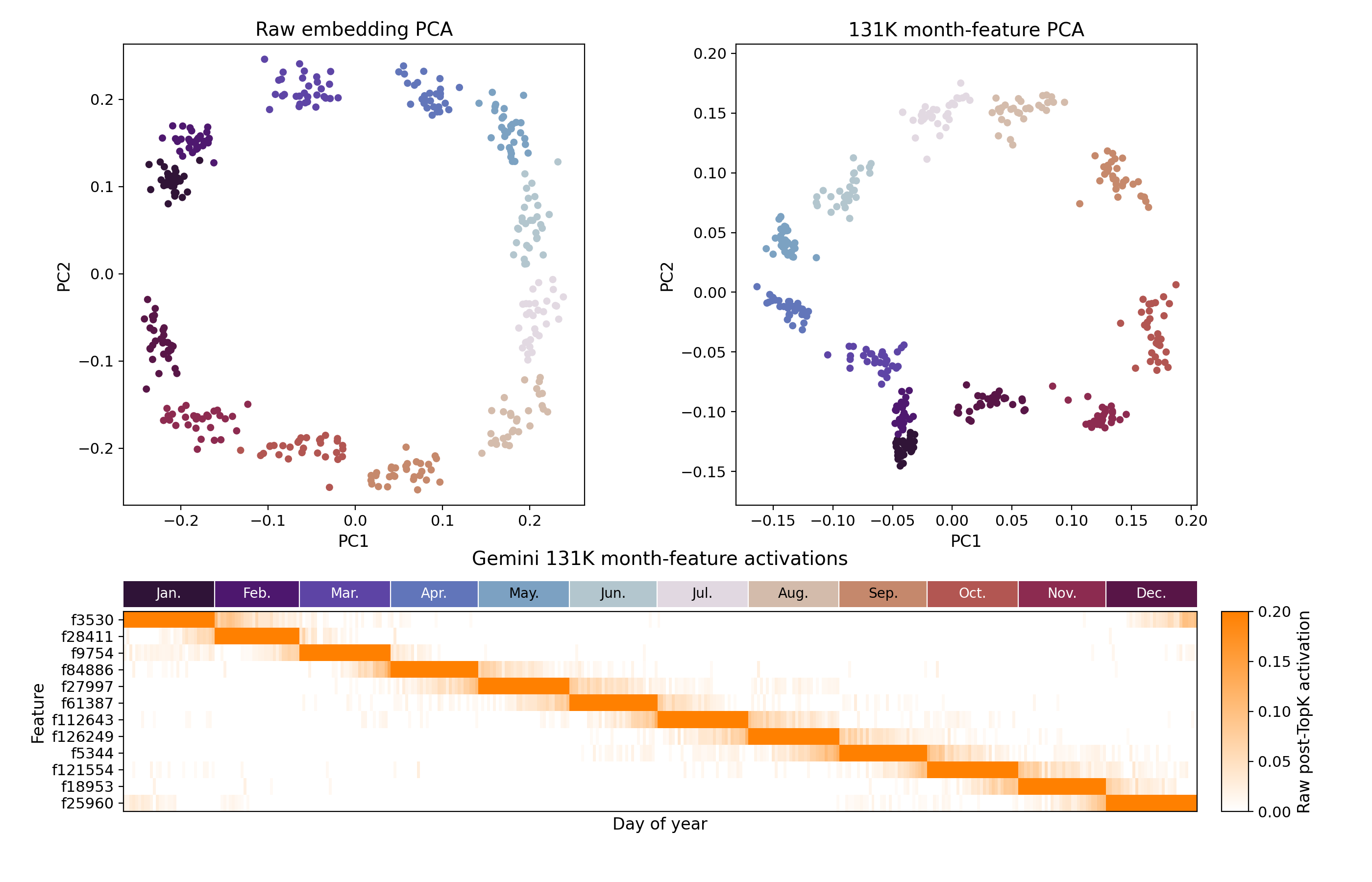}\\
    \includegraphics[width=\linewidth]{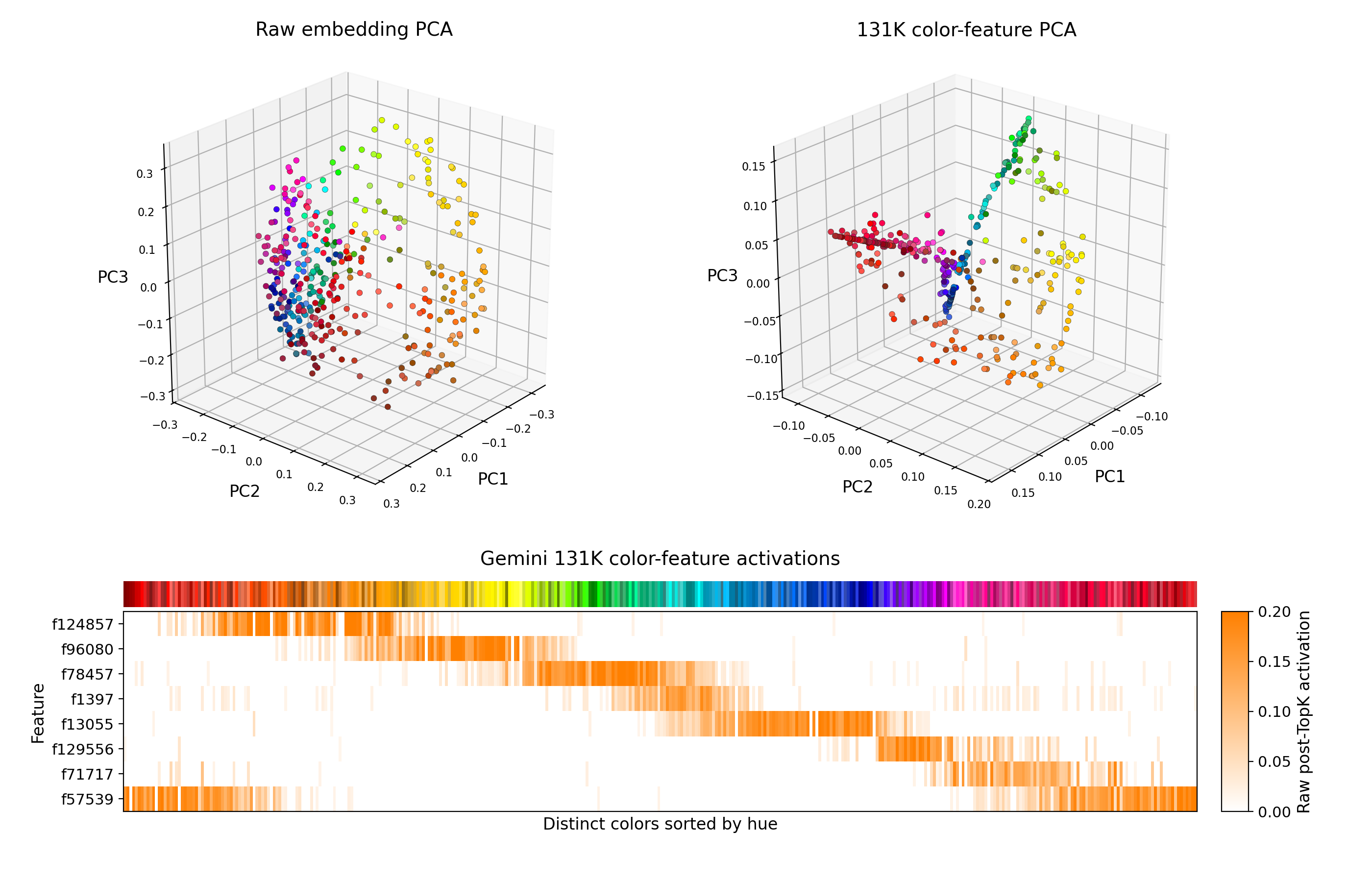}
    \caption{Manifolds arising from small sets of 131K gemini SAE activations. Top left shows PCA of raw embeddings, top right shows PCA of the SAE feature representation restricted to a subset of coordinates, and the bottom shows the activation pattern on these selected features. Importantly, the SAE-induced manifolds arise purely from correlation in activations, not correlation in feature directions.}
    \label{fig:manifold}
\end{figure}

\begin{table*}[p]
\centering
\providecommand{\ColorExamplesPath}{gemini_color_feature_examples_table}
\providecommand{\ColorExamplesMaxHeight}{0.80\textheight}
\ifdefined\ColorExamplesBox\else\newsavebox{\ColorExamplesBox}\fi
\begingroup
\begin{lrbox}{\ColorExamplesBox}%
\begin{minipage}{\linewidth}
\setlength{\parindent}{0pt}
\scriptsize
\setlength{\tabcolsep}{0pt}
\renewcommand{\arraystretch}{1}
\begin{tabular}{@{}ll*{5}{c}@{}}
& \textbf{top activating texts} & \multicolumn{5}{c}{\textbf{top activating artwork}} \\[2mm]
\parbox[c]{0.07\linewidth}{\hspace*{-2mm}\textbf{f124857}} &
\parbox[c]{0.27\linewidth}{\raggedright \raisebox{-0.2ex}{\includegraphics[height=1.5ex]{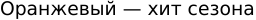}} (0.4119)
\par\vspace{1pt}
Orange Record Label (0.3469)
\par\vspace{1pt}
Orange Hat (0.3398)
\par\vspace{1pt}
Orange Link (0.3359)
\par\vspace{1pt}
Orange ribbon (0.3344)} &
\parbox[c]{0.132\linewidth}{\centering
\includegraphics[width=0.96\linewidth,height=0.80\linewidth,keepaspectratio]{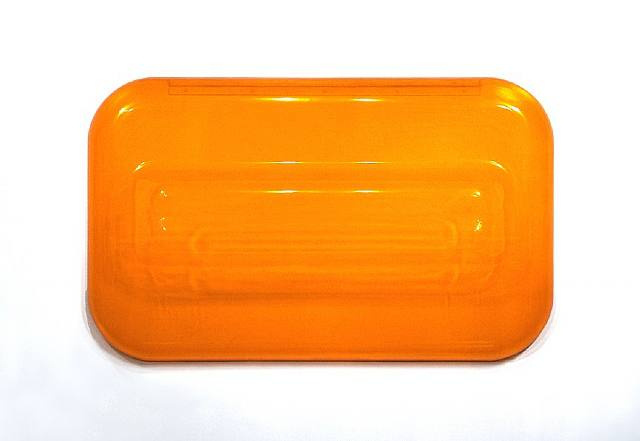}
\par\vspace{1pt} (0.2292)} &
\parbox[c]{0.132\linewidth}{\centering
\includegraphics[width=0.96\linewidth,height=0.80\linewidth,keepaspectratio]{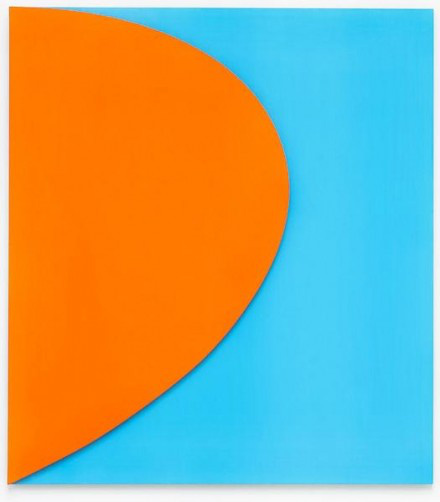}
\par\vspace{1pt} (0.2244)} &
\parbox[c]{0.132\linewidth}{\centering
\includegraphics[width=0.96\linewidth,height=0.80\linewidth,keepaspectratio]{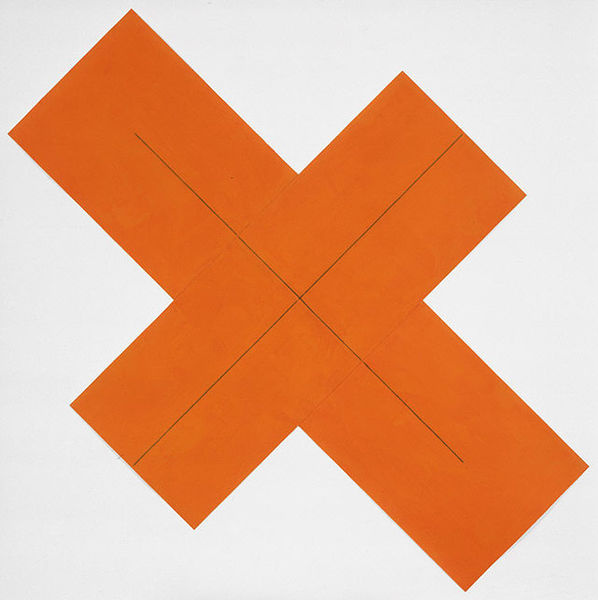}
\par\vspace{1pt} (0.2110)} &
\parbox[c]{0.132\linewidth}{\centering
\includegraphics[width=0.96\linewidth,height=0.80\linewidth,keepaspectratio]{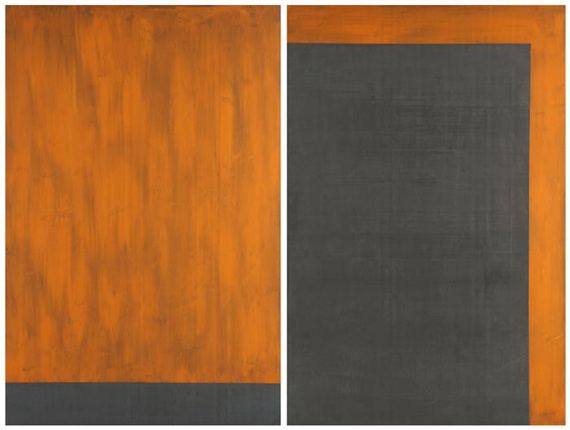}
\par\vspace{1pt} (0.2063)} &
\parbox[c]{0.132\linewidth}{\centering
\includegraphics[width=0.96\linewidth,height=0.80\linewidth,keepaspectratio]{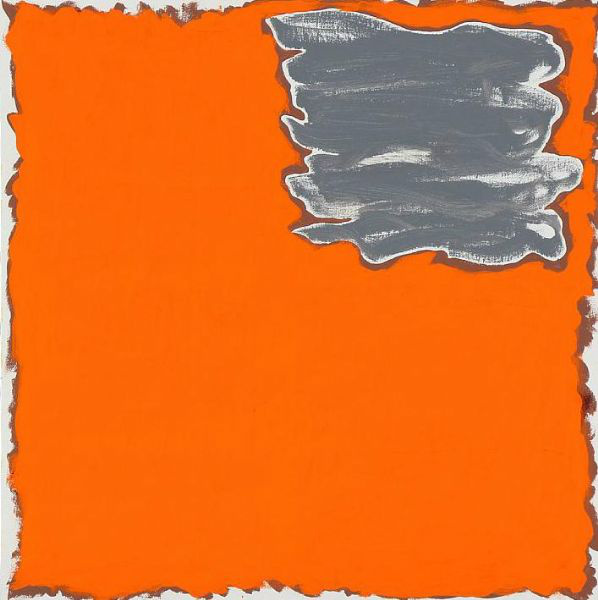}
\par\vspace{1pt} (0.2052)} \\[1mm]

\parbox[c]{0.07\linewidth}{\hspace*{-2mm}\textbf{f96080}} &
\parbox[c]{0.27\linewidth}{\raggedright Yellow hat (0.3367)
\par\vspace{1pt}
Yellow Arrow (0.3251)
\par\vspace{1pt}
Yellow monitor (0.3236)
\par\vspace{1pt}
Yellow dog (0.3185)
\par\vspace{1pt}
Yellow Hair (0.3169)} &
\parbox[c]{0.132\linewidth}{\centering
\includegraphics[width=0.96\linewidth,height=0.80\linewidth,keepaspectratio]{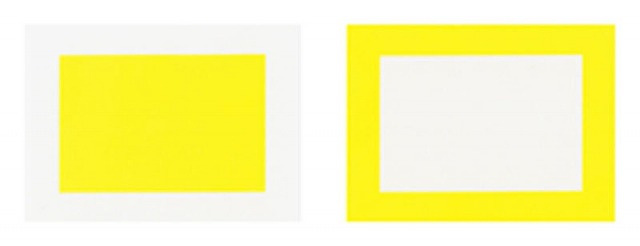}
\par\vspace{1pt} (0.1967)} &
\parbox[c]{0.132\linewidth}{\centering
\includegraphics[width=0.96\linewidth,height=0.80\linewidth,keepaspectratio]{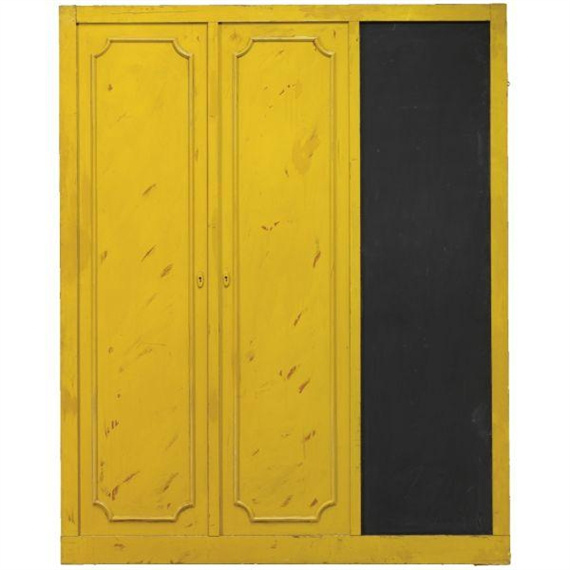}
\par\vspace{1pt} (0.1800)} &
\parbox[c]{0.132\linewidth}{\centering
\includegraphics[width=0.96\linewidth,height=0.80\linewidth,keepaspectratio]{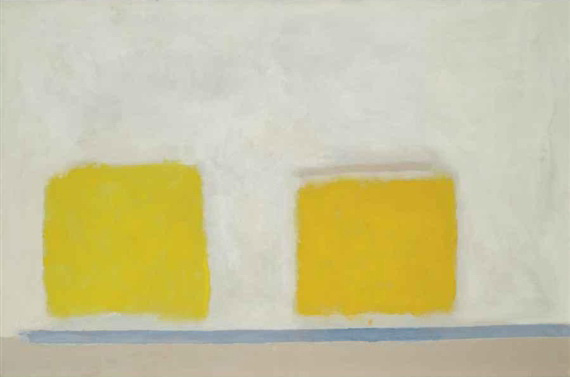}
\par\vspace{1pt} (0.1798)} &
\parbox[c]{0.132\linewidth}{\centering
\includegraphics[width=0.96\linewidth,height=0.80\linewidth,keepaspectratio]{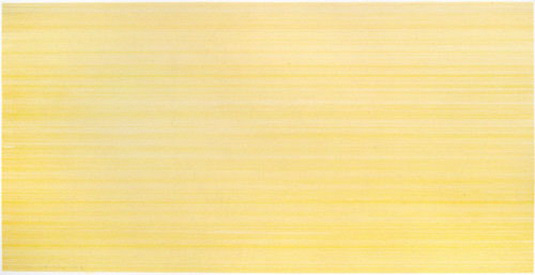}
\par\vspace{1pt} (0.1796)} &
\parbox[c]{0.132\linewidth}{\centering
\includegraphics[width=0.96\linewidth,height=0.80\linewidth,keepaspectratio]{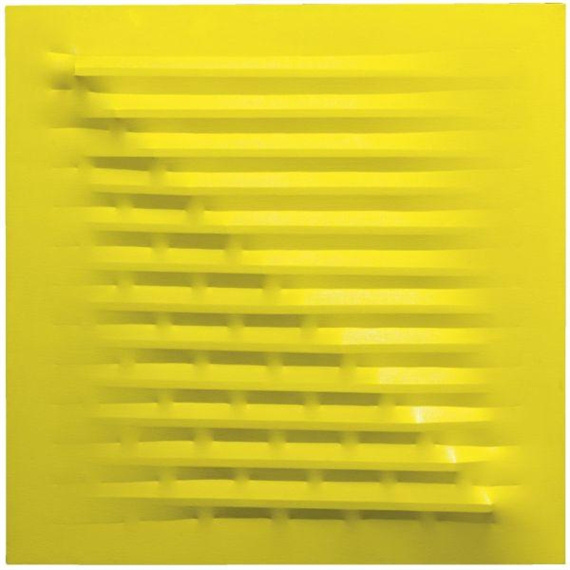}
\par\vspace{1pt} (0.1787)} \\[1mm]

\parbox[c]{0.07\linewidth}{\hspace*{-2mm}\textbf{f78457}} &
\parbox[c]{0.27\linewidth}{\raggedright Green hat (0.3725)
\par\vspace{1pt}
Green and Silver (0.3619)
\par\vspace{1pt}
Green Rose (0.3583)
\par\vspace{1pt}
Green S (0.3556)
\par\vspace{1pt}
Green Light (0.3518)} &
\parbox[c]{0.132\linewidth}{\centering
\includegraphics[width=0.96\linewidth,height=0.80\linewidth,keepaspectratio]{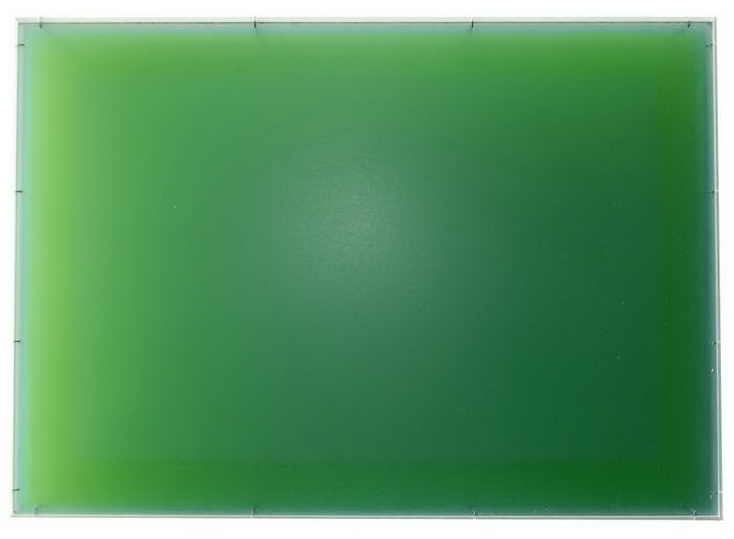}
\par\vspace{1pt} (0.1837)} &
\parbox[c]{0.132\linewidth}{\centering
\includegraphics[width=0.96\linewidth,height=0.80\linewidth,keepaspectratio]{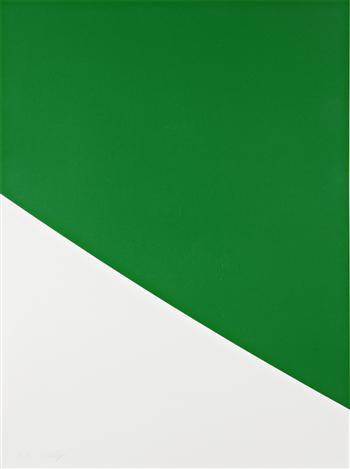}
\par\vspace{1pt} (0.1826)} &
\parbox[c]{0.132\linewidth}{\centering
\includegraphics[width=0.96\linewidth,height=0.80\linewidth,keepaspectratio]{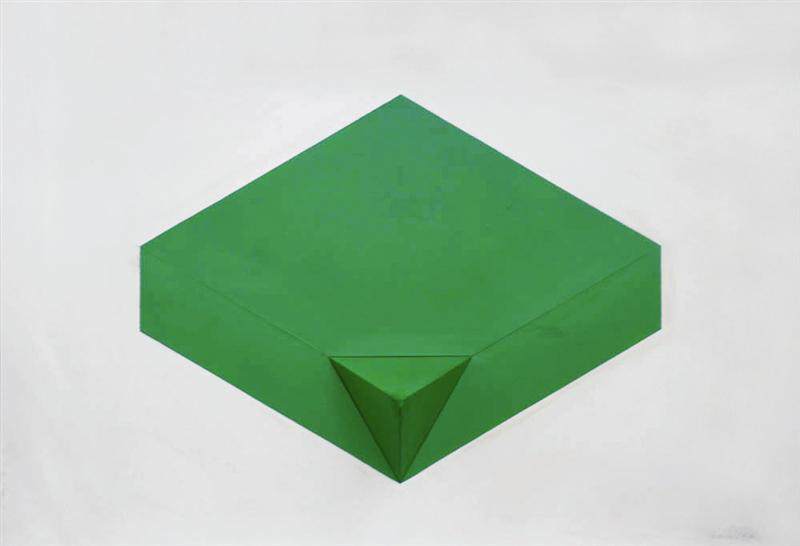}
\par\vspace{1pt} (0.1816)} &
\parbox[c]{0.132\linewidth}{\centering
\includegraphics[width=0.96\linewidth,height=0.80\linewidth,keepaspectratio]{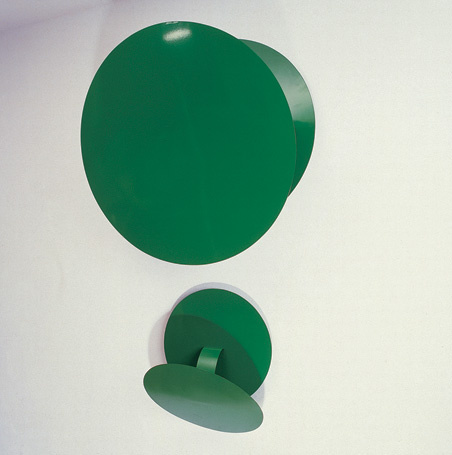}
\par\vspace{1pt} (0.1791)} &
\parbox[c]{0.132\linewidth}{\centering
\includegraphics[width=0.96\linewidth,height=0.80\linewidth,keepaspectratio]{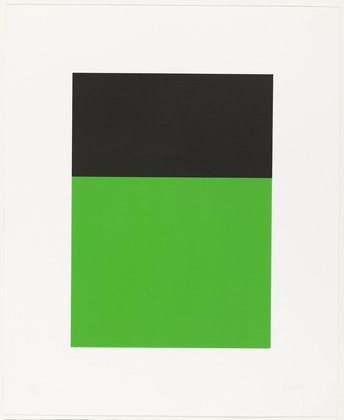}
\par\vspace{1pt} (0.1762)} \\[1mm]

\parbox[c]{0.07\linewidth}{\hspace*{-2mm}\textbf{f1397}} &
\parbox[c]{0.27\linewidth}{\raggedright Turquoise is not a color choice you make for your car. It chooses you! I've been... (0.2814)
\par\vspace{1pt}
Turquoise ribbon (0.2587)
\par\vspace{1pt}
\raisebox{-0.2ex}{\includegraphics[height=1.5ex]{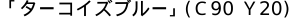}} (0.2533)
\par\vspace{1pt}
turquoise number (0.2505)
\par\vspace{1pt}
Colors That Go With Turquoise. ... Different shades of turquoise don't have to match... (0.2317)} &
\parbox[c]{0.132\linewidth}{\centering
\includegraphics[width=0.96\linewidth,height=0.80\linewidth,keepaspectratio]{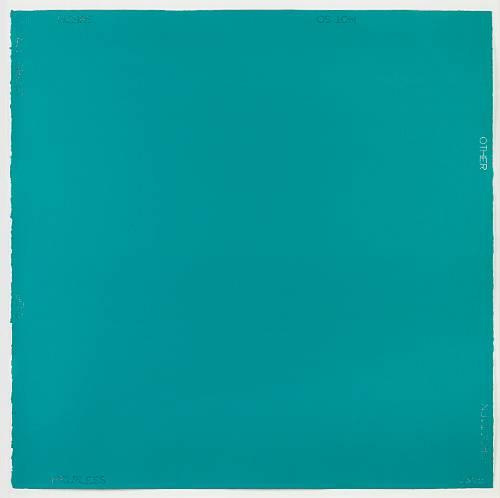}
\par\vspace{1pt} (0.1757)} &
\parbox[c]{0.132\linewidth}{\centering
\includegraphics[width=0.96\linewidth,height=0.80\linewidth,keepaspectratio]{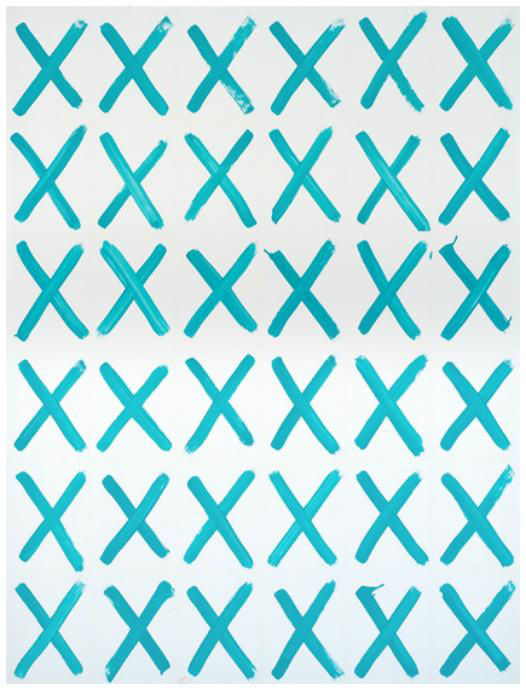}
\par\vspace{1pt} (0.1655)} &
\parbox[c]{0.132\linewidth}{\centering
\includegraphics[width=0.96\linewidth,height=0.80\linewidth,keepaspectratio]{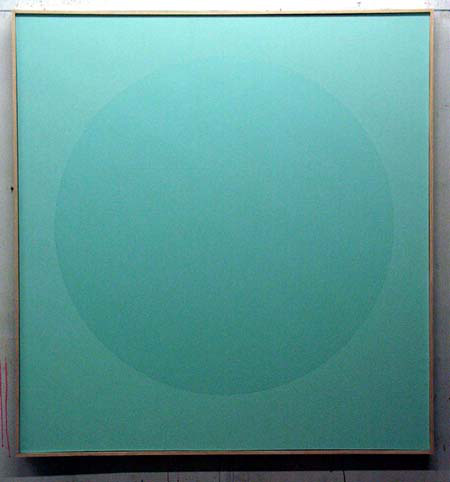}
\par\vspace{1pt} (0.1618)} &
\parbox[c]{0.132\linewidth}{\centering
\includegraphics[width=0.96\linewidth,height=0.80\linewidth,keepaspectratio]{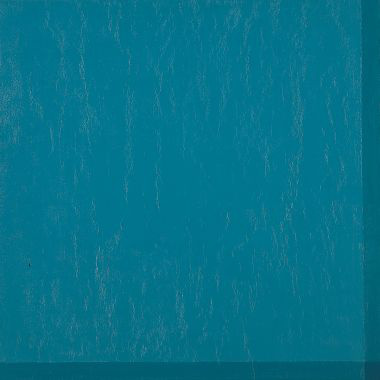}
\par\vspace{1pt} (0.1565)} &
\parbox[c]{0.132\linewidth}{\centering
\includegraphics[width=0.96\linewidth,height=0.80\linewidth,keepaspectratio]{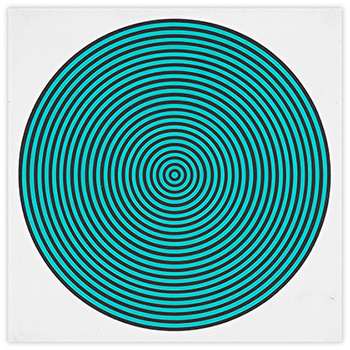}
\par\vspace{1pt} (0.1511)} \\[1mm]

\parbox[c]{0.07\linewidth}{\hspace*{-2mm}\textbf{f13055}} &
\parbox[c]{0.27\linewidth}{\raggedright Blue Rainbow (0.3702)
\par\vspace{1pt}
Blue hat (0.3697)
\par\vspace{1pt}
Blue Girl (0.3693)
\par\vspace{1pt}
Blue Snowman (0.3691)
\par\vspace{1pt}
Blue Monkey (0.3658)} &
\parbox[c]{0.132\linewidth}{\centering
\includegraphics[width=0.96\linewidth,height=0.80\linewidth,keepaspectratio]{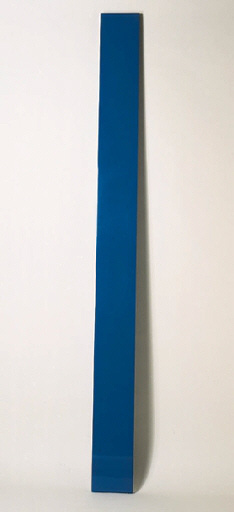}
\par\vspace{1pt} (0.1800)} &
\parbox[c]{0.132\linewidth}{\centering
\includegraphics[width=0.96\linewidth,height=0.80\linewidth,keepaspectratio]{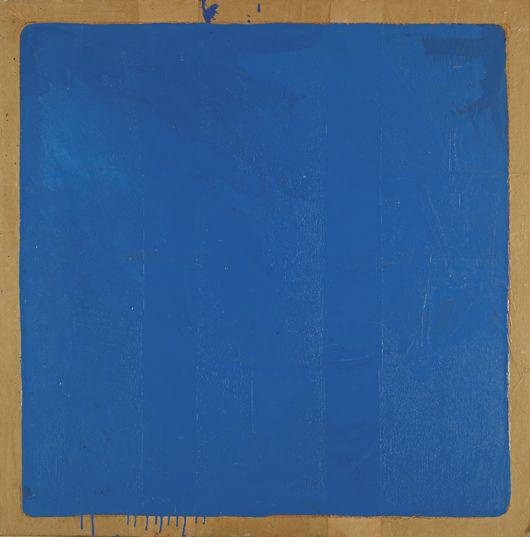}
\par\vspace{1pt} (0.1731)} &
\parbox[c]{0.132\linewidth}{\centering
\includegraphics[width=0.96\linewidth,height=0.80\linewidth,keepaspectratio]{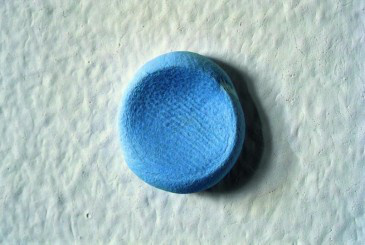}
\par\vspace{1pt} (0.1716)} &
\parbox[c]{0.132\linewidth}{\centering
\includegraphics[width=0.96\linewidth,height=0.80\linewidth,keepaspectratio]{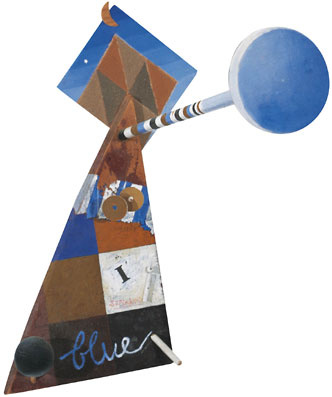}
\par\vspace{1pt} (0.1708)} &
\parbox[c]{0.132\linewidth}{\centering
\includegraphics[width=0.96\linewidth,height=0.80\linewidth,keepaspectratio]{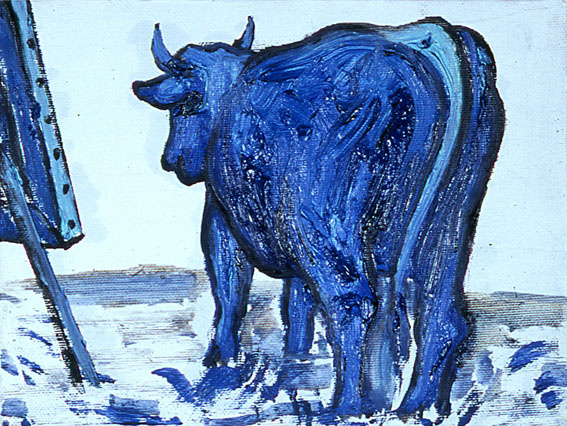}
\par\vspace{1pt} (0.1637)} \\[1mm]

\parbox[c]{0.07\linewidth}{\hspace*{-2mm}\textbf{f129556}} &
\parbox[c]{0.27\linewidth}{\raggedright Purple Numbers (0.3519)
\par\vspace{1pt}
Purple crocodile (0.3409)
\par\vspace{1pt}
i feel like rich purple and gold are a match made in heaven and this reinforces that... (0.3329)
\par\vspace{1pt}
Purple Love (0.3328)
\par\vspace{1pt}
Purple Radio (0.3317)} &
\parbox[c]{0.132\linewidth}{\centering
\includegraphics[width=0.96\linewidth,height=0.80\linewidth,keepaspectratio]{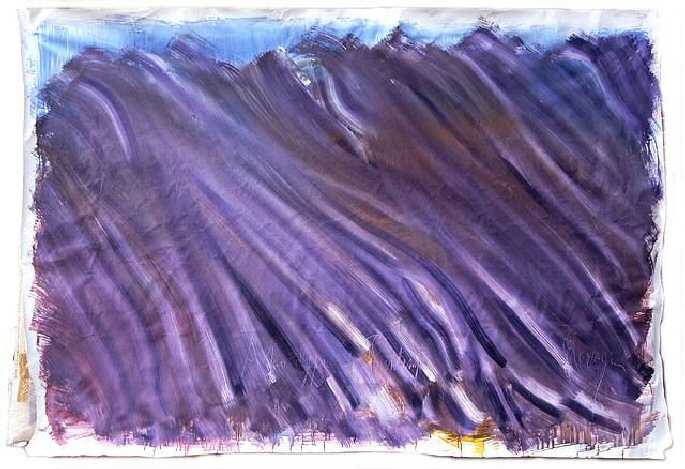}
\par\vspace{1pt} (0.2225)} &
\parbox[c]{0.132\linewidth}{\centering
\includegraphics[width=0.96\linewidth,height=0.80\linewidth,keepaspectratio]{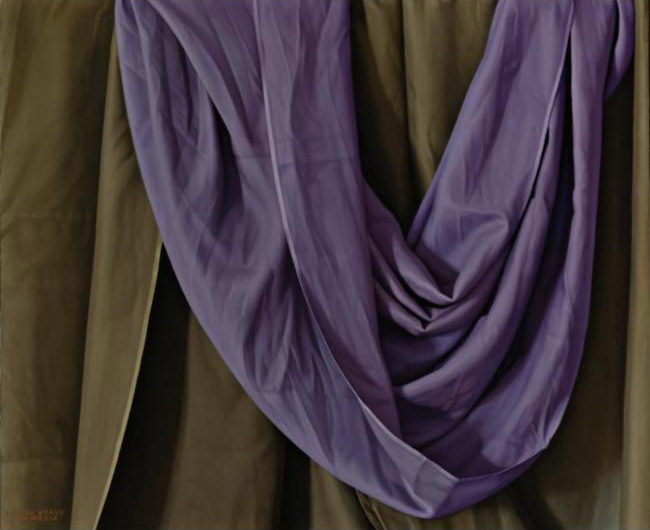}
\par\vspace{1pt} (0.2112)} &
\parbox[c]{0.132\linewidth}{\centering
\includegraphics[width=0.96\linewidth,height=0.80\linewidth,keepaspectratio]{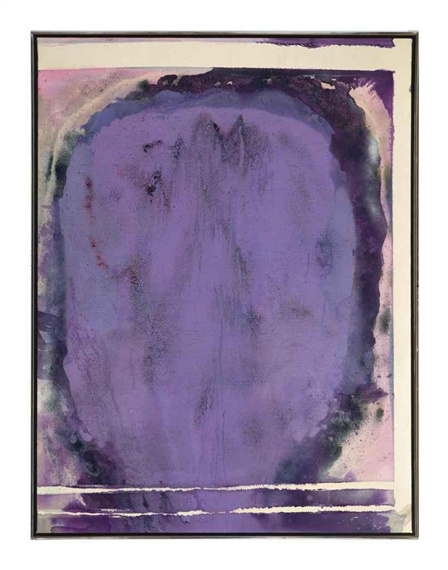}
\par\vspace{1pt} (0.1996)} &
\parbox[c]{0.132\linewidth}{\centering
\includegraphics[width=0.96\linewidth,height=0.80\linewidth,keepaspectratio]{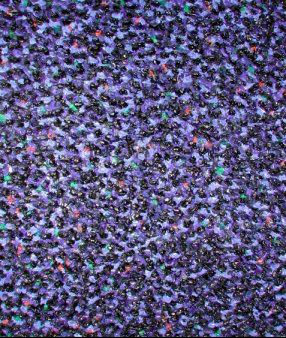}
\par\vspace{1pt} (0.1966)} &
\parbox[c]{0.132\linewidth}{\centering
\includegraphics[width=0.96\linewidth,height=0.80\linewidth,keepaspectratio]{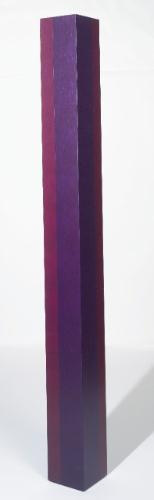}
\par\vspace{1pt} (0.1950)} \\[1mm]

\parbox[c]{0.07\linewidth}{\hspace*{-2mm}\textbf{f71717}} &
\parbox[c]{0.27\linewidth}{\raggedright Pink Subaru (0.3593)
\par\vspace{1pt}
Pink contract (0.3525)
\par\vspace{1pt}
Pink Pineapple (0.3458)
\par\vspace{1pt}
Pink and Gray (0.3367)
\par\vspace{1pt}
Pink Man (0.3366)} &
\parbox[c]{0.132\linewidth}{\centering
\includegraphics[width=0.96\linewidth,height=0.80\linewidth,keepaspectratio]{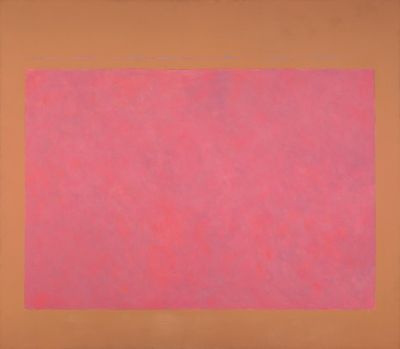}
\par\vspace{1pt} (0.2032)} &
\parbox[c]{0.132\linewidth}{\centering
\includegraphics[width=0.96\linewidth,height=0.80\linewidth,keepaspectratio]{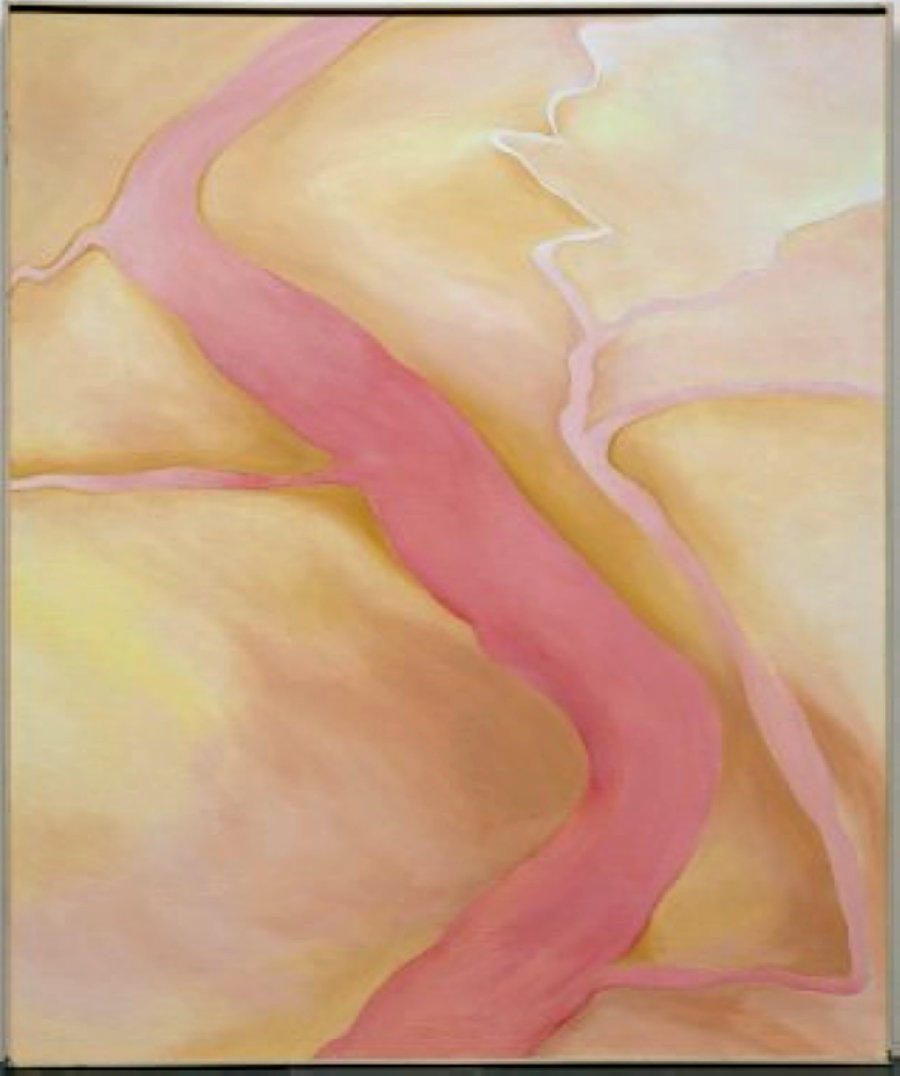}
\par\vspace{1pt} (0.1902)} &
\parbox[c]{0.132\linewidth}{\centering
\includegraphics[width=0.96\linewidth,height=0.80\linewidth,keepaspectratio]{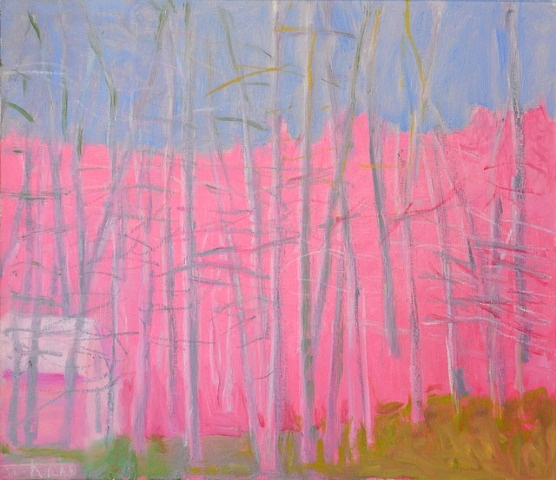}
\par\vspace{1pt} (0.1841)} &
\parbox[c]{0.132\linewidth}{\centering
\includegraphics[width=0.96\linewidth,height=0.80\linewidth,keepaspectratio]{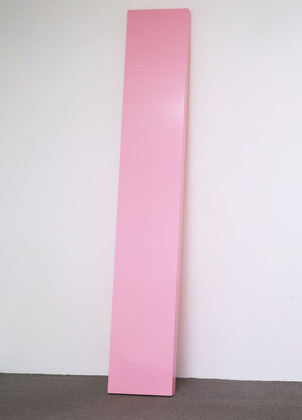}
\par\vspace{1pt} (0.1840)} &
\parbox[c]{0.132\linewidth}{\centering
\includegraphics[width=0.96\linewidth,height=0.80\linewidth,keepaspectratio]{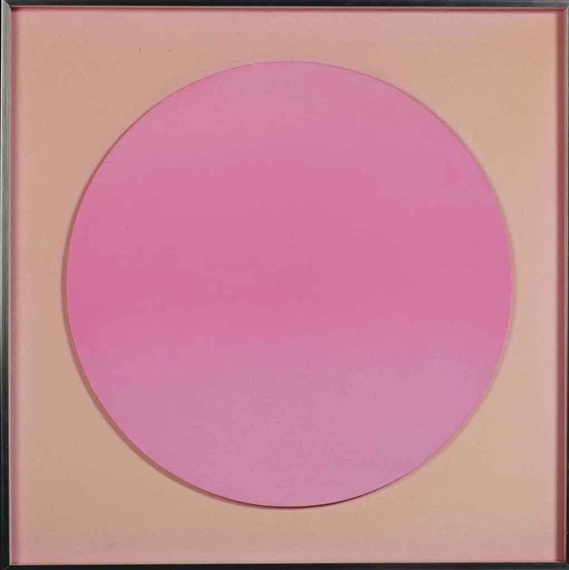}
\par\vspace{1pt} (0.1839)} \\[1mm]

\parbox[c]{0.07\linewidth}{\hspace*{-2mm}\textbf{f57539}} &
\parbox[c]{0.27\linewidth}{\raggedright \raisebox{-0.2ex}{\includegraphics[height=1.5ex]{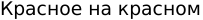}} (0.4016)
\par\vspace{1pt}
Red Elephant (0.3942)
\par\vspace{1pt}
Red Bridge (0.3836)
\par\vspace{1pt}
Red Post (0.3779)
\par\vspace{1pt}
Red jersey (0.3765)} &
\parbox[c]{0.132\linewidth}{\centering
\includegraphics[width=0.96\linewidth,height=0.80\linewidth,keepaspectratio]{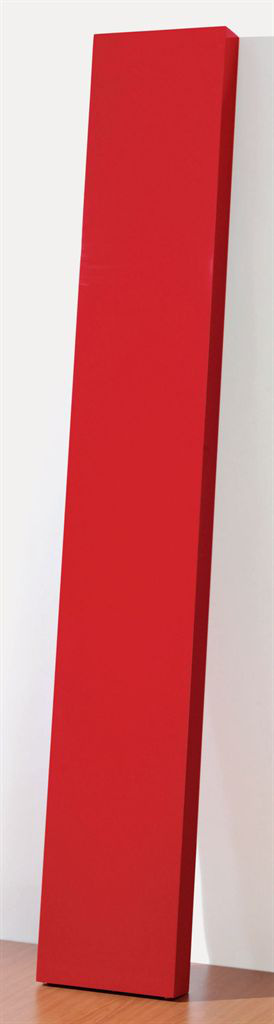}
\par\vspace{1pt} (0.1869)} &
\parbox[c]{0.132\linewidth}{\centering
\includegraphics[width=0.96\linewidth,height=0.80\linewidth,keepaspectratio]{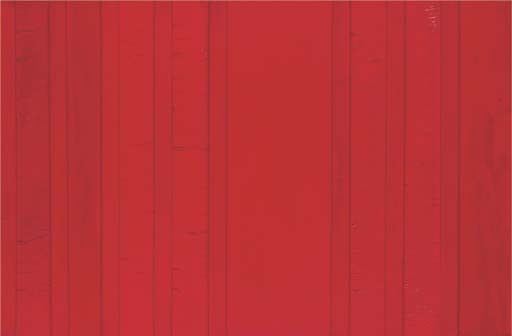}
\par\vspace{1pt} (0.1832)} &
\parbox[c]{0.132\linewidth}{\centering
\includegraphics[width=0.96\linewidth,height=0.80\linewidth,keepaspectratio]{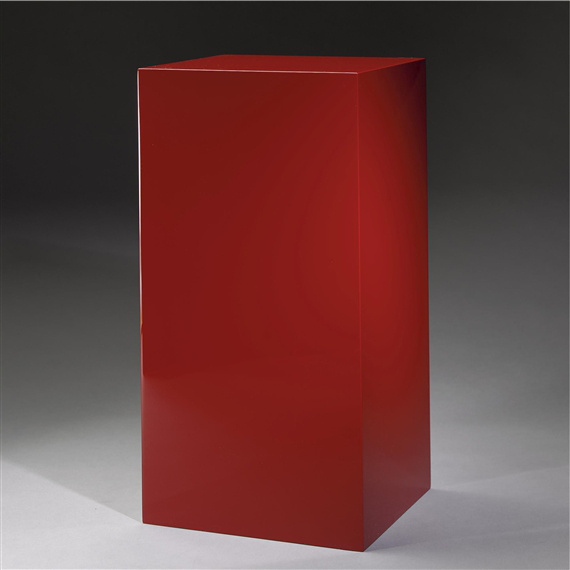}
\par\vspace{1pt} (0.1821)} &
\parbox[c]{0.132\linewidth}{\centering
\includegraphics[width=0.96\linewidth,height=0.80\linewidth,keepaspectratio]{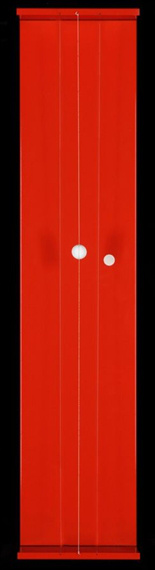}
\par\vspace{1pt} (0.1800)} &
\parbox[c]{0.132\linewidth}{\centering
\includegraphics[width=0.96\linewidth,height=0.80\linewidth,keepaspectratio]{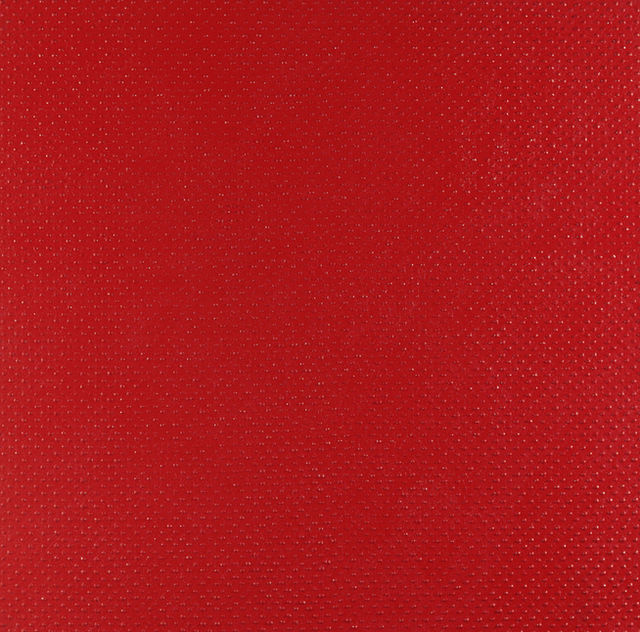}
\par\vspace{1pt} (0.1792)} \\[1mm]
\end{tabular}
\end{minipage}%
\end{lrbox}%
\noindent\makebox[\linewidth][c]{%
  \ifdim\dimexpr\ht\ColorExamplesBox+\dp\ColorExamplesBox\relax>\ColorExamplesMaxHeight\relax
    \resizebox*{!}{\ColorExamplesMaxHeight}{\usebox{\ColorExamplesBox}}%
  \else
    \usebox{\ColorExamplesBox}%
  \fi
}%
\endgroup

\par
\caption{Top-activating text and image examples for eight color-selective
features in the Gemini 131K sparse autoencoder. Each row shows the five
highest-activating distinct general-corpus texts and five highest-activating
WikiArt images for one feature, ordered by decreasing activation within
each corpus. Parentheses give raw post-TopK activations. Texts are
truncated for display.}
\label{tab:gemini-color-examples}
\end{table*}

\clearpage

\section{An Atomic View of the Platonic Representation Hypothesis}\label{sec:platonic}

If we interpret the rigidity property at an even higher level, then we can arrive at an even stronger theory: that atomic features are not atomic only with respect to an individual model, but are actually atomic as descriptions of the world (or at least, of descriptions of human-generated text). We might describe these atomic features as also being ``platonic'' \citep{huh2024platonic}.

So while there is no reason for atomic directions to be model independent (one can have an equivalent representation that is a rotation), it seems possible that $z$ could be model independent, in which case different representations $f,f'$ are just different projections of the feature representation: that is, $f=Az$ and $f'=A'z.$

\paragraph{Same-width cross-model matching.}
We compare activations between models for all nine SAE widths from 512 to 131,072. At each width and threshold $t\in\{0.5,0.6,0.7,0.8\}$, we apply the pairwise matching and candidate-graph procedures in \Cref{sec:matching-computation,sec:matching-candidates} to Gemini and Nemotron features.

\Cref{fig:cross-model-cardinality-thresholds} shows that the number of matched pairs increases with SAE width, while the fraction of each dictionary that is matched generally decreases.

\begin{figure}[htbp]
    \centering
    \includegraphics[width=\linewidth]{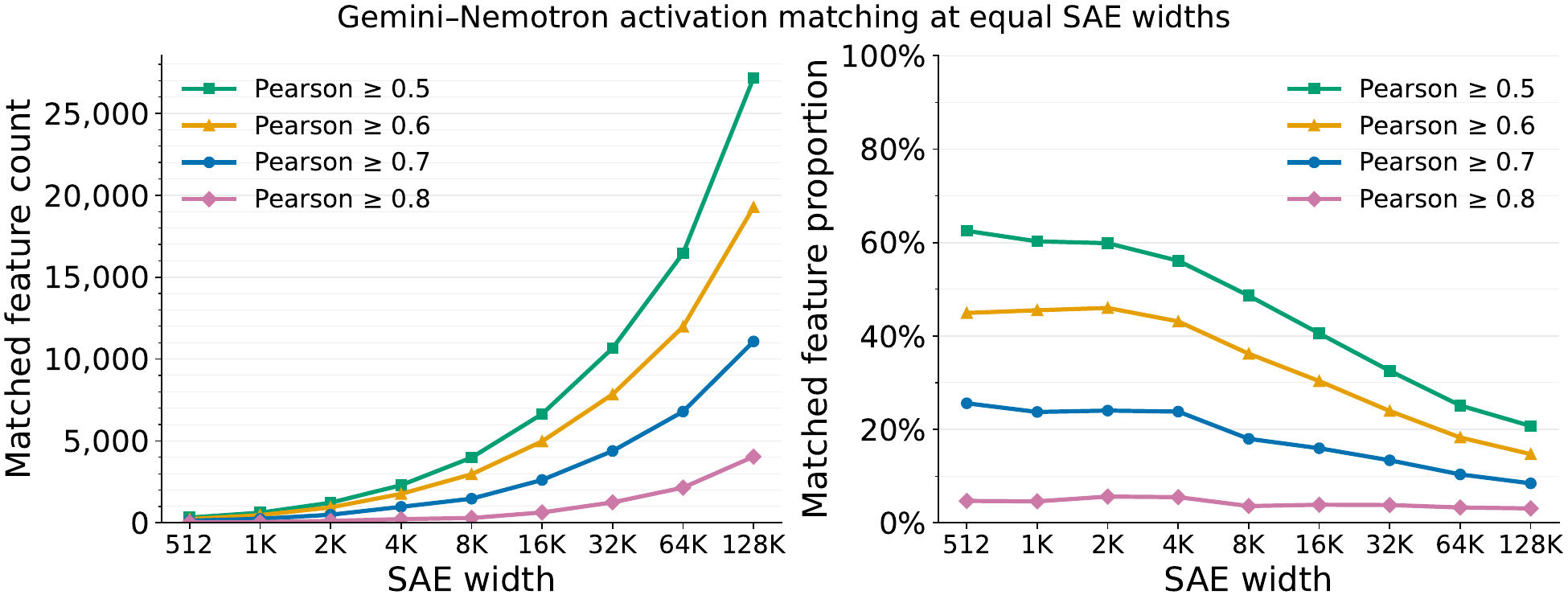}
    \caption{Feature-activation matching between Gemini and Nemotron SAEs of equal width. Curves give maximum-cardinality one-to-one matching at signed Pearson thresholds $t\in\{0.5,0.6,0.7,0.8\}$ on $89{,}227{,}558$ shared examples. Left: matched-pair count. Right: count divided by the common SAE width, including dead and unmatched features in the denominator. Counts are exact on the retained candidate graphs and lower bounds on exhaustive matching. The 131K comparison is included because matching across models does not require a larger SAE.}
    \label{fig:cross-model-cardinality-thresholds}
\end{figure}

\paragraph{Example matches.}
\Crefrange{tab:platonic-pearson-semantic-combined-1}{tab:platonic-pearson-diverse-10} show twelve illustrative correlated pairs of features: two main examples and ten additional diverse pairs, with Pearson similarity ranging from $0.3$ to $0.9$. They cover clarification of meaning, algebraic transformations, education, trigonometry, astronomy, trade, programming, mechanics, food, sign languages, music, and botany. The captions describe qualitative interpretations of the activating texts. Each table contains fifteen sampled texts from each model: eight with source-feature activation in $(0.05,0.1]$ and seven above $0.1$. Gemini feature 71850 is an exception: it has no cached activations above $0.1$, so all fifteen of its examples come from $(0.05,0.1]$.

\clearpage
\begin{table*}[p]
\centering
\caption{Clarifying the meaning or scope of a term. Pearson $r=0.352$.}
\label{tab:platonic-pearson-semantic-combined-1}
\includegraphics[width=\linewidth,height=0.44\textheight,keepaspectratio]{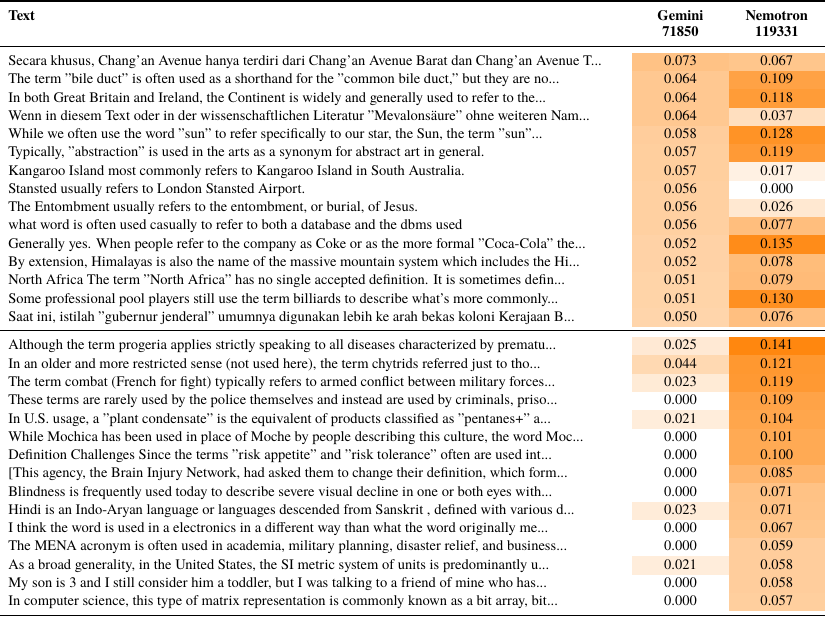}
\end{table*}

\begin{table*}[p]
\centering
\caption{Whether algebraic transformations preserve solutions. Pearson $r=0.362$.}
\label{tab:platonic-pearson-semantic-combined-2}
\includegraphics[width=\linewidth,height=0.44\textheight,keepaspectratio]{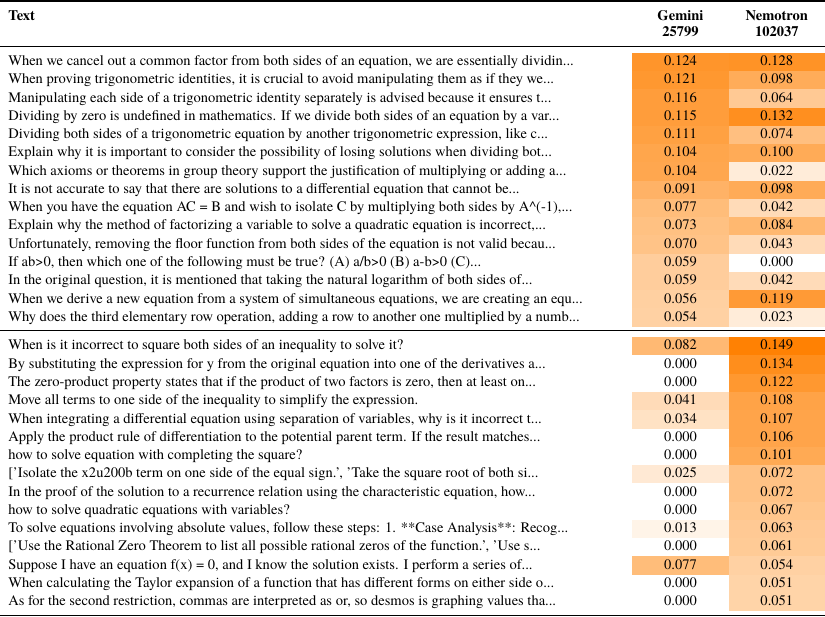}
\end{table*}

\clearpage
\begin{table*}[p]
\centering
\caption{School activities and educational offerings. Pearson $r=0.516$.}
\label{tab:platonic-pearson-diverse-1}
\includegraphics[width=\linewidth,height=0.44\textheight,keepaspectratio]{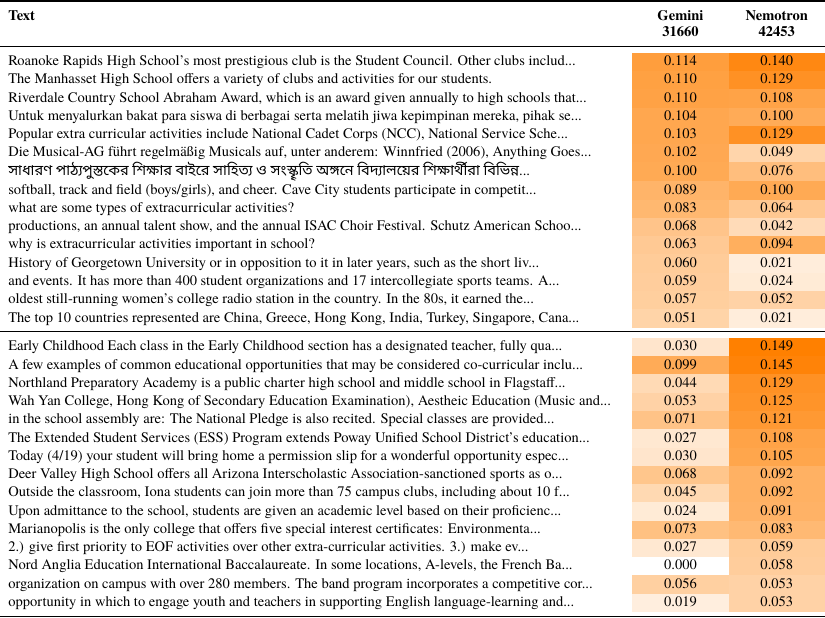}
\end{table*}

\begin{table*}[p]
\centering
\caption{Trigonometric half-angle identities. Pearson $r=0.578$.}
\label{tab:platonic-pearson-diverse-2}
\includegraphics[width=\linewidth,height=0.44\textheight,keepaspectratio]{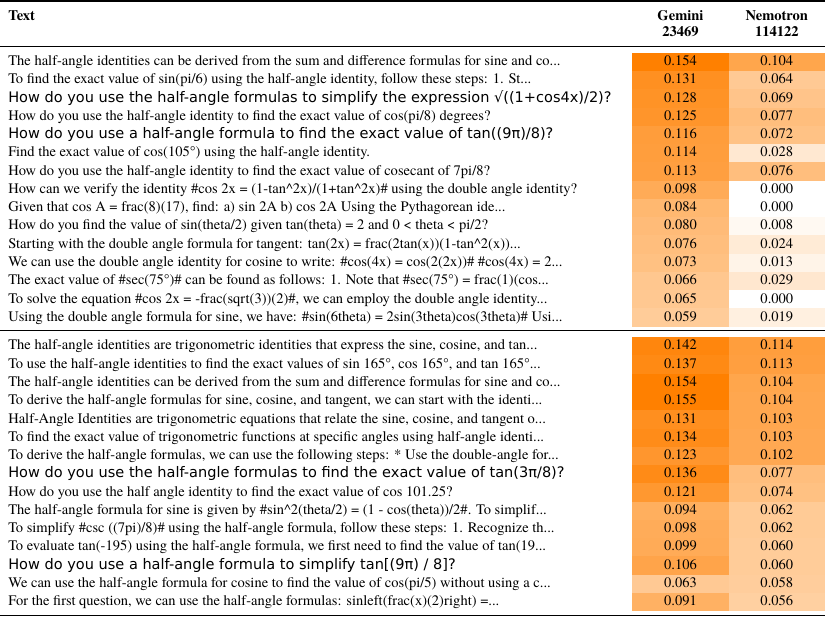}
\end{table*}

\clearpage
\begin{table*}[p]
\centering
\caption{Telescopes and astronomical optics. Pearson $r=0.604$.}
\label{tab:platonic-pearson-diverse-3}
\includegraphics[width=\linewidth,height=0.44\textheight,keepaspectratio]{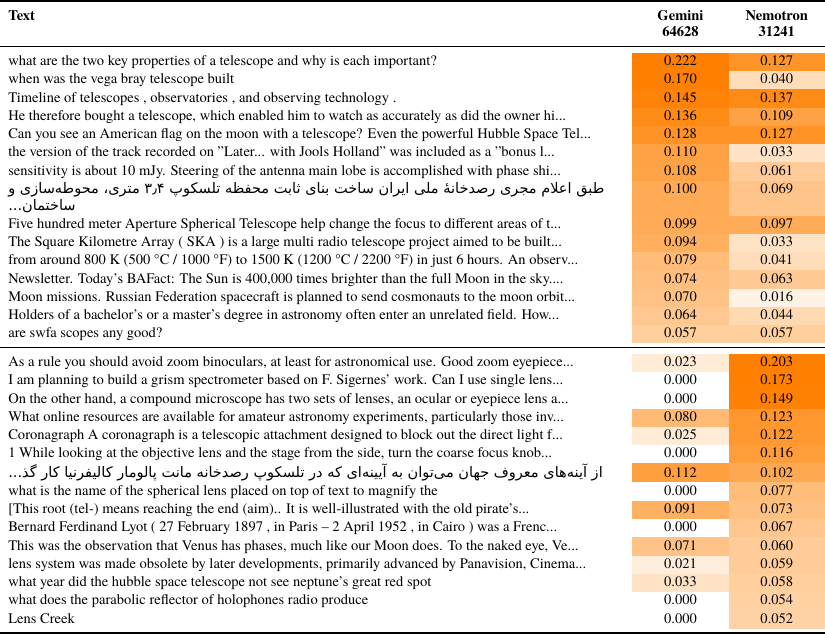}
\end{table*}

\begin{table*}[p]
\centering
\caption{Trade tariffs and protectionism. Pearson $r=0.651$.}
\label{tab:platonic-pearson-diverse-4}
\includegraphics[width=\linewidth,height=0.44\textheight,keepaspectratio]{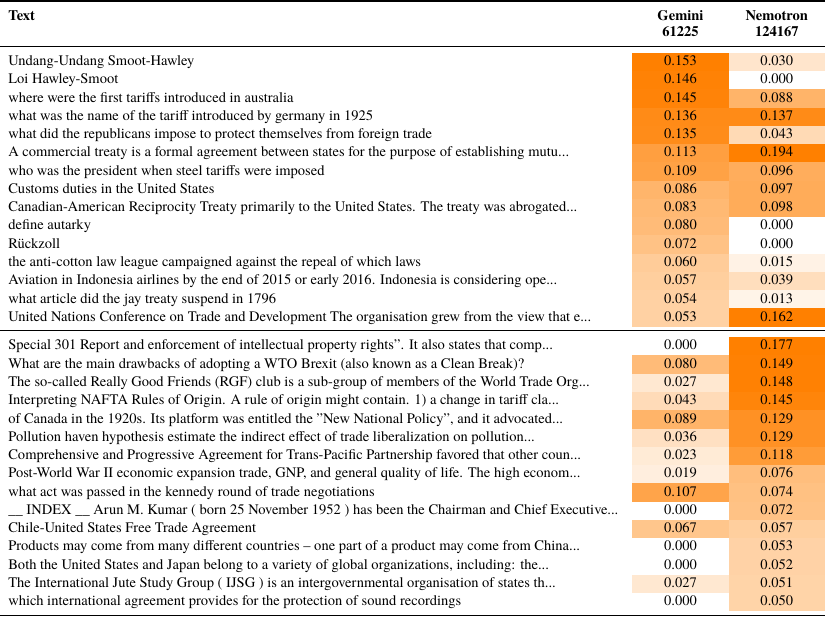}
\end{table*}

\clearpage
\begin{table*}[p]
\centering
\caption{Comments in source code. Pearson $r=0.667$.}
\label{tab:platonic-pearson-diverse-5}
\includegraphics[width=\linewidth,height=0.44\textheight,keepaspectratio]{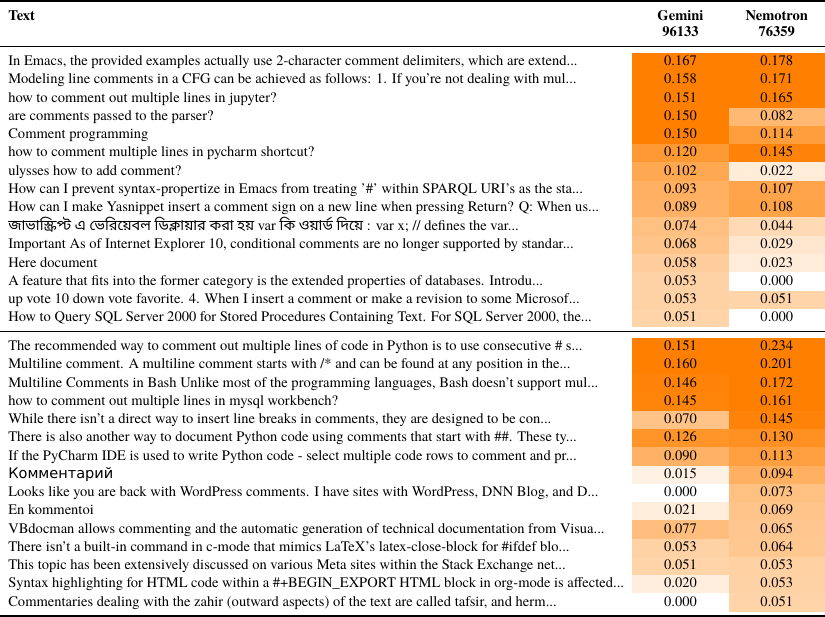}
\end{table*}

\begin{table*}[p]
\centering
\caption{Mechanical bearings. Pearson $r=0.709$.}
\label{tab:platonic-pearson-diverse-6}
\includegraphics[width=\linewidth,height=0.44\textheight,keepaspectratio]{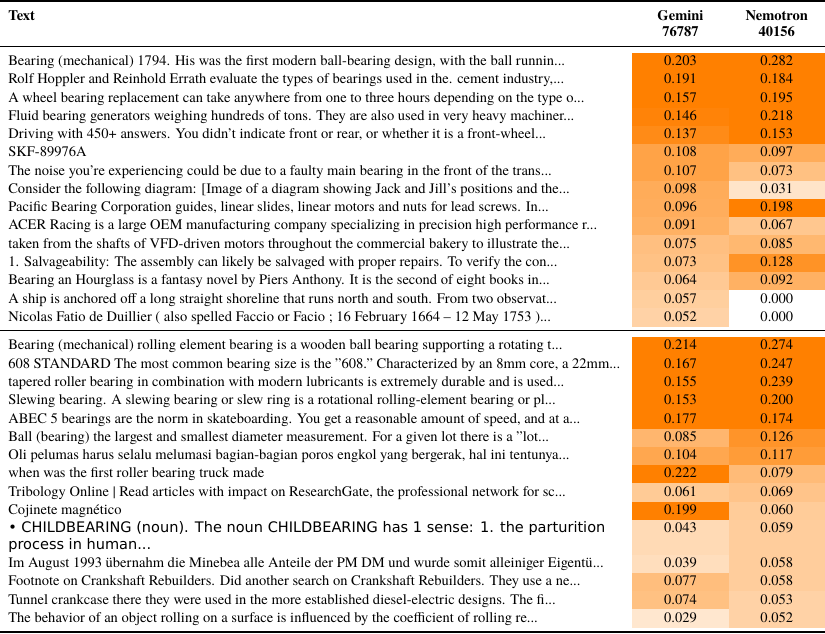}
\end{table*}

\clearpage
\begin{table*}[p]
\centering
\caption{Butter. Pearson $r=0.788$.}
\label{tab:platonic-pearson-diverse-7}
\includegraphics[width=\linewidth,height=0.44\textheight,keepaspectratio]{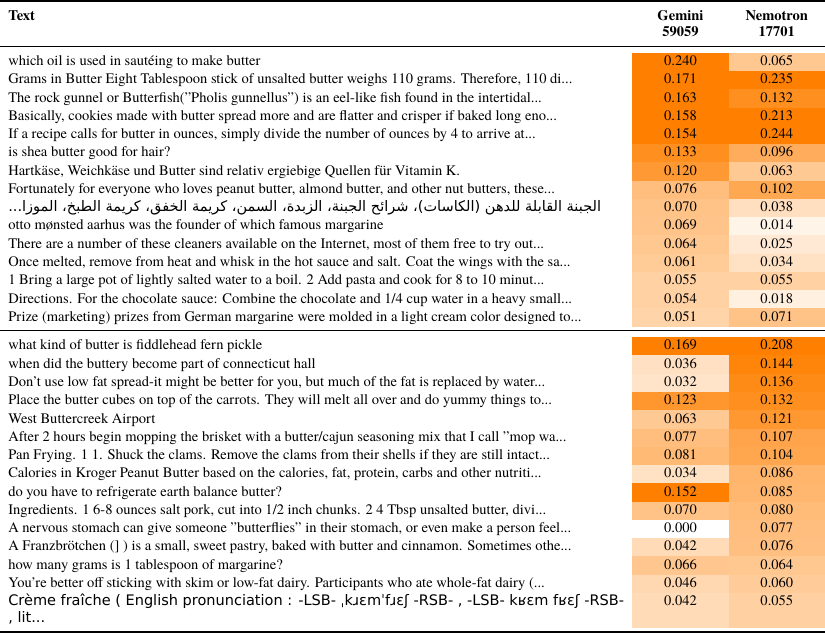}
\end{table*}

\begin{table*}[p]
\centering
\caption{Sign languages. Pearson $r=0.802$.}
\label{tab:platonic-pearson-diverse-8}
\includegraphics[width=\linewidth,height=0.44\textheight,keepaspectratio]{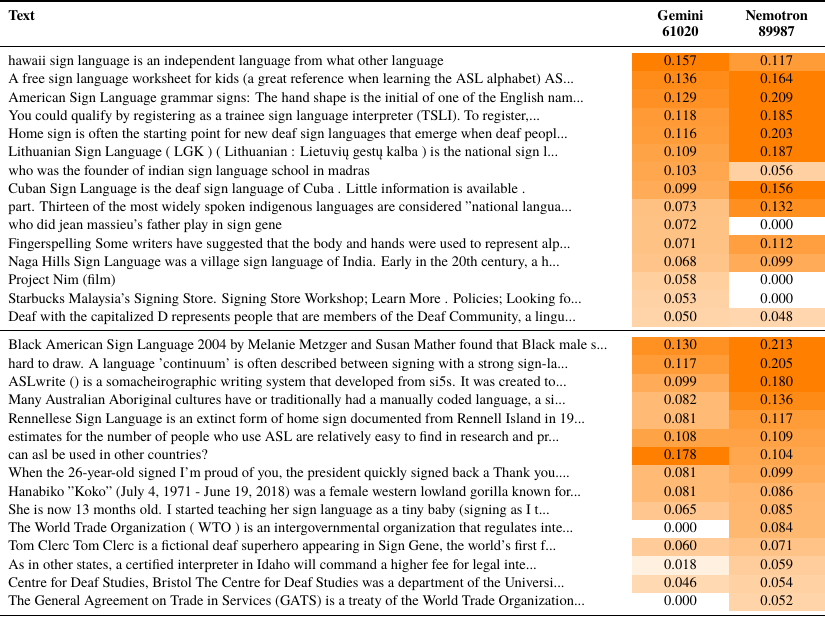}
\end{table*}

\clearpage
\begin{table*}[p]
\centering
\caption{Eurovision Song Contest entries. Pearson $r=0.865$.}
\label{tab:platonic-pearson-diverse-9}
\includegraphics[width=\linewidth,height=0.44\textheight,keepaspectratio]{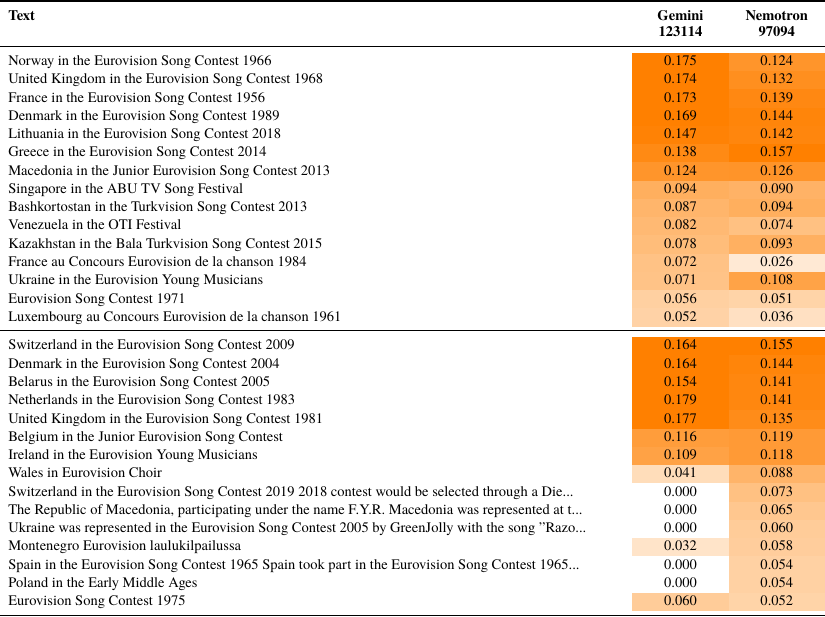}
\end{table*}

\begin{table*}[p]
\centering
\caption{Astragalus and milkvetch species. Pearson $r=0.871$.}
\label{tab:platonic-pearson-diverse-10}
\includegraphics[width=\linewidth,height=0.44\textheight,keepaspectratio]{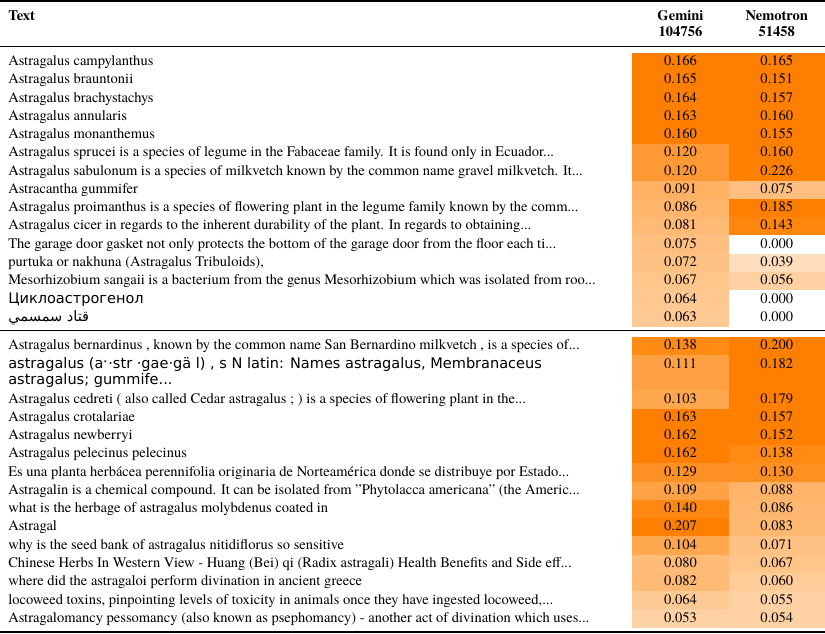}
\end{table*}

\clearpage

\end{document}